\documentclass[11pt,a4paper]{article}

\usepackage{booktabs}
\usepackage{tikz,pgfplots}
\pgfplotsset{compat=1.18}
\usepackage[dvipsnames,table]{xcolor}
\usepackage{hyperref}
\hypersetup{colorlinks=true,citecolor=blue,
filecolor=black,linkcolor=red,urlcolor=ForestGreen,	
linktocpage,linktoc=page}
\counterwithin{equation}{section}
\usepackage[left=3cm,right=3cm, top=3cm, bottom=3cm]{geometry}
\usepackage{natbib}
\usepackage{lmodern}

\usepackage{bm}
\usepackage{amsmath,amssymb,amsthm,amsfonts,amstext,amsbsy,amscd}
\usepackage{subcaption}
\usepackage[labelfont=bf]{caption}
\DeclareMathOperator*{\argmax}{arg\,max}
\DeclareMathOperator*{\argmin}{arg\,min}

\DeclareMathOperator*{\essinf}{ess\,inf}
\DeclareMathOperator{\dif}{d \!}

\DeclareMathOperator{\FI}{FI}
\DeclareMathOperator{\divergence}{div}

\theoremstyle{plain}
\newtheorem{theorem}{Theorem}[section]
\newtheorem{lemma}[theorem]{Lemma}
\newtheorem{corollary}[theorem]{Corollary}
\newtheorem{proposition}[theorem]{Proposition}
\theoremstyle{definition}

\newtheorem{assumption}[theorem]{Assumption}
\newtheorem{remark}[theorem]{Remark}
\newtheorem{definition}[theorem]{Definition}

\usepackage{xpatch}
\makeatletter
\xpatchcmd{\@thm}{.}{}{}{}
\makeatother

\newcommand{\modeldensity}{\ensuremath{f_\theta}}

\newcommand{\samplespace}{\ensuremath{\mathcal{X}}}
\newcommand{\sigmaalgebra}{\ensuremath{\mathcal{F}}}

\newcommand{\E}{\ensuremath{\mathbb{E}}}
\newcommand{\Prob}{\ensuremath{\mathbb{P}}}
\newcommand{\Probtrue}{\ensuremath{\mathbb{P}_0}}
\newcommand{\Probouter}{\mathrm{P}^*}

\newcommand{\Var}{\ensuremath{\text{Var}}}

\newcommand{\R}{\ensuremath{\mathbb{R}}}
\newcommand{\N}{\ensuremath{\mathbb{N}}}
\newcommand{\CIP}{\ensuremath{\mathtt{C}_{\normalfont\text{IP}}}}

\newcommand{\ML}{\ensuremath{\normalfont\text{ML}}}
\newcommand{\SM}{\ensuremath{\normalfont\text{SM}}}
\newcommand{\DDSM}{\ensuremath{\normalfont\text{DDSM}}}

\begin{document}

\title{Diffusion-based Denoising Beats Vanilla Score Matching in Parameter Estimation: A Theoretical Explanation}

\author{
Benedikt Lütke Schwienhorst \thanks{\scriptsize Department of Mathematics, University of Hamburg, \url{benedikt.luetke.schwienhorst@uni-hamburg.de}} \and
Nadja Klein \thanks{\scriptsize Scientific Computing Center, Karlsruhe Institute of Technology, \url{nadja.klein@kit.edu}} \and
Johannes Lederer \thanks{\scriptsize Department of Mathematics, University of Hamburg, \url{johannes.lederer@uni-hamburg.de}}
    }

\maketitle

\begin{abstract}
Score matching is an alternative to maximum likelihood estimation when the normalizing constant is unknown or too costly to evaluate. However, vanilla score matching has shown to be inefficient relative to maximum likelihood estimation for multimodal distributions with well-separated modes, which are commonly encountered in practical applications. We compare a novel diffusion-based denoising score matching estimator (DDSME) to the vanilla score matching estimator (SME) in this scenario. In particular, we prove statistical guarantees for both estimators, showing that the error bound for the vanilla SME worsens when the separation between the modes increases, which can be avoided in case of the DDSME with suitable hyperparameter tuning. This provides a novel theoretical explanation for the superior behavior of diffusion-based score matching over the vanilla version. We support our theoretical findings by numerical experiments.\smallskip\\
\noindent\textbf{Keywords:} Diffusion models, multimodality, score-based methods, statistical efficiency, statistical guarantee.\smallskip\\
\noindent \textbf{2020 Mathematics Subject Classification}: 62F10, 62H30, 62F12, 60J60.
\end{abstract}

\section{Introduction}
\label{sec:introduction}

Score matching (SM) was suggested by~\citet{Hyv2005} as a method to estimate the parameters of a statistical model by minimizing the Fisher divergence (FI) between the model and the true distribution. The FI is a Euclidean distance between score functions, which are defined as the gradient of a log-density and therefore by construction independent of normalizing constants. As a consequence, SM lends itself to the estimation of models with intractable normalizing constants. However, two probability distributions $\Prob$ and $\mathbb{Q}$ defined on a common measurable space $(\Omega,\mathcal{A})$ can have virtually identical score functions on a measurable subset $A\subseteq\Omega$ with $\Prob(A)\approx1$ and $\mathbb{Q}(A)\approx1$, albeit being substantially different from one another. Then, it holds that $\FI(\Prob,\mathbb{Q})\approx0$ and $\FI(\mathbb{Q},\Prob)\approx0$ even though $\Prob$ and $\mathbb{Q}$ are not similiar in a meaningful way. In other words, a small FI between $\Prob$ and $\mathbb{Q}$ does not imply that the two distributions are close to each other with respect to (wrt) another distance measure, for example, the Kullback-Leibler divergence (KL), the total variation distance (TV), the Wasserstein distance of order $p$ ($\text{W}_p$) or some metric on a common parameter space.

An important and practically relevant scenario in which this problem arises are multimodal distributions. In fact, multimodality---or more abstract but similar characterizations thereof in terms of isoperimetry---poses a challenge for a number of (score-based) methods: vanilla SM~\citep{Wen2021,KoeHecRis2022,ZhaKeyHay2022}, Langevin Monte Carlo sampling~\citep{CheBalErd2022}, Markov chain Monte Carlo methods~\citep[MCMC;][]{LatMooStu2025}, generalized Bayesian inference based on Stein discrepancies~\citep{MatKnoBri2022,AfzMutLiq2025}, moment control with kernel Stein discrepancies~\citep{KanBarGre2025} and score-based generative modeling~\citep{SonErm2019}, to name just a few. The relevance of multimodal distributions lies in the fact that they are ubiquitous in statistics and machine learning, that is, many datasets used in practice exhibit multimodality~\citep{BreMij2025}. As a consequence, fitting a model requires families of distributions that are multimodal by construction, either explicitly as in the case of mixtures or implicitly~\citep{CobKopChe1983}.

In the case of vanilla SM, the consequences of multimodality, in particular when the separation between modes is large, can be severe. Roughly speaking,~\citet{KoeHecRis2022} show that the score matching estimator (SME) performs similarly to the maximum likelihood estimator (MLE) if the postulated model family fulfills a Log-Sobolev or Poincaré inequality, that is, if the distributions exhibit some level of concentration~\citep[see Theorem 1 and 2 in][]{KoeHecRis2022}. However, for an exponential family (EF) denoted by $(\Prob_\theta)_{\theta\in\Theta}$ with a $k$-dimensional parameter space $\Theta$, the authors show that there exists a direction $w\in\R^k$ such that
\begin{equation}
	\label{eq:koehler_avar_theorem}
	\frac{\langle w,\text{AVar}[\hat{\theta}_{\SM}]w\rangle}{\langle w,\text{AVar}[\hat{\theta}_{\ML}]w\rangle}\gtrsim\mathtt{C}_{\text{IP}}\left((\Prob_\theta)_{\theta\in\Theta}\right)^{-1},
\end{equation}
where $\mathtt{C}_{\text{IP}}((\Prob_\theta)_{\theta\in\Theta})$ is the isoperimetric constant of the family $(\Prob_\theta)_{\theta\in\Theta}$ and $\text{AVar}[\hat{\theta}_{\SM}]$ and $\text{AVar}[\hat{\theta}_{\ML}]$ denote the asymptotic covariance matrices of the SME and MLE, respectively. Since it is well known that multimodal distributions with well-separated modes have small isoperimetric constants, the result in~\eqref{eq:koehler_avar_theorem} suggests that the SME is asymptotically inefficient relative to the MLE in the case of multimodality. \cite{Dia2023} obtained an analogous result for a bimodal Gaussian mixture on the real line.

A score-based method which has gained a lot of attention recently are score-based generative models~\citep{SonErm2019}, in particular denoising diffusion probabilistic models~\citep[DDPMs;][]{SohWeiMah2015,HoJaiAbb2020,SonSohKin2021}. Interestingly, DDPMs have shown remarkable abilities in the approximation of complex and high-dimensional distributions, including multimodal distributions as in the case of images. By now, there exists a plethora of theoretical research studying the approximation and generalization error of DDPMs~\citep{OkoAkiSuz2023,AzaDelRou2025,YakPuc2025} and more generally, the convergence of diffusion models under different data assumptions and wrt different distance measures, for example, wrt the KL~\citep{Debortoli2023,ConDurGen2025}, TV~\citep{LeeLuTan2022,LeeLuTan2023,OkoAkiSuz2023,CheCheLi2023,ZhaYinLiu2024,JiaZhoLi2025}, $\text{W}_1$~\citep{OkoAkiSuz2023} or $\text{W}_2$~\citep{LeeLuTan2023,GaoNguZhu2025}. Given that DDPMs are based on denoising score matching (DSM), a variant of vanilla SM introduced by~\cite{Vin2011}, their ability to sample from multimodal distributions provokes the question, whether one can build on DDPMs and construct an estimator that overcomes the weaknesses of vanilla SM in this scenario.

An estimator for parametric statistical models based on the DDPM loss was first studied by~\cite{ShaCheKli2023} and more recently by~\cite{CheKalMeh2025}, however, not with the intention to resolve the challenges surrounding multimodality. While~\cite{ShaCheKli2023} essentially provided a proof of concept study, showing that the DDPM objective can be used to efficiently estimate the location parameters of an unknown Gaussian mixture,~\cite{CheKalMeh2025} studied the DDPM loss with a more general research question in mind: how does DDPM score estimation relate to more classical forms of distribution learning, such as parameter estimation and density estimation?

Under essentially the same regularity assumptions that are necessary to show the asymptotic normality of the MLE,~\cite{CheKalMeh2025} arrive at the conclusion that the estimator based on the DDPM loss, which we will call the diffusion-based denoising score matching estimator (DDSME) from now on (see Definition~\ref{def:dsme} in Section~\ref{subsec:estimators}), is asymptotically efficient, that is, they show that
\begin{equation}
	\label{eq:ddpm_asymptotic_normality}
	\sqrt{n}\left(\hat{\theta}_{\DDSM}-\theta_0\right)\xrightarrow{d}\mathcal{N}\left(0,\mathcal{I}(\theta_0)^{-1}\right),	
\end{equation}
where $\mathcal{I}(\theta_0)$ is the Fisher information matrix at the true parameter value $\theta_0$. In other words, $\hat{\theta}_{\DDSM}$ achieves the Cramér-Rao lower bound~\citep[CRLB; see Theorem 3.3 in][]{Sha2003}, which is the smallest asymptotic variance an unbiased estimator can have. Recall that the MLE has the same limiting distribution as given in~\eqref{eq:ddpm_asymptotic_normality}, that is, it achieves the CRLB as well~\citep[see Theorem 5.39 in][]{Vaa1998}. Thus,~\eqref{eq:ddpm_asymptotic_normality} can be understood as a characterization of the asymptotic optimality of the DDSME.

Although the result in~\eqref{eq:ddpm_asymptotic_normality} is undoubtedly powerful, it still leaves a lot to be desired. For one, it is asymptotic in nature and thus not meaningful for contemporary applications, characterized by high-dimensionality and sample sizes that never surpass the data dimension. In many of these cases, classical ``large $n$, ﬁxed $d$'' theory simply fails to provide useful predictions~\citep[see Chapter 1 of][]{Led2021}. Second and more importantly, it is formulated for general parametric statistical models. As a consequence, it does not allow any insight into the mechanism that allows the DDSME to overcome the inefficiency of the vanilla SME in a multimodal scenario. The work of~\citet{ShaCheKli2023} is also not insightful in this regard, as the authors consider Gaussian mixtures with known mixture weights and only treat the location parameters as unknown. But it is now well known that the mixture weights are precisely the parameters that cause problems for SM~\citep{Wen2021}. Furthermore, our main interest does not lie in the sample complexity but instead in the way that multimodality affects the performance of the chosen estimator. In this regard, our perspective is different from~\cite{YakMarPuc2025}, where the focus lies on the comparison between the sample complexity of vanilla SM and DSM (for a fixed noise level), leaving dependencies on the possibly multimodal structure of the distributions unexplored. These considerations lead us to the following two important yet open research questions:\\

\noindent
\textbf{Question 1:} Does the DDSME recover the unknown parameter of a multimodal distribution with high probability, and how does it compare to the vanilla SME?\\

\noindent
\textbf{Question 2:} If so, how is the DDSME able to outperform the vanilla SME when the true distribution is multimodal?\\

\begin{figure}[!h]
	\centering
    \begin{subfigure}{0.71\textwidth}
	\includegraphics[width=\textwidth]{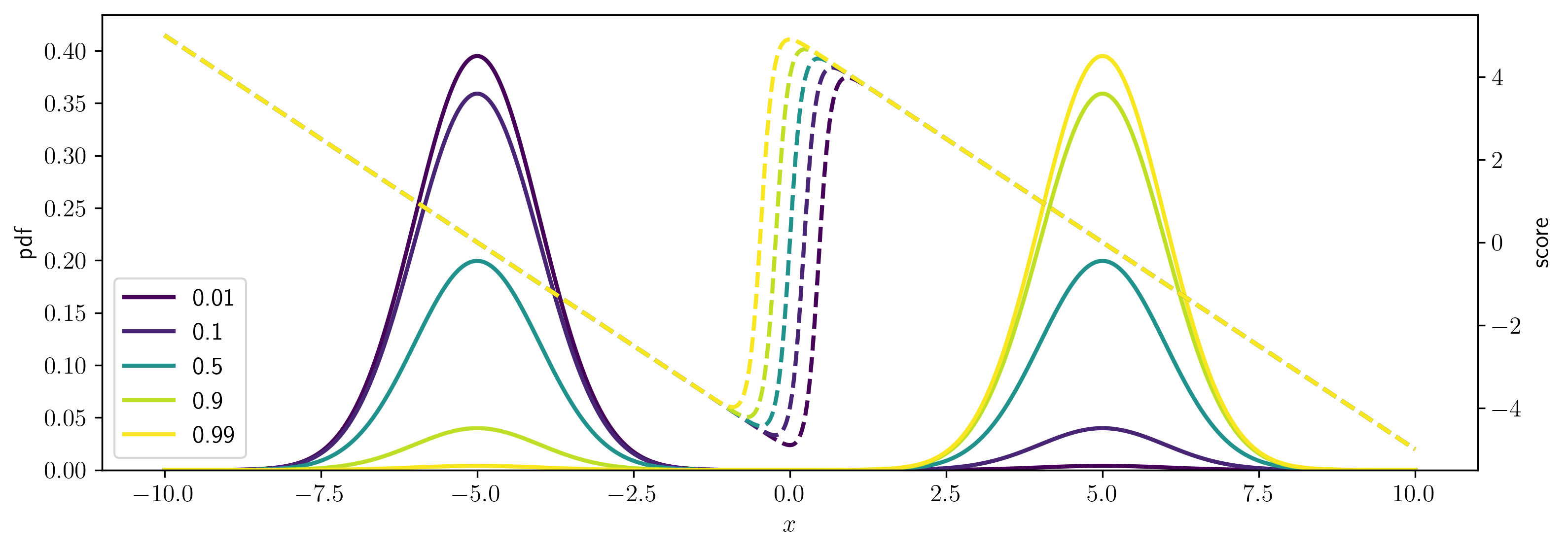}
	\caption{Densities and scores}
	\vspace{2mm}
	\label{fig:pdfs_scores}
	\end{subfigure}
    \begin{subfigure}{0.27\textwidth}
	\includegraphics[width=\textwidth]{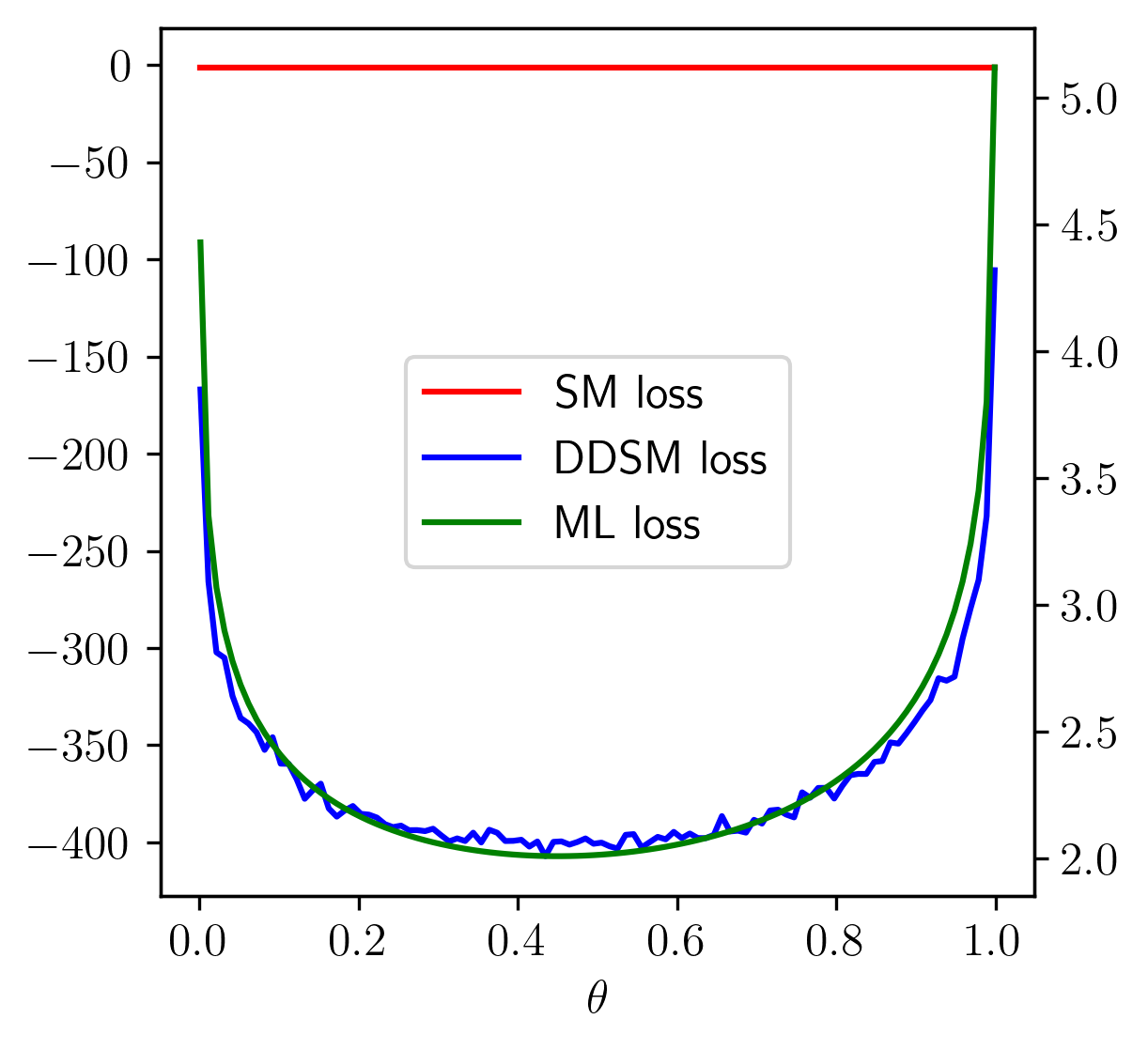}
	\caption{Loss functions}
	\vspace{2mm}
	\label{fig:losses}
	\end{subfigure}
	\caption{\textbf{(a)} Densities (solid lines) and score functions (dashed lines) of~\ref{eq:gm} (see Section~\ref{subsec:statistical_model}) with location parameter $\mu=5$. The densities and scores are given for $\theta\in\{0.01,0.1,0.5,0.9,0.99\}$. Computing the FI between any of the depicted distributions will result in a small value, given that the score functions only differ on a set of very small probability mass. \textbf{(b)} SM, DDSM and ML losses as functions of $\theta$, based on a sample drawn from $\Prob_{\theta_0}$ with $\theta_0=0.5$. The DDSM loss is more similar to the convex and therefore well-behaved ML loss than the flat SM loss, which explains why the DDSME can outperform the vanilla SME.}
	\label{fig:scores}
\end{figure}

In this work, we give answers to both questions. In order to do so, we restrict our analysis to a mixture of two isotropic Gaussian distributions on $\R^d$ as defined in Section~\ref{subsec:statistical_model}. The reason for this is twofold: for one, learning a distribution via diffusion models requires learning its score under Gaussian convolutions, which is intractable for most distributions. Gaussian mixtures are a rare exception, making them prime candidates for further analysis~\citep[e.g.][]{ShaCheKli2023,CheKonShah2024,GatKelLee2025}. Second, even though Gaussian mixtures are explicit in terms of the distributional structure, they are nevertheless very general approximators of distributions. In fact, Gaussian mixtures on $\R^d$ are dense in the set of probability distributions on $\R^d$ wrt the weak topology~\citep{Lo1972,Wie1932}. As a consequence, any density can be approximated arbitrarily well by a sufficiently rich  mixture of Gaussian densities. Hence, we argue that our setting is by no means restrictive, since often only a few mixture components are required.

Figure~\ref{fig:scores} illustrates the peculiar behavior of the FI described at the beginning and how it culminates into a flat SM loss function, contrasting the more or less convex DDSM loss. More precisely, the score functions (dashed lines in Figure \ref{fig:pdfs_scores}) differ only on a set that has almost zero mass under any of the distributions (with densities shown in solid lines). Then, Figure~\ref{fig:losses} shows the SM, DDSM and ML losses as functions of $\theta$ based on a sample drawn from $\Prob_{\theta_0}$ with $\theta_0=0.5$. One can see that the DDSM loss is more similar to the convex ML loss than the flat SM loss, which explains why the DDSME can outperform the vanilla SME in the model~\ref{eq:gm} (see Section~\ref{subsec:statistical_model}).

\paragraph{Our contributions.}

In this paper, motivated by the aforementioned research questions, we investigate the statistical properties of the DDSME in comparison to the vanilla SME, when estimating the parameter of a multimodal distribution. Our main contributions are as follows.
\begin{enumerate}
	\item Building on the isoperimetric perspective initiated by~\citet{KoeHecRis2022} we show that the family~\ref{eq:gm} (see Section~\ref{subsec:statistical_model}) has a small isoperimetric constant, which in turn characterizes both the flatness of the SM loss and the asymptotic inefficiency of the vanilla SME in this model.
	\item Having established the weakness of SM in terms of the loss landscape and the asymptotic variance of the resulting estimator in 1., we derive finite-samples guarantees that show that the DDSME does not suffer from this unfavorable property. The dependence on the (known) distance between the mean parameters is made explicit in the bounds. More precisely, the error bound of the SME weakens as the separation between the modes increases, while choosing a large time horizon in case of the DDSME eliminates this dependence on the separation.
	\item We confirm our theoretical findings with numerical experiments. In particular, we verify that these findings extend beyond the Gaussian setting.
\end{enumerate}

\paragraph{Further related work.}

After the seminal paper of~\cite{Hyv2005}, subsequent research on SM had focused on extension to other types of distributions, which do not fulfill the assumptions in~\cite{Hyv2005}, most notably the differentiability of the involved densities and score functions as well as boundary conditions relating the tail behavior of the true and model distribution. This includes but is not limited to discrete distributions~\citep{Hyv2007,Lyu2012,MenChoSon2022}, nonnegative distributions~\citep{Hyv2007,YuDrtSho2019}, ordinal distributions~\citep{XuSceWoo2025}, distributions with truncated support~\citep{LiuKanWil2022}, distributions with general domains~\citep{YuDrtSho2021}, distributions on manifolds~\citep{MarKenLah2016} and infinite dimensional models~\citep{SriFukGre2017}, to name just a few. Other works addressed the complexity surrounding the computation of the second derivative by comparing the average difference between projections of the score functions onto random directions, a solution inspired by the Hutchinson trace estimator~\citep{SonGarShi2020}.

While early research has established basic statistical properties of the SME, for example, consistency and asymptotic normality~\citep{Hyv2005,ForLau2015,Bar2022}, more recently, the focus shifted towards a more refined analysis of the statistical benefits and drawbacks of SM. While~\cite{Wen2021} and~\cite{ZhaKeyHay2022} discussed the \textit{blindness} of SM wrt mixture components in terms of the FI,~\cite{KoeHecRis2022} provided a first analysis that rigorously showed the inefficiency of the SME relative to the MLE in an EF that can be multimodal. \cite{PabRohSev2023} investigate the tradeoffs between computational and statistical aspects, by showing that there exists a natural EF for which the SME is more or less as efficient as the MLE, but the MLE is not efficiently computable.

The original motivation of~\cite{Hyv2005} for SM were unnormalized models, such as energy-based models, or statistical models with intractable normalizing constants. Naturally, SM has been used to build estimators in scenarios were normalization is difficult, costly or simply hopeless, for example, high-dimensional graphical models~\citep{YuDrtSho2019} or high-dimensional extreme-value distributions modeled with the Hüsler-Reiss model~\citep{LedOes2023}. However, SM has been applied in quite unexpected ways as well, most notably in the construction of an estimator for the coefficients of a linear regression model, which has minimal asymptotic variance~\citep{FenKaoSam2026}. Another unorthodox application appears in the recent work by~\cite{YouShaSam2026}, where the authors leverage score estimation to make local polynomial regression adaptive to the error distribution.

Generative models are designed to sample from an unknown distribution $\Prob$ when this distribution has only been observed via samples $X_1,\ldots,X_n$ with $X_i\sim\Prob$. Classical sampling algorithms such as MCMC~\citep{RobCas2004} pursue the same goal of sampling from $\Prob$, but instead of access to data they are generally based on the assumption of query access to $\nabla\log f$ with $f$ being the density of $\Prob$. With this important conceptual difference in mind, it is nonetheless natural to relate DDPMs to sampling algorithms that are based on diffusion processes such as Langevin Monte Carlo (LMC) sampling. Interestingly, while DDPMs are naturally powerful in the approximation of complex distributions, the vanilla Langevin diffusion used in LMC is known to perform poorly in this scenario. In fact, most convergence guarantees for LMC are restricted to the log-concave case or they require some type of functional inequality, such as a Log-Sobolev inequality to hold~\citep{CheBalErd2022}. Extensions of the vanilla Langevin diffusion are plentiful and, among other things, have been based on annealing schemes~\citep{QinRis2024} or the use of more general driving noise processes, such as Lévy-type processes, with an accordingly adjusted drift~\citep{BehLue2025}. A particularly interesting extension was studied by~\cite{KoeVuo2024}: the authors used early-stopping in combination with a data-based initialization, that is, they assumed not only query access to $\nabla\log f$ but also access to a number of samples from $\Prob$. In a follow-up work,~\cite{KoehLeeVuo2024} generalized this approach to Markov chains that fulfill a spectral gap condition.

\paragraph{Organization of the paper.}

The remainder of the paper is organized as follows. In Section~\ref{sec:preliminaries}, we introduce the statistical model and the estimators that we will study in this work. In Section~\ref{sec:main_results}, we present our main results: First, we give a characterization of the poor performance of the SME when estimating a bimodal Gaussian mixture. Second, we provide finite-sample guarantees for both the SME and the DDSME, which show that the DDSME achieves better finite-sample performance than the SME. The results are then discussed in Section~\ref{sec:discussion}.

\section{Preliminaries}\label{sec:preliminaries}

In this section, we introduce the notation, the statistical model, and the estimators that we will study and compare in Section~\ref{sec:main_results}.

\subsection{Notation}\label{subsec:notation}

Throughout, we use the shorthand $x\wedge y:=\min\{x,y\}$, $\max\{x,y\}:=x\vee y$ and $[k]:=\{1,\ldots,k\}$. The density of the standard Gaussian distribution on $\R$ is denoted by $\varphi(x):=(2\pi)^{-1/2}\exp(-x^2/2)$ and its cumulative distribution function by $\Phi(x):=\int_{-\infty}^x \varphi(t)\dif t$. On $\R^d$ we write $\phi(x):=(2\pi)^{-d/2}\exp(-\|x\|_2^2/2)$ for the density of the standard Gaussian distribution. For a function $g:\R^d\to\R$, we denote by $\nabla g$ its gradient and by $\partial_{x_j}$ its partial derivative wrt $x_j$ for $j=1,\ldots,d$, and $x=(x_1,\ldots,x_d)^\top\in\R^d$. For a vector field $v:\R^d\to\R^d$, we denote by $\divergence v:=\sum_{j=1}^d \partial_{x_j} v_j$ its divergence. The Euclidean norm on $\R^d$ is denoted by $\|\cdot\|_2$. By $f*g$ we denote the convolution of $f$ and $g$. For two matrices $A,B\in\R^{d\times d}$, we write $A\preceq B$ if $B-A$ is positive semi-definite. We will frequently suppress constants that are not relevant. More precisely, we will write $f\lesssim g$ or $f\gtrsim g$ when there exists an absolute constant $\mathtt{C}>0$ such that $f\leqslant \mathtt{C}g$ or $\mathtt{C}f\geqslant g$ respectively. The dependence of omitted constants will be indicated by a subscript, for example, $f\lesssim_{\eta} g$ means that the constant $\mathtt{C}$ depends on $\eta$.

\subsection{Statistical model}\label{subsec:statistical_model}

In order to study the problem that we put forth in Section~\ref{sec:introduction}, we postulate a parametric statistical model with a particular family of parameterized probability distributions. More precisely, let $(\samplespace,\sigmaalgebra,(\Prob_\theta)_{\theta\in\Theta})$ denote a parametric statistical model dominated by some measure $\gamma$, such that each $\Prob_\theta$ has a density $\modeldensity$ wrt $\gamma$. In general, we assume $\gamma$ to be the Lebesgue measure on $\R^d$. We call $\samplespace\subseteq\R^d$ the sample space and $\Theta\subseteq\R^k$ the parameter space. For the $\mathcal{X}$-valued random variable $X$ we write $X\sim\Prob$ to indicate that $X$ has distribution $\Prob$ and by $\E[X]:=\int_\samplespace x\dif \Prob(x)$ we denote its expected value. The data $\{X_i\}_{i=1}^n$ are assumed to be generated from a true probability distribution $\Prob_{0}$ with density $f_{0}$ wrt $\gamma$. Later in Section~\ref{subsec:assumptions}, we will assume that the model is well-specified in the sense that there exists a parameter value $\theta_0\in\Theta$ such that $\Probtrue=\Prob_{\theta_0}$ and we will make the assumption that $\Theta$ is a compact subset. In order to avoid ambiguity, we will sometimes write $\E_{\Prob}[X]$, $\E_0[X]$ or $\E_{\theta}[X]$ to emphasize the dependence on the distribution $\Prob$, $\Probtrue$ or $\Prob_\theta$ respectively.

In addition, we assume that the family $(\Prob_\theta)_{\theta\in\Theta}$ is a mixture of two isotropic Gaussians in $\R^d$ (denoted by~\ref{eq:gm}):
\begin{equation*}
	\label{eq:gm}
		(\Prob_\theta)_{\theta\in\Theta}:=\left\{\Prob_{\theta}\,|\,\Prob_{\theta}=\theta\mathcal{N}(\mu_1,\mathbb{I}_d)+(1-\theta)\mathcal{N}(\mu_2,\mathbb{I}_d)\text{ with }\theta\in\Theta\subseteq(0,1)\right\}\tag{$\mathtt{GM}$}
\end{equation*}
In the above, we treat the mean parameters $\mu_1,\mu_2\in\R^d$ with $\mu_1\not=\mu_2$ as known nuisance parameters and we use $\Delta:=\|\mu_1-\mu_2\|_2/2$ to denote half the distance between the means. The parameter of interest is the mixture weight $\theta\in\Theta$, which we want to estimate based on the data. Fortunately, we can restrict our analysis to a family with mean parameters that are point symmetric wrt the origin and non-zero only in the first coordinate, as shown in the following lemma. This will be useful in the subsequent analysis, as it allows us to trace back the $d$-dimensional problem to a single dimension. The proof of the following lemma is given in Appendix~\ref{app:lemma_embedded_one_dim_sym_mixture}.

\begin{lemma}[\ref{eq:gm} as embedded one-dimensional symmetric Gaussian mixtures]
\label{lemma:embedded_one_dim_sym_mixture}
Let $(\Prob_\theta)_{\theta\in\Theta}$ be the family in~\eqref{eq:gm} with separation $2\Delta=\|\mu_1-\mu_2\|_2$. Then, the following holds true:
\begin{enumerate}
	\item There exists an isometry $T:\R^d\to\R^d$ such that for all $\theta\in\Theta$ the pushforward $\mathbb{Q}_\theta:=\Prob_\theta\circ T^{-1}$ has density
	\begin{equation*}
		q_\theta(x)=\theta \phi(x-\Delta e_1)+(1-\theta)\phi(x+\Delta e_1),
\end{equation*}
where $\Delta e_1=T(\mu_1)$, $e_1=(1,0,\ldots,0)^\top\in\R^d$ with $\|T(\mu_1)-T(\mu_2)\|_2=2\Delta$.
\item We have $\mathbb{Q}_\theta=\mathbb{Q}_{\theta,1}\otimes\mathcal{N}(0,\mathbb{I}_{d-1})$, where $\mathbb{Q}_{\theta,1}$ is the marginal distribution of $\mathbb{Q}_\theta$ in the first coordinate with density
\begin{equation*}
	q_{\theta,1}(x_1)=\theta \varphi(x_1-\Delta)+(1-\theta)\varphi(x_1+\Delta).
\end{equation*}
In particular, this implies $q_\theta(x)=q_{\theta,1}(x_1)\phi(x_{2:d})$ with $x_{2:d}:=(x_2,\ldots,x_d)^\top\in\R^{d-1}$ for all $x=(x_1,x_{2:d})^\top\in\R^d$.
\end{enumerate}
\end{lemma}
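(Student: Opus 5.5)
The plan is to build $T$ explicitly as a translation followed by an orthogonal map. Set $m:=(\mu_1+\mu_2)/2$ and $u:=(\mu_1-\mu_2)/\|\mu_1-\mu_2\|_2$, which is well defined because $\mu_1\neq\mu_2$. Choose an orthogonal matrix $O\in\R^{d\times d}$ with $Ou=e_1$. One option is a Householder reflection $O=\mathbb{I}_d-2vv^\top/\|v\|_2^2$ with $v=u-e_1$, taking $O=\mathbb{I}_d$ if $u=e_1$; alternatively, complete $u$ to an orthonormal basis and use it as the rows of $O$. Then define $T(x):=O(x-m)$. This is an affine isometry, and it gives
\[
T(\mu_1)=O\bigl((\mu_1-\mu_2)/2\bigr)=\Delta e_1,\qquad T(\mu_2)=-\Delta e_1,
\]
so $\|T(\mu_1)-T(\mu_2)\|_2=2\Delta$.

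Next I would compute the pushforward density with the change of variables formula. Since $T^{-1}(y)=O^\top y+m$ has Jacobian determinant of absolute value one, the density of $\mathbb{Q}_\theta$ is $q_\theta(y)=f_\theta(O^\top y+m)$. Each component term becomes
\[
\phi(O^\top y+m-\mu_1)=\phi\bigl(O^\top(y-\Delta e_1)\bigr)=\phi(y-\Delta e_1),
\]
using $O(\mu_1-m)=\Delta e_1$ and the fact that $\phi$ depends only on $\|\cdot\|_2$, which is invariant under $O^\top$. The $\mu_2$ term works the same way and yields $\phi(y+\Delta e_1)$. Because the mixture is linear, this proves part 1. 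The key point is that $T$ does not depend on $\theta$, so it works uniformly over $\Theta$.

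For part 2, I would factor $\phi(x\mp\Delta e_1)=\varphi(x_1\mp\Delta)\,\phi(x_{2:d})$, where $\phi$ on the right is the $(d-1)$-dimensional standard Gaussian density. The factorization holds because $\|x\mp\Delta e_1\|_2^2=(x_1\mp\Delta)^2+\|x_{2:d}\|_2^2$ and the normalizing constant splits as $(2\pi)^{-1/2}(2\pi)^{-(d-1)/2}$. Pulling out the common factor gives $q_\theta(x)=q_{\theta,1}(x_1)\phi(x_{2:d})$. From this product form it follows that $\mathbb{Q}_\theta=\mathbb{Q}_{\theta,1}\otimes\mathcal{N}(0,\mathbb{I}_{d-1})$. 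Integrating out $x_{2:d}$ then confirms that $q_{\theta,1}$ is the first marginal. For $d=1$ the statement is trivial: $O=\pm1$ and there are no remaining coordinates.

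There is no substantial obstacle here. The only step that needs care is the existence of the orthogonal $O$ mapping $u$ to $e_1$, which the Householder construction settles. One should also note that "isometry" here means an affine isometry, not a linear one, since the translation by $m$ is needed.
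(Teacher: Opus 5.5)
Your proof is correct and follows essentially the same route as the paper. You use the same affine isometry $T(x)=O(x-m)$ with an orthogonal $O$ sending $u$ to $e_1$ (the paper completes $u$ to an orthonormal basis, which is your second option), and the same coordinate-wise factorization of $\phi$ for part 2. Your identification of $q_\theta$ via the change-of-variables formula is in fact more careful than the paper's argument. The paper writes $X\sim\Prob_\theta$ as $\theta Z+(1-\theta)Y$ with Gaussian $Z,Y$ and pushes this through $T$, but that is not a valid representation of a mixture: it would need an independent Bernoulli$(\theta)$ selector. Your density computation does not rely on any such representation.
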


\begin{remark}[Implications of Lemma~\ref{lemma:embedded_one_dim_sym_mixture}]
	Since we assume knowledge of the mean parameters $\mu_1$ and $\mu_2$ throughout this work, Lemma~\ref{lemma:embedded_one_dim_sym_mixture} tells us, that we can estimate the mixture weight $\theta$ from transformed data $T(X)\sim \Prob_\theta\circ T^{-1}$ instead of the original data $X\sim\Prob_\theta$. In fact, we only require the first coordinate of the transformed data. Any possible dependence on the real-valued mean parameters $\Delta$ and $-\Delta$ could be traced back to the original mean parameters $\mu_1$ and $\mu_2$ in $\R^d$ via the isometry $T$, which is bijective and therefore invertible. However, we will see in Lemma~\ref{lemma:reduction_constants} in Section~\ref{sec:main_results} that by studying the marginal distribution $\mathbb{Q}_{\theta,1}$ we will obtain results that are valid for $\Prob_\theta$ as well and that the performance of the SME and the DDSME only depends on $\Delta$, confirming the intuition that the estimation of the mixture weight should only depend on the separation between the mixture components, rather than their specific locations.
\end{remark}

\begin{remark}[Multimodality of~\ref{eq:gm}]
	The parameterization in~\eqref{eq:gm} does not ensure that $\Prob_\theta$ is bimodal for all $\theta\in\Theta$. In fact, for $d=1$ \citet{Eis1964} showed that $(\mu_2-\mu_1)^2<\sqrt{27/8}$ is a sufficient condition for all distributions in the family~\ref{eq:gm} to be unimodal, while $(\mu_1-\mu_2)^2>4$ implies the existence of values of $\theta\in[0,1]$ for which $\Prob_\theta$ is bimodal. This analysis was further extended by~\citet{Beh1970}, who showed that a mixture of two Gaussians on the real line can have at most two modes. For general $d$, the multimodality of the mixture distribution is more complex~\citep{AmeEngHaa2019}. Even though, in this work, we are primarily interested in showing the superiority of the DDSME in the setting of well-separated modes, our results do not require general assumptions on the minimum separation between the mixture components as in~\citet{ShaCheKli2023}, such that they hold for any $\Delta>0$.
\end{remark}

\subsection{Estimators}\label{subsec:estimators}

We briefly introduce the estimators that we will study in this work: the SME and the DDSME. Both are M-estimators as we will explain in the sequel. This fact is crucial for our later analysis. But first, let us recall that a classical estimator for a parametric statistical model as described in Section~\ref{subsec:statistical_model} is the MLE~\citep[see Chapter 5.5 in][]{Vaa1998}:
\begin{equation*}
   \hat{\theta}_{\text{ML}} \in \argmin_{\theta \in \Theta} \left\{-\frac{1}{n}\sum_{i=1}^n \log f_\theta(X_i) \right\}.
\end{equation*}
Its asymptotic optimality under fairly general conditions~\citep[see Theorem 5.39 in][]{Vaa1998} is one of the main reasons for its success. But since any density can be written as $f_\theta=\mathtt{C}(\theta)^{-1}\kappa_\theta$ with normalizing constant $\mathtt{C}(\theta)$ and kernel $\kappa_\theta$, computing the MLE becomes challenging whenever $\mathtt{C}(\theta)$ is intractable and cannot easily be evaluated pointwise. This motivates score matching.
  
\paragraph{Score matching.}

To avoid dealing with $\mathtt{C}(\theta)$,~\citet{Hyv2005} considered the Fisher divergence~\citep[or relative Fisher information;][]{Wib2025}
\begin{align}
    \label{eq:fisher_distance}
    \FI(\Prob,\mathbb{Q}) &= \E_{\Prob}\left[\left\|\nabla \log q(X) - \nabla \log p(X)\right\|_2^2\right]\\
	\label{eq:fisher_distance_2}
	&= \E_{\Prob}\left[\left\|\nabla \log q(X)\right\|_2^2 - 2\left\langle\nabla \log q(X),\nabla \log p(X)\right\rangle\right]+\mathtt{C}(\Prob),
\end{align}
between two measures $\Prob$  and $\mathbb{Q}$ with densities $p$ and $q$,  respectively. In the above, $\mathtt{C}(\Prob)=\E_{\Prob}[\|\log q(X)\|_2^2]$ is a constant independent of $\theta$. Since~\eqref{eq:fisher_distance} and~\eqref{eq:fisher_distance_2} are themselves intractable due to the explicit dependence on the true but unknown density $p$, integration by parts under regularity assumptions~\citep[see Theorem 1 in][]{Hyv2005} yields
\begin{equation}
    \label{eq:fisher_distance_3}
    \FI(\Prob,\mathbb{Q})=\E_{\Prob}\left[\left\|\nabla \log q(X)\right\|^2_2+2\divergence\nabla\log q(X)\right]+\mathtt{C}(\Prob).
\end{equation}
The latter distance measure is suited to empirical risk minimization and since the score of the measure $\Prob_\theta$ satisfies $\nabla\log f_\theta=\nabla\log \kappa_\theta$, and thus
the (population) SM loss $\theta\mapsto\FI(\Prob_0,\Prob_\theta)$ can be minimized without knowing $\mathtt{C}(\theta)$. Specifically, replacing the true unknown $\Prob_0$ with the empirical measure $\Prob_n$ produces an empirical loss, which justifies the following definition. Note that the objective in~\eqref{eq:fisher_distance_3} first appeared in~\cite{Cox1985}, however in order to build a nonparametric penalized estimator for the score function itself.

\begin{definition}[SME]
	\label{def:sme}
Let $f_0,f_\theta$ and $\nabla f_\theta$ be continuously differentiable wrt $x\in\samplespace$ and assume that $\E_0[\|\nabla\log f_\theta(X)\|_2^2]$ and $\E_0[\divergence\nabla\log f_\theta(X)]$ are finite for every $\theta\in\Theta$. Further, assume that $f_0(x)\nabla\log f_\theta(x)\rightarrow0$ for $x\rightarrow\partial\samplespace$. Then, given an i.i.d.~sample $\{X_i\}_{i=1}^n$ with $X_i\sim\Probtrue$, the SME is defined as
    \begin{equation}
	\label{eq:definition_sme}
	\hat{\theta}_{\SM}\in\argmin_{\theta\in\Theta}\left\{\frac{1}{n}\sum_{i=1}^n \left(\left\|\nabla\log f_\theta(X_i)\right\|_2^2+2\divergence\nabla\log f_\theta(X_i)\right)\right\}.
\end{equation}
\end{definition}

\paragraph{Diffusion-based denoising score matching.}

DSM as introduced by~\citet{Vin2011} is motivated by the desire to avoid computing the second derivatives needed to evaluate~\eqref{eq:definition_sme}. To that end, the objective is turned into a denoising problem by replacing the ground truth $\Prob_0$ in $\theta\mapsto\FI(\Prob_0,\Prob_\theta)$ with the measure $\Prob_{\sigma}$, which is the distribution of the noisy observation $\widetilde{X}=X+\varepsilon$ obtained by adding independent mean-zero noise $\varepsilon\sim\mathbb{Q}_\sigma$ with $\Var[\varepsilon]$ being a function of $\sigma$. The hyperparameter $\sigma>0$ controls the level of noise and we expect $\Prob_0\approx\Prob_\sigma$ for small values of $\sigma$. By construction, the density $f_\sigma$ of $\Prob_{\sigma}$ is given by $f_\sigma = f_0 * q_\sigma$, where $q_\sigma$ denotes the density of the noise distribution $\mathbb{Q}_\sigma$. In theory, one is then interested in the loss $\theta\mapsto \FI(\Prob_\sigma,\Prob_\theta)$, but $\Prob_\sigma$ and $f_\sigma$ are typically intractable themselves since convolutions of densities cannot be computed in closed-form for arbitrary distributions. However, \citet{Vin2011} showed the identity
\begin{equation}
    \label{eq:dsm_loss_1}
    \FI(\Prob_\sigma,\Prob_\theta)=\E_{\Probtrue}\left[\E_{\Prob_{\widetilde{X}\mid X}}\left[\left\|\nabla \log f_\theta(\widetilde{X})\right\|_2^2-2\left\langle\nabla \log f_\theta(\widetilde{X}), \nabla\log f_{\widetilde{X}\mid X}(\widetilde{X}
	)\right\rangle\right]\right]+\mathtt{C}(\Prob_\sigma),
\end{equation}
where $\mathtt{C}(\Prob_\sigma)=\E_{\Prob_\sigma}[\|\nabla\log f_\sigma(X)\|_2^2]$ is independent of $\theta$. In~\eqref{eq:dsm_loss_1}, as indicated by the subscripts, the inner expectation is taken wrt the conditional distribution of $\widetilde{X}$ given $X$, denoted by $\Prob_{\widetilde{X}\mid X}$. The corresponding density is $f_{\widetilde{X}\mid X}$. If one resorts to the classical choice $\mathbb{Q}_\sigma=\mathcal{N}(0,\sigma^2\mathbb{I}_d)$, as was done by~\citet{Vin2011}, then $\widetilde{X}\mid X\sim\mathcal{N}(X,\sigma^2\mathbb{I}_d)$ and the score $\nabla\log f_{\widetilde{X}\mid X}(\widetilde{X})$ can be computed in closed form:
\begin{equation*}
	\nabla\log f_{\widetilde{X}\mid X}(\widetilde{X})=\frac{X-\widetilde{X}}{\sigma^2}.
\end{equation*}
From~\eqref{eq:dsm_loss_1} it follows that the optimization of $\theta\mapsto \FI(\Prob_\sigma,\Prob_\theta)$ is equivalent to the optimization of $\theta\mapsto \E_{\Probtrue}[\FI(\Prob_{\widetilde{X}\mid X},\Prob_\theta)]$ and replacing $\Probtrue$ with the empirical measure $\Prob_n$ yields the corresponding empirical risk.

DDPMs are based on DSM. More precisely, in order to generate samples from some distribution $\Probtrue$, DDPMs require an estimate of the score function $(x,t)\mapsto\nabla\log f_t(x)$ for $(x,t)\in\mathcal{X}\times[0,T]$, where $f_t$ denotes the density of the marginal distribution $\Prob_t$ of an Ornstein-Uhlenbeck (OU) process $(X_t)_{t\in[0,T]}$ with initial condition $X_0\sim\Prob_0$ and some fixed time horizon $T>0$. This estimate is obtained by minimizing the DDPM loss, which is given by
\begin{equation}
	\label{eq:ddpm_loss}
	s_\vartheta\mapsto\E_{\Probtrue}\left[\int_0^T\E_{X_t\mid X_0}\left[\left\|s_\vartheta(X_t,t)\right\|^2_2+\frac{2}{\sqrt{1-\exp(-2t)}}\left\langle s_\vartheta(X_t,t),Z_t\right\rangle\right]\dif t\right],
\end{equation}
where $s_\vartheta$ is generally a neural network with parameters $\vartheta\in\R^k$~\citep[cf.][]{OkoAkiSuz2023,Debortoli2023}. Then, minimizing \eqref{eq:ddpm_loss} means finding a parameter estimate $\hat{\vartheta}$ such that $s_{\hat{\vartheta}}(x,t)\approx\nabla\log f_t(x)$ for all $(x,t)\in\mathcal{X}\times[0,T]$. The random variable $X_t:=\exp(-t)X_0+\sqrt{1-\exp(-2t)}Z_t$ appearing in~\eqref{eq:ddpm_loss} is a weighted sum of the observation $X_0\sim\Prob_0$ and the independent standard Gaussian random variable $Z_t\sim\mathcal{N}(0,\mathbb{I}_d)$. One easily checks that $X_t\mid X_0\sim\mathcal{N}(\exp(-t)X_0,(1-\exp(-2t))\mathbb{I}_d)$. More details on DDPMs are given in Appendix~\ref{app:diffusion_models_ddpm_objective}.

While generative modeling via DDPMs is agnostic wrt the exact structure of the data-generating distribution, likelihood- or score-based parameter estimation requires a distributional assumption. This can be introduced into the loss by replacing the score function $s_\vartheta$ with $s_{\theta,t}:=\nabla\log f_{\theta,t}$, where $f_{\theta,t}$ is the density of the distribution of $X_t$ assuming that $X_0\sim\Prob_\theta$ with $\Prob_\theta$ being a distribution from the postulated family. The resulting loss function can be used to estimate the parameter $\theta$ of the model and it is exactly the loss considered by~\citet{ShaCheKli2023} and~\citet{CheKalMeh2025}, which we give in the subsequent definition of the DDSME.

\begin{definition}[DDSME]
	\label{def:dsme}
	Fix a terminal time $T>0$ and set the drift and diffusion coefficients $m_t:=\exp(-t)$ and $\sigma_t:=\sqrt{1-\exp(-2t)}$ for every $t\in[0,T]$. Further, given i.i.d.~data $\{X_0^{(i)}\}_{i=1}^n$ with $X_0^{(i)}\sim\Probtrue$, for each $i\in[n]$ and $t\in[0,T]$ define the noised observation
	\begin{equation}
		\label{eq:noised_sample}
		X_t^{(i)}:=m_tX_0^{(i)}+\sigma_tZ_t^{(i)},
	\end{equation}
	where $Z_t^{(i)}\sim\mathcal{N}(0,\mathbb{I}_d)$ is independent of $X_0^{(i)}$. Then, the DDSME is defined as the following minimizer:
	\begin{equation*}
	\hat{\theta}_{\DDSM}\in\argmin_{\theta\in\Theta}\left\{\frac{1}{n}\sum_{i=1}^n \int_0^T \E\left[\left\|s_{\theta,t}(X_t^{(i)})\right\|^2_2+\frac{2}{\sigma_t}\left\langle s_{\theta,t}(X_t^{(i)}),Z_t^{(i)}\right\rangle\Bigm\vert X_0^{(i)}\right]\dif t\right\}
\end{equation*}
In the above, $s_{\theta,t}:=\nabla\log f_{\theta,t}$ denotes the score function of the marginal distribution $\Prob_{\theta,t}$ of $X_t^{(i)}$, but assuming that $X_0^{(i)}\sim\Prob_\theta$, that is, $f_{\theta,t}$ is the density of $\Prob_{\theta,t}$.
\end{definition}

\begin{remark}[Marginal density under the model]
	\label{remark:defintion_convolved_density}
	The density $f_{\theta,t}$ is the density of the marginal distribution of the OU process that solves the SDE in~\eqref{eq:forward_process}, but with the initial condition $X_0^{(i)}\sim\Prob_\theta$. It can be derived more explicitly by using the transformation theorem for densities~\citep[Theorem 1.101 in][]{Kle2020} together with the fact that the density of the sum of two independent random variables is given by the convolution of their densities, which yields
	\begin{equation*}
		f_{\theta,t}(x)=\int_{\R^d}m_t^{-d}f_\theta\left(m_t^{-1}\tau\right)\sigma_t^{-d}\phi\left(\sigma_t^{-1}(x-\tau)\right)\dif \tau
		=\int_{\R^d}f_\theta(\tau)\sigma_t^{-d}\phi\left(\sigma_t^{-1}(x-m_t\tau)\right)\dif \tau,
	\end{equation*}
	for every $(\theta,t)\in\Theta\times(0,T]$. The second equality follows from the change of variable $u=m_t^{-1}\tau$.
\end{remark}

\paragraph{M-estimation.}

The SME and DDSME are M-estimators, that is, they are defined as minimizers of an empirical risk function. In particular, an M-estimator is defined as follows.
\begin{definition}[M-estimator]
\label{definition:m_estimator}
    Let $m:\Theta\times\samplespace\to\R\cup\{\infty\}$ be a function such that for each $\theta\in\Theta$ the map $x\mapsto m(\theta,x)$ is measurable and integrable wrt $\Probtrue$. Then, we call $m$ contrast (function) and 
\begin{equation*}
	\hat{\theta}_n \in \argmin_{\theta\in\Theta} \frac{1}{n}\sum_{i=1}^n m(\theta,X_i),
\end{equation*}
the M-estimator for the value $\theta_0$, which is defined as
\begin{equation}
	\label{eq:true_parameter}
	\theta_0 := \argmin_{\theta\in\Theta} \E_{\Probtrue}\left[m(\theta,X)\right],
\end{equation}
that is, $\theta_0$ is the minimizer of the corresponding population risk $\theta\mapsto\E_{\Probtrue}[m(\theta,X)]$. 
\end{definition}
In the case of SM and DDSM, the contrast functions are given by
\begin{align}
	\label{eq:contrast_function_sm}
	m_{\text{SM}}(\theta,x)&:=\left\|\nabla\log f_\theta(x)\right\|_2^2+2\divergence\nabla\log f_\theta(x),\\
	\label{eq:contrast_function_dsm}
	m_{\text{DDSM}}(\theta,x)&:=\int_0^T \E\left[\left\|\nabla\log f_{\theta,t}(X_t)\right\|^2_2+\frac{2}{\sigma_t}\left\langle \nabla\log f_{\theta,t}(X_t),Z_t\right\rangle\Bigm\vert X_0=x \right]\dif t,
\end{align}
with $X_t$, $Z_t$ and $\sigma_t$ as in Definition~\ref{def:dsme}. We will sometimes write $m_{\text{SM}}^{\Prob}$ and $m_{\text{SM}}^{\mathbb{Q}}$ to avoid ambiguity and emphasize the dependence on distinct families $(\Prob_\theta)_{\theta\in\Theta}$ and $(\mathbb{Q}_\theta)_{\theta\in\Theta}$, respectively. In light of this perspective, we can use the rich theory on M-estimation to obtain our results. The main theorems from this theory that are relevant to us are stated in Appendix~\ref{app:m_estimation_theory}. 

\section{Main results}\label{sec:main_results}

In this section, we present the main results of our paper. First, we state the assumptions our results are based on. Then, to motivate the need for an alternative score-based estimator, in Section~\ref{subsec:weakness_sm} we illustrate the weakness of vanilla SM by characterizing its poor properties in the model~\ref{eq:gm} when the separation of the modes grows. In Section~\ref{subsec:benefit_dsme} we show that the DDSME fulfills a statistical guarantee that does not suffer when the mode separation grows, in contrast to the error bound of the SME.

\subsection{Assumptions}\label{subsec:assumptions}

We rely on two assumptions for our results. The first one is the standard assumption of a well-specified model, which is common in mathematical statistics. The second one is a technical assumption on the parameter space, which is needed to avoid boundary issues, that would not only complicate the analysis but also allow for the mixture property of the family~\ref{eq:gm} to break in the limit.

\begin{assumption}[Well-specified model]
	\label{assumption:probtrue}
	The parametric model is correctly specified, such that there exists a true parameter value $\theta_0\in\Theta$ with $\Probtrue=\Prob_{\theta_0}$.
\end{assumption}

\begin{assumption}[Compact parameter space]
	\label{assumption:compact_parameter_space}
	The parameter space $\Theta$ is a compact subset of $(0,1)$ and has the form $\Theta:=[\eta,1-\eta]$, where $\eta\in(0,1/2)$ is chosen small.
\end{assumption}

\begin{remark}[Discussion of the assumptions]
	Assumptions~\ref{assumption:probtrue} and~\ref{assumption:compact_parameter_space} are standard in the literature, but we provide a short discussion on the motivation and possible relaxations.
	\begin{itemize}
		\item Even though Assumption~\ref{assumption:probtrue} is common, M-estimation theory generally allows for misspecification, that is, the case where $\Probtrue$ does not belong to the postulated family and $\theta_0$ is merely defined as the minimizer of $\theta\mapsto\E_{\Probtrue}[m(\theta,X)]$~\citep[cf.~Example 5.25 in Chapter 5 of][]{Vaa1998}. Note that Definition~\ref{definition:m_estimator} is formulated exactly in this spirit. But in order to prove the bounds in Lemmata~\ref{lemma:lipschitz_curvature_bounds_sm} and~\ref{lemma:lipschitz_curvature_bounds_dsm} we relied on the fact that the convolution of $\Probtrue$ with a Gaussian will still be a Gaussian mixture, that is, the closedness of Gaussians under convolution. Whenever $\Probtrue$ is not an element of~\ref{eq:gm}, this does not hold. Nonetheless, we focus on the well-specified case for simplicity. We expect that our results can be extended to misspecified settings, which could be characterized as an explicit misspecification of the model structure~\citep{DwiHoKha2018}, or more generally in terms of the distance between the true distribution and the model~\citep[cf.~][]{Whi1982,KleVaa2012} or in the spirit of the $\varepsilon$-contamination model by~\citet{Hub1964}.
		\item Assumption~\ref{assumption:compact_parameter_space} is made mostly for technical convenience. Strategies for relaxations to non-compact parameter spaces exist~\cite[cf.~Chapter 5.2.1 in][]{Vaa1998}, however we do not pursue this direction here.
	\end{itemize}
\end{remark}

\subsection{The weakness of vanilla score matching}\label{subsec:weakness_sm}

In order to illustrate the weakness of the vanilla SME, we adopt the isoperimetry-based perspective initiated by~\citet{KoeHecRis2022}. More precisely, we show that the isoperimetric constant of the statistical model~\ref{eq:gm} can be used to characterize the flatness of the SM loss, as well as the exploding asymptotic variance of the SME. By Definition~\ref{definition:minkowski_content_isoperimetric_constant} in Appendix~\ref{app:isoperimetric_constant}, the isoperimetric constant $\CIP(\mathcal{F})$ of a family $\mathcal{F}$ of probability measures defined on the measurable space $(\Omega,\mathcal{A})$ is the largest constant that satisfies the isoperimetric inequality
\begin{equation}
	\label{eq:isoperimetric_inequality}
\left\{\Prob(A)\wedge\Prob(A^\complement)\right\}\CIP(\mathcal{F})\leqslant\Prob^+(A)
\end{equation}
for all measurable sets $A\in\mathcal{A}$ and all distributions $\Prob\in\mathcal{F}$, and where $\Prob^+(A)$ denotes the boundary measure of $A$ wrt $\Prob$ (see Definition~\ref{definition:minkowski_content_isoperimetric_constant}). Loosely speaking, the isoperimetric constant of a distribution will be small, if there exists a set $A$ that has non-trivial mass and is surrounded by a low-probability region, such that the boundary measure $\Prob^+(A)$ is small compared to $\Prob(A)$ or $\Prob(A^\complement)$. \citet{KoeHecRis2022} call such a set a \textit{sparse cut}, terminology borrowed from graph theory. Clearly, multimodal distributions with well-separated modes admit sparse cuts, while unimodal distributions do not.

Computing the isoperimetric constant for a given probability measure is mostly intractable. A well-known exception are Gaussian distributions in $\R^d$, for which the isoperimetric problem has been studied intensily~\citep[cf.][]{Led1996}. In this case,~\citet{Bor1975} and~\citet{SudTsi1978} showed independently that half-spaces of the form $H_{u,t}:=\{x\in\R^d\mid\langle u,x\rangle\leqslant t\}$ with $u\in\mathbb{S}^{d-1}:=\{x\in\R^d:\|x\|_2=1\}$ and $t\in\R$ are extremal sets, that is, they achieve equality in~\eqref{eq:isoperimetric_inequality}. This fact makes the task tractable and as a consequence, the isoperimetric constant of a Gaussian distribution in $\R^d$ can be computed in closed form.

\begin{proposition}[Isoperimetric constant of Gaussian distributions in $\R^d$]
	\label{proposition:gaussian_isoperimetric_constant}
	Let $\Prob=\mathcal{N}(\mu,\Sigma)$ with mean $\mu\in\R^d$ and positive definite covariance matrix $\Sigma\in\R^{d\times d}$. Then, the isoperimetric constant of $\Prob$ is given by
	\begin{equation*}
		\CIP(\Prob)=\frac{2\varphi(0)}{\sqrt{\lambda_{\max}(\Sigma)}},
	\end{equation*}
	where $\lambda_{\max}(\Sigma)$ denotes the largest eigenvalue of $\Sigma$.
\end{proposition}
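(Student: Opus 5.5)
The plan is to reduce to the standard Gaussian by an affine change of variables, invoke the Gaussian isoperimetric theorem of \citet{Bor1975} and \citet{SudTsi1978} in its functional form, and then optimize explicitly over half-spaces.

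\textbf{Step 1 (reduction to half-spaces).} The Borell--Sudakov--Tsirelson inequality states that for $\gamma_d=\mathcal{N}(0,\mathbb{I}_d)$ and every measurable $A$ with $\gamma_d(A)=\Phi(a)$,
\begin{equation*}
\gamma_d^+(A)\geqslant\varphi(a).
\end{equation*}
Equality holds for half-spaces $H_{u,a}$. Now write $\Prob=\mathcal{N}(\mu,\Sigma)$ as the pushforward of $\gamma_d$ under $x\mapsto\mu+\Sigma^{1/2}x$. Since the largest singular value of $\Sigma^{1/2}$ is $\sqrt{\lambda_{\max}(\Sigma)}$, an $\varepsilon$-enlargement of $A$ pulls back to a set containing the $\varepsilon/\sqrt{\lambda_{\max}}$-enlargement of the preimage. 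Together with the Minkowski-content definition of $\Prob^+$, this gives
\begin{equation*}
\Prob^+(A)\geqslant\frac{\varphi(\Phi^{-1}(\Prob(A)))}{\sqrt{\lambda_{\max}(\Sigma)}}.
\end{equation*}
Equality holds for half-spaces whose normal $u$ is a top eigenvector of $\Sigma$. Indeed, for $A=\{\langle u,x\rangle\leqslant s\}$ we have $\langle u,X\rangle\sim\mathcal{N}(\langle u,\mu\rangle,u^\top\Sigma u)$, so we can compute directly
\begin{equation*}
\Prob^+(A)=\frac{\varphi\bigl((s-\langle u,\mu\rangle)/\sqrt{u^\top\Sigma u}\bigr)}{\sqrt{u^\top\Sigma u}}.
\end{equation*}

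\textbf{Step 2 (one-dimensional optimization).} It remains to find the largest $C$ such that
\begin{equation*}
\varphi(\Phi^{-1}(p))\geqslant C\,(p\wedge(1-p))\quad\text{for all } p\in(0,1).
\end{equation*}
By symmetry we may take $p\leqslant 1/2$, i.e.\ $a=\Phi^{-1}(p)\leqslant0$. We must then minimize $g(a)=\varphi(a)/\Phi(a)$ over $a\leqslant 0$. This ratio is decreasing in $a$: the inverse Mills ratio $\varphi(a)/\Phi(a)$ is monotone, since $(\log\Phi)'=\varphi/\Phi$ and $\Phi$ is log-concave. Hence the infimum over $a\leqslant0$ is attained at $a=0$, giving $\varphi(0)/(1/2)=2\varphi(0)$. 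Combining with Step 1 yields $\CIP(\Prob)\geqslant 2\varphi(0)/\sqrt{\lambda_{\max}}$. The half-space through $\mu$ orthogonal to the top eigenvector has mass $1/2$ and boundary measure $\varphi(0)/\sqrt{\lambda_{\max}}$, so this bound is attained.

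\textbf{Main obstacle.} The delicate part is Step 1's transfer of the Gaussian isoperimetric inequality under the linear map. One must check carefully that enlargements scale by at most the operator norm of $\Sigma^{1/2}$, and that the liminf definition of $\Prob^+$ in Definition~\ref{definition:minkowski_content_isoperimetric_constant} commutes with this scaling. If that definition uses the outer Minkowski content, I would argue with the inclusion
\begin{equation*}
(\Sigma^{1/2}B)^{\varepsilon}\supseteq\Sigma^{1/2}\bigl(B^{\varepsilon/\sqrt{\lambda_{\max}}}\bigr),
\end{equation*}
where $B^{\delta}$ denotes the $\delta$-enlargement of $B$, and then take limits. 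The monotonicity of $\varphi/\Phi$ in Step 2 is standard.
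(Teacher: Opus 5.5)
Your proof is correct, but it handles the anisotropy of $\Sigma$ differently from the paper.

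The paper first removes $\mu$ by isometry invariance. It then asserts, citing Borell and Sudakov--Tsirelson, that half-spaces $H_{u,t}$ are extremal for $\mathcal{N}(0,\Sigma)$, and computes $\Prob^+(H_{u,t})=\sigma_u^{-1}\varphi(t/\sigma_u)$ with $\sigma_u^2=\langle u,\Sigma u\rangle$. It minimizes over the offset $t$ using monotonicity of $\varphi/\Phi$, proved via the identity $\E[X\mid X<x]=-\varphi(x)/\Phi(x)$. Finally it minimizes over the direction $u$ via the Rayleigh quotient.

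You instead transport the functional inequality $\gamma_d^+(B)\geqslant\varphi(\Phi^{-1}(\gamma_d(B)))$ through $S(x)=\mu+\Sigma^{1/2}x$, using the inclusion $S\bigl(B_{\varepsilon/\sqrt{\lambda_{\max}(\Sigma)}}\bigr)\subseteq \bigl(S(B)\bigr)_\varepsilon$. This gives a lower bound valid for every set at once. The optimization over directions is absorbed into the operator norm of $\Sigma^{1/2}$, so only a scalar problem in $p$ remains, and a single half-space certifies sharpness.

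Your route proves the step the paper takes on citation. The Borell--Sudakov--Tsirelson theorem concerns the standard Gaussian. That the infimum for a non-isotropic Gaussian in the Euclidean metric is attained on half-spaces is exactly what your enlargement inclusion establishes.

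The paper's route gives the boundary measure of every half-space explicitly and shows concretely why non-top directions are suboptimal. The two monotonicity arguments for $\varphi/\Phi$ (log-concavity of $\Phi$ versus the conditional-mean identity) are interchangeable.
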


Being an immediate consequence of~\citet{Bor1975} and~\citet{SudTsi1978}, the optimal constant for the Gaussian isoperimetric inequality given in Proposition~\ref{proposition:gaussian_isoperimetric_constant} is known in the literature~\citep[cf.~discussion after Lemma 1.6 in][]{BobHou1998}, but its proof is often omitted. Therefore, and because we rely on its monotonicity argument in the proof of Proposition~\ref{proposition:isoperimetric_constant}, we provide it in Appendix~\ref{app:proposition_gaussian_isoperimetric_constant}.

When it comes to mixtures of Gaussians, much less is known. \citet{CanMirVit2010} studied isoperimetric problems in Euclidean space endowed with a density. The authors conjectured that a finite mixture of Gaussians in $\R^d$ with means on the $x$-axis has half-spaces bounded by vertical hyperplanes as extremal sets. More recently,~\citet{BerDanLia2018} and~\citet{Lie2026} studied the isoperimetric problem for mixtures of Gaussians in more detail. However, even though~\citet{Lie2026} provides an explicit formula for $\CIP(\Prob_{1/2})$, the author does not answer the question whether this constant is optimal for the entire family~\ref{eq:gm}. In Proposition~\ref{proposition:isoperimetric_constant} we answer it in the affirmative, that is, we show that $\CIP((\Prob_\theta)_{\theta\in\Theta})=\CIP(\Prob_{1/2})$. In order to do so, we require the following two results. A proof of Lemma~\ref{lemma:properties_isoperimetric_constants} can be found in Appendix~\ref{app:lemma_properties_isoperimetric_constants}.

\begin{lemma}[Properties of isoperimetric constants]
	\label{lemma:properties_isoperimetric_constants}
	The following is true:
	\begin{enumerate}
		\item For a probability measure $\Prob$ on $\R^d$ and its pushforward $\mathbb{Q}:=\Prob\circ T^{-1}$ under the isometry $T:\R^d\to\R^d$, it holds that $\CIP(\Prob)=\CIP(\mathbb{Q})$.
		\item For a distribution $\Prob_\theta$ from the family~\ref{eq:gm} and $\mathbb{Q}_\theta=\mathbb{Q}_{\theta,1}\otimes\mathcal{N}(0,\mathbb{I}_{d-1})$ as constructed in Lemma~\ref{lemma:embedded_one_dim_sym_mixture}, it holds that $\CIP(\Prob_\theta)=\CIP(\mathbb{Q}_{\theta,1})$.
	\end{enumerate}
\end{lemma}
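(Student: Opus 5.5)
For part 1, I work directly from the definition of the boundary measure as Minkowski content, $\Prob^+(A)=\liminf_{\varepsilon\downarrow0}\varepsilon^{-1}\{\Prob(A^\varepsilon)-\Prob(A)\}$, where $A^\varepsilon$ is the $\varepsilon$-enlargement of $A$ in the Euclidean metric. A (surjective) isometry $T$ of $\R^d$ is a bijective Borel map with Borel inverse, and it satisfies $T^{-1}(B^\varepsilon)=(T^{-1}B)^\varepsilon$, because distances are preserved. Hence for every Borel $B$ we get $\mathbb{Q}(B)=\Prob(T^{-1}B)$, $\mathbb{Q}(B^\complement)=\Prob((T^{-1}B)^\complement)$ and $\mathbb{Q}^+(B)=\Prob^+(T^{-1}B)$. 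Since $B\mapsto T^{-1}B$ is a bijection of the Borel $\sigma$-algebra, the isoperimetric inequality \eqref{eq:isoperimetric_inequality} holds for $\mathbb{Q}$ with a given constant if and only if it holds for $\Prob$ with that constant. Taking the largest such constant gives $\CIP(\Prob)=\CIP(\mathbb{Q})$.

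For part 2, Lemma~\ref{lemma:embedded_one_dim_sym_mixture} together with part 1 gives $\CIP(\Prob_\theta)=\CIP(\mathbb{Q}_\theta)$, so it remains to show $\CIP(\mathbb{Q}_{\theta,1}\otimes\mathcal{N}(0,\mathbb{I}_{d-1}))=\CIP(\mathbb{Q}_{\theta,1})$.

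The upper bound $\CIP(\mathbb{Q}_\theta)\leqslant\CIP(\mathbb{Q}_{\theta,1})$ comes from cylinder sets. Given a Borel $A_1\subseteq\R$, set $A=A_1\times\R^{d-1}$. Then $A^\varepsilon=A_1^\varepsilon\times\R^{d-1}$, so $\mathbb{Q}_\theta(A)=\mathbb{Q}_{\theta,1}(A_1)$, the same identity holds for the complements, and $\mathbb{Q}_\theta^+(A)=\mathbb{Q}_{\theta,1}^+(A_1)$. Any constant valid for $\mathbb{Q}_\theta$ is therefore valid for $\mathbb{Q}_{\theta,1}$.

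The lower bound is the main obstacle, since it requires a tensorization property of isoperimetric constants. I would use the Bobkov-Houdré functional characterization: $\CIP$ is the largest constant $c$ such that
\[
\int |f-m(f)|\,\dif\mu\leqslant c^{-1}\int\|\nabla f\|_2\,\dif\mu
\]
for all Lipschitz $f$, with $m(f)$ a median. Equivalently, one can use Bobkov's functional form $I(\int f)\leqslant\int\sqrt{I(f)^2+c^{-2}\|\nabla f\|^2}$ with an appropriate isoperimetric profile. Tensorization works cleanly only in the latter form, when both factors share the same profile, so I would proceed as follows.

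First, the Gaussian factor has constant $2\varphi(0)$ by Proposition~\ref{proposition:gaussian_isoperimetric_constant}, with the Gaussian profile $I=\varphi\circ\Phi^{-1}$.

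Second, the mixture marginal satisfies $\CIP(\mathbb{Q}_{\theta,1})\leqslant 2\varphi(0)$. This follows because the Lipschitz Gaussian constant dominates; alternatively, a direct half-line comparison shows $\mathbb{Q}_{\theta,1}$ is "less concentrated" than $\mathcal{N}(0,1)$.

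Third, I tensorize the 1D inequality $\{p\wedge(1-p)\}c_1\leqslant$ boundary in the product, using the standard fact that the isoperimetric constant of a product with a Gaussian factor is the minimum of the factor constants.

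If a clean citation for this tensorization is unavailable, the fallback is Ehrhard/Bobkov-type symmetrization in the last $d-1$ coordinates. Gaussian symmetrization replaces each section $A_{x_1}=\{y:(x_1,y)\in A\}$ by a half-space $\{y_{2}\leqslant t(x_1)\}$ of equal Gaussian measure, without increasing the boundary measure. This reduces the problem to sets in $\R^2$ bounded by graphs, where the constraint $c_1\leqslant 2\varphi(0)$ makes vertical cuts optimal, so the infimum is attained by cylinders and equals $\CIP(\mathbb{Q}_{\theta,1})$.
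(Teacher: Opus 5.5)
Your part~1, and the cylinder-set argument giving $\CIP(\mathbb{Q}_\theta)\leqslant\CIP(\mathbb{Q}_{\theta,1})$, are correct and agree with the paper. The cylinder computation is exactly what the paper does for the sets $H_t=(-\infty,t]\times\R^{d-1}$.

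The gap is the reverse inequality. The ``standard fact'' that the isoperimetric constant of a product with a Gaussian factor is the minimum of the factor constants is false. So is the fallback criterion that $c_1\leqslant 2\varphi(0)$ makes vertical cuts optimal among subgraph sets. Here is a counterexample to both:
\begin{itemize}
\item Take $\mu$ on $\R$ with density $\tfrac13 e^{-2|x|/3}$. By Theorem~\ref{theorem:isoperimetric_constant}, $\CIP(\mu)=2/3<2\varphi(0)$, since $f/\{F\wedge(1-F)\}\equiv 2/3$.
\item For $\mu\otimes\mathcal{N}(0,1)$, consider the subgraph set $H=\{x_2\leqslant -x_1\}$. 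It has mass $1/2$, and its boundary measure is the density of $(X_1+X_2)/\sqrt2$ at $0$, namely $\sqrt2\cdot\tfrac23e^{2/9}\{1-\Phi(2/3)\}\approx0.297$.
\item Its ratio is therefore about $0.595<2/3$, so an oblique cut beats every vertical one, even though $H$ is of the form your Ehrhard symmetrization produces.
\end{itemize}

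More generally, neither tensorization route you mention gives the sharp constant:
\begin{itemize}
\item The Cheeger-type $L^1$ inequality tensorizes only up to a universal multiplicative loss (the Bobkov--Houdr\'e product theorem), and the example shows the loss is genuine.
\item Bobkov's functional form with profile $I=\varphi\circ\Phi^{-1}$ does tensorize. But it requires $q_{\theta,1}(t)\geqslant c\,I(F(t))$, with $F$ the distribution function of $\mathbb{Q}_{\theta,1}$. Converting back yields only $\CIP(\mathbb{Q}_\theta)\geqslant 2\varphi(0)c$.
\item Since $I(p)>2\varphi(0)\{p\wedge(1-p)\}$ for $p\neq 1/2$, this bound is strictly below $\CIP(\mathbb{Q}_{\theta,1})$ whenever the one-dimensional infimum is attained away from the median. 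That happens, e.g., for $\theta=0.1$, $\Delta=3$: the infimum sits in the valley between the modes, with $1-F\approx 0.1$ and ratio $\approx0.025$, against $\approx0.71$ at the median.
\end{itemize}

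Hence some input specific to Gaussian mixtures is unavoidable. Your symmetrization fallback merely relocates it: after symmetrizing in $x_{2:d}$ you still must show that vertical cuts are optimal among subgraph sets, which is the whole difficulty.

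The paper supplies this input by citing Theorem~1.1 of \citet{Lie2026}. That theorem says the extremal sets of every $\Prob_\theta$ in~\ref{eq:gm} are half-spaces orthogonal to $\mu_1-\mu_2$, in line with the conjecture of \citet{CanMirVit2010}. For $\mathbb{Q}_\theta$ this restricts the infimum to the sets $H_t$. The remaining infimum over half-lines is $\CIP(\mathbb{Q}_{\theta,1})$ by Theorem~\ref{theorem:isoperimetric_constant}.

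The structural reason this can hold, unlike in the Laplace example, is that $\mathbb{Q}_\theta$ is a two-point measure convolved with $\mathcal{N}(0,\mathbb{I}_d)$. Its projection onto any unit vector $u$ is therefore again a unit-variance two-component mixture with half-separation $|u_1|\Delta\leqslant\Delta$. An argument of your type would have to exploit this structure. Even then, it only handles half-spaces, and you would still need the extremality of half-spaces among all Borel sets.
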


\begin{theorem}[Theorem 1.3 in~\citet{BobHou1997}]
	\label{theorem:isoperimetric_constant}
	Let $F(x) = \mu((-\infty, x])$ be the cumulative distribution function of a probability measure $\mu$ on the real line ($\mu$ is not the unit mass at a point) and let $f$ be the density of its absolutely continuous part. Then,
	\begin{equation}
		\label{eq:real_line_isoperimetric_constant}
		\CIP(\mu)=\essinf_{a\leqslant x\leqslant b}\frac{f(x)}{F(x)\wedge \{1-F(x)\}},
	\end{equation}
	where $a:=\inf\{x\in\R:F(x)>0\}$ and $b:=\sup\{x\in\R:F(x)<1\}$.
\end{theorem}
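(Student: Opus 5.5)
The statement has two directions. The upper bound comes from testing half-lines. The lower bound comes from the pointwise inequality $f\geqslant c\,I(F)$, transferred from half-lines to arbitrary sets through the concavity of $I(p):=p\wedge(1-p)$. Throughout, let $c$ denote the essential infimum on the right-hand side of \eqref{eq:real_line_isoperimetric_constant}.

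\textbf{Upper bound.} I would test the isoperimetric inequality \eqref{eq:isoperimetric_inequality} on half-lines $A_x=(-\infty,x]$ with $x\in(a,b)$, so that $\mu(A_x)=F(x)$. By Lebesgue's differentiation theorem, at almost every $x$ the singular part of $\mu$ has zero derivative and the absolutely continuous part has derivative $f(x)$. At every such $x$ the boundary measure is $\mu^+(A_x)=\liminf_{h\downarrow0}\{F(x+h)-F(x)\}/h=f(x)$. Hence $\CIP(\mu)\,I(F(x))\leqslant f(x)$ for almost every $x\in(a,b)$. Dividing and taking the essential infimum gives $\CIP(\mu)\leqslant c$. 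Endpoints where $I(F)=0$ do not affect the infimum.

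\textbf{Lower bound, step 1: half-lines.} I would show $\mu^+(A)\geqslant c\,I(\mu(A))$ for every Borel $A$, starting with half-lines. For $A=(-\infty,x]$, use $F(x+h)-F(x)\geqslant\int_x^{x+h}f\geqslant c\int_x^{x+h}I(F)$; the singular part only increases the left-hand side. By right-continuity of $F$, dividing by $h$ and letting $h\downarrow0$ yields $\mu^+(A)\geqslant c\,I(F(x))$. The mirror argument handles $[x,\infty)$.

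\textbf{Lower bound, step 2: one interval.} Take $A=[\alpha,\beta]$ and set $p=F(\alpha^-)$, $q=1-F(\beta)$. The enlarged set gains mass at both ends, so $\mu^+(A)\geqslant c\{I(p)+I(q)\}$ by the half-line estimates. It remains to check $I(p)+I(q)\geqslant I(1-p-q)=I(\mu(A))$, where $p+q\leqslant1$. If $p,q\leqslant1/2$, then $I(p)+I(q)=p+q\geqslant I(p+q)$. If instead $p>1/2$, then $q<1/2$, and $I(p)+I(q)=1-p+q\geqslant1-p-q$. The case $q>1/2$ is symmetric.

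\textbf{Lower bound, step 3: unions and general sets.} For a finite disjoint union of closed intervals $A=\bigcup_{j}[\alpha_j,\beta_j]$ with gaps between them, the $h$-enlargement grows at every endpoint. Hence $\mu^+(A)\geqslant c\sum_{\text{endpoints }y}I(F(y))$. I would bound this sum below by $c\,I(\mu(A))$ by induction on the number of components. Merging two adjacent components across a gap $G$ costs the two endpoint terms around $G$. By the same two-case inequality and subadditivity of the concave function $I$ with $I(0)=0$, this cost dominates the change in $I(\mu(\cdot))$, since $I$ is $1$-Lipschitz.

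General Borel sets are handled by approximation: if $\mu^+(A)<\infty$, one may replace $A$ by a finite union of intervals whose measure and boundary measure are arbitrarily close. I expect this approximation, together with the bookkeeping in the induction, to be the main obstacle. The first thing I would try is to work with the equivalent functional formulation $c\int|g-m(g)|\,\dif\mu\leqslant\int|g'|\,\dif\mu$ for Lipschitz $g$, where $m(g)$ is a median. That inequality follows from the half-line bounds by the coarea formula applied to level sets $\{g>s\}$, which avoids the approximation step.
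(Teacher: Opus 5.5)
The paper does not prove this statement; it quotes it from Bobkov--Houdr\'e, so your argument has to stand on its own.

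\textbf{What works.} Your upper bound via half-lines and Lebesgue differentiation is correct. So are the one-sided half-line bounds and the single-interval bound.

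\textbf{The gap.} The lower bound for a general set is not established, for two reasons.

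First, the justification of your induction step fails. Bounding the change of $I(\mu(\cdot))$ by $\mu(G)$ through the $1$-Lipschitz property requires $I(u)+I(v)\geqslant v-u$, with $u=F(\beta_j)$ and $v=F(\alpha_{j+1}^-)$. This fails whenever the gap carries mass $v-u>1/2$. Take the uniform law on $[0,1]$ and $A=[0,0.1]\cup[0.9,1]$: the cost is $I(0.1)+I(0.9)=0.2$, while $\mu(G)=0.8$. The merge inequality you need still holds in this example, with equality. But proving it uses the constraint $\mu(A)\leqslant u+(1-v)$, not Lipschitz continuity.

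Second, and more seriously, the passage from finite unions of intervals to arbitrary Borel sets is only asserted. The approximation claim is exactly the hard part, and it is left unproved. The functional fallback does not bypass it. The coarea formula gives you the level sets $\{g>s\}$ of a Lipschitz function. These are arbitrary open subsets of $\R$, possibly with infinitely many components. Half-line bounds alone do not control $\mu^+(\{g>s\})$ without the very set inequality you are trying to prove. You would also still owe the converse implication, from the functional inequality back to sets.

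\textbf{The missing idea.} A median argument handles every set at once, with no approximation and no induction.

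Since $A_\varepsilon=(\bar A)_\varepsilon$, you may take $A$ closed; otherwise $\mu(\bar A)>\mu(A)$ and $\mu^+(A)=\infty$. Write $A^\complement=\bigcup_i(l_i,r_i)$ as a countable disjoint union of open intervals. The set $A_\varepsilon\setminus A$ contains $(l_i,l_i+\varepsilon]$ and $[r_i-\varepsilon,r_i)$, each intersected with $(l_i,r_i)$. Fatou's lemma for sums, together with your one-sided half-line estimates, then gives $\mu^+(A)\geqslant c\sum_i\{I(F(l_i))+I(F(r_i^-))\}$, summing over finite endpoints.

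Now fix a median $m$, so $F(m^-)\leqslant1/2\leqslant F(m)$. There are two cases.

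If $m$ lies in a gap $(l,r)$, the two terms at $l$ and $r$ alone equal $\mu((-\infty,l])+\mu([r,\infty))$. This is at least $\mu(A)\geqslant I(\mu(A))$.

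If $m\in A$, let $[\alpha,\beta]$ be the component of $A$ containing $m$. The gaps to the left of $\alpha$ contribute at least $I(F(\alpha^-))=F(\alpha^-)$. Indeed, if $\alpha>-\infty$, either some gap ends at $\alpha$, or gaps accumulate at $\alpha$ from the left and the sum is infinite unless $F(\alpha^-)=0$. By the same reasoning, the gaps to the right of $\beta$ contribute at least $1-F(\beta)$. Finally, $F(\alpha^-)+1-F(\beta)=\mu(\R\setminus[\alpha,\beta])\geqslant\mu(A^\complement)\geqslant I(\mu(A))$.
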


While 2. of Lemma~\ref{lemma:properties_isoperimetric_constants} allows us to reduce the problem of computing the isoperimetric constant of the family~\ref{eq:gm} to the symmetric one-dimensional case, Theorem~\ref{theorem:isoperimetric_constant} provides a closed-form expression for the isoperimetric constant of a probability measure on the real line. With these tools, we can obtain an analogous result to Proposition~\ref{proposition:gaussian_isoperimetric_constant}, but for the entire family~\ref{eq:gm}. The need to optimize wrt $\theta$ and $x$ simultaneously requires slightly more involved arguments. The proof of the following proposition is defferred to Appendix~\ref{appendix:proof_isoperimetric_constant}.

\begin{proposition}[Isoperimetric constant of the family~\ref{eq:gm}]
	\label{proposition:isoperimetric_constant}
	Let $(\Prob_\theta)_{\theta\in\Theta}$ be the family in~\eqref{eq:gm} with parameter set $\Theta=[0,1]$ and separation $2\Delta=\|\mu_1-\mu_2\|_2$. Then, it holds that
	\begin{equation*}
		\CIP((\Prob_\theta)_{\theta\in\Theta})=2\varphi(\Delta),
	\end{equation*}
	such that $\CIP((\Prob_\theta)_{\theta\in\Theta})\downarrow0$ as $\Delta\to\infty$.
\end{proposition}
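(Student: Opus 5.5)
The plan is to reduce to one dimension, get the upper bound from a single explicit pair $(\theta,x)$, and get the lower bound by exploiting linearity in $\theta$. By 2.\ of Lemma~\ref{lemma:properties_isoperimetric_constants}, we have $\CIP(\Prob_\theta)=\CIP(\mathbb{Q}_{\theta,1})$. Moreover, $\CIP$ of a family is the infimum of the individual constants, since \eqref{eq:isoperimetric_inequality} must hold for every member. Hence it suffices to work with the one-dimensional mixture density $f_\theta(x)=\theta\varphi(x-\Delta)+(1-\theta)\varphi(x+\Delta)$ and its distribution function $F_\theta(x)=\theta\Phi(x-\Delta)+(1-\theta)\Phi(x+\Delta)$. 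By Theorem~\ref{theorem:isoperimetric_constant}, with $a=-\infty$, $b=\infty$ and continuous integrands, we get
\begin{equation*}
\CIP((\Prob_\theta)_{\theta\in[0,1]})=\inf_{\theta\in[0,1]}\inf_{x\in\R}\frac{f_\theta(x)}{F_\theta(x)\wedge\{1-F_\theta(x)\}}.
\end{equation*}
The upper bound is immediate. Take $\theta=1/2$ and $x=0$: then $F_{1/2}(0)=1/2$ by symmetry and $f_{1/2}(0)=\varphi(\Delta)$, so the ratio equals $2\varphi(\Delta)$.

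For the lower bound, set $c:=2\varphi(\Delta)$. The map $(\theta,x)\mapsto(1-\theta,-x)$ swaps $F$ and $1-F$ and preserves $f$. It therefore suffices to show $f_\theta(x)-cF_\theta(x)\geqslant0$ whenever $F_\theta(x)\leqslant1/2$.

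The key observation is that, for fixed $x$, both $f_\theta(x)-cF_\theta(x)$ and $F_\theta(x)$ are affine in $\theta$. So the set $\{\theta\in[0,1]:F_\theta(x)\leqslant1/2\}$ is an interval, and the affine function attains its minimum over it at an endpoint. Each endpoint is either $\theta\in\{0,1\}$ or a $\theta$ with $F_\theta(x)=1/2$.
\begin{itemize}
\item At $\theta\in\{0,1\}$ the measure is a unit-variance Gaussian. By Proposition~\ref{proposition:gaussian_isoperimetric_constant} its constant is $2\varphi(0)\geqslant2\varphi(\Delta)$, so the inequality holds there.
\item At an endpoint with $F_\theta(x)=1/2$, the point $x=m_\theta$ is the median of $\mathbb{Q}_{\theta,1}$. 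The required inequality then becomes $f_\theta(m_\theta)\geqslant\varphi(\Delta)$, i.e.\ the density at the median is at least $\varphi(\Delta)$.
\end{itemize}
Thus the whole claim reduces to the density-at-the-median inequality.

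To prove that inequality, use the symmetry again to assume $\theta\geqslant1/2$. We locate the median as follows.
\begin{itemize}
\item $F_\theta(0)=\theta(1-\Phi(\Delta))+(1-\theta)\Phi(\Delta)\leqslant1/2$, so $m_\theta\geqslant0$.
\item $F_\theta(\Delta)=\theta/2+(1-\theta)\Phi(2\Delta)\geqslant1/2$, so $m_\theta\leqslant\Delta$.
\end{itemize}
The median map $\theta\mapsto m_\theta$ is a continuous increasing bijection from $[1/2,1]$ onto $[0,\Delta]$. We can therefore reparametrize by $m\in[0,\Delta]$, using
\begin{equation*}
\theta(m)=\frac{\Phi(m+\Delta)-1/2}{\Phi(m+\Delta)-\Phi(m-\Delta)}.
\end{equation*}
The claim then becomes the one-variable inequality $h(m):=\theta(m)\varphi(m-\Delta)+(1-\theta(m))\varphi(m+\Delta)\geqslant\varphi(\Delta)$ on $[0,\Delta]$. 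Equality holds at $m=0$, and $h(\Delta)=\varphi(0)$.

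This last inequality is the main obstacle, because the second component contributes $\varphi(m+\Delta)\leqslant\varphi(\Delta)$. It must therefore be compensated by the first, using $\varphi(m-\Delta)\geqslant\varphi(\Delta)$, which holds because $|m-\Delta|\leqslant\Delta$. I would rewrite the inequality as
\begin{equation*}
\bigl(\Phi(m+\Delta)-\tfrac12\bigr)\bigl(\varphi(m-\Delta)-\varphi(m+\Delta)\bigr)\geqslant\bigl(\Phi(m+\Delta)-\Phi(m-\Delta)\bigr)\bigl(\varphi(\Delta)-\varphi(m+\Delta)\bigr).
\end{equation*}
Both sides vanish at $m=0$, so the first attempt is to compare derivatives in $m$, or to check that the difference is nondecreasing on $[0,\Delta]$. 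Useful tools here are the identity $\varphi'(y)=-y\varphi(y)$ and the log-concavity of $\varphi$. A fallback is to split the range into $m$ small relative to $\Delta$ and $m$ near $\Delta$, using Mills-ratio bounds in each regime.

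Finally, the statement $\CIP\downarrow0$ as $\Delta\to\infty$ is immediate from $2\varphi(\Delta)=\sqrt{2/\pi}\,e^{-\Delta^2/2}$.
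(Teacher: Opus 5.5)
Your reduction is correct and is essentially the paper's. The affine-in-$\theta$ endpoint argument replaces the paper's monotonicity of $\theta\mapsto f_\theta(x)/F_\theta(x)$ and handles the endpoints $\theta\in\{0,1\}$ more transparently. Evaluating the Bobkov--Houdr\'e ratio at $(\theta,x)=(1/2,0)$ gives the upper bound without citing Liehr's Theorem~1.2. Your $\theta(m)$ is exactly the paper's $\theta_x$.

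The gap is the last step. The inequality $h(m)\geqslant\varphi(\Delta)$ on $[0,\Delta]$ carries the entire analytic content of the proposition, and you only list candidate strategies for it. Note that $h'(0)=0$, so $h(m)-\varphi(\Delta)$ vanishes at least to second order at $m=0$. Any argument must therefore be exact near $m=0$, and crude Mills-ratio bounds in a regime split will not achieve this on their own. It is also not clear how log-concavity of $\varphi$ alone would sign the relevant derivative.

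The missing ingredient is as follows. Put $\tau:=\Phi-\tfrac12$, $n(m):=\tau(\Delta+m)\varphi(\Delta-m)+\tau(\Delta-m)\varphi(\Delta+m)$ and $d(m):=\tau(\Delta+m)+\tau(\Delta-m)>0$. Then $h=n/d$, and your displayed difference is exactly $G:=n-\varphi(\Delta)d$, with $G(0)=0$. In $n'$ the terms $\pm\varphi(\Delta+m)\varphi(\Delta-m)$ cancel, leaving
\[
n'(m)=(\Delta-m)\tau(\Delta+m)\varphi(\Delta-m)-(\Delta+m)\tau(\Delta-m)\varphi(\Delta+m),\qquad d'(m)=\varphi(\Delta+m)-\varphi(\Delta-m)<0
\]
on $(0,\Delta)$. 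Hence $G'=n'-\varphi(\Delta)d'>0$ as soon as $n'>0$. In turn, $n'(m)>0$ is equivalent to $\rho(\Delta+m)>\rho(\Delta-m)$ for $\rho(t):=\tau(t)/(t\varphi(t))$. Substituting $s=rt$ in $\tau(t)=\int_0^t\varphi(s)\,\mathrm{d}s$ gives $\rho(t)=\int_0^1\exp\bigl(t^2(1-r^2)/2\bigr)\,\mathrm{d}r$, which is strictly increasing on $(0,\infty)$. The paper uses exactly this monotonicity to show $h'=(n'd-nd')/d^2>0$ on $(0,\Delta)$. With it your argument is complete; without it, the proof stops at its crucial point.
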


Proposition~\ref{proposition:isoperimetric_constant} formalizes the intuition around sparse cuts for the family~\ref{eq:gm}: As $\Delta$ grows, the modes of the mixture become more and more separated, creating a low-probability region between them. This region allows for a sparse cut, implying that the isoperimetric constant of the family decays to zero as $\Delta\to\infty$. Before we state the consequences of Proposition~\ref{proposition:isoperimetric_constant} for the SM loss and the asymptotic variance of the SME, we gather some properties of the family~\ref{eq:gm}. These will be used not only in the proof of Proposition~\ref{proposition:score_bound}, but also in the proofs of Lemmata~\ref{lemma:reduction_constants},~\ref{lemma:lipschitz_curvature_bounds_sm} and~\ref{lemma:lipschitz_curvature_bounds_dsm} in Section~\ref{subsec:benefit_dsme}.

\begin{lemma}[Properties of the family~\ref{eq:gm}]
	\label{lemma:gm_properties}
	Let $(\Prob_\theta)_{\theta\in\Theta}$ be the family given in~\eqref{eq:gm} and $(\mathbb{Q}_\theta)_{\theta\in\Theta}$ the corresponding pushforwards with marginals $(\mathbb{Q}_{\theta,1})_{\theta\in\Theta}$ from Lemma~\ref{lemma:embedded_one_dim_sym_mixture}. Then, the following holds true:
	\begin{enumerate}
		\item For every $\theta\in(0,1)$ and every $x=(x_1,\ldots,x_d)\in\R^d$ it holds that
		\begin{align*}
			\nabla\log f_\theta(x)&=(2w_\theta(x)-1)\mathtt{d}+\mathtt{m}-x,\\
			\partial_{x_1}\log q_{\theta,1}(x_1)&=(2w_{\theta,1}(x_1)-1)\Delta-x_1,
		\end{align*}
		with weight functions $w_\theta(x):=\theta\phi(x-\mu_1)/f_\theta(x)$ and $w_{\theta,1}(x_1):=\theta\varphi(x_1-\Delta)/q_{\theta,1}(x_1)$, midpoint $\mathtt{m}:=(\mu_1+\mu_2)/2$, direction of separation $\mathtt{d}:=(\mu_1-\mu_2)/2$ and $\Delta=\|\mathtt{d}\|_2$.
		\item The weight functions $w_\theta$ and $w_{\theta,1}$ are generalized logistic functions:
		\begin{align*}
			w_\theta(x)&=\sigma\left(\langle x,\mu_1-\mu_2\rangle+\frac{1}{2}\left(\|\mu_2\|_2^2-\|\mu_1\|_2^2\right)+\log\left(\frac{\theta}{1-\theta}\right)\right),\\
			w_{\theta,1}(x_1)&=\sigma\left(2\Delta \left(x_1-\frac{\log\left(\theta/(1-\theta)\right)}{2\Delta}\right)\right),
		\end{align*}
		where $\sigma(z)=(1+\exp(-z))^{-1}$ is the standard logistic function.
		\item For all $\theta,\theta^*\in(0,1)$ and every $x=(x_1,\ldots,x_d)\in\R^d$ it holds that
		\begin{enumerate}
			\item $\displaystyle\|\nabla\log f_\theta(x)-\nabla\log f_{\theta^*}(x)\|_2=2\Delta|\theta-\theta^*|\frac{\phi(x-\mu_1)\phi(x-\mu_2)}{f_\theta(x)f_{\theta^*}(x)}$,
		\item $\|\nabla\log q_\theta(x)-\nabla\log q_{\theta^*}(x)\|_2=|\partial_{x_1}\log q_{\theta,1}(x_1)-\partial_{x_1}\log q_{\theta^*,1}(x_1)|$,
		\item $\FI(\Prob_\theta,\Prob_{\theta^*})=\FI(\mathbb{Q}_{\theta,1},\mathbb{Q}_{\theta^*,1})$.
	\end{enumerate}
	\item For all $\theta,\theta^*\in(0,1)$ and every $x_1\in\R$ it holds that
	\begin{equation*}
		\exp(-2\Delta|x_1|)\leqslant\frac{\varphi(x_1-\Delta)\varphi(x_1+\Delta)}{q_{\theta,1}(x_1)q_{\theta^*,1}(x_1)}\leqslant \frac{\exp(-2\Delta|x_1|)}{\theta\theta^*\wedge(1-\theta)(1-\theta^*)}.
	\end{equation*}
		\item For every $\theta\in(0,1)$ and every $t\geqslant \Delta$ it holds that
		\begin{equation*}
			\E_{\mathbb{Q}_{\theta,1}}\left[\exp(-t |X|)\right]\leqslant\left\{\sqrt{\frac{\pi}{2}}\wedge\left(\frac{1}{t-\Delta}+\frac{1}{t+\Delta}\right)\right\}\varphi(\Delta).
		\end{equation*}
		\item For every $\theta\in(0,1)$ and every $x=(x_1,\ldots,x_d)\in\R^d$ it holds that
		\begin{align*}
			m_{\SM}^\mathbb{Q}(\theta,x)=m_{\SM}^{\mathbb{Q}_1}(\theta,x_1)+\mathtt{r}(x_{2:d}),
		\end{align*}
		where $m_{\SM}^\mathbb{Q}$ and $m_{\SM}^{\mathbb{Q}_1}$ denote the contrasts based on the families $(\mathbb{Q}_\theta)_{\theta\in\Theta}$ and $(\mathbb{Q}_{\theta,1})_{\theta\in\Theta}$, respectively, and $\mathtt{r}$ is a function independent of $\theta$.
	\end{enumerate}
\end{lemma}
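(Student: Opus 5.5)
The six items are direct computations, and I will take them in order. Each later item reuses the explicit score formula from item 1. Throughout I work with $f_\theta=\theta\phi(\cdot-\mu_1)+(1-\theta)\phi(\cdot-\mu_2)$ and with the one-dimensional density $q_{\theta,1}$ from Lemma~\ref{lemma:embedded_one_dim_sym_mixture}.

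\emph{Items 1 and 2.} Differentiating the mixture and using $\nabla\phi(x-\mu)=-(x-\mu)\phi(x-\mu)$ gives
\[
\nabla\log f_\theta(x)=-x+w_\theta(x)\mu_1+(1-w_\theta(x))\mu_2 .
\]
Writing $\mu_1=\mathtt m+\mathtt d$ and $\mu_2=\mathtt m-\mathtt d$, this becomes $(2w_\theta-1)\mathtt d+\mathtt m-x$. The one-dimensional case is identical with $\mu_{1,2}=\pm\Delta$.

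For item 2, I write $w_\theta=1/(1+r)$ with
\[
r=\frac{(1-\theta)\phi(x-\mu_2)}{\theta\phi(x-\mu_1)}
=\exp\Bigl(-\langle x,\mu_1-\mu_2\rangle+\tfrac12(\|\mu_1\|_2^2-\|\mu_2\|_2^2)-\log\tfrac{\theta}{1-\theta}\Bigr),
\]
which yields the logistic form. In one dimension the exponent reduces to $2\Delta x_1-\log(\theta/(1-\theta))$.

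\emph{Item 3.} By item 1, the score difference equals $2\mathtt d\,(w_\theta-w_{\theta^*})$. I then compute
\[
w_\theta-w_{\theta^*}=\frac{\theta f_{\theta^*}-\theta^* f_\theta}{f_\theta f_{\theta^*}}\,\phi(x-\mu_1),
\]
and the numerator simplifies to $(\theta-\theta^*)\phi(x-\mu_2)$, which gives (a).

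For (b), $q_\theta(x)=q_{\theta,1}(x_1)\phi(x_{2:d})$, so the score of $q_\theta$ differs from that of $q_{\theta^*}$ only in the first coordinate.

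For (c), I first note that $\FI$ is invariant under isometries: the score transforms as $\nabla\log(f\circ T^{-1})=A\,\nabla\log f\circ T^{-1}$ with $A$ orthogonal, and the norm is preserved. By (b), the integrand then depends only on $x_1$, so integrating out $x_{2:d}$ leaves the $\FI$ of the first marginals.

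\emph{Item 4.} I expand the product
\[
q_{\theta,1}q_{\theta^*,1}=\theta\theta^*\varphi_+^2+(1-\theta)(1-\theta^*)\varphi_-^2+[\theta(1-\theta^*)+\theta^*(1-\theta)]\varphi_+\varphi_- ,
\]
where $\varphi_\pm=\varphi(x_1\mp\Delta)$. Dividing by $\varphi_+\varphi_-$, and using $\varphi_+/\varphi_-=e^{2\Delta x_1}$, the reciprocal of the target ratio is
\[
\theta\theta^*e^{2\Delta x_1}+(1-\theta)(1-\theta^*)e^{-2\Delta x_1}+c,
\qquad c=\theta(1-\theta^*)+\theta^*(1-\theta).
\]
For the upper bound on the ratio, I lower-bound this sum by its largest exponential term, which is at least $(\theta\theta^*\wedge(1-\theta)(1-\theta^*))e^{2\Delta|x_1|}$.

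For the lower bound on the ratio, I need the sum to be at most $e^{2\Delta|x_1|}$. Suppose $x_1\ge0$. Then the sum is at most $e^{2\Delta x_1}\bigl[\theta\theta^*+(1-\theta)(1-\theta^*)+c\bigr]$, and the bracket equals $(\theta+1-\theta)(\theta^*+1-\theta^*)=1$. The case $x_1<0$ is symmetric.

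\emph{Item 5.} This is the step requiring the most care. It suffices to bound $\E[e^{-t|X|}]$ for $X\sim\mathcal N(\pm\Delta,1)$ separately, since the mixture expectation is a convex combination. Take $X\sim\mathcal N(\Delta,1)$. Completing the square gives
\[
\int_0^\infty e^{-tx}\varphi(x-\Delta)\,\dif x=e^{-t\Delta+t^2/2}\,\bar\Phi(t-\Delta),
\]
where $\bar\Phi=1-\Phi$. Similarly, the part over $(-\infty,0)$ equals $e^{t\Delta+t^2/2}\,\bar\Phi(t+\Delta)$. By symmetry the same sum arises for the component at $-\Delta$.

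Now let $M(s)=\bar\Phi(s)/\varphi(s)$ be the Mills ratio. Since $e^{\mp t\Delta+t^2/2}\varphi(t\mp\Delta)=\varphi(\Delta)$, the expectation equals
\[
\varphi(\Delta)\bigl[M(t-\Delta)+M(t+\Delta)\bigr].
\]
For $s\ge0$ the Mills ratio satisfies $M(s)\le 1/s$ and $M(s)\le M(0)=\sqrt{\pi/2}$. This gives the $1/(t-\Delta)+1/(t+\Delta)$ bound directly.

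For the $\sqrt{\pi/2}$ bound I need $M(t-\Delta)+M(t+\Delta)\le\sqrt{\pi/2}$, which the trivial bound $2M(0)$ does not give. I would handle this with convexity and the decrease of $M$: the map $u\mapsto M(s-u)+M(s+u)$ over $u\in[0,s]$ is maximised at the endpoint $u=s$. This reduces the claim to the inequality $M(0)+M(2s)\le\sqrt{\pi/2}$, which is false as stated. So the $\sqrt{\pi/2}$ term is where I expect trouble.

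My first fallback is the crude bound $\E[e^{-t|X|}]\le 1$. That does not match the stated constant, so I would instead recheck the normalization. Using $\E e^{-t|X|}\le\E e^{-\Delta|X|}$ for $t\ge\Delta$, the quantity to bound is $\varphi(\Delta)[M(0)+M(2\Delta)]$, and $M(2\Delta)\le\min\{1/(2\Delta),M(0)\}$. This suggests the stated constant may need the factor $\sqrt{2\pi}$ rather than $\sqrt{\pi/2}$, and I would verify this.

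\emph{Item 6.} By item 3(b), $\nabla\log q_\theta(x)=(\partial_{x_1}\log q_{\theta,1}(x_1),-x_{2:d})$. Hence
\[
\|\nabla\log q_\theta\|_2^2=(\partial_{x_1}\log q_{\theta,1})^2+\|x_{2:d}\|_2^2,
\qquad
\divergence\nabla\log q_\theta=\partial_{x_1}^2\log q_{\theta,1}-(d-1).
\]
This gives the decomposition with $\mathtt r(x_{2:d})=\|x_{2:d}\|_2^2-2(d-1)$, which is independent of $\theta$.
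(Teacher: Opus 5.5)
Items 1, 3, 4 and 6 are correct and follow the paper's computations essentially step by step. In item 4 you bound the reciprocal ratio by $e^{2\Delta|x_1|}$ times a bracket equal to $1$. The paper instead uses $q_{\theta,1}\leqslant\varphi(x_1-\Delta)\vee\varphi(x_1+\Delta)$; both are one line. In item 6 your $\mathtt{r}(x_{2:d})=\|x_{2:d}\|_2^2-2(d-1)$ is the right constant. The paper writes $-2d$, which is harmless because $\mathtt{r}$ does not depend on $\theta$.

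On item 5 you should trust your computation: the $\sqrt{\pi/2}$ branch is false. The problem lies in the statement, not in your argument, and you do not need the convexity detour to see it. At the admissible value $t=\Delta$ your identity gives exactly
\[
\E_{\mathbb{Q}_{\theta,1}}\bigl[e^{-\Delta|X|}\bigr]=\varphi(\Delta)\bigl[M(0)+M(2\Delta)\bigr]=\varphi(\Delta)\Bigl[\sqrt{\pi/2}+M(2\Delta)\Bigr]>\sqrt{\pi/2}\,\varphi(\Delta),
\]
while the claimed right-hand side there is $\sqrt{\pi/2}\,\varphi(\Delta)$. By continuity the claim also fails for all $t$ slightly above $\Delta$.

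The paper's proof (Lemma~\ref{lemma:Gaussian_exp_bound}) reaches your representation via the folded-normal MGF. It then bounds each Mills ratio by $\sqrt{\pi/2}\wedge\frac{1}{t\mp\Delta}$ using Corollary~\ref{corollary:gaussian_tail_bound}. The error is the next step, which identifies $\bigl(\sqrt{\pi/2}\wedge\tfrac{1}{t-\Delta}\bigr)+\bigl(\sqrt{\pi/2}\wedge\tfrac{1}{t+\Delta}\bigr)$ with $\sqrt{\pi/2}\wedge\bigl(\tfrac{1}{t-\Delta}+\tfrac{1}{t+\Delta}\bigr)$. The valid inequality is $(a\wedge b)+(a\wedge c)\leqslant 2a\wedge(b+c)$, which gives $\{\sqrt{2\pi}\wedge(\tfrac{1}{t-\Delta}+\tfrac{1}{t+\Delta})\}\varphi(\Delta)$.

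Your two facts $M(s)\leqslant M(0)=\sqrt{\pi/2}$ and $M(s)\leqslant 1/s$ already prove exactly this corrected bound, so nothing remains to verify. Nothing downstream breaks either. Proposition~\ref{proposition:score_bound} and Lemmata~\ref{lemma:lipschitz_curvature_bounds_sm} and~\ref{lemma:lipschitz_curvature_bounds_dsm} only use the branch $\frac{1}{t-\Delta}+\frac{1}{t+\Delta}$. They apply it at $t=4\Delta$, or at $t\in\{4\Delta_t,8\Delta_t\}$ in the DDSM lemma.

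The one actual slip on your side is in item 2. Specialising your own $d$-dimensional formula to $\mu_{1,2}=\pm\Delta e_1$ gives $w_{\theta,1}(x_1)=\sigma\bigl(2\Delta x_1+\log\frac{\theta}{1-\theta}\bigr)$, not the argument $2\Delta x_1-\log\frac{\theta}{1-\theta}$ that you assert. Test $x_1=0$: directly $w_{\theta,1}(0)=\theta$, whereas the minus sign gives $1-\theta$. So the second display of item 2 has a sign error, and the paper's ``analogous computations'' does not catch it. The paper itself later uses $w_\eta(0)=\eta$ and $\partial_\theta w_\theta=w_\theta(1-w_\theta)/(\theta(1-\theta))$, both consistent with the plus sign. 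You should derive the one-dimensional formula rather than match it to the target.
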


A proof of Lemma~\ref{lemma:gm_properties} is provided in Appendix~\ref{appendix:proof_lemma_gm_properties}. Based on Lemma~\ref{lemma:gm_properties} and Proposition~\ref{proposition:isoperimetric_constant}, we are now in a position to state and prove the main result of Section~\ref{subsec:weakness_sm}. It shows that the isoperimetry of the family~\ref{eq:gm} as a whole, that is, the smallest constant across the entire family (see Definition~\ref{definition:minkowski_content_isoperimetric_constant}), determines the flatness of the loss and the large asymptotic variance for \textit{all} distributions in the family. Simply put, estimating a ground truth parameter $\theta_0\in\{\eta,1-\eta\}$ for some small $\eta>0$ via SM is still difficult, even though the corresponding distributions $\Prob_{\theta_0}$ are almost Gaussians and do not have \textit{bad} isoperimetry, in the sense of a small isoperimetric constant.

\begin{proposition}[Bounds for SM loss and AVar of SME]
	\label{proposition:score_bound}
	Let $(\Prob_\theta)_{\theta\in\Theta}$ be the family given in~\eqref{eq:gm} with separation $2\Delta=\|\mu_1-\mu_2\|_2$. Then, the following holds true:
	\begin{enumerate}
		\item Suppose that Assumption~\ref{assumption:compact_parameter_space} is satisfied. Then, it holds that
        \begin{equation*}
            \FI(\Prob_\theta,\Prob_{\theta^*})\lesssim_{\eta}\Delta\CIP((\Prob_\theta)_{\theta\in\Theta})
        \end{equation*}
        for every $\theta,\theta^*\in\Theta$.
		\item Suppose that Assumptions~\ref{assumption:probtrue} and~\ref{assumption:compact_parameter_space} are satisfied. Then, it holds that
		\begin{equation*}
			\sqrt{n}\left(\hat{\theta}_{\SM}-\theta_0\right)\xrightarrow{d}\mathcal{N}\left(0,\text{\normalfont AVar}[\hat{\theta}_{\SM}]\right),
		\end{equation*}
		where $\text{\normalfont AVar}[\hat{\theta}_{\SM}]\gtrsim_{\eta} \Delta\CIP((\Prob_\theta)_{\theta\in\Theta})^{-1}$.
	\end{enumerate}
\end{proposition}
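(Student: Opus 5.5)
Both parts reduce to one dimension and then rest on the pointwise bounds in Lemma~\ref{lemma:gm_properties}. Since $\CIP((\Prob_\theta)_{\theta\in\Theta})=2\varphi(\Delta)$ by Proposition~\ref{proposition:isoperimetric_constant}, every claim amounts to an explicit bound in $\Delta\varphi(\Delta)$.

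For part 1, I start from 3(c) of Lemma~\ref{lemma:gm_properties}, which gives $\FI(\Prob_\theta,\Prob_{\theta^*})=\FI(\mathbb{Q}_{\theta,1},\mathbb{Q}_{\theta^*,1})$. The one-dimensional analogue of 3(a) gives
\begin{equation*}
|\partial_{x_1}\log q_{\theta,1}-\partial_{x_1}\log q_{\theta^*,1}|=2\Delta|\theta-\theta^*|\frac{\varphi(x_1-\Delta)\varphi(x_1+\Delta)}{q_{\theta,1}q_{\theta^*,1}},
\end{equation*}
and item 4 bounds the ratio by $\eta^{-2}\exp(-2\Delta|x_1|)$ on $\Theta=[\eta,1-\eta]$. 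Squaring, I get
\begin{equation*}
\FI\leqslant 4\Delta^2\eta^{-4}\,\E_{\mathbb{Q}_{\theta,1}}[\exp(-4\Delta|X|)].
\end{equation*}
Item 5 with $t=4\Delta\geqslant\Delta$ bounds the expectation by $\left(\tfrac{1}{3\Delta}+\tfrac{1}{5\Delta}\right)\varphi(\Delta)\lesssim\varphi(\Delta)/\Delta$. Hence $\FI\lesssim_\eta \Delta\varphi(\Delta)$, which is $\Delta\CIP$ up to constants.

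For part 2, I invoke the classical M-estimation asymptotic normality result from Appendix~\ref{app:m_estimation_theory}, with the sandwich form $\text{AVar}=V^{-1}\E_0[\dot m^2]V^{-1}$. Here $V=\partial_\theta^2\E_0 m_{\SM}(\theta,X)|_{\theta_0}$, the population risk equals $\FI(\Prob_{\theta_0},\Prob_\theta)$ up to a constant, and so $V=2\E_0[(\partial_\theta \partial_x\log q_{\theta,1})^2]$. The regularity conditions (smoothness in $\theta$, domination on compact $\Theta$, identifiability) follow from the explicit logistic form in item 2. Identifiability holds because $\FI(\Prob_{\theta_0},\Prob_\theta)>0$ whenever $\theta\neq\theta_0$, by 3(a). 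Item 6 together with Lemma~\ref{lemma:properties_isoperimetric_constants} lets me work with $x_1$ only.

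Differentiating item 1 gives $\partial_\theta\partial_{x_1}\log q_{\theta,1}=2\Delta\,\partial_\theta w_{\theta,1}$. Since $\partial_\theta w_{\theta,1}=\varphi(x_1-\Delta)\varphi(x_1+\Delta)/q_{\theta,1}^2$, the same item 4/item 5 computation yields $V\asymp_\eta\Delta\varphi(\Delta)$. The upper bound is what matters here; the lower bound comes from the left inequality in item 4.

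For the numerator, I write $m_{\SM}=(\partial\log q)^2+2\partial^2\log q$. Differentiating in $\theta$ gives
\begin{equation*}
\dot m=2\,(\partial_{x_1}\log q)\,\partial_\theta\partial_{x_1}\log q+2\,\partial_{x_1}\partial_\theta\partial_{x_1}\log q,
\end{equation*}
whose leading term is $4\Delta\,\partial_\theta w'$. Here $w'=2\Delta w(1-w)$, so this term has size of order $\Delta^2$ times a bump concentrated near the cut point $x_1\approx 0$. Its second moment is therefore of order $\Delta^{c}\varphi(\Delta)$, which decays only like $\varphi(\Delta)$ and not like $\varphi(\Delta)^2$.

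I expect the \textbf{main obstacle} to be a matching lower bound $\E_0[\dot m^2]\gtrsim_\eta\Delta^{3}\varphi(\Delta)$ or similar. To get it, I would restrict the expectation to the window $|x_1|\leqslant 1/\Delta$ around the cut point. On that window $q_{\theta_0,1}\gtrsim_\eta\varphi(\Delta)$ and the weights are bounded away from $0$ and $1$, so explicit computation of $\dot m$ there gives the lower bound with constants depending on $\eta$. Combining this with $V\lesssim_\eta\Delta\varphi(\Delta)$ yields
\begin{equation*}
\text{AVar}\gtrsim_\eta \frac{\Delta^{3}\varphi(\Delta)}{\Delta^2\varphi(\Delta)^2}=\frac{\Delta}{\varphi(\Delta)}\asymp\Delta\,\CIP^{-1}.
\end{equation*}
If the powers of $\Delta$ do not match exactly, I would adjust the window width to recover the stated rate.
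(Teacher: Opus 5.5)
Your proposal follows the paper's proof essentially step for step: part 1 is the same chain (3c and 4 of Lemma~\ref{lemma:gm_properties}, then item 5 with $t=4\Delta$, then Proposition~\ref{proposition:isoperimetric_constant}), and part 2 uses the same sandwich formula with a windowed lower bound of order $\Delta^3\varphi(\Delta)$ on the numerator, as in Lemma~\ref{lemma:lipschitz_curvature_bounds_sm}; the only difference is that you derive $V\lesssim_\eta\Delta\varphi(\Delta)$ from items 4 and 5, where the paper cites \citet{Dia2023}. When you carry out the window step, note that the net derivative is $\dot m=4\Delta\,\partial_\theta w_{\theta_0,1}\,[\Delta(1-2w_{\theta_0,1}(x_1))-x_1]$ (the first term cancels half of your ``leading term''), and that this bracket vanishes inside the window, e.g.\ at $x_1=0$ when $\theta_0=1/2$, so weights bounded away from $0$ and $1$ are not enough on their own: you must excise an $O(1/\Delta)$-neighborhood of the bracket's unique zero (the bracket is strictly decreasing, with slope of order $\Delta^2$ there) and treat $\Delta<1$ separately, an adjustment the paper's pointer to Lemma~\ref{lemma:lipschitz_curvature_bounds_sm} also tacitly requires.
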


\begin{proof}
	To show 1. we first use 3c. and 4. of Lemma~\ref{lemma:gm_properties} to arrive at
	\begin{equation*}
		\FI(\Prob_\theta,\Prob_{\theta^*})\leqslant4\Delta^2\left(\frac{|\theta-\theta^*|}{\theta\theta^*\wedge(1-\theta)(1-\theta^*)}\right)^2\int_\R\exp(-4\Delta|x_1|)q_{\theta,1}(x_1)\dif x_1,
	\end{equation*}
	for all $\theta,\,\theta^*\in\Theta$. Applying 5. of Lemma~\ref{lemma:gm_properties} to the above integral then gives
	\begin{equation*}
		\FI(\Prob_\theta,\Prob_{\theta^*})\leqslant4\Delta^2\left(\frac{1-\eta}{\eta}\right)^2\left\{\sqrt{\frac{\pi}{2}}\wedge\frac{8}{15\Delta}\right\}\varphi(\Delta)\leqslant\frac{32\Delta}{15}\left(\frac{1-\eta}{\eta}\right)^2\varphi(\Delta).
	\end{equation*}
	and with Proposition~\ref{proposition:isoperimetric_constant} the claim follows.

	Regarding 2., from Theorem 5.23 in~\citet{Vaa1998} it follows that
	\begin{equation*}
			\sqrt{n}\left(\hat{\theta}_{\SM}-\theta_0\right)\xrightarrow{d}\mathcal{N}\left(0,\text{\normalfont AVar}[\hat{\theta}_{\SM}]\right)\quad\text{with}\quad \text{\normalfont AVar}[\hat{\theta}_{\SM}]:=\frac{\E[(\partial_\theta m_{\SM}(\theta,X)|_{\theta=\theta_0})^2]}{\E[\partial_\theta^2 m_{\SM}(\theta,X)|_{\theta=\theta_0}]^2}.
		\end{equation*}
	In view of 6. of Lemma~\ref{lemma:gm_properties}, we can reduce the problem to the one-dimensional case, that is, with $X_1\sim\mathbb{Q}_{\theta,1}$ we have
	\begin{equation*}
		\text{\normalfont AVar}[\hat{\theta}_{\SM}]=\frac{\E[(\partial_\theta m_{\SM}^{q_{\theta,1}}(\theta,X_1)|_{\theta=\theta_0})^2]}{\E[\partial_\theta^2 m_{\SM}^{q_{\theta,1}}(\theta,X_1)|_{\theta=\theta_0}]^2}.
	\end{equation*}
	For the numerator we have the lower bound
	\begin{equation*}
		\E[(\partial_\theta m_{\SM}^{q_{\theta,1}}(\theta,X_1)|_{\theta=\theta_0})^2]\gtrsim_{\eta} \Delta^3\exp\left(-\frac{\Delta^2}{2}\right),
	\end{equation*}
	which follows with the same arguments as in the proof of 1. of Lemma~\ref{lemma:lipschitz_curvature_bounds_sm}. Note that~\citet{Dia2023} erroneously gives $\Delta^4$ as the leading polynomial term. The upper bound of the denominator shown by~\citet{Dia2023} is
	\begin{equation*}
		\E[\partial_\theta^2 m_{\SM}^{q_{\theta,1}}(\theta,X_1)|_{\theta=\theta_0}]^2\lesssim_{\eta}\Delta^2\exp(-\Delta^2),
	\end{equation*}
	and it holds for any $\theta_0\in\Theta$. Combining the two bounds gives
	\begin{equation*}
		\text{\normalfont AVar}[\hat{\theta}_{\SM}]\gtrsim_{\eta}\Delta\exp\left(\frac{\Delta^2}{2}\right),
	\end{equation*}
	such that the result follows with Proposition~\ref{proposition:isoperimetric_constant}.
\end{proof}

Proposition~\ref{proposition:score_bound} contains the main message of the subsection: the poor properties of the SME in the multimodal setting with well-separated modes can be characterized in terms of the vanishing curvature of the loss function or the exploding asymptotic variance of the estimator. And both characterizations can be parsed in terms of the isoperimetric constant of the postulated family or its reciprocal. In other words, the distribution with the \textit{worst} isoperimetry of the entire family affects the estimation via SM for any of the distributions in the family. To conclude the subsection, we briefly discuss how Proposition~\ref{proposition:score_bound} generalizes and relates to previous work.

\begin{remark}[Novelty and connection to previous work]
The bounds stated in Proposition~\ref{proposition:score_bound} are novel and extend previously known results as follows:
	\begin{itemize}
		\item The main novelty is to establish that the performance of the SME is governed by the isoperimetric properties of the model class itself. In particular, favorable (unfavorable) isoperimetric behavior of the family translates into favorable (unfavorable) estimation performance uniformly over distributions in the family.
		\item The upper bound given in the first item is a generalization of Proposition 1 in~\citet{CheBalErd2022}. First, it holds for any two mixtures from the family~\ref{eq:gm}, which is more general than the family considered in~\citet{CheBalErd2022}. Second, it provides a bound in terms of $\CIP((\Prob_\theta)_{\theta\in\Theta})$.
		\item The lower bound in Proposition 2 can be viewed as an extension of the lower-bound result in Theorem 3 of~\cite{KoeHecRis2022} from natural exponential families with a given sufficient statistic to the broader family~\ref{eq:gm}. Moreover, whereas the result of~\cite{KoeHecRis2022} does not appear to admit an explicit interpretation in terms of the inverse isoperimetric constant of the model family, the present formulation establishes such a geometric connection directly.
		\item Proposition~\ref{proposition:isoperimetric_constant} fits into a broader literature demonstrating that isoperimetric quantities govern the performance of statistical procedures. In estimation problems, $\CIP^{-1}$ often appears in error bounds, quantifying the adverse impact of multimodality on statistical guarantees~\citep[cf.][]{GroRat2019,GroRat2021,SauWel2019}. In contrast, clustering methods benefit from well-separated modes, leading to guarantees that scale with $\CIP$ rather than its inverse~\citep[cf.][]{GreBalTib2021,TriHofHos2021}.
	\end{itemize}
\end{remark}

\subsection{The benefit of diffusion-based denoising score matching}\label{subsec:benefit_dsme}

In this subsection, we present the main results of our paper in the form of error bounds for the SME and DDSME. To prove these error bounds, we first derive intermediate results. The latter include the crucial observation that the random variable $X_t$ appearing in the DDSM loss given in Definition~\ref{def:dsme} follows a mixture distribution from~\ref{eq:gm} for all $t\geqslant0$ if the distribution of $X_0$ also lies in the family~\ref{eq:gm}. A proof of the following lemma can be found in Appendix~\ref{appendix:proof_lemma_ddpm_density}.

\begin{lemma}[Marginal distribution of the OU process when $X_0\sim\Prob_\theta$]
	\label{lemma:ddpm_density_and_score}
	Let $\Prob_\theta$ be a distribution from the family of distributions given in~\eqref{eq:gm} and assume $X_0\sim\Prob_\theta$. Then, the distribution $\Prob_{\theta,t}$ of $X_t=\exp(-t)X_0+\sqrt{1-\exp(-2t)}Z_t$ with $Z_t\sim\mathcal{N}(0,\mathbb{I}_d)$ independent of $X_0$ is also a mixture of two Gaussians and its density is given by
	\begin{equation*}
		f_{\theta,t}(x)=\theta\phi(x-\mu_{1,t})+(1-\theta)\phi(x-\mu_{2,t}),
	\end{equation*}
	with time-dependent mean parameters $\mu_{1,t}=\exp(-t)\mu_1$ and $\mu_{2,t}=\exp(-t)\mu_2$.
\end{lemma}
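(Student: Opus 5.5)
The argument is a direct computation based on conditioning on the mixture component. First write $X_0 = B\,Y_1 + (1-B)\,Y_2$, where $B\sim\mathrm{Bernoulli}(\theta)$, $Y_j\sim\mathcal{N}(\mu_j,\mathbb{I}_d)$, and $B,Y_1,Y_2,Z_t$ are mutually independent. This representation reproduces the density $f_\theta$ in~\eqref{eq:gm}. Conditioning on $B$ then gives
\begin{equation*}
	\Prob(X_t\in A)=\theta\,\Prob\bigl(m_tY_1+\sigma_tZ_t\in A\bigr)+(1-\theta)\,\Prob\bigl(m_tY_2+\sigma_tZ_t\in A\bigr)
\end{equation*}
for every Borel set $A$, with $m_t=\exp(-t)$ and $\sigma_t=\sqrt{1-\exp(-2t)}$ as in Definition~\ref{def:dsme}. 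Hence the law of $X_t$ is the $\theta$-mixture of the laws of $m_tY_j+\sigma_tZ_t$ for $j=1,2$.

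Next I identify each component. The variable $m_tY_j$ is Gaussian with law $\mathcal{N}(m_t\mu_j,m_t^2\mathbb{I}_d)$. The variable $\sigma_tZ_t$ is independent of it and has law $\mathcal{N}(0,\sigma_t^2\mathbb{I}_d)$. Gaussians are closed under independent sums, which can be checked via characteristic functions: $\exp(i\langle u,m_t\mu_j\rangle-\tfrac12(m_t^2+\sigma_t^2)\|u\|_2^2)$. So $m_tY_j+\sigma_tZ_t\sim\mathcal{N}(m_t\mu_j,(m_t^2+\sigma_t^2)\mathbb{I}_d)$. The key identity is $m_t^2+\sigma_t^2=\exp(-2t)+1-\exp(-2t)=1$, which makes the covariance exactly $\mathbb{I}_d$ for every $t$. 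The component density is therefore $\phi(x-\mu_{j,t})$ with $\mu_{j,t}=\exp(-t)\mu_j$.

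Summing the two components with weights $\theta$ and $1-\theta$ yields the claimed $f_{\theta,t}$. As a sanity check, one can instead start from the integral formula in Remark~\ref{remark:defintion_convolved_density}. Inserting $f_\theta$ and using linearity reduces the problem to the Gaussian integral $\int\phi(\tau-\mu_j)\sigma_t^{-d}\phi(\sigma_t^{-1}(x-m_t\tau))\dif\tau$. Completing the square evaluates this to $\phi(x-m_t\mu_j)$, again because $m_t^2+\sigma_t^2=1$.

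There is no substantive obstacle. The only point needing care is the variance bookkeeping, namely that the OU scaling preserves unit covariance. The case $t=0$ is trivial, since then $\sigma_0=0$ and $X_0$ is unchanged.
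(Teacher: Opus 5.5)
Your proof is correct and takes essentially the same route as the paper: scale the mixture by $\exp(-t)$, add the independent Gaussian noise componentwise using closure of Gaussians under independent sums, and use $\exp(-2t)+(1-\exp(-2t))=1$ to get unit covariance. Your latent-Bernoulli conditioning only makes explicit a step the paper leaves implicit, namely that adding independent noise acts linearly on the mixture; incidentally, your version also avoids the sign slip in the paper's intermediate display, where the second component of $\exp(-t)X_0$ is written with mean $-\exp(-t)\mu_2$.
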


Next, we state a non-asymptotic error bound for M-estimators with contrast functions that are Lipschitz continuous in its parameter. It is a consequence of the more general rate theorem for M-estimators given in Theorem~\ref{theorem:rate_theorem} in Appendix~\ref{app:m_estimation_theory}. Based on the particular form of the error bound in Proposition~\ref{proposition:m_estimator_error_bound}, we will motivate our strategy to obtain the error bounds for the SME and DDSME. First, we need the following definition.

\begin{definition}[Local curvature]
	\label{definition:local_curvature_condition}
    Let $m$ be a contrast according to Definition~\ref{definition:m_estimator}. Then, we say that $m$ fulfills a local curvature condition with $\alpha$ and constant $\mathtt{Curv}(m)$, if there exists $\alpha>0$ and $\mathtt{C}:=\mathtt{Curv}(m)$ such that
    \begin{equation*}
		\inf_{\|\theta-\theta_0\|_2 \geqslant \delta} \E_0[m(\theta,X) - m(\theta_0,X)]\geqslant \mathtt{C} \delta^{\alpha}
	\end{equation*}
    holds for sufficiently small $\delta>0$ and the constant $\mathtt{C}$ only depends on $m$.
\end{definition}

\begin{proposition}[Error bound for M-estimators with Lipschitz contrast function]
	\label{proposition:m_estimator_error_bound}
    Let $m: \Theta\times \samplespace\rightarrow\R\cup\{\infty\}$ be a contrast and $\hat {\theta}_n=\mathop{\textnormal{argmin}}_{\theta\in \Theta}\frac{1}{n}\sum_{i=1}^n m(\theta,X_i)$ its corresponding M-estimator as defined in Definition~\ref{definition:m_estimator} with $\Theta\subseteq\R^k$. Assume that the following is true:
	\begin{enumerate}
		\item\label{assumption:lipschitz} $\theta\mapsto m(\theta,x)$ is Lipschitz for every $x\in \samplespace$ with Lipschitz constant $\mathtt{L}(x)$.
		\item\label{assumption:curvature_condition} $m$ fulfills a local curvature condition with $\alpha>1$ and constant $\mathtt{C}_1$.
		\item\label{assumption:consistency} $\hat {\theta}_n$ converges in probability to $\theta_0$.
	\end{enumerate}
	Then, for any $0<\delta<\mathtt{C}K_{\alpha}\|\mathtt{L}\|_{L^2(\Probtrue)}\sqrt{k}/\mathtt{C}_1$ there exists $n_\delta\in\N$ such that for every $n\geqslant n_\delta$ the M-estimator $\hat {\theta}_n$ satisfies
    \begin{equation*}
            \Probtrue\left(\|\hat\theta_n-\theta_0\|_2\leqslant \left(\frac{\mathtt{C}K_\alpha\|\mathtt{L}\|_{L^2(\Prob_0)}\sqrt{k}}{\delta\mathtt{C}_1\sqrt{n}}\right)^{\frac{1}{\alpha-1}}\right)\geqslant 1-2\delta.
    \end{equation*}
    with $\mathtt{C}>0$ and $K_{\alpha}:=2^{2\alpha}/(2^{\alpha-1}-1)$.
\end{proposition}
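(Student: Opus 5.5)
We derive this from the rate theorem for M-estimators (Theorem~\ref{theorem:rate_theorem} in Appendix~\ref{app:m_estimation_theory}), in the van der Vaart--Wellner form. That theorem requires three ingredients. First, a lower bound on the population excess risk $\E_0[m(\theta,X)-m(\theta_0,X)]\gtrsim d(\theta,\theta_0)^\alpha$ near $\theta_0$. Second, a modulus of continuity $\E^*\sup_{\|\theta-\theta_0\|_2<r}|\mathbb{G}_n(m(\theta,\cdot)-m(\theta_0,\cdot))|\lesssim\phi_n(r)$ for the empirical process $\mathbb{G}_n=\sqrt{n}(\Prob_n-\Probtrue)$. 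Third, consistency of $\hat\theta_n$.

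Ingredients one and three are exactly assumptions~2 and~3 of the statement. Consistency lets us restrict attention to the event $\{\|\hat\theta_n-\theta_0\|_2<\eta_0\}$, where $\eta_0$ is the radius on which the curvature holds. For $n\geqslant n_\delta$ this event has probability at least $1-\delta$.

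The core step is the modulus of continuity, via a bracketing argument. Lipschitzness gives $|m(\theta,x)-m(\theta',x)|\leqslant\mathtt{L}(x)\|\theta-\theta'\|_2$. Hence, by the standard parametric bracketing bound (Example~19.7 in \citet{Vaa1998}), the class $\mathcal{M}_r=\{m(\theta,\cdot)-m(\theta_0,\cdot):\|\theta-\theta_0\|_2<r\}$ satisfies
\begin{equation*}
N_{[\,]}\bigl(\varepsilon\|\mathtt{L}\|_{L^2(\Probtrue)},\mathcal{M}_r,L^2(\Probtrue)\bigr)\lesssim (r/\varepsilon)^k.
\end{equation*}
It also has envelope $\mathtt{L}\,r$. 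The maximal inequality with bracketing entropy integral (Corollary~19.35 in \citet{Vaa1998}) then yields
\begin{equation*}
\phi_n(r)\lesssim \|\mathtt{L}\|_{L^2(\Probtrue)}\sqrt{k}\,r.
\end{equation*}
The factor $\sqrt{k}$ comes from $\int_0^1\sqrt{k\log(1/\varepsilon)}\,\dif\varepsilon$. The function $\phi_n(r)/r^\beta$ is decreasing for any $\beta\in(1,\alpha)$, as the theorem requires.

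Next comes the peeling step, which we carry out explicitly to track constants. Set $S_j=\{\theta:2^{j-1}<r_n^{-1}\|\theta-\theta_0\|_2\leqslant2^j\}$. On $S_j$, the minimizing property together with the curvature gives
\begin{equation*}
\sup_{S_j}\bigl|(\Prob_n-\Probtrue)(m_\theta-m_{\theta_0})\bigr|\geqslant\mathtt{C}_1(2^{j-1}r_n)^\alpha.
\end{equation*}
Markov's inequality combined with the modulus then bounds
\begin{equation*}
\Probouter\Bigl(\|\hat\theta_n-\theta_0\|_2>2^Mr_n\Bigr)\leqslant\sum_{j>M}\frac{\mathtt{C}\|\mathtt{L}\|_{L^2(\Probtrue)}\sqrt{k}\,2^jr_n}{\sqrt{n}\,\mathtt{C}_1 2^{(j-1)\alpha}r_n^\alpha}+\Probtrue\bigl(\|\hat\theta_n-\theta_0\|_2\geqslant\eta_0\bigr).
\end{equation*}
Summing the geometric series over $j\geqslant1$ with ratio $2^{1-\alpha}<1$ (here $\alpha>1$ is used) produces the factor $K_\alpha=2^{2\alpha}/(2^{\alpha-1}-1)$, up to bookkeeping of the powers of two.

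Finally, choose $r_n$ so that the first term equals $\delta$:
\begin{equation*}
r_n^{\alpha-1}=\frac{\mathtt{C}K_\alpha\|\mathtt{L}\|_{L^2(\Probtrue)}\sqrt{k}}{\delta\mathtt{C}_1\sqrt{n}}.
\end{equation*}
This gives the claimed radius with total failure probability at most $2\delta$. The restriction $\delta<\mathtt{C}K_\alpha\|\mathtt{L}\|_{L^2(\Probtrue)}\sqrt{k}/\mathtt{C}_1$ ensures that $r_n$ is small enough, relative to $\eta_0$ and to the range where the curvature condition holds, once $n\geqslant n_\delta$; this choice of $n_\delta$ also absorbs the consistency requirement.

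The main obstacle is making the peeling sum give exactly $K_\alpha$ with a single absolute constant $\mathtt{C}$. This requires carefully pairing the shell index with the curvature exponent and verifying that the bracketing entropy integral contributes only $\sqrt{k}$ times an absolute constant. Measurability is handled through outer probability, as in the rate theorem.
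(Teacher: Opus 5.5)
Your proposal is correct and follows essentially the paper's route: the Lipschitz bracketing bound (Example 19.7 / Theorem 2.7.17) combined with Corollary 19.35 gives the modulus $\mathtt{C}\|\mathtt{L}\|_{L^2(\Probtrue)}\sqrt{k}\,r$, i.e.\ the maximal inequality with $\beta=1$, after which peeling, Markov's inequality, the geometric series and consistency give the bound, exactly as in Theorem~\ref{theorem:rate_theorem}. The only difference is bookkeeping: you choose the $\delta$-dependent radius directly (effectively $M=0$) instead of fixing $r_n=n^{1/(2\alpha-2)}$ and picking an integer $M$, which avoids rounding $2^M$. In your version the stated restriction on $\delta$ plays no role; contrary to your last remark, the smallness of the radius comes from $n\geqslant n_\delta$, not from that restriction.
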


\begin{remark}[Eventual sample-size threshold $n_\delta$]
	The above result essentially coincides with Theorem 4 of~\cite{LiuKanWil2022} and Proposition 36 of~\cite{Cao2025}. However, in both references the authors overlook that upgrading the asymptotic statement given in Theorem 5.52 of~\cite{Vaa1998} to an explicit high-probability guarantee, which holds for a fixed sample size $n$, requires that $n\geqslant n_\delta$ eventually for any admissable $\delta$. For details see the proof of Proposition~\ref{proposition:m_estimator_error_bound} in Appendix~\ref{app:m_estimation_theory}.
\end{remark}

The proof of Proposition~\ref{proposition:m_estimator_error_bound} is given towards the end of Appendix~\ref{app:error_bounds_for_m_estimators}. In the above error bound, the norm $\|\mathtt{L}\|_{L^2(\Probtrue)}$ and the curvature constant $\mathtt{C}_1$ are of particular interest. The former can be interpreted as a measure of variability of the contrast function, while the latter quantifies the curvature of the population risk. Thus, both quantities capture the difference between the loss functions in Figure~\ref{fig:losses}. To derive our main contribution, the error bounds for SME and DDSME in Theorems~\ref{theorem:error_bound_sme} and~\ref{theorem:error_bound_ddsme}, we will bound both constants for the SME and DDSME in Lemmata~\ref{lemma:lipschitz_curvature_bounds_sm} and~\ref{lemma:lipschitz_curvature_bounds_dsm}. In order to do so, we need the following lemma, which is proven in Appendix~\ref{app:proof_lemma_reduction_constants}.

\begin{lemma}[Reduction of Lipschitz and curvature constants]
	\label{lemma:reduction_constants}
	Consider the contrasts $m_{\bullet}^\Prob$ and $m_{\bullet}^{\mathbb{Q}_{1}}$ with $\bullet\in\{\SM,\DDSM\}$. Further, let $\mathtt{Lip}(f)$ and $\mathtt{Curv}(f)$ denote the Lipschitz and curvature constant of a function $f$, respectively. Then, for $\bullet\in\{\SM,\DDSM\}$ and any $\theta_0\in\Theta$ the following holds true:
	\begin{enumerate}
		\item $\|\mathtt{Lip}(m_{\bullet}^{\Prob})\|_{L^2(\Prob_{\theta_0})}=\|\mathtt{Lip}(m_{\bullet}^{\mathbb{Q}_{1}})\|_{L^2(\mathbb{Q}_{\theta_0,1})}$
		\item $\mathtt{Curv}(m_{\bullet}^{\Prob})=\mathtt{Curv}(m_{\bullet}^{\mathbb{Q}_{1}})$
	\end{enumerate}
\end{lemma}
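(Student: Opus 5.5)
Both quantities in Lemma~\ref{lemma:reduction_constants} are computed through the chain $\Prob_\theta\to\mathbb{Q}_\theta\to\mathbb{Q}_{\theta,1}$ of Lemma~\ref{lemma:embedded_one_dim_sym_mixture}. The target is a pointwise identity of contrasts, up to a $\theta$-independent remainder:
\begin{equation*}
m_\bullet^{\Prob}(\theta,x)=m_\bullet^{\mathbb{Q}_1}(\theta,(Tx)_1)+\mathtt{r}_\bullet(x),
\end{equation*}
valid for all $\theta$ and $x$. Once this identity holds, both claims are short. The Lipschitz constant in $\theta$ of $m_\bullet^{\Prob}(\cdot,x)$ equals that of $m_\bullet^{\mathbb{Q}_1}(\cdot,(Tx)_1)$. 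Moreover, if $X\sim\Prob_{\theta_0}$ then $(TX)_1\sim\mathbb{Q}_{\theta_0,1}$. Hence the $L^2$ norms agree, which proves 1. The excess risks $\E_0[m(\theta,X)-m(\theta_0,X)]$ coincide because $\mathtt{r}_\bullet$ cancels in the difference. Therefore the curvature inequalities of Definition~\ref{definition:local_curvature_condition} hold with the same constant, proving 2.

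\textbf{Case SM.} Write $T(x)=Ux+b$ with $U$ orthogonal, as any isometry of $\R^d$ is affine. Then $f_\theta(x)=q_\theta(Tx)$ and $\nabla\log f_\theta(x)=U^\top\nabla\log q_\theta(Tx)$. This gives $\|\nabla\log f_\theta(x)\|_2=\|\nabla\log q_\theta(Tx)\|_2$. The Laplacian is invariant under orthogonal changes of variables, so $\divergence\nabla\log f_\theta(x)=(\divergence\nabla\log q_\theta)(Tx)$. Together these give $m_{\SM}^{\Prob}(\theta,x)=m_{\SM}^{\mathbb{Q}}(\theta,Tx)$. Item 6 of Lemma~\ref{lemma:gm_properties} then splits off $m_{\SM}^{\mathbb{Q}_1}(\theta,(Tx)_1)$ plus a $\theta$-free function of $(Tx)_{2:d}$. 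This finishes the SM case.

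\textbf{Case DDSM.} By Lemma~\ref{lemma:ddpm_density_and_score}, $f_{\theta,t}$ is again a mixture in \eqref{eq:gm} with means $e^{-t}\mu_j$. This is where I expect the main care to be needed: the relevant isometry is now $T_t$, which maps $e^{-t}\mu_1,e^{-t}\mu_2$ to $\pm e^{-t}\Delta e_1$. I will take $T_t(y)=U(y-e^{-t}\mathtt{m})$ with the same rotation $U$ for all $t$. Then $s_{\theta,t}(x)=U^\top\tilde s_{\theta,t}(T_tx)$, where $\tilde s$ is the score of $\mathbb{Q}_{\theta,t}=\mathbb{Q}_{\theta,t,1}\otimes\mathcal{N}(0,\mathbb{I}_{d-1})$.

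The next step uses that $T_tX_t=e^{-t}T X_0+\sigma_tUZ_t$ when $T$ has $b=-U\mathtt{m}$. Here $\tilde Z_t:=UZ_t$ is again standard Gaussian and independent of $X_0$. The inner products are also preserved: $\langle s_{\theta,t}(X_t),Z_t\rangle=\langle\tilde s_{\theta,t}(T_tX_t),\tilde Z_t\rangle$. So the conditional expectation given $X_0=x$ equals the $\mathbb{Q}$-contrast integrand evaluated at $Tx$.

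Finally, the product structure splits the score as $\tilde s_{\theta,t}(y)=(\partial_{y_1}\log q_{\theta,t,1}(y_1),-y_{2:d})$. The squared norm and the inner product with $\tilde Z_t$ then separate into a first-coordinate part plus a $\theta$-free part. The first-coordinate part depends only on $(Tx)_1$ and $\tilde Z_{t,1}$, since the noising acts coordinatewise. Integrating over $t\in[0,T]$ yields the desired decomposition, with remainder $\mathtt{r}_{\DDSM}$. This remainder is finite because its coordinates $2{:}d$ are Gaussian moments.
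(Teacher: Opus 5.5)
Your proposal is correct and takes essentially the paper's route. You transport by the isometry $T$ (and by $T_t$ in the DDSM case) and then use the product factorization from item 6 of Lemma~\ref{lemma:gm_properties}; this gives a pointwise identity of the $\theta$-derivatives, hence equal $L^2$ norms, and equal excess risks, hence equal curvature constants. The differences are minor: for part 2 the paper goes through the Fisher-divergence form of the excess risk and item 3c instead of reading it off the contrast decomposition, and your explicit DDSM step via $T_tX_t=e^{-t}TX_0+\sigma_tUZ_t$ spells out what the paper leaves as ``mutatis mutandis''.
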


\begin{remark}[Notation]
	Both constants are to be understood wrt $\theta$, but we leave the dependence implicit in the statement to not overload the notation. That is, $\mathtt{Lip}(m_{\bullet}^{\Prob})$ refers to the Lipschitz constant of the map $\theta\mapsto m_{\bullet}^{\Prob}(\theta,x)$ for fixed $x\in\R^d$ and $\mathtt{Curv}(m_{\bullet}^{\Prob})$ corresponds to the curvature constant as in Definition~\ref{definition:local_curvature_condition}.
\end{remark}

\begin{lemma}[Lipschitz and curvature bounds for SM]
	\label{lemma:lipschitz_curvature_bounds_sm}
	Let $(\Prob_\theta)_{\theta\in\Theta}$ be the family in~\eqref{eq:gm} with separation $2\Delta=\|\mu_1-\mu_2\|_2$. Grant Assumptions~\ref{assumption:probtrue} and~\ref{assumption:compact_parameter_space}, in particular, fix $\eta\in(0,1/2)$ and set $\Theta:=[\eta,1-\eta]$. Then, the contrast function $m_{\SM}$ given in~\eqref{eq:contrast_function_sm} has the following properties:
	\begin{enumerate}
		\item It is Lipschitz continuous in $\theta$ with Lipschitz constant $\mathtt{L}_{\SM}$ and it holds that
	\begin{equation}
		\label{eq:lipschitz_constant_norm_sm}
		\|\mathtt{L}_{\SM}(X)\|_{L^2(\Probtrue)}\gtrsim_{\eta} \Delta^{3/2}\exp\left(-\frac{\Delta^2}{4}\right)
	\end{equation}
    \item It fulfills a curvature condition with $\alpha=2$ and constant $\mathtt{C}_{\SM}$, which satisfies
    \begin{equation*}
        \mathtt{C}_{\SM}\lesssim_{\eta}  \Delta\exp\left(-\frac{\Delta^2}{2}\right).
    \end{equation*}
	\end{enumerate}
\end{lemma}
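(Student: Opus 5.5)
By Lemma~\ref{lemma:reduction_constants}, both the $L^2(\Probtrue)$-norm of the Lipschitz constant and the curvature constant of $m_{\SM}^{\Prob}$ coincide with those of the one-dimensional contrast $m_{\SM}^{\mathbb{Q}_1}$ under $\mathbb{Q}_{\theta_0,1}$. So I would work only with $q_{\theta,1}(x)=\theta\varphi(x-\Delta)+(1-\theta)\varphi(x+\Delta)$ on $\R$.

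\textbf{Explicit contrast.} By 1.\ and 2.\ of Lemma~\ref{lemma:gm_properties}, the score is $s_\theta(x)=(2w-1)\Delta-x$ with $w=w_{\theta,1}(x)$. Since $w$ is logistic with slope $2\Delta$, we have $\partial_x w=2\Delta w(1-w)$, hence $\partial_x s_\theta=4\Delta^2w(1-w)-1$ and
\begin{equation*}
m_{\SM}^{\mathbb{Q}_1}(\theta,x)=\bigl((2w-1)\Delta-x\bigr)^2+8\Delta^2w(1-w)-2.
\end{equation*}
Using $\partial_\theta w=w(1-w)/(\theta(1-\theta))$, one gets
\begin{equation*}
\partial_\theta m_{\SM}^{\mathbb{Q}_1}(\theta,x)=4\Delta\,\frac{w(1-w)}{\theta(1-\theta)}\bigl((1-2w)\Delta-x\bigr).
\end{equation*}
On the compact set $\Theta$ this derivative is continuous in $\theta$. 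Hence $\theta\mapsto m$ is Lipschitz with $\mathtt{L}_{\SM}(x)=\sup_{\theta\in\Theta}|\partial_\theta m|$, and in particular $\mathtt{L}_{\SM}(x)\geqslant|\partial_\theta m(\theta_0,x)|$.

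\textbf{Lower bound \eqref{eq:lipschitz_constant_norm_sm}.} Let $x^*=\log(\theta_0/(1-\theta_0))/(2\Delta)$ be the centre of the logistic transition, so $|x^*|\lesssim_\eta 1/\Delta$. I would restrict the integral to a window $I$ of width $\asymp 1/\Delta$ located on one side of $x^*$, for instance $x\in[x^*+1/(4\Delta),\,x^*+1/(2\Delta)]$. On $I$ the following hold:
\begin{itemize}
\item $w(1-w)$ is bounded below by an absolute constant.
\item $|1-2w|$ is bounded below by an absolute constant, so $|(1-2w)\Delta-x|\gtrsim\Delta$ for $\Delta$ large. Small $\Delta$ only affects constants, or is handled separately.
\item The density satisfies $q_{\theta_0,1}(x)\gtrsim_\eta\varphi(\Delta)$, because $|x|\lesssim_\eta 1/\Delta$ makes $e^{\pm\Delta x}$ bounded.
\end{itemize}
Then
\begin{equation*}
\E[\mathtt{L}_{\SM}^2]\gtrsim_\eta \Delta^4\cdot\Delta^{-1}\cdot\varphi(\Delta)\asymp\Delta^3e^{-\Delta^2/2},
\end{equation*}
and taking square roots gives \eqref{eq:lipschitz_constant_norm_sm}. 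This is the same computation that is quoted for the numerator in Proposition~\ref{proposition:score_bound}.

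\textbf{Curvature upper bound.} Under Assumption~\ref{assumption:probtrue}, integration by parts \eqref{eq:fisher_distance_3} gives $\E_0[m(\theta,X)-m(\theta_0,X)]=\FI(\mathbb{Q}_{\theta_0,1},\mathbb{Q}_{\theta,1})$. Any admissible constant with $\alpha=2$ must satisfy $\mathtt{C}_{\SM}\delta^2\leqslant\FI(\mathbb{Q}_{\theta_0,1},\mathbb{Q}_{\theta,1})$ for some $\theta\in\Theta$ with $|\theta-\theta_0|=\delta$; one of $\theta_0\pm\delta$ lies in $\Theta$ for small $\delta$. Exactly as in the proof of 1.\ of Proposition~\ref{proposition:score_bound}, using 3c., 4.\ and 5.\ of Lemma~\ref{lemma:gm_properties} with $t=4\Delta\geqslant\Delta$, I would bound
\begin{equation*}
\FI\leqslant 4\Delta^2\delta^2\eta^{-4}\,\E\bigl[e^{-4\Delta|X|}\bigr]\lesssim_\eta\Delta^2\delta^2\Delta^{-1}\varphi(\Delta).
\end{equation*}
Dividing by $\delta^2$ gives $\mathtt{C}_{\SM}\lesssim_\eta\Delta e^{-\Delta^2/2}$. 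The existence of a curvature condition with $\alpha=2$ in the first place follows from a local expansion: the second derivative of the population risk at $\theta_0$ is $\E[(\partial_\theta s_\theta)^2]>0$, so the risk is quadratic near $\theta_0$ and bounded away from $0$ elsewhere by compactness.

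\textbf{Main obstacle.} The delicate step is the Lipschitz lower bound. The window has to be placed so that $w(1-w)$ and $|(1-2w)\Delta-x|\gtrsim\Delta$ hold simultaneously, and that must be tracked uniformly in $\theta_0\in[\eta,1-\eta]$. Moving slightly away from $x^*$ handles this, at the cost of $\eta$-dependent constants.
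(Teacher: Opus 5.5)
Your route is the paper's. You use the same reduction to $q_{\theta,1}$ via Lemma~\ref{lemma:reduction_constants}, the same closed form for $\partial_\theta m_{\SM}$, and the same localisation to a window of width $\asymp 1/\Delta$ near the origin where $w(1-w)\gtrsim 1$, $|1-2w|\gtrsim 1$ and the density is $\gtrsim_\eta\varphi(\Delta)$. For the curvature you use the same proper-scoring identity with items 3c, 4 and 5 of Lemma~\ref{lemma:gm_properties} at $t=4\Delta$. The paper evaluates the derivative at $\theta_*=\eta$ on $[-\alpha/\Delta,\alpha/\Delta]$, using $w_\eta(0)=\eta\neq 1/2$; you evaluate at $\theta_0$ on a window shifted off the logistic midpoint, and that difference is immaterial.

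One detail on the shift: the midpoint of $w_{\theta_0,1}$ is $-\log(\theta_0/(1-\theta_0))/(2\Delta)$. The one-dimensional formula in item 2 of Lemma~\ref{lemma:gm_properties} has the sign flipped, as comparison with the $d$-dimensional formula shows. If you take your $x^*$ literally, $1-2w$ vanishes inside your window whenever $\log(\theta_0/(1-\theta_0))\in[-1/2,-1/4]$.

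The genuine gap is small $\Delta$, which does not ``only affect constants''. The constant in \eqref{eq:lipschitz_constant_norm_sm} may depend on $\eta$ only, so as $\Delta\to0$ you must show $\|\mathtt{L}_{\SM}\|_{L^2}^2\gtrsim_\eta\Delta^3$. Your window cannot give this, for two reasons:
\begin{itemize}
\item It lies at distance of order $1/\Delta$ from the origin; for $\theta_0=1/2$ it is $[1/(4\Delta),1/(2\Delta)]$. There the density is superexponentially small in $1/\Delta$.
\item Your bound $|(1-2w)\Delta-x|\gtrsim\Delta$ already requires $\Delta\geqslant\Delta_0(\eta)$, for a threshold growing with $\log(1/\eta)$.
\end{itemize}
You need a separate window of fixed size, as in the paper's case $\Delta<1$, which uses $\theta_*=1/2$ and $x\in[-2,-1]$ and yields $\gtrsim\Delta^2$.

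In your set-up, take $\theta=\theta_0$ and $x\in[-\Delta_0-2,-\Delta_0-1]$; this covers all $\Delta\leqslant\Delta_0(\eta)$ at once. On that window $w(1-w)\gtrsim_\eta 1$, $(1-2w)\Delta-x\geqslant 1$, and $q_{\theta_0,1}(x)\geqslant\eta\varphi(\Delta_0+2)$. Hence $\|\mathtt{L}_{\SM}\|_{L^2}^2\gtrsim_\eta\Delta^2\geqslant\Delta^3e^{-\Delta^2/2}$, because $\Delta e^{-\Delta^2/2}\leqslant e^{-1/2}$. With this case added the Lipschitz part is complete.

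The curvature part is fine as written. Your local-expansion argument for the existence of an $\alpha=2$ condition can be shortened. Item 3a gives the exact factorisation $\FI=4\Delta^2(\theta-\theta_0)^2\int(\cdots)^2q_{\theta_0,1}$, and item 4 bounds the integral below.
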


\begin{remark}[Proper scoring rules]
	Note that the upper bound in 2.~of Lemma~\ref{lemma:lipschitz_curvature_bounds_sm} and the upper bound in 1.~of Proposition~\ref{proposition:score_bound} coincide up to multiplicative constants. This is due to the fact that $m_{\SM}$ is a proper scoring rule and every proper scoring rule induces a divergence~\citep{ParDawLau2012,ForLau2015}. This implies that
	\begin{equation}
		\label{eq:scoring_rule_identity}
		\FI(\Prob_{\theta_0},\Prob_\theta)=\E_{\theta_0}\left[m_{\SM}(\theta,X)-m_{\SM}(\theta_0,X)\right],
	\end{equation}
	such that the curvatures of $\theta\mapsto\FI(\Prob_{\theta_0},\Prob_\theta)$ and $\theta\mapsto\E_{\theta_0}[m_{\SM}(\theta,X)]$ are identical. See the proof of 2.~of Lemma~\ref{lemma:lipschitz_curvature_bounds_sm} for details on the derivation of~\eqref{eq:scoring_rule_identity}.
\end{remark}

\begin{lemma}[Lipschitz and curvature bounds for DDSM]
  \label{lemma:lipschitz_curvature_bounds_dsm}
  Let $(\Prob_\theta)_{\theta\in\Theta}$ be the family in~\eqref{eq:gm} with separation $2\Delta=\|\mu_1-\mu_2\|_2$. Grant Assumptions~\ref{assumption:probtrue} and~\ref{assumption:compact_parameter_space}, in particular, fix $\eta\in(0,1/2)$ and set $\Theta:=[\eta,1-\eta]$. Then, the contrast function $m_{\DDSM}$ given in~\eqref{eq:contrast_function_dsm} has the following properties:
 \begin{enumerate}
	\item It is Lipschitz continuous in $\theta$ with Lipschitz constant $\mathtt{L}_{\DDSM}$ and for every $T>0$ it holds that
  \begin{equation}
	\label{eq:lipschitz_constant_norm_dsm}
	\|\mathtt{L}_{\DDSM}(X_0)\|_{L^2(\Probtrue)}\lesssim_{\eta} \psi(\Delta,T):=  T\E\left[g(\Delta_U)\exp\left(-\frac{\Delta_U^2}{8}\right)\right],
  \end{equation}
  where $g(x)=x^{7/4}+x^{3/4}(3+6x^2+x^4)^{1/4}+x^{3/2}$, $\Delta_t:=\exp(-t)\Delta$ and the expectation on the right-hand side is taken wrt $U\sim\text{\normalfont Uniform}([0,T])$. Further, if $T>0\vee\log\Delta$, then
  \begin{equation*}
	\|\mathtt{L}_{\DDSM}(X_0)\|_{L^2(\Probtrue)}\leqslant \mathtt{C}(\eta),
  \end{equation*}
  for an explicit constant $\mathtt{C}(\eta)$ independent of $\Delta$.
  	\item It fulfills a curvature condition with $\alpha=2$ and constant $\mathtt{C}_{\DDSM}$, which satisfies for every $T>0$
    \begin{align*}
	\mathtt{C}_{\DDSM}\gtrsim\xi(\Delta,T):=
	\begin{cases}
		\Delta^2(1-\exp(-2T)),&\text{if }\Delta\leqslant1,\\
		 1-\Delta^2\exp(-2T),&\text{if }\Delta>1.
	\end{cases}
\end{align*}
	In particular, if $\Delta>1$ and $T\geqslant\log(\Delta^m)$ for $m>1$, then $\mathtt{C}_{\DDSM}\gtrsim 1-\Delta^{-2(m-1)}$.
 \end{enumerate} 
\end{lemma}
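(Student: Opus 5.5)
By Lemma~\ref{lemma:reduction_constants} it suffices to work with the one-dimensional family $(\mathbb{Q}_{\theta,1})$ with means $\pm\Delta$. By Lemma~\ref{lemma:ddpm_density_and_score}, the time-$t$ marginal is the one-dimensional mixture $q_{\theta,1,t}(x)=\theta\varphi(x-\Delta_t)+(1-\theta)\varphi(x+\Delta_t)$ with $\Delta_t=e^{-t}\Delta$. So every statement from Lemma~\ref{lemma:gm_properties} holds at each time with $\Delta$ replaced by $\Delta_t$. In particular, the score is $s_{\theta,t}(x)=(2w_{\theta,t}(x)-1)\Delta_t-x$.

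\textbf{Lipschitz bound.} I would first rewrite the inner conditional expectation by Vincent's identity. Up to a $\theta$-free term, the contrast is
\[
\int_0^T \E\bigl[\|s_{\theta,t}(X_t)-\nabla\log f_{X_t\mid X_0}(X_t)\|^2\mid X_0\bigr]\dif t .
\]
Differentiating in $\theta$ gives
\[
\mathtt{L}_{\DDSM}(x)\leqslant\sup_\theta\int_0^T \E\bigl[|2\,\partial_\theta s_{\theta,t}(X_t)|\,\bigl|s_{\theta,t}(X_t)+Z_t/\sigma_t\bigr|\mid X_0=x\bigr]\dif t .
\]
The derivative of the score is $\partial_\theta s_{\theta,t}(y)=2\Delta_t\varphi(y-\Delta_t)\varphi(y+\Delta_t)/q_{\theta,1,t}(y)^2$. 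By 4.\ of Lemma~\ref{lemma:gm_properties}, this is bounded by $\lesssim_\eta\Delta_t e^{-2\Delta_t|y|}$.

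The singular factor $Z_t/\sigma_t$ should not be handled directly; I would remove it by Gaussian integration by parts (Stein's lemma) in $Z_t$. After that, only $\partial_\theta s\cdot s$ and $\partial_\theta\partial_y s$ remain, each of size $\lesssim_\eta \mathrm{poly}(\Delta_t,|y|)\,e^{-2\Delta_t|y|}$.

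Next I would take the $L^2(\Prob_{\theta_0})$ norm and apply Minkowski's inequality to move the norm inside the time integral; this produces the factor $T\,\E_U[\cdot]$. Then Cauchy--Schwarz separates the exponential weight from the polynomial moments of $X_t$. These moments are Gaussian moments $3+6\Delta_t^2+\Delta_t^4$, which explains the fourth-root term in $g$. The exponential moment is controlled by 5.\ of Lemma~\ref{lemma:gm_properties} with $t$ chosen as a multiple of $\Delta_t$. It contributes $\varphi(\Delta_t)^{1/2}$ or its fourth root, i.e.\ $e^{-\Delta_t^2/8}$ after the two square roots. Collecting powers of $\Delta_t$ should yield $g(\Delta_t)e^{-\Delta_t^2/8}$.

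For the $\Delta$-free statement, I substitute $u=\Delta_t$. The time integral then becomes $\int_{\Delta e^{-T}}^{\Delta} g(u)e^{-u^2/8}u^{-1}\dif u$. Since $g(u)/u$ is integrable at $0$ and the Gaussian factor makes it integrable at $\infty$, this is bounded independently of $\Delta$ and $T$. I expect this Minkowski/Stein step, with its careful bookkeeping of the $1/\sigma_t$ singularity near $t=0$, to be the main obstacle.

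\textbf{Curvature.} Here the population excess risk equals $\int_0^T\FI(\Prob_{\theta_0,t},\Prob_{\theta,t})\dif t$. This follows from Vincent's identity \eqref{eq:dsm_loss_1} and well-specification. By 3.\ of Lemma~\ref{lemma:gm_properties}, $|s_{\theta,t}-s_{\theta_0,t}|=2\Delta_t|\theta-\theta_0|\,\varphi\varphi/(q_\theta q_{\theta_0})$. The lower bound in 4.\ gives that this ratio is $\geqslant e^{-2\Delta_t|y|}$. Hence
\[
\FI_t\geqslant 4\Delta_t^2(\theta-\theta_0)^2\,\E_{\theta_0,t}\bigl[e^{-4\Delta_t|Y|}\bigr],
\]
so the curvature condition holds with $\alpha=2$.

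To bound the expectation from below, I would restrict to the event $|Y|\leqslant1$ and use that the mixture density there is at least $\varphi(1+\Delta_t)$. Alternatively, Jensen's inequality gives $\E[e^{-4\Delta_t|Y|}]\geqslant e^{-4\Delta_t\E|Y|}\gtrsim e^{-c\Delta_t^2-c}$. For $\Delta_t\lesssim1$ this is an absolute constant.

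For $\Delta\leqslant1$ the bound $\int_0^T\Delta^2e^{-2t}\dif t=\Delta^2(1-e^{-2T})/2$ gives the first case. For $\Delta>1$, I restrict to $t\geqslant\log\Delta$, where $\Delta_t\leqslant1$. The integral there is $\gtrsim\int_{\log\Delta}^T\Delta^2e^{-2t}\dif t=\tfrac12(1-\Delta^2e^{-2T})$, giving the second case. The final claim follows by substituting $T\geqslant m\log\Delta$.
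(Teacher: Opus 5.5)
Your proposal is correct and follows the paper's proof essentially step by step. The shared steps are: Stein's lemma to trade $Z_t/\sigma_t$ for $\partial_\theta\partial_y s_{\theta,t}$; Minkowski in $t$ together with the $L^2$-contraction of $\E[\,\cdot\mid X_0]$; Cauchy--Schwarz into $L^4$ norms bounded via items 4 and 5 of Lemma~\ref{lemma:gm_properties}; and, for the curvature, the identity with $\int_0^T\FI(\Prob_{\theta_0,t},\Prob_{\theta,t})\dif t$ restricted to $|x|\leqslant1$ and $\Delta_t\leqslant1$. One small fix: Stein has to be applied to $\E[\partial_\theta s_{\theta,t}(X_t)Z_t\mid X_0]$ before you insert absolute values and the supremum over $\theta$, not to your displayed bound.

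Your single substitution $u=\Delta_t$ is a tidier version of the paper's split at $t^*=\log\Delta$. It shows $\psi(\Delta,T)\leqslant\int_0^\infty g(u)e^{-u^2/8}u^{-1}\dif u$ for every $T>0$, so the hypothesis $T>0\vee\log\Delta$ is in fact not needed.
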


Lemmata~\ref{lemma:lipschitz_curvature_bounds_sm} and~\ref{lemma:lipschitz_curvature_bounds_dsm} are proven in Appendices~\ref{app:proof_lemma_lipschitz_curvature_bounds_sm} and~\ref{app:proof_lemma_lipschitz_curvature_bounds_dsm}, respectively, and they confirm an important fact: while $\|\mathtt{L}_{\SM}\|_{L^2(\Probtrue)}$ and $\mathtt{C}_{\SM}$ decay exponentially in $\Delta^2$, the respective DDSM quantities do not exhibit this type of behaviour, provided that the hyperparameter $T$ is chosen sufficiently large. Comparing the bound for $\|\mathtt{L}_{\SM}\|_{L^2(\Probtrue)}$ in~\eqref{eq:lipschitz_constant_norm_sm} with the bound for $\|\mathtt{L}_{\DDSM}\|_{L^2(\Probtrue)}$ in~\eqref{eq:lipschitz_constant_norm_dsm} reveals why $T$ needs to be large. To see this, one may rewrite the right-hand side of~\eqref{eq:lipschitz_constant_norm_dsm} into 
\begin{equation*}
	T\E\left[g(\Delta_U)\exp\left(-\frac{\Delta_U^2}{8}\right)\right]=(\Delta-\exp(-T)\Delta)\E\left[g(\widetilde{U})\exp\left(-\frac{\widetilde{U}^2}{8}\right)\right],
\end{equation*}
where $\widetilde{U}\sim\text{\normalfont Uniform}([\exp(-T)\Delta,\Delta])$. The above expectation on the right-hand side is a uniform average of the decaying function $h(x):=g(x)\exp(-x^2/8)$ over the interval $[\exp(-T)\Delta,\Delta]$. The lower bound for $\|\mathtt{L}_{\SM}\|_{L^2(\Probtrue)}$ is given by evaluating $k(x)=x\exp(-x^2/2)$ at $x=\Delta$. Clearly, $k\in o(h)$, that is, for all constants $\mathtt{C}>0$ there exists $x_0\in\R$ such that $k(x) \leqslant \mathtt{C} h(x)$ for all $x\geqslant x_0$. But the expectation in the above display will only take into account realizations smaller than $\Delta$ and thus the expectation will be larger than $k(\Delta)$ for any $T>0$. And choosing $T$ large shifts more weight towards the origin, which in turn increases the expectation even more.

In other words, the norm of $\mathtt{L}_{\SM}$ can become arbitrarily small as $\Delta$ grows, while for DDSM one can avoid this by choosing $T$ sufficiently large. Similar statements can be made about the curvature constants. Given that both quantities encode the local change and curvature of the loss function, respectively, this confirms the initial observation from Figure~\ref{fig:pdfs_scores}: when the separation between the modes is large, the SM loss is flat, but the DSM loss is not. Let us now state the main results.

\begin{theorem}[Error bound for SME]\label{theorem:error_bound_sme}
    Let $(\Prob_\theta)_{\theta\in\Theta}$ be the family in~\eqref{eq:gm} with separation $2\Delta=\|\mu_1-\mu_2\|_2$. Grant Assumptions~\ref{assumption:probtrue} and~\ref{assumption:compact_parameter_space}, in particular, fix $\eta\in(0,1/2)$ and set $\Theta:=[\eta,1-\eta]$. Then, for any $0<\delta<\mathtt{C}\|\mathtt{L}_{\SM}\|_{L^2(\Probtrue)}/\mathtt{C}_{\SM}$ there exists $n_\delta\in\N$ such that for all $n\geqslant n_\delta$ the SME as defined in Definition~\ref{def:sme} satisfies
       \begin{equation*}
            \Probtrue\left(|\hat {\theta}_{\SM} - \theta_0| \leqslant \widetilde{\mathtt{C}}(\eta)\frac{\Delta^{1/2}\exp(\Delta^2/4)+\rho(\Delta)}{\delta\sqrt{n}}\right)\geqslant 1-2\delta,
    \end{equation*}
    where $\mathtt{C},\widetilde{\mathtt{C}}(\eta)>0$ are constants and $\rho:\R\rightarrow[0,\infty)$.
\end{theorem}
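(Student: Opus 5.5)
The plan is to apply Proposition~\ref{proposition:m_estimator_error_bound} with $k=1$ and $\alpha=2$ to the contrast $m_{\SM}$. This gives, for admissible $\delta$ and $n\geqslant n_\delta$, the bound $|\hat\theta_{\SM}-\theta_0|\leqslant \mathtt{C}K_2\|\mathtt{L}_{\SM}\|_{L^2(\Probtrue)}/(\delta\,\mathtt{C}_{\SM}\sqrt{n})$ with probability at least $1-2\delta$. The work therefore splits into three parts: verifying the three hypotheses of that proposition, and then bounding the ratio $\|\mathtt{L}_{\SM}\|_{L^2(\Probtrue)}/\mathtt{C}_{\SM}$.

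Lemma~\ref{lemma:lipschitz_curvature_bounds_sm} already provides the Lipschitz property (hypothesis 1) and the local curvature condition with $\alpha=2$ (hypothesis 2). For consistency (hypothesis 3), I would use the standard argmin consistency theorem for M-estimators (Theorem 5.7 in~\citet{Vaa1998}). The point is that $\Theta=[\eta,1-\eta]$ is compact, so one only needs two things:
\begin{enumerate}
\item a uniform law of large numbers for $m_{\SM}$, which follows from the Lipschitz property with square-integrable $\mathtt{L}_{\SM}$, since the class is then Glivenko--Cantelli via bracketing;
\item a well-separated minimum at $\theta_0$, which follows from identity~\eqref{eq:scoring_rule_identity} together with the curvature condition.
\end{enumerate}
Before any of this, I would use Lemma~\ref{lemma:reduction_constants} to reduce everything to the one-dimensional family $\mathbb{Q}_{\theta,1}$.

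The main obstacle is the ratio. Lemma~\ref{lemma:lipschitz_curvature_bounds_sm} as stated supplies only a \emph{lower} bound on $\|\mathtt{L}_{\SM}\|$ and an \emph{upper} bound on $\mathtt{C}_{\SM}$, which are the wrong directions for an upper bound on the error. So I would redo the computation in the opposite directions.

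For the Lipschitz constant, I would differentiate $m_{\SM}^{\mathbb{Q}_1}(\theta,x_1)=(\partial_{x_1}\log q_{\theta,1})^2+2\partial_{x_1}^2\log q_{\theta,1}$ in $\theta$, using 1. and 2. of Lemma~\ref{lemma:gm_properties}. The $\theta$-derivative of $w_{\theta,1}$ equals $w_{\theta,1}(1-w_{\theta,1})/(\theta(1-\theta))$, which is bounded by the ratio in 4. of Lemma~\ref{lemma:gm_properties}. Hence $\sup_\theta|\partial_\theta m|\lesssim_\eta \mathrm{poly}(\Delta,|x_1|)\exp(-2\Delta|x_1|)$. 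Squaring and integrating against $q_{\theta_0,1}$ with 5. of Lemma~\ref{lemma:gm_properties}, where Gaussian moments absorb the polynomial factors, should give $\|\mathtt{L}_{\SM}\|_{L^2}\lesssim_\eta \Delta^{3/2}\exp(-\Delta^2/4)+\rho_1(\Delta)$. Here $\rho_1$ collects lower-order terms and terms that matter only for small $\Delta$.

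For the curvature, I would lower bound $\FI(\mathbb{Q}_{\theta_0,1},\mathbb{Q}_{\theta,1})$ using the lower bound in 4. of Lemma~\ref{lemma:gm_properties} together with 3a.:
\[
\FI\geqslant 4\Delta^2(\theta-\theta_0)^2\,\E_{\theta_0}\bigl[\exp(-4\Delta|X_1|)\bigr].
\]
The expectation can be bounded below by restricting to $|x_1|\leqslant 1/\Delta$, which gives order $\varphi(\Delta)/\Delta$ (more carefully, order $\exp(-\Delta^2/2)/\Delta$ up to $\eta$-constants). This yields $\mathtt{C}_{\SM}\gtrsim_\eta \Delta\exp(-\Delta^2/2)$, matching the upper bound in Lemma~\ref{lemma:lipschitz_curvature_bounds_sm}.

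Dividing the two bounds gives $\Delta^{1/2}\exp(\Delta^2/4)+\rho(\Delta)$, with the non-leading and small-$\Delta$ contributions placed in $\rho$ and all $\eta$-dependence placed in $\widetilde{\mathtt{C}}(\eta)$. The admissible range $\delta<\mathtt{C}\|\mathtt{L}_{\SM}\|/\mathtt{C}_{\SM}$ is then inherited directly from Proposition~\ref{proposition:m_estimator_error_bound}. I expect the delicate part to be tracking the polynomial prefactors in the Lipschitz estimate, in particular making sure that terms cubic in $x_1$ from the divergence part do not change the exponent $\Delta^2/4$.
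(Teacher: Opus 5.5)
Your proposal is correct. The verification of the three hypotheses of Proposition~\ref{proposition:m_estimator_error_bound} matches the paper. That covers the Lipschitz property, curvature with $\alpha=2$, and consistency via Theorem~5.7 of \citet{Vaa1998}, using a bracketing Glivenko--Cantelli argument and a well-separated minimum from the scoring-rule identity.

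The genuine difference is the last step. The paper never bounds $\|\mathtt{L}_{\SM}\|_{L^2(\Probtrue)}/\mathtt{C}_{\SM}$ from above. It combines the lemma's lower bound on $\|\mathtt{L}_{\SM}\|$ with its upper bound on $\mathtt{C}_{\SM}$, which gives only a \emph{lower} bound $\gtrsim_\eta\Delta^{1/2}\exp(\Delta^2/4)$ on the ratio. It then declares the needed upper bound to be $\widetilde{\mathtt{C}}(\eta)(\Delta^{1/2}\exp(\Delta^2/4)+\rho(\Delta))$ for an unspecified $\rho\geqslant0$. That holds trivially as soon as the ratio is finite.

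You spotted this direction problem and prove the reverse inequalities instead. This buys three things:
\begin{itemize}
\item The displayed rate becomes a genuine upper bound, and it is tight for $\Delta\geqslant1$ by the lemma.
\item $\rho$ is needed only for $\Delta<1$. There $\mathtt{C}_{\SM}$ is only of order $\Delta^2$ and the ratio is of order at most $1/\Delta$.
\item You get the finiteness of $\|\mathtt{L}_{\SM}\|_{L^1(\Probtrue)}$ and $\|\mathtt{L}_{\SM}\|_{L^2(\Probtrue)}$, which the paper only asserts by analogy with a lower-bound computation.
\end{itemize}
The cost is two computations the paper skips.

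For the Lipschitz part, two remarks. First, $\partial_\theta m_{\SM}=\partial_\theta w_{\theta,1}(4\Delta^2-8\Delta^2w_{\theta,1}-4\Delta x_1)$ is at most linear in $x_1$, so there are no cubic terms to worry about. Hence $\sup_\theta|\partial_\theta m_{\SM}|\leqslant\eta^{-2}e^{-2\Delta|x_1|}(4\Delta^2+4\Delta|x_1|)$. Second, to preserve the exponent $\Delta^2/4$ you should absorb the polynomial factors pointwise, via $\sup_{u\geqslant0}u^ke^{-2u}<\infty$ with $u=\Delta|x_1|$. Then apply item~5 of Lemma~\ref{lemma:gm_properties} with $t=2\Delta$. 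Separating the factors with Cauchy--Schwarz instead would only give $\exp(-\Delta^2/8)$ and spoil the rate.
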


\begin{proof}
	We want to apply Proposition~\ref{proposition:m_estimator_error_bound} and therefore need to verify its assumptions. First, from 1. of Lemma~\ref{lemma:lipschitz_curvature_bounds_sm} we know that $\theta\mapsto m_{\SM}(\theta,x)$ is Lipschitz for every $x\in\samplespace$ with Lipschitz constant $\mathtt{L}_{\SM}(x,\mu)=\sup_{\theta\in\Theta}|\partial_\theta m_{\SM}(\theta,x)|$. Second, the curvature condition follows from 2. of Lemma~\ref{lemma:lipschitz_curvature_bounds_sm} with $\alpha=2$. Lastly, let us check that $\hat {\theta}_{\SM}$ converges in probability to $\theta_0$. Using Theorem~\ref{theorem:consistency} in Appendix~\ref{appendix:consistency} we need to show that $\mathcal{F}_\Theta=\{x\mapsto m_{\SM}(\theta,x):\theta\in\Theta\}$ is Glivenko-Cantelli, that is,
	\begin{equation*}
		\sup_{\theta\in\Theta}\left|\frac{1}{n}\sum_{i=1}^n m_{\SM}(\theta,X_i)-\E_0[m_{\SM}(\theta,X)]\right|\xrightarrow{\Probtrue}0,
	\end{equation*}
	and that a well-separated population maximizer at $\theta_0$ exists, that is,
	\begin{equation*}
		\inf_{|\theta-\theta_0|>\varepsilon}\E_0[m_{\SM}(\theta,X)]>\E_0[m_{\SM}(\theta_0,X)]
	\end{equation*}
	for every $\varepsilon>0$.
	
	With the same logic as in the proof of Lemma~\ref{lemma:entropy_parametric_class} we have
	\begin{equation*}
		N_{[\,]}(2\varepsilon\|\mathtt{L}_{\SM}\|_{L^1(\Probtrue)},\mathcal{F}_\Theta,L^1(\Probtrue))\leqslant N(\varepsilon,\Theta,|\cdot|)\leqslant 1+\frac{2}{\varepsilon}<\infty,
	\end{equation*}
	for every $\varepsilon>0$, where $N_{[\,]}$ and $N$ are the bracketing and covering numbers, respectively. The first inequality follows from Theorem 2.7.17 in~\citet{VaaWel2023} and the second from a slight generalization of Corollary 4.2.11 from~\cite{Ver2026}. With analogous arguments as in the proof of 1. of Lemma~\ref{lemma:lipschitz_curvature_bounds_sm} one obtains $\|\mathtt{L}_{\SM}\|_{L^1(\Probtrue)}<\infty$. Then, Theorem 2.4.1 in~\citet{VaaWel2023} implies that $\mathcal{F}_\Theta$ is Glivenko-Cantelli. The existence of a well-separated population maximizer at $\theta_0$ follows from the fact that $m_{\SM}$ is a proper scoring rule, that is, it holds that
	\begin{equation*}
		\E_0[m_{\SM}(\theta,X)]-\E_0[m_{\SM}(\theta_0,X)]=\FI(\Prob_{\theta_0},\Prob_\theta)> 0,
	\end{equation*}
	if $\theta\neq\theta_0$. Thus, we have shown that $\hat {\theta}_{\SM}$ converges in probability to $\theta_0$. This concludes the verification of the assumptions of Proposition~\ref{proposition:m_estimator_error_bound}.

	What remains to be shown is the explicit form of the error bound given in the theorem. Combining the two bounds from Lemma~\ref{lemma:lipschitz_curvature_bounds_sm} gives the lower bound
	\begin{equation*}
		\frac{\|\mathtt{L}_{\SM}\|_{L^2(\Probtrue)}}{\mathtt{C}_{\SM}}\gtrsim_{\eta}\Delta^{1/2}\exp(\Delta^2/4).
	\end{equation*}
	Clearly, any upper bound of the left-hand side of the above display has to be at least as large as $\Delta^{1/2}\exp(\Delta^2/4)$, say $\Delta^{1/2}\exp(\Delta^2/4)+\rho(\Delta)$ for some map $\rho:\R\rightarrow[0,\infty)$ possibly depending on $\Delta$. With the inclusion
	\begin{equation}
		\label{eq:error_bound_inclusion}
		\left\{|\hat {\theta}_{\SM} - \theta_0| \leqslant \frac{\mathtt{C}\|\mathtt{L}_{\SM}\|_{L^2(\Probtrue)}}{\delta\mathtt{C}_{\SM}\sqrt{n}}\right\}\subseteq\left\{|\hat {\theta}_{\SM} - \theta_0| \leqslant \widetilde{\mathtt{C}}(\eta)\frac{\Delta^{1/2}\exp(\Delta^2/4)+\rho(\Delta)}{\delta\sqrt{n}}\right\},	
	\end{equation}
	for some constant $\widetilde{\mathtt{C}}(\eta)>0$ and the monotonicity of probability measures, the claim follows from Proposition~\ref{proposition:m_estimator_error_bound}.
\end{proof}

\begin{remark}[Relevance of $\rho$]
	The high-level goal of Lemmata~\ref{lemma:lipschitz_curvature_bounds_sm} and~\ref{lemma:lipschitz_curvature_bounds_dsm} is to show that $\|\mathtt{L}_{\DDSM}\|_{L^2(\Probtrue)}/\mathtt{C}_{\DDSM}\ll \|\mathtt{L}_{\SM}\|_{L^2(\Probtrue)}/\mathtt{C}_{\SM}$, by upper bounding the left-hand side and providing a lower bound for the right-hand side. On the other hand, we need to enlarge the event on the left-hand side of~\eqref{eq:error_bound_inclusion} in order for the lower bound of $1-2\delta$ on its probability to still hold. Therefore, the additional term $\rho(\Delta)$ is necessary, but it is not relevant to know its exact form.
\end{remark}

\begin{theorem}[Error bound for DDSME]
	\label{theorem:error_bound_ddsme}
	Let $(\Prob_\theta)_{\theta\in\Theta}$ be the family in~\eqref{eq:gm} with with separation $2\Delta=\|\mu_1-\mu_2\|_2$. Grant Assumptions~\ref{assumption:probtrue} and~\ref{assumption:compact_parameter_space}, in particular, fix $\eta\in(0,1/2)$ and set $\Theta:=[\eta,1-\eta]$. Then, for any $0<\delta<\mathtt{C}\|\mathtt{L}_{\DDSM}\|_{L^2(\Probtrue)}/\mathtt{C}_{\DDSM}$ there exists $n_\delta\in\N$ such that for all $n\geqslant n_\delta$ and any $T>0$ the DDSME as defined in Definition~\ref{def:dsme} satisfies
       \begin{equation*}
            \Probtrue\left(|\hat {\theta}_{\DDSM} - \theta_0| \leqslant \frac{\widetilde{\mathtt{C}}(\eta)}{\delta\sqrt{n}}\frac{\psi(\Delta,T)}{\xi(\Delta,T)}\right)\geqslant 1-2\delta,
    \end{equation*}
	where $\mathtt{C},\widetilde{\mathtt{C}}(\eta)>0$ are constants and $\psi$ and $\xi$ are as defined in Lemma~\ref{lemma:lipschitz_curvature_bounds_dsm}.
\end{theorem}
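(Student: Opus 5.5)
The proof follows the template of the proof of Theorem~\ref{theorem:error_bound_sme}: we apply Proposition~\ref{proposition:m_estimator_error_bound} with $k=1$ and $\alpha=2$ to the contrast $m_{\DDSM}$, and then insert the bounds of Lemma~\ref{lemma:lipschitz_curvature_bounds_dsm}. By Lemma~\ref{lemma:reduction_constants}, all constants may be computed for the one-dimensional family $(\mathbb{Q}_{\theta,1})_{\theta\in\Theta}$. This reduction is already built into the bounds of Lemma~\ref{lemma:lipschitz_curvature_bounds_dsm}.

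The first step verifies the three hypotheses of Proposition~\ref{proposition:m_estimator_error_bound}. The Lipschitz property of $\theta\mapsto m_{\DDSM}(\theta,x)$, with constant $\mathtt{L}_{\DDSM}(x)=\sup_{\theta\in\Theta}|\partial_\theta m_{\DDSM}(\theta,x)|$ and finite $L^2(\Probtrue)$-norm, is 1.\ of Lemma~\ref{lemma:lipschitz_curvature_bounds_dsm}. The curvature condition with $\alpha=2$ and constant $\mathtt{C}_{\DDSM}\gtrsim\xi(\Delta,T)$ is 2.\ of the same lemma; note that $\xi(\Delta,T)>0$ is needed.

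The second step is consistency via Theorem~\ref{theorem:consistency}, which requires a Glivenko--Cantelli class and a well-separated minimizer. For the Glivenko--Cantelli property, I reuse the bracketing argument from the SME proof. Since $\theta\mapsto m_{\DDSM}(\theta,x)$ is Lipschitz with an integrable envelope $\|\mathtt{L}_{\DDSM}\|_{L^1(\Probtrue)}\leqslant\|\mathtt{L}_{\DDSM}\|_{L^2(\Probtrue)}<\infty$, Theorem 2.7.17 of~\citet{VaaWel2023} gives finite bracketing numbers, bounded by $1+2/\varepsilon$. Theorem 2.4.1 of~\citet{VaaWel2023} then yields the Glivenko--Cantelli property.

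For the well-separated minimizer, I use the identity~\eqref{eq:dsm_loss_1} of \citet{Vin2011} applied at each time $t$. Integrating over $[0,T]$ gives
\begin{equation*}
\E_0[m_{\DDSM}(\theta,X)-m_{\DDSM}(\theta_0,X)]=\int_0^T\FI(\Prob_{\theta_0,t},\Prob_{\theta,t})\dif t,
\end{equation*}
with $\Prob_{\theta,t}$ as in Lemma~\ref{lemma:ddpm_density_and_score}. This integral is strictly positive for $\theta\neq\theta_0$, because the two mixtures in Lemma~\ref{lemma:ddpm_density_and_score} have distinct weights and distinct means $\mu_{1,t}\neq\mu_{2,t}$, so they differ and their scores differ on a set of positive measure.

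Positivity alone does not give a separated infimum. To obtain it, I use the curvature condition locally near $\theta_0$ and combine it with continuity of the population risk in $\theta$ and compactness of $\Theta$ away from $\theta_0$. Alternatively, the bound $\gtrsim\xi\delta^2$ from Lemma~\ref{lemma:lipschitz_curvature_bounds_dsm} may already hold for all $\delta$, which would settle the point directly. This is the step I expect to need the most care, together with justifying the interchange of $\int_0^T$, the conditional expectation and $\partial_\theta$. Near $t=0$ the factor $1/\sigma_t$ blows up, so the $t$-integrability must be controlled, for instance through the Gaussian integration by parts $\E[\langle s,Z_t\rangle\mid X_0]/\sigma_t=\E[\divergence s\mid X_0]$.

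Finally, Proposition~\ref{proposition:m_estimator_error_bound} with $k=1$ and $\alpha=2$ gives, for all $n\geqslant n_\delta$, the event $|\hat\theta_{\DDSM}-\theta_0|\leqslant \mathtt{C}K_2\|\mathtt{L}_{\DDSM}\|_{L^2(\Probtrue)}/(\delta\mathtt{C}_{\DDSM}\sqrt n)$ with probability at least $1-2\delta$. Since $\|\mathtt{L}_{\DDSM}\|_{L^2(\Probtrue)}\lesssim_\eta\psi(\Delta,T)$ and $\mathtt{C}_{\DDSM}\gtrsim\xi(\Delta,T)$, we have
\begin{equation*}
\frac{\|\mathtt{L}_{\DDSM}\|_{L^2(\Probtrue)}}{\mathtt{C}_{\DDSM}}\lesssim_\eta\frac{\psi(\Delta,T)}{\xi(\Delta,T)}.
\end{equation*}
The event therefore enlarges to the one in the theorem with a suitable $\widetilde{\mathtt{C}}(\eta)$, and monotonicity of $\Probtrue$ concludes the proof. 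Here, unlike in Theorem~\ref{theorem:error_bound_sme}, no auxiliary $\rho$ is needed, because we have genuine upper bounds on the ratio rather than lower bounds.
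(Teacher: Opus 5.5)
Your proposal is correct and follows the paper's route. You verify the hypotheses of Proposition~\ref{proposition:m_estimator_error_bound} as in the proof of Theorem~\ref{theorem:error_bound_sme}, with the identity $\E_0[m_{\DDSM}(\theta,X)-m_{\DDSM}(\theta_0,X)]=\int_0^T\FI(\Prob_{\theta_0,t},\Prob_{\theta,t})\dif t$ taking the place of the proper-scoring-rule argument. You then insert the bounds of Lemma~\ref{lemma:lipschitz_curvature_bounds_dsm} and enlarge the event by monotonicity. You also make explicit two points the paper leaves implicit: how pointwise positivity becomes a separated infimum, and that the bound needs $\xi(\Delta,T)>0$, i.e.\ $T>\log\Delta$ when $\Delta>1$.
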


\begin{proof}
	Again, the statement follows from Proposition~\ref{proposition:m_estimator_error_bound}. Its assumptions can be verified with analogous arguments to the ones used in the proof of Theorem~\ref{theorem:error_bound_sme}, but considering the contrast $m_{\DDSM}$. Then, a combination of the bounds from Lemma~\ref{lemma:lipschitz_curvature_bounds_dsm} yields the upper bound
	\begin{equation*}
		\frac{\|\mathtt{L}_{\DDSM}\|_{L^2(\Probtrue)}}{\mathtt{C}_{\DDSM}}\lesssim_{\eta} \frac{\psi(\Delta,T)}{\xi(\Delta,T)},
	\end{equation*}
	which gives the inclusion
	\begin{equation}
		\label{eq:error_bound_inclusion_ddsm}
		\left\{|\hat {\theta}_{\DDSM} - \theta_0| \leqslant \frac{\mathtt{C}\|\mathtt{L}_{\DDSM}\|_{L^2(\Probtrue)}}{\delta\mathtt{C}_{\SM}\sqrt{n}}\right\}\subseteq\left\{|\hat {\theta}_{\SM} - \theta_0| \leqslant \frac{\widetilde{\mathtt{C}}(\eta)}{\delta\sqrt{n}}\frac{\psi(\Delta,T)}{\xi(\Delta,T)}\right\},
	\end{equation}
	for a constant $\widetilde{\mathtt{C}}(\eta)>0$. The claim follows with Proposition~\ref{proposition:m_estimator_error_bound} and the monotonicity of measures.
\end{proof}

For $\Delta=0$ the error bounds for both the SME and DDSME state that the true parameter $\theta_0$ can be recovered exactly with probability one. This simply reflects the fact that the mixing probability becomes non-identifiable if $\Delta\downarrow0$, since $\Prob_\theta=\mathcal{N}(\mu,\mathbb{I}_d)$ if $\Delta=0$. For distributions with a minimal separation between the means of the two components, the following corollary of Theorem~\ref{theorem:error_bound_ddsme} shows that the DDSME can still recover the true parameter with high precision and high probability. 

\begin{corollary}[Error bound for DDSME with lower bound on separation]
	Consider the setting of Theorem~\ref{theorem:error_bound_ddsme} and further assume that $\Delta\geqslant 1+\nu$ for some $\nu>0$ and $T\geqslant\log(\Delta^2)$. Then, it holds that
	\begin{equation*}
		\Probtrue\left(|\hat {\theta}_{\DDSM} - \theta_0| \leqslant \frac{\overline{\mathtt{C}}(\eta,\nu)}{\delta\sqrt{n}}\right)\geqslant 1-2\delta,
	\end{equation*}
	where $\overline{\mathtt{C}}(\eta,\nu)>0$ is a constant depending only on $\eta$ and $\nu$.
\end{corollary}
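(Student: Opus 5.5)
The corollary follows from Theorem~\ref{theorem:error_bound_ddsme}. It suffices to show that, under $\Delta\geqslant1+\nu$ and $T\geqslant\log(\Delta^2)$, the ratio $\psi(\Delta,T)/\xi(\Delta,T)$ is bounded by a constant depending only on $\nu$. Then $\overline{\mathtt{C}}(\eta,\nu):=\widetilde{\mathtt{C}}(\eta)\sup_{\Delta,T}\psi(\Delta,T)/\xi(\Delta,T)$ works. By monotonicity of $\Probtrue$, the event in Theorem~\ref{theorem:error_bound_ddsme} is contained in the event of the corollary, so its probability is also at least $1-2\delta$. The admissible range of $\delta$ and the threshold $n_\delta$ carry over unchanged.

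\textbf{Denominator.} Since $\Delta>1$, we are in the second case of $\xi$. With $T\geqslant\log(\Delta^2)$ we get $\exp(-2T)\leqslant\Delta^{-4}$, hence $\xi(\Delta,T)\geqslant 1-\Delta^{-2}$. This is the case $m=2$ of Lemma~\ref{lemma:lipschitz_curvature_bounds_dsm}. Because $\Delta\geqslant1+\nu$, we have $1-\Delta^{-2}\geqslant 1-(1+\nu)^{-2}>0$, a lower bound depending only on $\nu$. This is where the separation assumption is used: without it $\xi$ degenerates as $\Delta\downarrow1$.

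\textbf{Numerator.} I would use the substitution already given after Lemma~\ref{lemma:lipschitz_curvature_bounds_dsm}, with $u=\Delta e^{-t}$, so that $\dif t=-\dif u/u$. This gives
\begin{equation*}
\psi(\Delta,T)=\int_0^T g(\Delta e^{-t})e^{-\Delta^2e^{-2t}/8}\dif t=\int_{e^{-T}\Delta}^{\Delta}\frac{g(u)}{u}e^{-u^2/8}\dif u\leqslant\int_0^\infty\frac{g(u)}{u}e^{-u^2/8}\dif u.
\end{equation*}
Each term of $g(u)/u$ is, for small $u$, of order $u^{3/4}$, $u^{-1/4}(3+6u^2+u^4)^{1/4}\sim 3^{1/4}u^{-1/4}$ or $u^{1/2}$, all integrable at $0$. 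For large $u$ each term grows at most polynomially, which the Gaussian factor dominates. Hence the integral is a finite absolute constant, uniformly in $\Delta$ and $T$. This step could instead invoke the second statement of 1.\ in Lemma~\ref{lemma:lipschitz_curvature_bounds_dsm}, since $T\geqslant2\log\Delta>0\vee\log\Delta$ for $\Delta>1$. However, the theorem is phrased via $\psi$, so the direct integral bound is cleaner.

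The only delicate point, and the main obstacle, is the integrability of $g(u)/u$ near $u=0$. This holds because every term of $g$ carries a power of $u$ strictly above $0$; the smallest is $u^{3/4}$, so $g(u)/u$ behaves at worst like $u^{-1/4}$. Combining the two bounds gives $\psi/\xi\leqslant \mathtt{C}/(1-(1+\nu)^{-2})$, which concludes the argument.
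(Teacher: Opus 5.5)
Your argument is correct and has the same structure as the paper's proof. The denominator bound $\xi(\Delta,T)\geqslant 1-\Delta^{-2}\geqslant 1-(1+\nu)^{-2}$ is exactly the paper's use of part 2 of Lemma~\ref{lemma:lipschitz_curvature_bounds_dsm} with $m=2$, and the conclusion via event inclusion and monotonicity of $\Probtrue$ is also the same.

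The one difference is the numerator. The paper cites the second statement of part 1 of that lemma. Read literally, that statement bounds $\|\mathtt{L}_{\DDSM}(X_0)\|_{L^2(\Probtrue)}$ rather than $\psi$, and its proof splits the time integral at $t^*=\log\Delta$, which is where the requirement $T>0\vee\log\Delta$ comes from. Your single substitution $u=\Delta e^{-t}$ instead bounds $\psi(\Delta,T)$ itself by the absolute constant $\int_0^\infty g(u)u^{-1}e^{-u^2/8}\dif u$, uniformly in $\Delta$ and $T$. This matches the theorem's phrasing in terms of $\psi/\xi$ more precisely, and it shows that the hypothesis $T\geqslant\log(\Delta^2)$ is needed only for the denominator.
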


\begin{proof}
	If $\Delta\geqslant 1+\nu$ for $\nu>0$ and $T\geqslant\log(\Delta^2)$ holds, then $T\geqslant 0\vee\log\Delta$ is satisfied and from 1. of Lemma~\ref{lemma:lipschitz_curvature_bounds_dsm} we get $\psi(\Delta,T)\leqslant \mathtt{C}(\eta)$. Also, if $\Delta\geqslant 1+\nu$ and $T\geqslant\log(\Delta^2)$, then 2. of Lemma~\ref{lemma:lipschitz_curvature_bounds_dsm} gives
	\begin{equation*}
		\xi(\Delta,T)\gtrsim 1-\frac{1}{(1+\nu)^2}=:\mathtt{C}(\nu).
	\end{equation*}
	Combining these bounds gives
	\begin{equation*}
		\frac{\widetilde{\mathtt{C}}(\eta)\psi(\Delta,T)}{\xi(\Delta,T)}\lesssim \overline{\mathtt{C}}(\eta,\nu),
	\end{equation*}
	with $\overline{\mathtt{C}}(\eta,\nu):=(\widetilde{\mathtt{C}}(\eta)\mathtt{C}(\eta))/\mathtt{C}(\nu)$. With the same arguments as in the proofs of Theorems~\ref{theorem:error_bound_sme} and~\ref{theorem:error_bound_ddsme}, the claim follows.
\end{proof}

\begin{remark}[Assumptions on $T$]
	Conditions similar to $T\geqslant\log(\Delta^2)$ can be found in the literature on diffusion models. For instance,~\citet{Debortoli2023} obtains a guarantee wrt $\text{\normalfont W}_1$ by requiring $T \geqslant 2\beta(1 + \log(1 + \text{\normalfont diam}(\mathcal{M}))$, where $\mathcal{M}$ is a compact subset of $\R^d$ that contains the support of the data distribution $\Probtrue$. Similarly,~\citet{ConDurGen2025} shows a guarantee wrt $\text{\normalfont KL}$ under the assumption $T = (1/2) \log((\mathtt{M}_2 + d)/\varepsilon^2 )$, where $\mathtt{M}_2:=\|X\|_{L^2(\Probtrue)}$. Lastly,~\citet{BreMij2025} require $T=\log R$, where $\Probtrue(B_R(0))>1-\varepsilon$ for some small $\varepsilon>0$. These conditions show, that a more spread out distribution (i.e., a larger $\Delta$, $\text{\normalfont diam}(\mathcal{M})$ or $\mathtt{M}_2$) requires a longer mixing time $T$ in order to be able to learn the data distribution.
\end{remark}

The statistical guarantees derived in Theorem~\ref{theorem:error_bound_sme} and~\ref{theorem:error_bound_ddsme} behave very differently for large values of $\Delta$. More precisely, they imply
\begin{equation*}
	\frac{\|\mathtt{L}_{\DDSM}(X_0)\|_{L^2(\Probtrue)}}{\mathtt{C}_{\DDSM}}\leqslant \mathtt{C}(\eta,\nu)\ll\Delta^{1/2}\exp\left(\frac{\Delta^2}{4}\right)\leqslant\frac{\|\mathtt{L}_{\SM}(X_0)\|_{L^2(\Probtrue)}}{\mathtt{C}_{\SM}}
\end{equation*}
for large values of $\Delta$ and $T\geqslant \log(\Delta^2)$. In the above, $\mathtt{C}(\eta,\nu)$ is a constant proportional to $\overline{\mathtt{C}}(\eta,\nu)$ from Theorem~\ref{theorem:error_bound_ddsme}. In other words, the precision of the error bound for the DDSME is unaffected by well-separated modes, while the error bound for the SME becomes vacuous for large values of $\Delta$.

\section{Numerical results}

\subsection{A Gaussian mixture model}

We revisit the Gaussian mixture model~\ref{eq:gm} that we studied in Section~\ref{sec:main_results}. More precisely, we focus on the simplified case where the data distribution is given by
\begin{equation*}
    \Prob_\theta=\theta\mathcal{N}(\mu,1)+(1-\theta)\mathcal{N}(-\mu,1),
\end{equation*}
where $\theta\in(0,1)$ and the mean parameter $\mu>0$ is treated as known.

We compare three estimators: the MLE, SME and DDSME. For each parameter configuration, we generate $N=20$ independent datasets of size $n=100$. The true mixing probabilities are $\theta \in \{0.1,0.3,0.5\}$ and the mean parameters are $ \mu \in \{1,2,3,4,5,6\}$. The performance is evaluated across the $N=20$ replications using the bias, variance, and mean squared error (MSE) of the estimates. Table~\ref{tab:simulation-setup} summarizes the simulation design.

\begin{table}[htb]
\centering
\begin{tabular}{ll}
\toprule
Quantity & Value \\
\midrule
Mean parameters & $\mu\in\{1,2,3,4,5,6\}$ \\
Mixing weights & $\theta\in\{0.1,0.3,0.5\}$ \\
Replications per setting $(\mu,\theta)$ & \(N=20\) \\
Sample size per dataset & $n=100$ \\
Performance measures & Bias, variance, MSE \\
DDSME terminal time & $T=\log(\mu^2)\vee 1$ \\
DDSME Monte Carlo setup & 50 time points, 16 Gaussian samples \\
\bottomrule
\end{tabular}
\caption{Simulation design for the Gaussian mixture model~\ref{eq:gm}.}
\label{tab:simulation-setup}
\end{table}

Figure~\ref{fig:simulation_gm} shows boxplots of the estimates for all possible combinations of $(\mu,\theta)$ and for the three estimators. The boxplots also show the jittered estimates for a more detailed visualization of the estimates across replications. The boxplots clearly reveal that the SME deteriorates in performance as $\mu$ increases and as $\theta\rightarrow0.5$, which results in a smaller isoperimetric constant, as is known from Proposition~\ref{proposition:isoperimetric_constant}. The DDSME and MLE on the other hand remain stable. This is in line with the theoretical results from Section~\ref{sec:main_results}, which show that the error bound for the SME becomes vacuous for large values of $\mu$, while the error bound for the DDSME remains unaffected by well-separated modes.

\begin{figure}[htb]
	\centering
	\includegraphics[width=\textwidth]{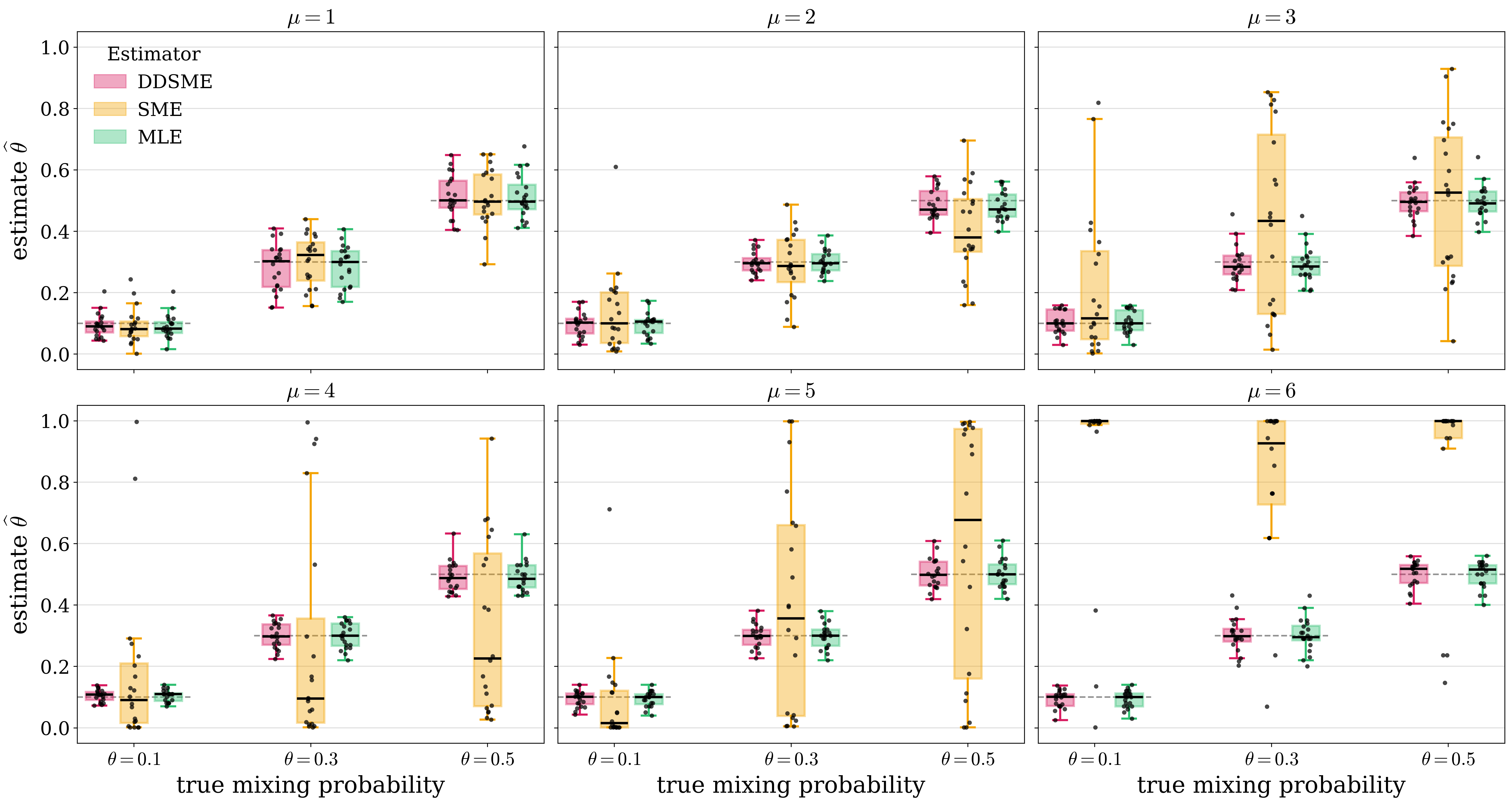}
	\caption{Boxplots of the estimates for the Gaussian mixture model. From the top left to the bottom right, the nuisance parameter $\mu$ increases from 1 to 6. Within each plot, the mixing weight $\theta$ increases from 0.1 to 0.5 from left to right. The jittered points show the estimates across the $N=20$ replications for each setting. Clearly, the SME deteriorates in performance as $\mu$ increases and as $\theta\rightarrow0.5$, while the DDSME and MLE remain stable. The bias of the SME for $\mu=6$ is due to numerical instability of the SM loss.}
	\label{fig:simulation_gm}
\end{figure}

For larger values of $\mu$, one might get the impression from the boxplots that the SME not only has a larger variance, but also a substantial bias. While this can certainly be the case for a finite sample size, it is important to note that the bias of the SME is expected to vanish as $n\to\infty$ for all values of $\mu$ and $\theta$, since the SME is consistent. In other words, the bias observed in the boxplots for large values of $\mu$ is a finite-sample phenomenon, which is expected to disappear as the sample size increases. Furthermore, the degeneracies that can be observed in the case $\mu=6$ are due to numerical instability of the SM loss.

Table~\ref{tab:performance-all-cases-highlighted} contains the bias, variance and mean squared error of the estimates for all combinations of $(\mu,\theta)$ and for all three estimators. The best and worst values across the three estimators are highlighted in green and red, respectively. Again, the table confirms that the SME performs worse than both the DDSME and MLE in almost all instances, but substantially worse for large values of $\mu$. The performance of the DDSME and MLE is comparable across all parameter configurations, with a slight advantage for the DDSME in terms of bias.

\newcommand{\best}[1]{\cellcolor{green!18}{#1}}
\newcommand{\worst}[1]{\cellcolor{red!18}{#1}}
\begin{table}[htb]
\centering
\scriptsize
\setlength{\tabcolsep}{3.2pt}
\begin{tabular}{cc rrr rrr rrr}
\toprule
& &
\multicolumn{3}{c}{DDSME} &
\multicolumn{3}{c}{SME} &
\multicolumn{3}{c}{MLE} \\
\cmidrule(lr){3-5}
\cmidrule(lr){6-8}
\cmidrule(lr){9-11}
$\theta$ & $\mu$
& Bias & Var. & MSE
& Bias & Var. & MSE
& Bias & Var. & MSE \\
\midrule
$0.1$ & 1
& \best{$-0.0061$} & \best{$0.0015$} & \best{$0.0015$}
& $-0.0081$ & \worst{$0.0033$} & \worst{$0.0033$}
& \worst{$-0.0097$} & $0.0016$ & $0.0017$ \\

$0.1$ & 2
& $-0.0038$ & $0.0016$ & $0.0016$
& \worst{$0.0361$} & \worst{$0.0189$} & \worst{$0.0202$}
& \best{$-0.0037$} & \best{$0.0014$} & \best{$0.0014$} \\

$0.1$ & 3
& \best{$0.0030$} & $0.0014$ & \best{$0.0014$}
& \worst{$0.1172$} & \worst{$0.0575$} & \worst{$0.0713$}
& $0.0043$ & \best{$0.0013$} & \best{$0.0014$} \\

$0.1$ & 4
& $0.0045$ & \best{$0.0004$} & \best{$0.0004$}
& \worst{$0.0777$} & \worst{$0.0714$} & \worst{$0.0775$}
& \best{$0.0041$} & \best{$0.0004$} & \best{$0.0004$} \\

$0.1$ & 5
& $-0.0063$ & $0.0007$ & \best{$0.0007$}
& \worst{$-0.0117$} & \worst{$0.0265$} & \worst{$0.0266$}
& \best{$-0.0060$} & \best{$0.0006$} & \best{$0.0007$} \\

$0.1$ & 6
& $-0.0071$ & \best{$0.0008$} & \best{$0.0009$}
& \worst{$0.7718$} & \worst{$0.0947$} & \worst{$0.6904$}
& \best{$-0.0065$} & \best{$0.0008$} & \best{$0.0009$} \\

\midrule
$0.3$ & 1
& \worst{$-0.0183$} & $0.0061$ & $0.0065$
& \best{$0.0018$} & \worst{$0.0074$} & \worst{$0.0074$}
& $-0.0155$ & \best{$0.0047$} & \best{$0.0049$} \\

$0.3$ & 2
& \best{$-0.0015$} & \best{$0.0013$} & \best{$0.0013$}
& \worst{$-0.0094$} & \worst{$0.0111$} & \worst{$0.0112$}
& $0.0018$ & $0.0015$ & $0.0015$ \\

$0.3$ & 3
& \best{$-0.0056$} & \best{$0.0035$} & \best{$0.0035$}
& \worst{$0.1238$} & \worst{$0.0909$} & \worst{$0.1062$}
& $-0.0086$ & $0.0037$ & $0.0038$ \\

$0.3$ & 4
& \best{$-0.0004$} & \best{$0.0017$} & \best{$0.0017$}
& \worst{$-0.0241$} & \worst{$0.1266$} & \worst{$0.1271$}
& $-0.0005$ & $0.0018$ & $0.0018$ \\

$0.3$ & 5
& \best{$-0.0005$} & \best{$0.0016$} & \best{$0.0016$}
& \worst{$0.0945$} & \worst{$0.1239$} & \worst{$0.1329$}
& $-0.0010$ & $0.0017$ & $0.0017$ \\

$0.3$ & 6
& \best{$0.0023$} & \best{$0.0031$} & \best{$0.0031$}
& \worst{$0.5071$} & \worst{$0.0707$} & \worst{$0.3278$}
& $0.0025$ & \best{$0.0031$} & \best{$0.0031$} \\

\midrule
$0.5$ & 1
& \worst{$0.0147$} & \best{$0.0049$} & \best{$0.0051$}
& \best{$0.0079$} & \worst{$0.0086$} & \worst{$0.0086$}
& $0.0124$ & $0.0052$ & $0.0053$ \\

$0.5$ & 2
& \best{$-0.0119$} & $0.0025$ & $0.0026$
& \worst{$-0.0958$} & \worst{$0.0217$} & \worst{$0.0309$}
& $-0.0178$ & \best{$0.0022$} & \best{$0.0025$} \\

$0.5$ & 3
& \best{$-0.0044$} & $0.0031$ & $0.0031$
& \worst{$-0.0079$} & \worst{$0.0654$} & \worst{$0.0654$}
& $-0.0046$ & \best{$0.0030$} & \best{$0.0030$} \\

$0.5$ & 4
& \best{$-0.0084$} & \best{$0.0025$} & \best{$0.0026$}
& \worst{$-0.1706$} & \worst{$0.0792$} & \worst{$0.1084$}
& $-0.0085$ & \best{$0.0025$} & \best{$0.0026$} \\

$0.5$ & 5
& \best{$0.0019$} & $0.0025$ & $0.0025$
& \worst{$0.0876$} & \worst{$0.1614$} & \worst{$0.1691$}
& $0.0020$ & \best{$0.0024$} & \best{$0.0024$} \\

$0.5$ & 6
& $0.0011$ & \best{$0.0018$} & \best{$0.0018$}
& \worst{$0.3694$} & \worst{$0.0825$} & \worst{$0.2190$}
& \best{$-0.0010$} & $0.0019$ & $0.0019$ \\
\bottomrule
\end{tabular}
\caption{
Empirical bias, variance, and MSE across the Monte Carlo replications for the DDSME, SME, and MLE for each parameter configuration $(\theta,\mu)$. Green cells mark the best value among the three estimators in a given row and metric; red cells mark the worst value. For bias, the comparison is based on absolute bias. The numerical results confirm the visual impression from Figure~\ref{fig:simulation_gm} that the SME deteriorates in performance as $\mu$ increases and as $\theta\rightarrow0.5$, while the DDSME and MLE remain stable.}
\label{tab:performance-all-cases-highlighted}
\end{table}

\subsection{A tilted double-well energy-based model in $\R^2$}

We next consider a tilted double-well energy-based model. The so-called double-well potential is a common example of a non-convex energy landscape with multiple modes, which is often used in physics and machine learning to illustrate the challenges of sampling and optimization in multimodal distributions. The one-dimensional double-well potential is given by
\begin{equation*}
U_{\omega,\mu,\vartheta}(x)=\frac{\omega^2}{8\mu^2}(x^2-\mu^2)^2+\vartheta x,
\end{equation*}
and its graph is shown in Figure~\ref{fig:double_well}.

Even in this simple symmetric case, computing the normalizing constant
\begin{equation*}
\mathtt{C}(\omega,\mu,\vartheta)=\int_{\R}\exp\{-U_{\omega,\mu,\vartheta}(x)\}\dif x
\end{equation*}
is nontrivial and it can only be expressed in terms of modified Bessel functions~\citep[cf.~formula 3.469.1 in][]{GraRyz2007}. Extending the double-well potential to two or more dimensions, adding a tilt parameter and interaction term, makes the normalizing constant generally intractable, that is, no useful elementary closed-form expression is available in general. Therefore, the tilted double-well model is a suitable candidate for comparing the performance of the MLE, SME, and DDSME in a nontrivial setting with an intractable normalizing constant.

\begin{figure}[htb]
	\centering
\begin{subfigure}{0.45\textwidth}{\begin{tikzpicture}[baseline=1ex, scale=1.5]
\draw[->] (-1.8,0) -- (1.8,0) node[right] {\small $x$};
\draw[->] (0,-0.2) -- (0,1.4);
\draw[thick, blue, domain=-1.5:1.5, samples=80] plot (\x, {0.125*((\x)^2-1)^2*8});
\fill[red] (-1,0) circle (1pt);
\fill[red] (1,0) circle (1pt);
\fill[orange] (0,1) circle (1pt);
\node[above right, scale=0.7] at (0,1) {$\frac{1}{8}\mu^2\omega^2$};
\node[below, scale=0.7] at (-1,0) {$-\mu$};
\node[below, scale=0.7] at (1,0) {$\mu$};
\end{tikzpicture}}
\caption{$\vartheta=0$}
\end{subfigure}
\begin{subfigure}{0.45\textwidth}{\begin{tikzpicture}[baseline=1ex, scale=1.5]
\draw[->] (-1.8,0) -- (1.8,0) node[right] {\small $x$};
\draw[->] (0,-0.2) -- (0,1.4);
\draw[thick, blue, domain=-1.5:1.5, samples=80] plot (\x, {0.125*((\x)^2-1)^2*8+0.3*(\x)});
\fill[red] (-1,0) circle (1pt);
\fill[red] (1,0) circle (1pt);
\fill[orange] (0,1) circle (1pt);
\node[above right, scale=0.7] at (0,1) {$\frac{1}{8}\mu^2\omega^2$};
\node[below, scale=0.7] at (-1,0) {$-\mu$};
\node[below, scale=0.7] at (1,0) {$\mu$};
\end{tikzpicture}}
	\caption{$\vartheta=0.3$}
\end{subfigure}
\caption{Graphs of the one-dimensional \textbf{(a)} symmetric and \textbf{(b)} tilted double-well potentials.}
\label{fig:double_well}
\end{figure}

\begin{figure}[tb]
	\centering
    \begin{subfigure}{0.32\textwidth}
	\includegraphics[width=\textwidth]{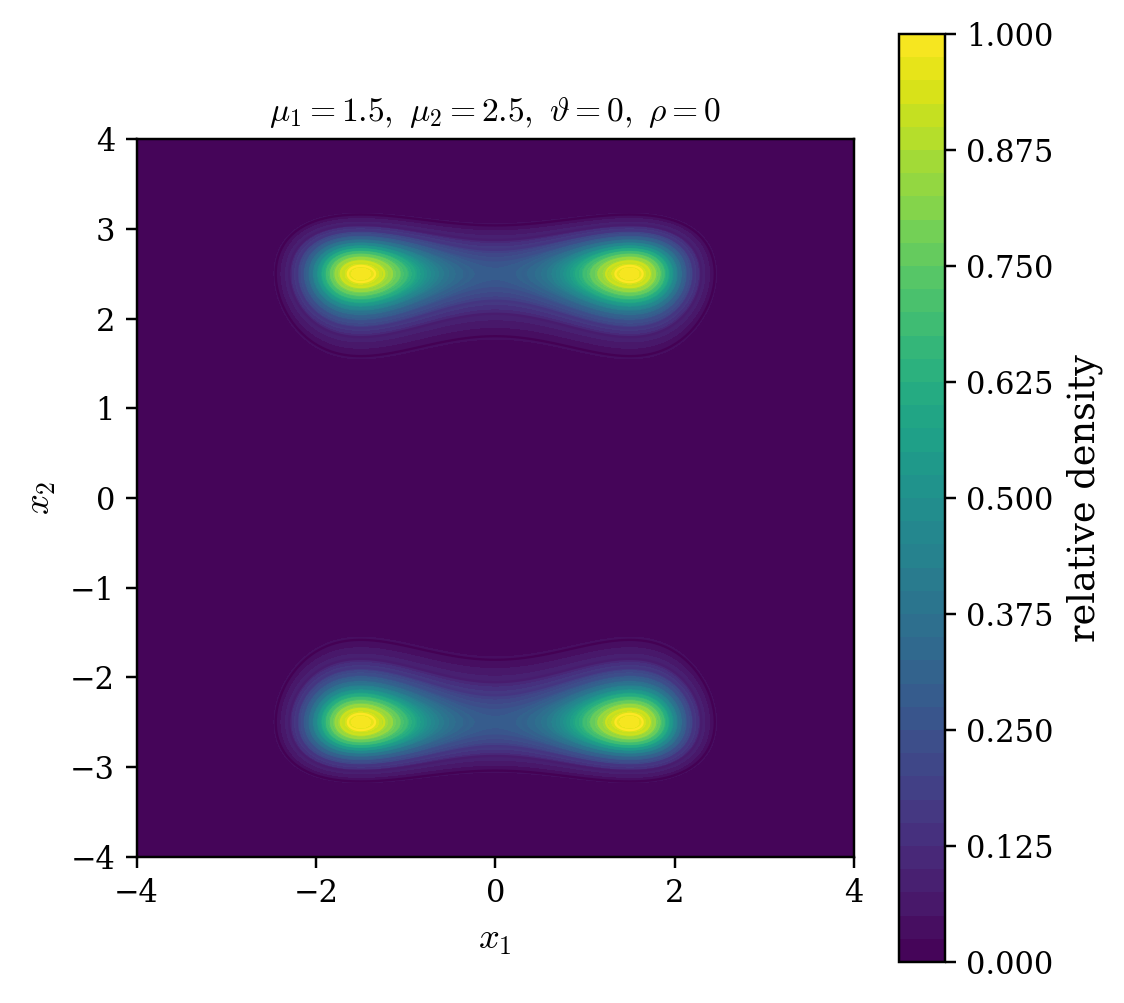}
	\caption{}
	\vspace{2mm}
	\label{fig:contour_1}
	\end{subfigure}
    \begin{subfigure}{0.32\textwidth}
	\includegraphics[width=\textwidth]{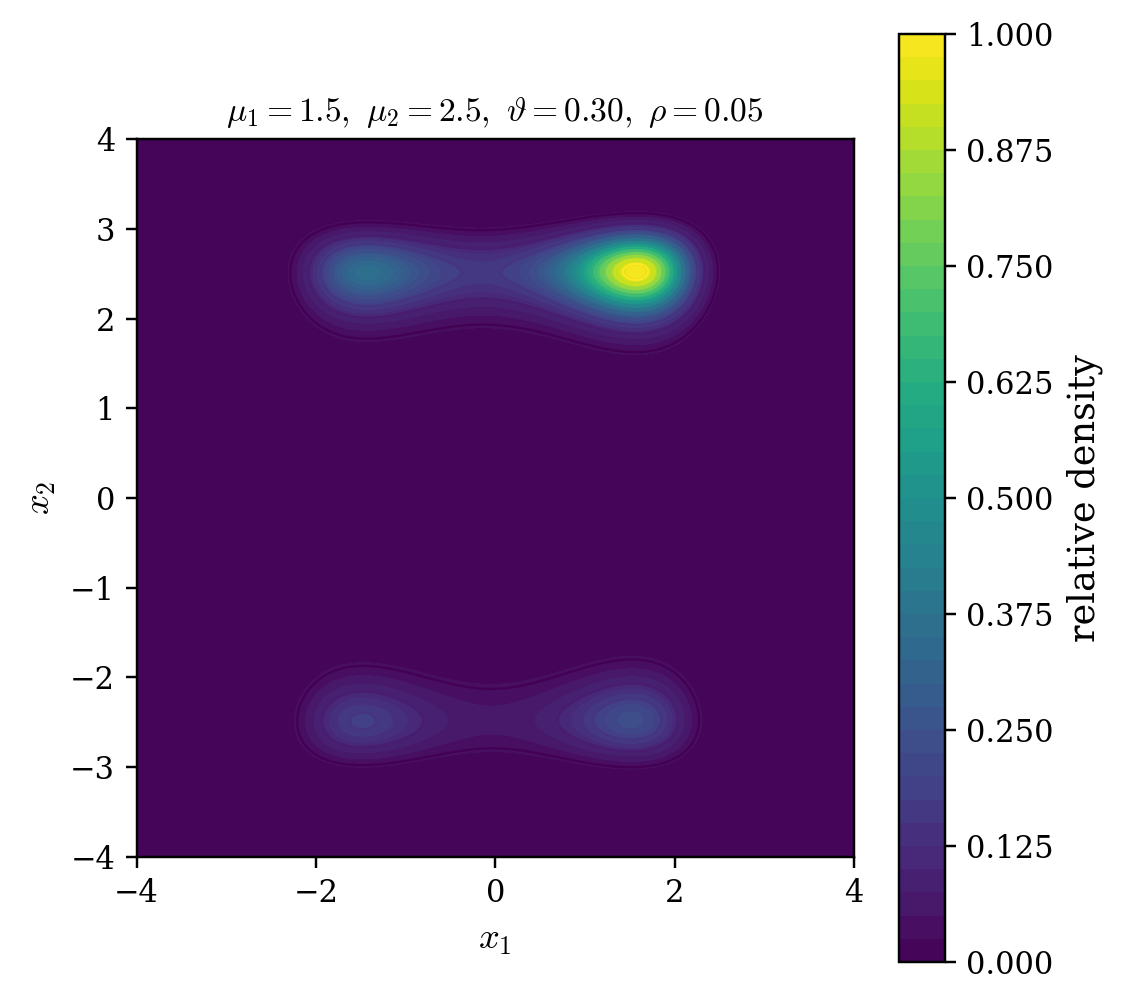}
	\caption{}
	\vspace{2mm}
	\label{fig:contour_2}
	\end{subfigure}
	 \begin{subfigure}{0.32\textwidth}
	\includegraphics[width=\textwidth]{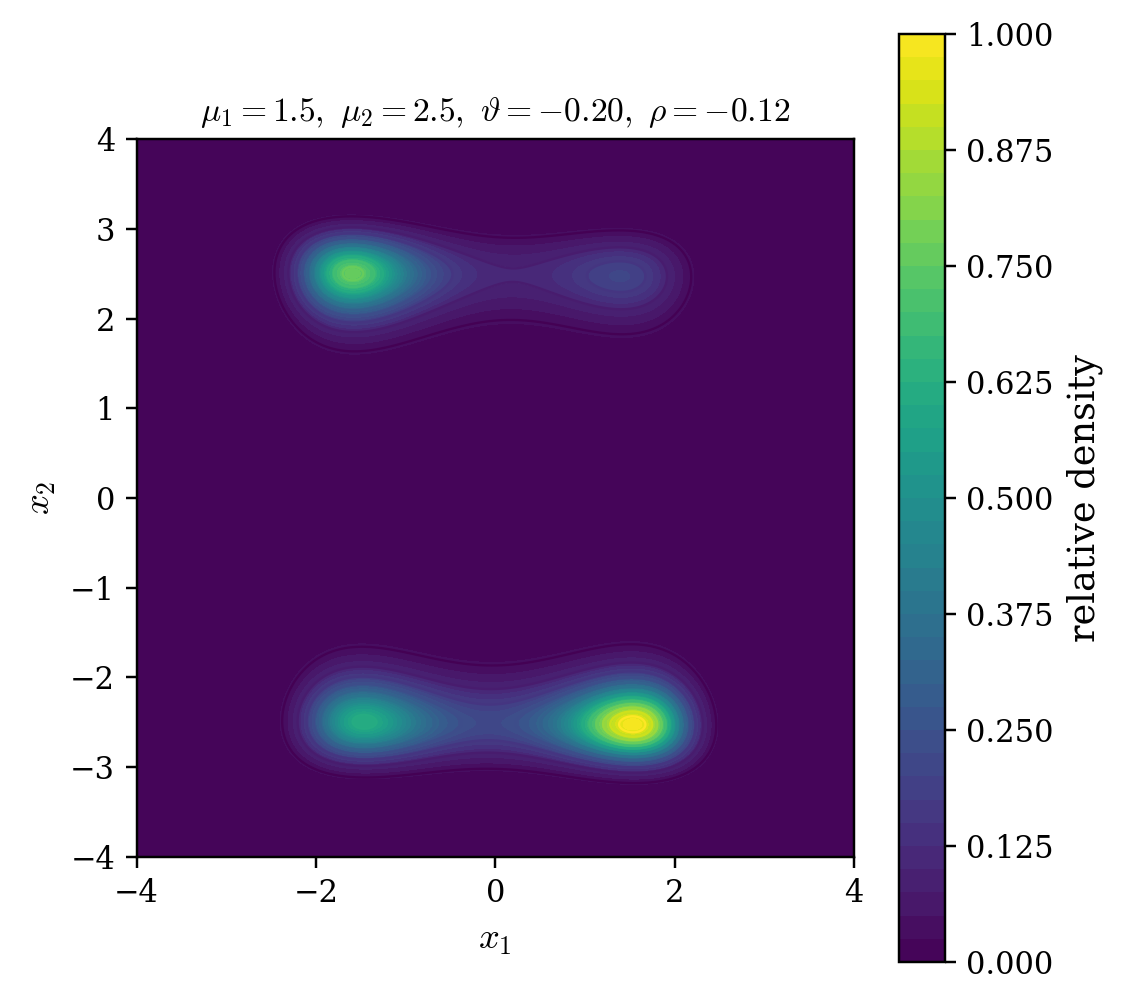}
	\caption{}
	\vspace{2mm}
	\label{fig:contour_3}
	\end{subfigure}
	\caption{All three configurations have in common that $\mu_1=1.5$, $\mu_2=2.5$ and $v=(1/\sqrt{2})[1,1]^\top$. They differ as follows. \textbf{(a)} Symmetric case: $\vartheta=0$, $\rho=0$. \textbf{(b)} Positive tilt and same-sign interaction: $\vartheta=0.3$, $\rho=0.05$. \textbf{(c)} Negative tilt and opposite-sign interaction: $\vartheta=-0.2$, $\rho=-0.12$.}
	\label{fig:contourplots}
\end{figure}

For $x=(x_1,x_2) \in \mathbb{R}^2$, define the density
\begin{equation*}
f_{\theta}(x)=\frac{1}{\mathtt{C}(\theta)}\exp\left\{-U_{\theta}(x)\right\},
\end{equation*}
where $\theta=(\mu_1,\mu_2,\vartheta,\rho)\in\Theta\subseteq(0,\infty)^2\times\R\times\R^2$ is the parameter vector, $\mathtt{C}(\theta)$ is the normalizing constant, and the energy potential is given by
\begin{equation*}
U_{\theta}(x)=\frac{1}{4}\sum_{i=1}^2 \left(x_i^2-\mu_i^2\right)^2-\vartheta v^\top x-\rho x_1x_2\,.
\end{equation*}
Here, $\mu_1,\mu_2>0$ control the locations of the wells (modes) in the two coordinate
directions, $\rho\in\R$ controls the interaction between the coordinates, and $v\in\mathbb{R}^2$ is a fixed direction along which the distribution is tilted.

\begin{table}[htb]
\centering
\begin{tabular}{ll}
\toprule
Quantity & Value \\
\midrule
Mode-location parameters&$\mu_1 = 1.5,\quad \mu_2 \in \{1.5,2.5,3.5,4.5\}$ \\
Tilt parameters&$\vartheta \in \{-0.5,0,0.5\}$ \\
Interaction parameter&$\rho = 0.05$ \\
Tilt direction&$v=(1,1)/\sqrt{2}$ \\
Replications per setting $(\mu_2,\vartheta)$&$N=20$ \\
Sample size per dataset&$n=100$ \\
Performance measures&Bias, variance, MSE, computation time \\
DDSME terminal time&$T=\log(\mu_2^2)\vee 1$ \\
DDSME Monte Carlo setup&40 time points, 8 Gaussian samples \\
MLE approximation&Two-dimensional grid quadrature \\
\bottomrule
\end{tabular}
\caption{Simulation design for the interacting anisotropic double-well model.}
\label{tab:simulation-design-double-well}
\end{table}

The parameter of interest is the scalar tilt parameter $\vartheta\in\R$. The model is multimodal: for small $\vartheta$ and $\rho$, the density has four modes located approximately near $(\pm \mu_1,\pm \mu_2)$. Choosing $\mu_1\neq \mu_2$ makes the distances between the modes unequal, while nonzero values of $\vartheta$ and $\rho$ introduce asymmetry in the relative heights of the modes. This provides a simple but nontrivial unnormalized target distribution for comparing MLE, SME, and DDSME.

\begin{figure}[tb]
	\centering
	\includegraphics[width=\textwidth]{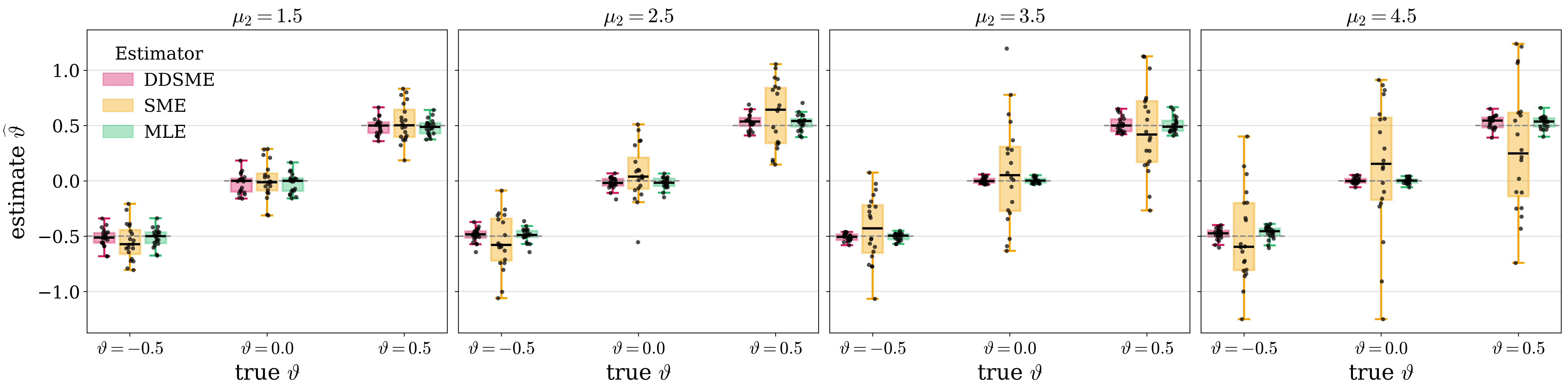}
	\caption{Boxplots of the estimates for the tilted double-well model. From left to right, the nuisance parameter $\mu_2$ increases from 1.5 to 4.5. Within each plot, the tilt parameter $\vartheta$ varies from -0.5 to 0.5 from left to right. The jittered points show the estimates across the $N=20$ replications for each setting. Clearly, the SME deteriorates in performance as $\mu_2$ increases and as $\theta\rightarrow0.5$, while the DDSME and MLE remain stable.}
	\label{fig:simulation_doublewell}
\end{figure}

Figure~\ref{fig:contourplots} shows contour plots of the density for three different parameter configurations. In all three cases, we set $\mu_1=1.5$, $\mu_2=2.5$ and $v=(1/\sqrt{2})[1,1]^\top$. In the first configuration, we then set $\vartheta=0$ and $\rho=0$, which results in a symmetric distribution with four modes of equal height. In the second configuration, we set $\vartheta=0.3$ and $\rho=0.05$, which results in an asymmetric distribution with four modes of unequal height. In the third configuration, we set $\vartheta=-0.2$ and $\rho=-0.12$, which results in an asymmetric distribution with four modes of unequal height, but with a different pattern of asymmetry than in the second configuration.

\begin{table}[htb]
\centering
\scriptsize
\setlength{\tabcolsep}{3.0pt}
\begin{tabular}{cc rrrr rrrr rrrr}
\toprule
& &
\multicolumn{4}{c}{DDSME} &
\multicolumn{4}{c}{SME} &
\multicolumn{4}{c}{MLE} \\
\cmidrule(lr){3-6}
\cmidrule(lr){7-10}
\cmidrule(lr){11-14}
$\vartheta$ & $\mu_2$
& Bias & Var. & MSE & Time
& Bias & Var. & MSE & Time
& Bias & Var. & MSE & Time \\
\midrule
$-0.5$ & $1.5$ & $-0.0153$ & $0.0072$ & $0.0074$ & \cellcolor{red!18}$0.208$ & \cellcolor{red!18}$-0.0507$ & \cellcolor{red!18}$0.0271$ & \cellcolor{red!18}$0.0297$ & \cellcolor{green!18}$0.000$ & \cellcolor{green!18}$-0.0126$ & \cellcolor{green!18}$0.0069$ & \cellcolor{green!18}$0.0071$ & $0.013$ \\
$-0.5$ & $2.5$ & \cellcolor{green!18}$0.0097$ & \cellcolor{green!18}$0.0039$ & \cellcolor{green!18}$0.0040$ & \cellcolor{red!18}$0.208$ & \cellcolor{red!18}$-0.0539$ & \cellcolor{red!18}$0.0629$ & \cellcolor{red!18}$0.0658$ & \cellcolor{green!18}$0.000$ & $0.0099$ & $0.0039$ & $0.0040$ & $0.009$ \\
$-0.5$ & $3.5$ & $-0.0086$ & \cellcolor{green!18}$0.0011$ & \cellcolor{green!18}$0.0012$ & \cellcolor{red!18}$0.205$ & \cellcolor{red!18}$0.0683$ & \cellcolor{red!18}$0.0916$ & \cellcolor{red!18}$0.0963$ & \cellcolor{green!18}$0.000$ & \cellcolor{green!18}$-0.0062$ & $0.0012$ & $0.0012$ & $0.009$ \\
$-0.5$ & $4.5$ & $0.0187$ & \cellcolor{green!18}$0.0028$ & \cellcolor{green!18}$0.0032$ & \cellcolor{red!18}$0.204$ & \cellcolor{green!18}$-0.0011$ & \cellcolor{red!18}$0.1741$ & \cellcolor{red!18}$0.1741$ & \cellcolor{green!18}$0.000$ & \cellcolor{red!18}$0.0305$ & $0.0034$ & $0.0043$ & $0.012$ \\
$0.0$ & $1.5$ & $-0.0198$ & $0.0080$ & $0.0084$ & \cellcolor{red!18}$0.205$ & \cellcolor{green!18}$0.0011$ & \cellcolor{red!18}$0.0278$ & \cellcolor{red!18}$0.0278$ & \cellcolor{green!18}$0.000$ & \cellcolor{red!18}$-0.0205$ & \cellcolor{green!18}$0.0078$ & \cellcolor{green!18}$0.0082$ & $0.013$ \\
$0.0$ & $2.5$ & $-0.0202$ & $0.0029$ & $0.0033$ & \cellcolor{red!18}$0.207$ & \cellcolor{red!18}$0.0680$ & \cellcolor{red!18}$0.0634$ & \cellcolor{red!18}$0.0680$ & \cellcolor{green!18}$0.000$ & \cellcolor{green!18}$-0.0195$ & \cellcolor{green!18}$0.0026$ & \cellcolor{green!18}$0.0030$ & $0.008$ \\
$0.0$ & $3.5$ & \cellcolor{green!18}$0.0025$ & $0.0008$ & $0.0008$ & \cellcolor{red!18}$0.203$ & \cellcolor{red!18}$0.0835$ & \cellcolor{red!18}$0.2245$ & \cellcolor{red!18}$0.2314$ & \cellcolor{green!18}$0.000$ & $0.0036$ & \cellcolor{green!18}$0.0006$ & \cellcolor{green!18}$0.0007$ & $0.008$ \\
$0.0$ & $4.5$ & $0.0027$ & $0.0008$ & $0.0009$ & \cellcolor{red!18}$0.205$ & \cellcolor{red!18}$0.1414$ & \cellcolor{red!18}$0.3479$ & \cellcolor{red!18}$0.3679$ & \cellcolor{green!18}$0.000$ & \cellcolor{green!18}$0.0005$ & \cellcolor{green!18}$0.0007$ & \cellcolor{green!18}$0.0007$ & $0.010$ \\
$0.5$ & $1.5$ & \cellcolor{green!18}$-0.0092$ & $0.0059$ & $0.0059$ & \cellcolor{red!18}$0.206$ & \cellcolor{red!18}$0.0252$ & \cellcolor{red!18}$0.0308$ & \cellcolor{red!18}$0.0315$ & \cellcolor{green!18}$0.000$ & $-0.0138$ & \cellcolor{green!18}$0.0051$ & \cellcolor{green!18}$0.0053$ & $0.014$ \\
$0.5$ & $2.5$ & $0.0329$ & \cellcolor{green!18}$0.0055$ & $0.0066$ & \cellcolor{red!18}$0.206$ & \cellcolor{red!18}$0.0977$ & \cellcolor{red!18}$0.0891$ & \cellcolor{red!18}$0.0986$ & \cellcolor{green!18}$0.000$ & \cellcolor{green!18}$0.0297$ & $0.0055$ & \cellcolor{green!18}$0.0064$ & $0.008$ \\
$0.5$ & $3.5$ & $0.0103$ & \cellcolor{green!18}$0.0051$ & $0.0052$ & \cellcolor{red!18}$0.201$ & \cellcolor{red!18}$-0.0397$ & \cellcolor{red!18}$0.1518$ & \cellcolor{red!18}$0.1534$ & \cellcolor{green!18}$0.000$ & \cellcolor{green!18}$0.0056$ & $0.0051$ & \cellcolor{green!18}$0.0051$ & $0.009$ \\
$0.5$ & $4.5$ & $0.0334$ & $0.0036$ & $0.0047$ & \cellcolor{red!18}$0.205$ & \cellcolor{red!18}$-0.2052$ & \cellcolor{red!18}$0.3319$ & \cellcolor{red!18}$0.3740$ & \cellcolor{green!18}$0.000$ & \cellcolor{green!18}$0.0292$ & \cellcolor{green!18}$0.0033$ & \cellcolor{green!18}$0.0042$ & $0.012$ \\
\bottomrule
\end{tabular}
\caption{Empirical bias, variance, MSE and computation time across the Monte Carlo replications for the DDSME, SME, and MLE for each parameter configuration $(\vartheta,\mu_2)$. Green cells mark the best value among the three estimators in a given row and metric; red cells mark the worst value. For bias, the comparison is based on absolute bias. The numerical results confirm the visual impression from Figure~\ref{fig:simulation_doublewell} that the SME deteriorates in performance as $\mu_2$ increases, while the DDSME and MLE remain stable. In terms of computation time, the DDSME is slower than the MLE, because numerical integration does not sufficiently suffer from the curse of dimensionality in $\R^2$.}
\label{tab:double-well-performance-time-highlighted}
\end{table}

The results of the simulation specified in Table~\ref{tab:simulation-design-double-well} are shown in Figure~\ref{fig:simulation_doublewell} and Table~\ref{tab:performance-all-cases-highlighted}. The boxplots in Figure~\ref{fig:simulation_doublewell} and the performance metrics in Table~\ref{tab:double-well-performance-time-highlighted} basically confirm the observations from the Gaussian mixture model. The SME performs worse than both the DDSME and MLE in all instances. The performance of the DDSME and MLE is comparable across all parameter configurations. Note that the implementation of the DDSME is more prone to numerical instability, which can negatively impact its statistical performance. A more sophisticated implementation of the DDSME, for example using a more efficient sampling scheme and a more accurate approximation of the DDSM score, is expected to improve the performance of the DDSME. Lastly, there is a computational gap between the DDSME and the MLE. However, it is reasonable to expect, that the computation of the MLE in higher dimensions becomes prohibitively expensive, while the Monte Carlo approximations for the DDSME are still feasible.

\section{Discussion}\label{sec:discussion}

In this work, we have studied the statistical properties of the SME and DDSME in a setting where the data distribution is a mixture of two isotropic Gaussians in $\R^d$. First, we showed that the poor performance of the SME can be phrased in terms of the isoperimetric constant, which becomes small for well-separated modes. More precisely, by lower bounding the asymptotic variance of the SME with the reciprocal of the isoperimetric constant, we show an exploding variance and by upper bounding the SM loss with the isoperimetric constant, we prove its vanishing curvature. This extends previous research which only showed that the asymptotic variance of the SME for EFs can be lower bounded by the reciprocal of the isoperimetric constant. Second, we have derived explicit error bounds for both estimators that give raise to a theoretical explanation for the flaws of vanilla SM. Specifically, the SME has an error bound that increases proportionally with the exponential of the squared separation between the modes. This drawback can be overcome with the DDSME, where suitable hyperparameter tuning ensures that the error bound does not lose precision as the separation between the modes increases. This confirms our initial hypothesis that diffusion-based denoising score matching can be more effective than standard score matching when the data distribution has well-separated modes. Lastly, we have conducted a simulation study that confirms our theoretical findings in the Gaussian setting, but also in a non-Gaussian scenario. Our empirical results show that the DDSME can outperform the SME in practice. We also verified that the DDSMEs performance is on par with the MLE.


\subsection*{Acknowledgements and Disclosure of Funding}

This research was supported by the Deutsche Forschungsgemeinschaft (DFG, German Research Foundation) through the Emmy Noether grant KL3037/1-1 (second author), the TRR391 (projects A02 and A07), grant number 520388526 (all authors) and the Bundesministerium für Bildung und Forschung (BMBF, Federal Ministry of Education and Research) through GINN 01IS24065B (second authors). The authors thank Claudius Lütke Schwienhorst and Francesco Iafrate for fruitful discussions.

\bibliography{references}

\appendix

\newpage
\section{Isoperimetric constant}
\label{app:isoperimetric_constant}

\begin{definition}[Minkowski content and isoperimetric constant]
	\label{definition:minkowski_content_isoperimetric_constant}
	Let $(\Omega,\mathcal{A},\Prob)$ a probability space. The Minkowski content or \textit{boundary measure} of $\mathbb{P}$ is defined as
	\begin{equation*}
		\mathbb{P}^+(A):=\lim_{\varepsilon\downarrow0}\varepsilon^{-1}\{\mathbb{P}(A_\varepsilon)-\mathbb{P}(A)\}
	\end{equation*}
	for any measurable $A\in\mathcal{A}$ with $A_\varepsilon:=\{x\in\samplespace:\inf_{a\in A}d(x,a)\leqslant\varepsilon\}$ being its $\varepsilon$-neighborhood for $\varepsilon>0$. The isoperimetric constant $\CIP(\Prob)$ of $\Prob$ is defined as
	\begin{equation}
		\label{eq:isoperimetric_constant_def}
		\CIP(\Prob):=\inf_{A\in\mathcal{A}}\frac{\Prob^+(A)}{\Prob(A)\wedge\Prob(A^\complement)}.
	\end{equation}
	For a family of probability measures $\mathcal{F}$ we set $\CIP(\mathcal{F}):=\inf_{\Prob\in\mathcal{F}}\CIP(\Prob)$.
\end{definition}

\section{M-estimation}
\label{app:m_estimation_theory}

We gather relevant results that we need in order to derive our main results in Section~\ref{sec:main_results}. For a comprehensive treatment of M-estimators we refer to~\cite{Vaa1998} or~\cite{VaaWel2023}.

\subsection{Preliminaries and notation}\label{appendix:prelim_notation}

Following common practice in empirical process theory, we use the following operator notation throughout Appendix~\ref{app:m_estimation_theory}: for a function $f:\samplespace\to\R$ and a probability measure $P$ we write $P f$ to denote the integral of $f$ wrt $P$, that is, $\int f(x)\dif P(x)$. In addition, we write $\mathbb{G}_n f$ to denote the empirical process evaluated at $f$, that is, $\mathbb{G}_n f = \sqrt{n}(\Prob_n f - P f)$, where $\Prob_n:=(1/n)\sum_{i=1}^n\delta_{X_i}$ is the empirical measure of the i.i.d sample $X_1,\ldots,X_n$ with $X_i\sim P$. Lastly, we use the shorthand $m_\theta$ to denote the measurable function $x\mapsto m(\theta,x)$.

If we understand the stochastic process $(\Prob_nf)_{f\in\mathcal{F}}$ to be taking values in the metric space $(\ell^{\infty}(\mathcal{F}),d_\infty)$, where
\begin{align*}
	\ell^{\infty}(\mathcal{F})&:=\{g:\mathcal{F}\rightarrow\R\text{ with }\|g\|_\infty<\mathtt{C}\},\\
	d_\infty(g,h)&:=\sup_{f\in\mathcal{F}}|g(f)-h(f)|,
\end{align*}
then the mapping
\begin{align*}
	(\samplespace^n,\mathcal{B}(\samplespace)^{\otimes n})&\rightarrow(\ell^{\infty}(\mathcal{F}),\mathcal{B}(\ell^{\infty}(\mathcal{F})))\\
	(x_1,\ldots,x_n)&\mapsto \left(f\mapsto\frac{1}{n}\sum_{i=1}^nf(x_i)\right)
\end{align*}
need not be measurable. As a counter example one may consider $\mathcal{F}=\{x\mapsto\bm{1}_{[\vartheta,1]}(x)\mid\vartheta\in[0,1]\}$ and $X_i\sim\text{Uniform}([0,1])$. To see how measurability breaks down in this case we refer to~\citet{VaaWel2023},~\citet{Bil1999} or~\citet{Chi1965}. The problem can be avoided by considering outer expectations and outer probabilities. More precisely, we write $\E^{*}[\sup_{f\in\mathcal{F}}|f(X)|]$ to denote the outer expectation of the supremum of $|f(X)|$ over $f\in\mathcal{F}$, that is, $\E^{*}[\sup_{f\in\mathcal{F}}|f(X)|]=\inf\{\E Z:Z\geqslant\sup_{f\in\mathcal{F}}|f(X)|,Z\text{ is a random variable}\}$. Analogously, we write $\Probouter(A)$ for the outer probability of the event $A$.

In addition, we will make use of stochastic big O and little o notation: For a sequence of random variables $(X_n)_{n\in\N}$ and a sequence of strictly positive real random variables $(r_n)_{n\in\N}$, we write $X_n=O_{\Prob}(r_n)$ if for every $\varepsilon>0$ there exist constants $\mathtt{C}_\varepsilon>0$ such that
\begin{equation*}
	\sup_{n\geqslant 1}\Prob\left(\left|\frac{X_n}{r_n}\right| > \mathtt{C}_\varepsilon\right)<\varepsilon.
\end{equation*}
Then, we call $(X_nr_n^{-1})_{n\in\N}$ \textit{bounded in probability} or \textit{uniformly tight}. Similarly, we write $X_n=o_{\Prob}(r_n)$ if $X_nr_n^{-1}\xrightarrow{\Prob} 0$, that is, $(X_nr_n^{-1})_{n\in\N}$ converges to zero in probability.

\begin{lemma}[Equivalence of stochastic boundedness and convergence in probability]
	\label{lemma:equiv_stoch_bound_conv}
	Let $(X_n)_{n\in\N}$ a sequence of real-valued random variables. Then, the following are equivalent:
	\begin{enumerate}
		\item $X_n = O_{\Prob}(r_n)$ for some sequence $(r_n)_{n\in\N}\subseteq(0,\infty)$ with $r_n\rightarrow0$ for $n\rightarrow\infty$.
		\item $X_n = o_{\Prob}(1)$.
	\end{enumerate}
\end{lemma}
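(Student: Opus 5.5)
The plan is to prove the two implications separately; the only work is in (1)$\Rightarrow$(2). Throughout, $r_n$ is a deterministic positive sequence with $r_n\to0$.

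For (1)$\Rightarrow$(2), fix $\varepsilon>0$ and a tolerance $\gamma>0$. By the definition of $X_n=O_{\Prob}(r_n)$, there is a constant $\mathtt{C}_\gamma>0$ with
\begin{equation*}
\sup_{n\geqslant1}\Prob\left(|X_n|>\mathtt{C}_\gamma r_n\right)<\gamma.
\end{equation*}
Since $r_n\to0$, there is $n_0$ such that $\mathtt{C}_\gamma r_n\leqslant\varepsilon$ for all $n\geqslant n_0$. For such $n$ we have $\{|X_n|>\varepsilon\}\subseteq\{|X_n|>\mathtt{C}_\gamma r_n\}$. Monotonicity then gives $\Prob(|X_n|>\varepsilon)<\gamma$. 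As $\gamma$ is arbitrary, $\Prob(|X_n|>\varepsilon)\to0$, that is, $X_n=o_{\Prob}(1)$.

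For (2)$\Rightarrow$(1), we must construct a deterministic $r_n\downarrow0$ with $X_n/r_n$ uniformly tight. Set $p_n(\varepsilon):=\Prob(|X_n|>\varepsilon)$, so $p_n(\varepsilon)\to0$ for every fixed $\varepsilon>0$. The key step is the standard diagonal argument. Choose an increasing sequence $N_1<N_2<\cdots$ such that $p_n(1/k)\leqslant1/k$ for all $n\geqslant N_k$. Then put $\varepsilon_n:=1/k$ for $N_k\leqslant n<N_{k+1}$ and $\varepsilon_n:=1$ for $n<N_1$. This gives $\varepsilon_n\to0$ and $\Prob(|X_n|>\varepsilon_n)\to0$.

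Now take $r_n:=\sqrt{\varepsilon_n}$, which tends to $0$. For any $\mathtt{C}>0$, whenever $\mathtt{C}\sqrt{\varepsilon_n}\geqslant\varepsilon_n$, i.e.\ $\varepsilon_n\leqslant\mathtt{C}^2$,
\begin{equation*}
\Prob(|X_n|/r_n>\mathtt{C})\leqslant\Prob(|X_n|>\varepsilon_n).
\end{equation*}
The right-hand side is small for all large $n$. Hence, given $\gamma>0$, pick $\mathtt{C}\geqslant1$ and $n_1$ such that $\Prob(|X_n|>\varepsilon_n)<\gamma$ for $n\geqslant n_1$; note $\varepsilon_n\leqslant1\leqslant\mathtt{C}^2$. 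The finitely many indices $n<n_1$ are handled by enlarging $\mathtt{C}$, since each real random variable $X_n/r_n$ is individually tight. This yields $\sup_n\Prob(|X_n/r_n|>\mathtt{C})<\gamma$.

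The main obstacle is this construction in (2)$\Rightarrow$(1): the definition of $O_{\Prob}$ requires a supremum over \emph{all} $n$, so the finite initial block must be handled explicitly, and the diagonal choice of $\varepsilon_n$ must be arranged so that it still tends to zero. The square-root rescaling is what turns ``$\Prob(|X_n|>\varepsilon_n)\to0$'' into boundedness of $X_n/r_n$ while keeping $r_n\to0$.
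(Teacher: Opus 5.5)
Your proof is correct and follows the paper's argument essentially step for step. For (1)$\Rightarrow$(2) you use the same inclusion $\{|X_n|>\varepsilon\}\subseteq\{|X_n|>\mathtt{C}r_n\}$ for large $n$. For (2)$\Rightarrow$(1) you use the same diagonal blocks $N_k\leqslant n<N_{k+1}$ at level $1/k$, followed by enlarging the constant to cover the finitely many initial indices.

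There are two cosmetic differences. First, your convention $\varepsilon_n:=1$ for $n<N_1$ repairs the paper's definition, which leaves $r_n=0$ there. Second, the square-root rescaling is unnecessary, and it is not what makes the argument work. Already $r_n:=\varepsilon_n$ gives $\Prob(|X_n|/r_n>1)=\Prob(|X_n|>\varepsilon_n)\to0$, with constant $1$, and this is exactly what the paper uses.
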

\begin{proof}
	$(1.\Rightarrow 2.)$ Let $\varepsilon>0$ and $\delta>0$. Then, since $X_n = O_{\Prob}(r_n)$, there exists $\mathtt{C}_\varepsilon>0$ such that	
	\begin{equation*}
	\sup_{n\geqslant 1}\Prob\left(\left|\frac{X_n}{r_n}\right| > \mathtt{C}_\varepsilon\right)<\varepsilon.
\end{equation*}
And because $r_n\rightarrow0$ for $n\rightarrow\infty$, there exists $N_\varepsilon\in\mathbb{N}$ such that for all $n\geqslant N_\varepsilon$ it holds that $\mathtt{C}_\varepsilon r_n\leqslant\delta$. Therefore, the inclusion
\begin{equation*}
	\{|X_n|>\delta\}\subseteq\{|X_n|> \mathtt{C}_\varepsilon r_n\}
\end{equation*}
holds for all $n\geqslant N_\varepsilon$. With this, we conclude
\begin{equation*}
	\limsup_{n\rightarrow\infty}\Prob\left(\left|X_n\right| > \delta\right)\leqslant \Prob\left(\left|\frac{X_n}{r_n}\right| > \mathtt{C}_\varepsilon\right)<\varepsilon
\end{equation*}
and since both $\delta>0$ and $\varepsilon>0$ were arbitrary, we get $\Prob(|X_n| > \delta)\rightarrow 0$ for any $\delta>0$.

$(2.\Rightarrow 1.)$ From $X_n = o_{\Prob}(1)$ we have for any $k\in\N$ that
\begin{equation*}
	\lim_{n\rightarrow\infty}\Prob(|X_n|>1/k)=0.
\end{equation*}
Thus, we can choose $N_k\in\N$ such that for all $n\geqslant N_k$ it holds that
\begin{equation*}
	\Prob(|X_n|>1/k)<1/k.
\end{equation*}
For the $N_1<N_2<\ldots$ we therefore define the decaying sequence $(r_n)_{n\in\N}$ via $r_n:=(1/k)\bm{1}(N_k\leqslant n <N_{k+1})$. Then, for any $\varepsilon>0$ we can find $k\in\mathbb{N}$ and its corresponding $N_k$ such that $1/k<\varepsilon$ and for $N_k\leqslant n<N_{k+1}$ it holds that
\begin{equation*}
	\Prob\left(\left|\frac{X_n}{r_n}\right| > 1\right)=\Prob(|X_n| > 1/k	)<1/k<\varepsilon.
\end{equation*}
Since $r_n\rightarrow 0$ we can conclude that
\begin{equation*}
	\sup_{n\geqslant N_k}\Prob\left(\left|\frac{X_n}{r_n}\right| > 1\right)<\varepsilon.
\end{equation*}
For all $j\in[N_k-1]$ there exists $M_j>0$ such that
\begin{equation*}
	\Prob\left(\left|\frac{X_j}{r_j}\right| > M_j\right)\leqslant\varepsilon,
\end{equation*}
because the $X_j$ are real-valued. With $\mathtt{M}:=\max\{1,M_1,\ldots,M_{N_k-1}\}$ it then follows that
\begin{equation*}
	\sup_{n\geqslant 1}\Prob\left(\left|\frac{X_n}{r_n}\right| > \mathtt{M}\right)\leqslant\varepsilon,
\end{equation*}
which concludes the proof.
\end{proof}

\subsection{Consistency}\label{appendix:consistency}

We obtain non-asymptotic error bounds for the SME and DDSME by applying Proposition~\ref{proposition:m_estimator_error_bound}, which is itself based on Theorem~\ref{theorem:rate_theorem}. One of the assumptions that needs to be verified is the consistency of the M-estimator, which can be done with the following classical result, which we state without proof.

\begin{theorem}[Consistency, Theorem 5.7 in \citet{Vaa1998}]
\label{theorem:consistency}
Let $M_n$ be random functions and let $M$ be a fixed function of $\theta$ such that
for every $\varepsilon > 0$,
\begin{align*}
	\sup_{\theta \in \Theta}& | M_n(\theta) - M(\theta) |
\xrightarrow{\Probouter} 0,\\	
\sup_{\theta : d(\theta,\theta_0) \geqslant \varepsilon} &M(\theta) < M(\theta_0).
\end{align*}
Then any sequence of estimators $\hat{\theta}_n$ with $M_n(\hat{\theta}_n) \geqslant M_n(\theta_0) - o^*_{P}(1)$
converges in outer probability to $\theta_0$.
\end{theorem}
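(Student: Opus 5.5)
The plan is the classical argument: turn the well-separation condition into a quantitative gap, then show that the estimator's population criterion value $M(\hat{\theta}_n)$ is forced arbitrarily close to $M(\theta_0)$ with outer probability tending to one. Throughout I work with maximization as in the statement. For the minimizing M-estimators of Definition~\ref{definition:m_estimator}, one applies the theorem with $M_n(\theta)=-\frac{1}{n}\sum_{i=1}^n m(\theta,X_i)$ and $M(\theta)=-\E_0[m(\theta,X)]$.

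\textbf{Step 1 (gap).} Fix $\varepsilon>0$ and set
\begin{equation*}
\eta_\varepsilon:=M(\theta_0)-\sup_{\theta:d(\theta,\theta_0)\geqslant\varepsilon}M(\theta).
\end{equation*}
The second hypothesis gives $\eta_\varepsilon>0$. By the definition of the supremum, every $\theta$ with $d(\theta,\theta_0)\geqslant\varepsilon$ satisfies $M(\theta)\leqslant M(\theta_0)-\eta_\varepsilon$. This yields the inclusion
\begin{equation*}
\{d(\hat{\theta}_n,\theta_0)\geqslant\varepsilon\}\subseteq\{M(\theta_0)-M(\hat{\theta}_n)\geqslant\eta_\varepsilon\}.
\end{equation*}
No compactness or continuity is needed here, because the separation is assumed directly in supremum form.

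\textbf{Step 2 (decomposition).} I would write
\begin{equation*}
M(\theta_0)-M(\hat{\theta}_n)=\bigl\{M(\theta_0)-M_n(\theta_0)\bigr\}+\bigl\{M_n(\theta_0)-M_n(\hat{\theta}_n)\bigr\}+\bigl\{M_n(\hat{\theta}_n)-M(\hat{\theta}_n)\bigr\}.
\end{equation*}
The first and third brackets are each bounded by $\sup_{\theta\in\Theta}|M_n(\theta)-M(\theta)|$. The middle bracket is at most $o^*_P(1)$ by the near-maximizer assumption $M_n(\hat{\theta}_n)\geqslant M_n(\theta_0)-o^*_P(1)$. Hence
\begin{equation*}
M(\theta_0)-M(\hat{\theta}_n)\leqslant 2\sup_{\theta\in\Theta}|M_n(\theta)-M(\theta)|+o^*_P(1).
\end{equation*}
The right-hand side tends to zero in outer probability by the uniform convergence hypothesis.

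\textbf{Step 3 (conclusion).} Combining the two steps,
\begin{equation*}
\Probouter\bigl(d(\hat{\theta}_n,\theta_0)\geqslant\varepsilon\bigr)\leqslant\Probouter\Bigl(\sup_{\theta}|M_n-M|\geqslant\eta_\varepsilon/4\Bigr)+\Probouter\bigl(o^*_P(1)\text{-term}\geqslant\eta_\varepsilon/2\bigr)\to0.
\end{equation*}
The inequality uses monotonicity and subadditivity of outer probability. Since $\varepsilon>0$ was arbitrary, $\hat{\theta}_n\to\theta_0$ in outer probability.

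The only delicate point is measurability. Neither $\hat{\theta}_n$, nor $d(\hat{\theta}_n,\theta_0)$, nor the supremum need be measurable, so every probability must be read as an outer probability. I would therefore check explicitly that outer probability is monotone under inclusion of sets and countably subadditive. Once that is in place, the chain of inclusions goes through verbatim. The analytic content of the proof is otherwise elementary.
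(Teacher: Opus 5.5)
Your argument is correct and is essentially the standard proof of Theorem 5.7 in \citet{Vaa1998}; the paper quotes that theorem without proof. You turn well-separation into a gap $\eta_\varepsilon>0$, bound $M(\theta_0)-M(\hat{\theta}_n)$ pointwise by $2\sup_{\theta\in\Theta}|M_n(\theta)-M(\theta)|+o^*_P(1)$, and conclude via monotonicity and subadditivity of $\Probouter$ with the $\eta_\varepsilon/4$, $\eta_\varepsilon/2$ split, all of which goes through as you describe.
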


\subsection{Non-asymptotic error bounds for M-estimators}\label{app:error_bounds_for_m_estimators}

Next, we provide a restatement of Theorem 5.52 by~\citet{Vaa1998} with the additional explicit dependence on the constants $\mathtt{C}_1$ and $\mathtt{C}_2$. Since it is central to our analysis, we include it here for completeness and provide a full proof with additional details that are left out in the proof given by~\citet{Vaa1998}. It is formulated under the assumption that $\hat{\theta}_n:=\argmax_{\theta\in\Theta}(1/n)\sum_{i=1}^n m(\theta,X_i)$ and therefore we have to consider $-m_{\SM}$ and $-m_{\DDSM}$ in order to apply it to $\SM$ and $\DDSM$.

\begin{theorem}[Non-asymptotic error bound for M-estimators]\label{theorem:rate_theorem}
Assume that for fixed constants $\mathtt{C}_1,\mathtt{C}_2$ and $\alpha > \beta$, for every $n\in\N$ and for every sufficiently small $\delta > 0$,
\begin{align}
	\label{eq:rate_theorem_curvature}
  \sup_{\frac{\delta}{2}\leqslant d(\theta,\theta_0) < \delta} P^*(m_\theta - m_{\theta_0})
  &\leqslant - \mathtt{C}_1 \delta^{\alpha},\tag{$\mathtt{curvature}$}\\
  \label{eq:rate_theorem_maximal_inequality}
  \E^{*}\left[ \sup_{d(\theta,\theta_0) < \delta}
  \left| \mathbb{G}_n(m_\theta - m_{\theta_0}) \right| \right]
  &\leqslant \mathtt{C}_2 \delta^{\beta}.\tag{$\mathtt{maxinequal}$}
\end{align}
If the sequence $\hat\theta_n$ satisfies $\Prob_n m_{\hat\theta_n}
  \geqslant \Prob_n m_{\theta_0} - O^*_P(n^{\alpha/(2\beta-2\alpha)})$
and converges in outer probability to $\theta_0$, then $n^{1/(2\alpha-2\beta)} d(\hat\theta_n,\theta_0) = O_P^{*}(1)$. In particular, for every sufficiently small $\varepsilon>0$ and any $K>0$ it holds that
\begin{equation}
	\label{eq:rate_guarantee}
	  \Probouter\left(d(\hat\theta_n,\theta_0) > \frac{2^M}{n^{1/(2\alpha-2\beta)}}\right)
  \leqslant \frac{\mathtt{C}_2}{\mathtt{C}_1}\frac{2^{M(\beta-\alpha)+\alpha+1}}{1-2^{\beta-\alpha}}+\Probouter\left(2d(\hat{\theta}_n,\theta_0)\geqslant\varepsilon\right)+\Probouter\left(r_n^\alpha R_n\geqslant K\right),
\end{equation}
for any $M \in \mathbb{N}$ that is large enough to fulfill $\mathtt{C}_12^{(M-1)\alpha}/2\geqslant K$.
\end{theorem}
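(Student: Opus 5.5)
We follow the classical peeling argument behind Theorem 5.52 of \citet{Vaa1998} and keep every constant explicit. Set $r_n:=n^{1/(2\alpha-2\beta)}$ and split the parameter space into shells
\begin{equation*}
S_{j,n}:=\{\theta:2^{j-1}<r_nd(\theta,\theta_0)\leqslant 2^j\},\qquad j\in\mathbb{Z}.
\end{equation*}
Write $R_n$ for the (outer) nonnegative random slack with $\Prob_n m_{\hat\theta_n}\geqslant\Prob_n m_{\theta_0}-R_n$, where $R_n=O^*_P(r_n^{-\alpha})$ by assumption; this is the quantity appearing in the last term of~\eqref{eq:rate_guarantee}.

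The event $\{r_n d(\hat\theta_n,\theta_0)>2^M\}$ is contained in the union of three events: $\{2d(\hat\theta_n,\theta_0)\geqslant\varepsilon\}$, $\{r_n^\alpha R_n\geqslant K\}$, and the event that, for some $j>M$ with $2^j\leqslant\varepsilon r_n$, the estimator lies in $S_{j,n}$ while $R_n<Kr_n^{-\alpha}$. The first two events give the last two terms of the bound directly. For the third, on $S_{j,n}$ we have $\sup_{\theta\in S_{j,n}}\Prob_n(m_\theta-m_{\theta_0})\geqslant -Kr_n^{-\alpha}$.

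Since $2^j/r_n\leqslant\varepsilon$ is small, we may apply~\eqref{eq:rate_theorem_curvature} with $\delta=2^j/r_n$. This gives $P(m_\theta-m_{\theta_0})\leqslant-\mathtt{C}_1 2^{(j-1)\alpha}r_n^{-\alpha}$ on $S_{j,n}$; more precisely, one should apply the curvature condition with the shell radius so that the $\delta/2$ lower limit matches, which costs a factor $2^{\alpha}$. Subtracting, on this event
\begin{equation*}
\sup_{S_{j,n}}\bigl(\Prob_n-P\bigr)(m_\theta-m_{\theta_0})\geqslant \mathtt{C}_12^{(j-1)\alpha}r_n^{-\alpha}-Kr_n^{-\alpha}\geqslant \tfrac12\mathtt{C}_12^{(j-1)\alpha}r_n^{-\alpha},
\end{equation*}
which uses exactly the hypothesis $\mathtt{C}_12^{(M-1)\alpha}/2\geqslant K$ together with $j>M$. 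Equivalently, $\sup|\mathbb{G}_n(m_\theta-m_{\theta_0})|\geqslant\tfrac12\mathtt{C}_1\sqrt n\,2^{(j-1)\alpha}r_n^{-\alpha}$.

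Markov's inequality (in outer form) and~\eqref{eq:rate_theorem_maximal_inequality} with $\delta=2^j/r_n$ then bound the probability of shell $j$ by
\begin{equation*}
\frac{2\mathtt{C}_2(2^j/r_n)^\beta r_n^\alpha}{\mathtt{C}_1\sqrt n\,2^{(j-1)\alpha}}=\frac{\mathtt{C}_2}{\mathtt{C}_1}2^{\alpha+1}2^{j(\beta-\alpha)},
\end{equation*}
because $r_n^{\alpha-\beta}=\sqrt n$. Summing the geometric series over $j>M$ yields $\frac{\mathtt{C}_2}{\mathtt{C}_1}\frac{2^{M(\beta-\alpha)+\alpha+1}}{1-2^{\beta-\alpha}}$, up to an index shift.

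For the tightness statement $r_nd(\hat\theta_n,\theta_0)=O_P^*(1)$: given any $\epsilon'$, choose $K$ so that the $R_n$-term is below $\epsilon'$, then $M$ large enough, and then $n$ large by consistency. The main obstacle is pure bookkeeping. One has to match the curvature window $[\delta/2,\delta)$ with the shells and check that $\varepsilon$ is small enough for both displayed conditions to apply. Any mismatch shifts only the constant $2^{\alpha+1}$ and the summation index, so we would fix the shell convention first and adjust the constants accordingly.
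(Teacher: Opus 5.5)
Your proposal is correct and follows the same peeling argument as the paper. You decompose into the shells $S_{j,n}$ and split off the events $\{2d(\hat\theta_n,\theta_0)\geqslant\varepsilon\}$ and $\{r_n^\alpha R_n\geqslant K\}$. On each shell you use the curvature bound $-\mathtt{C}_1 2^{(j-1)\alpha}r_n^{-\alpha}$ together with $\mathtt{C}_12^{(M-1)\alpha}/2\geqslant K$, then apply Markov's inequality with the maximal inequality at $\delta=2^j/r_n$ and sum the geometric series. Your restriction to $j>M$ (the paper uses $j\geqslant M$) is the exact index range and only gives a slightly smaller constant, so the stated bound follows.
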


\begin{remark}
	Before we begin the proof, we want to make some comments.
	\begin{itemize}
		\item The proof follows essentially the one of Theorem 5.52 in~\citet{Vaa1998}. However, we fill in some details that were left out and most importantly, we provide the concentration inequality in~\eqref{eq:rate_guarantee}, which is then used to derive Proposition~\ref{proposition:m_estimator_error_bound}.
		\item At the beginning of the proof,~\cite{Vaa1998} defines the rate $r_n=n^{1/(2\alpha-2\beta)}$ and assumes that $\hat\theta_n$ maximizes the map $\theta\mapsto\Prob_nm_\theta$ up to a variable $R_n=O_P(r_n^{-\alpha})$. We understand this to mean
	\begin{equation}
		\label{eq:max_up_to_error}
		\Prob_n m_{\hat\theta_n}\geqslant \sup_{\theta\in\Theta}\Prob_n m_{\theta} - R_n,
	\end{equation}
	which is the characterization of \textit{nearly maximizing} given on page 45 of~\citet{Vaa1998}, but with a specified rate. Note that $R_n = O_P(a_n)$ with $a_n\rightarrow0$ is equivalent to $R_n = o_P(1)$ according to Lemma~\ref{lemma:equiv_stoch_bound_conv}. But due to the trivial inequality $\sup_{\theta\in\Theta}\Prob_n m_{\theta}\geqslant\Prob_n m_{\theta_0}$ the characterization in~\eqref{eq:max_up_to_error} is stronger than the assumption
	\begin{equation}
		\label{eq:max_up_to_error_true}
		\Prob_n m_{\hat\theta_n}  \geqslant \Prob_n m_{\theta_0} - R_n,
	\end{equation}
	given in the statement of the theorem. And more surprisingly,~\eqref{eq:max_up_to_error} is not used in the theorem, while~\eqref{eq:max_up_to_error_true} is used to obtain~\eqref{eq:error_on_shell}.
	\end{itemize}
\end{remark}

\begin{proof}[Proof of Theorem~\ref{theorem:rate_theorem}]
	Let $R_n=O_P(r_n^{-\alpha})$ with $r_n=n^{1/(2\alpha-2\beta)}$ and define the \textit{shells}
	\begin{equation*}
		S_{j,n} := \left\{ \theta\in\Theta : 2^{j-1} < r_nd(\theta,\theta_0) \leqslant 2^j \right\}
	\end{equation*}
	for $j \in\mathbb{Z}$ such that
	\begin{equation*}
		\Theta \backslash \{\theta_0\} = \bigcup_{j\in\mathbb{Z}} S_{j,n}
	\end{equation*}
	for every $n\in\mathbb{N}$.	If $r_nd(\hat\theta_n,\theta_0)>2^M$, then there exists $j\geqslant M$ such that $\hat\theta_n\in S_{j,n}$ and on this particular shell $S_{j,n}$ we have
	\begin{equation}
		\label{eq:error_on_shell}
		\sup_{\theta\in S_{j,n}} \left\{\Prob_n m_\theta - \Prob_n m_{\theta_0}\right\} \geqslant \Prob_n m_{\hat\theta_n} - \Prob_n m_{\theta_0}\geqslant - R_n,
	\end{equation}
	according to the assumption on $\hat\theta_n$, which we discussed above. With the definitions
	\begin{align*}
		A&:=\left\{r_nd(\hat\theta_n,\theta_0)>2^M\right\},\\
		B&:=\left\{2d(\hat{\theta}_n,\theta_0)<\varepsilon\right\},\\
		C&:=\left\{r_n^\alpha R_n<K\right\},
	\end{align*}
	and the union bound
	\begin{align*}
		\Prob(A)&\leqslant \Prob\left(A\cap B\cap C\right)+\Prob(A\cap B^c)+\Prob(A\cap C^c)\\
		&\leqslant \Prob\left(A\cap B\cap C\right)+\Prob(B^c)+\Prob(C^c),
	\end{align*}
	we can then conclude that
	\begin{equation*}
		\Probouter\left(r_nd(\hat\theta_n,\theta_0)>2^M\right)\leqslant\Probouter(A\cap B\cap C)+\Probouter\left(2d(\hat{\theta}_n,\theta_0)\geqslant\varepsilon\right)+\Probouter\left(r_n^\alpha R_n\geqslant K\right)
	\end{equation*}
	for every $\varepsilon,K>0$. In order to bound the first probability on the right-hand side, we observe that on the set $A\cap B$ it holds that
	\begin{equation*}
		2^M < r_nd(\hat\theta_n,\theta_0)<\frac{r_n\varepsilon}{2}\leqslant r_n\varepsilon,
	\end{equation*}
	thus there exists $j\geqslant M$ such that $\hat\theta_n\in S_{j,n}$ and $2^j\leqslant r_n\varepsilon$. Therefore, we have
	\begin{align*}
		\Probouter(A\cap B\cap C)&= \Probouter\left(\bigcup_{\substack{j\geqslant M\\2^j\leqslant r_n\varepsilon}}\left\{\hat{\theta}_n\in S_{j,n}\right\}\cap C\right)\\
		&\leqslant\sum_{\substack{j\geqslant M\\2^j\leqslant r_n\varepsilon}}\Probouter\left(\left\{\sup_{\theta\in S_{j,n}} \left\{\Prob_n m_\theta - \Prob_n m_{\theta_0}\right\} \geqslant - R_n\right\}\cap C\right)\\
		&=\sum_{\substack{j\geqslant M\\2^j\leqslant r_n\varepsilon}}\Probouter\left(\sup_{\theta\in S_{j,n}} \left\{\Prob_n m_\theta - \Prob_n m_{\theta_0}\right\} \geqslant - \frac{K}{r_n^\alpha}\right),
	\end{align*}
	where the inequality is based on a union bound together with~\eqref{eq:error_on_shell}. In total, we have obtained
	\begin{align}
		\label{eq:upper_bound}
		\begin{split}
		\Probouter\left(r_nd(\hat\theta_n,\theta_0)>2^M\right)&\leqslant \sum_{\substack{j\geqslant M\\2^j\leqslant r_n\varepsilon}}\Probouter\left(\sup_{\theta\in S_{j,n}} \left\{\Prob_n m_\theta - \Prob_n m_{\theta_0}\right\} \geqslant - \frac{K}{r_n^\alpha}\right)\\
		&\qquad+\Probouter\left(2d(\hat{\theta}_n,\theta_0)\geqslant\varepsilon\right)+\Probouter\left(r_n^\alpha R_n\geqslant K\right).
		\end{split}
	\end{align}
	The consistency of $\hat\theta_n$ implies that the second probability on the right-hand side converges to zero for every $\varepsilon>0$. The third probability can be made arbitrarily small, since $R_n=O_P(r_n^{-\alpha})$ by assumption, that is for every $\varepsilon>0$ there exists $K\geqslant 0$ such that
	\begin{equation}
		\sup_n\Prob\left(r_n^\alpha R_n\geqslant K\right)<\varepsilon.
	\end{equation}

	It remains to bound the first term on the right-hand side of~\eqref{eq:upper_bound}. To that end, choose $\varepsilon>0$ small enough, such that the assumptions of the theorem, that is, the curvature condition in~\eqref{eq:rate_theorem_curvature} and the maximal inequality in~\eqref{eq:rate_theorem_maximal_inequality} hold for every $\delta\leqslant\varepsilon$. Then, for every $j$ such that $2^j/r_n\leqslant \varepsilon$ we have
	\begin{equation*}
		\sup_{\theta\in S_{j,n}} P^*(m_\theta - m_{\theta_0}) \leqslant - \mathtt{C}_1 \frac{2^{(j-1)\alpha}}{r_n^\alpha},
	\end{equation*}
	according to~\eqref{eq:rate_theorem_curvature}. Using this and the equality
	\begin{align*}
		\sqrt{n}\left(\Prob_n m_\theta - \Prob_n m_{\theta_0}\right)&=\sqrt{n}\left(\Prob_n m_\theta-Pm_\theta - (\Prob_n m_{\theta_0}-Pm_{\theta_0})+Pm_\theta-Pm_{\theta_0}\right)\\
		&=\sqrt{n}\left(\mathbb{G}_n m_\theta- \mathbb{G}_n m_{\theta_0}+Pm_\theta-Pm_{\theta_0}\right)\\
		&=\sqrt{n}\left(\mathbb{G}_n(m_\theta - m_{\theta_0})+\sqrt{n}P(m_\theta - m_{\theta_0})\right),
	\end{align*}
	we can continue to bound the first term on the right-hand side of~\eqref{eq:upper_bound} as follows:
	\begin{align*}
		&\Probouter\left(\sup_{\theta\in S_{j,n}} \left\{\Prob_n m_\theta - \Prob_n m_{\theta_0}\right\} \geqslant - \frac{K}{r_n^\alpha}\right)\\
		&\quad=\Probouter\left(\sup_{\theta\in S_{j,n}} \left\{\mathbb{G}_n(m_\theta - m_{\theta_0})+\sqrt{n}P(m_\theta - m_{\theta_0})\right\} \geqslant - \sqrt{n}\frac{K}{r_n^\alpha}\right)\\
		&\quad\leqslant\Probouter\left(\sup_{\theta\in S_{j,n}} \mathbb{G}_n(m_\theta - m_{\theta_0})+ \sup_{\theta\in S_{j,n}} \sqrt{n}P(m_\theta - m_{\theta_0}) \geqslant - \sqrt{n}\frac{K}{r_n^\alpha}\right)\\
		&\quad\leqslant\Probouter\left(\sup_{\theta\in S_{j,n}} \mathbb{G}_n(m_\theta - m_{\theta_0}) \geqslant \sqrt{n} \frac{\mathtt{C}_12^{(j-1)\alpha}-K}{r_n^\alpha}\right)
	\end{align*}
	If we choose $M$ sufficiently large such that $\mathtt{C}_12^{(M-1)\alpha}/2\geqslant K$, then it holds that $\mathtt{C}_12^{(j-1)\alpha}-K\geqslant \mathtt{C}_12^{(j-1)\alpha}/2$ for every $j\geqslant M$ and we can continue to bound the last probability as follows:
	\begin{equation*}
		\Probouter\left(\sup_{\theta\in S_{j,n}} \mathbb{G}_n(m_\theta - m_{\theta_0}) \geqslant \sqrt{n} \frac{\mathtt{C}_12^{(j-1)\alpha}-K}{r_n^\alpha}\right)
		\leqslant\Probouter\left(\sup_{\theta\in S_{j,n}} |\mathbb{G}_n(m_\theta - m_{\theta_0})| \geqslant \sqrt{n} \frac{\mathtt{C}_12^{(j-1)\alpha}}{2r_n^\alpha}\right)
	\end{equation*}
	Using Markov's inequality, we can further bound this probability by
	\begin{align*}
		\Probouter\left(\sup_{\theta\in S_{j,n}} |\mathbb{G}_n(m_\theta - m_{\theta_0})| \geqslant \sqrt{n} \frac{\mathtt{C}_12^{(j-1)\alpha}}{2r_n^\alpha}\right)
		&\leqslant \frac{2r_n^\alpha}{\sqrt{n}\mathtt{C}_12^{(j-1)\alpha}}\E^*\left[\sup_{\theta\in S_{j,n}} \left|\mathbb{G}_n(m_\theta - m_{\theta_0})\right|\right].
	\end{align*}
	By the maximal inequality in~\eqref{eq:rate_theorem_maximal_inequality}, we have
	\begin{equation*}
		\E^*\left[\sup_{d(\theta,\theta_0) < \delta}
		\left| \mathbb{G}_n(m_\theta - m_{\theta_0}) \right| \right]
		\leqslant \mathtt{C}_2 \delta^{\beta}
	\end{equation*}
	for every $\delta\leqslant\varepsilon$. Since for every $\theta\in S_{j,n}$ it holds that $d(\theta,\theta_0)\leqslant 2^j/r_n\leqslant \varepsilon$, we can apply this inequality with $\delta=2^j/r_n$ to obtain	
	\begin{align*}
		\Probouter\left(\sup_{\theta\in S_{j,n}} |\mathbb{G}_n(m_\theta - m_{\theta_0})| \geqslant \sqrt{n} \frac{\mathtt{C}_12^{(j-1)\alpha}}{2r_n^\alpha}\right)
		\leqslant \frac{2r_n^\alpha(2^j/r_n)^\beta}{\sqrt{n}2^{(j-1)\alpha}}\frac{\mathtt{C}_2}{\mathtt{C}_1}= \frac{\mathtt{C}_2}{\mathtt{C}_1}\frac{2r_n^{\alpha-\beta}}{\sqrt{n}}2^{j(\beta-\alpha)+\alpha}.
	\end{align*}
	With $r_n=n^{1/(2\alpha-2\beta)}$ we have $r_n^{\alpha-\beta}/\sqrt{n}=1$ and therefore
	\begin{equation*}
		\sum_{\substack{j\geqslant M\\2^j\leqslant r_n\varepsilon}}\Probouter\left(\sup_{\theta\in S_{j,n}} \left\{\Prob_n m_\theta - \Prob_n m_{\theta_0}\right\} \geqslant - \frac{K}{r_n^\alpha}\right)
		\leqslant \frac{\mathtt{C}_2}{\mathtt{C}_1}2^{\alpha+1}\sum_{j\geqslant M}2^{j(\beta-\alpha)}.
	\end{equation*}
	Since the sum is a geometric series, we can compute
	\begin{equation*}
		\sum_{j\geqslant M}2^{j(\beta-\alpha)}=2^{M(\beta-\alpha)}\sum_{k=0}^\infty 2^{k(\beta-\alpha)}=\frac{2^{M(\beta-\alpha)}}{1-2^{\beta-\alpha}},
	\end{equation*}
	and this gives us the final bound
	\begin{equation}
		\label{eq:final_bound}
		\Probouter\left(d(\hat\theta_n,\theta_0) > \frac{2^M}{n^{1/(2\alpha-2\beta)}}\right)
  \leqslant \frac{\mathtt{C}_2}{\mathtt{C}_1}\frac{2^{M(\beta-\alpha)+\alpha+1}}{1-2^{\beta-\alpha}}+\Probouter\left(2d(\hat{\theta}_n,\theta_0)\geqslant\varepsilon\right)+\Probouter\left(r_n^\alpha R_n\geqslant K\right),
	\end{equation}
	which holds for $\varepsilon>0$ sufficiently small, any $K\geqslant0$ and $M\in\N$ sufficiently large such that $\mathtt{C}_12^{(M-1)\alpha}/2\geqslant K$. This concludes the proof.
\end{proof}

Next, we want to refine Theorem~\ref{theorem:rate_theorem} in order to obtain Proposition~\ref{proposition:m_estimator_error_bound}, on which the error bounds for the SME and DDSME in Theorems~\ref{theorem:error_bound_sme} and~\ref{theorem:error_bound_ddsme} are based. To that end, we need two intermediate results. Lemma~\ref{lemma:entropy_parametric_class} essentially coincides with Example 19.7 from~\citet{Vaa1998} and gives an upper bound for the bracketing number of a function class containing functions that are Lipschitz in its parameter and have a bounded parameter space. We give a short proof, since our version slightly deviates from Example 19.7 in~\citet{Vaa1998}. Lemma~\ref{lemma:entropy_integral} provides a bracketing maximal inequality based on the bracketing integral, which is defined as
\begin{equation*}
	J_{[\,]}(\delta,\mathcal{F},\|\cdot\|):=\int_0^\delta\sqrt{\log N_{[\,]}(\varepsilon\|F\|,\mathcal{F},\|\cdot\|)}\dif\varepsilon,
\end{equation*}
where $F$ is a measurable envelope of $\mathcal{F}$. In the above, $N_{[\,]}(\varepsilon,\mathcal{F},\|\cdot\|)$ denotes the $\varepsilon$-bracketing number of $\mathcal{F}$ wrt the norm $\|\cdot\|$, that is, the minimal number of brackets of size $\varepsilon$ needed to cover $\mathcal{F}$. A bracket $[l,u]$ is defined as the set of functions $f$ such that $l(x)\leqslant f(x)\leqslant u(x)$ for every $x\in\samplespace$ and the size of a bracket is defined as $\|u-l\|$. The bracketing maximal inequality from Lemma~\ref{lemma:entropy_integral} is a bracketing analogue of Dudley's entropy integral inequality~\cite[cf.~Theorem 8.1.3 in][]{Ver2026}.

\begin{lemma}[Entropy of parametric class]
	\label{lemma:entropy_parametric_class}
	Consider the function class $\mathcal{F}=\{f_\theta:\samplespace\rightarrow\R\mid\theta\in\Theta\}$ where $\Theta\subseteq\R^k$ is bounded, that is, it holds that	 $\|\theta\|_2\leqslant R$ for all $\theta\in\Theta$. Furthermore, assume that there exists $\mathtt{L}$ with $\|\mathtt{L}\|_{L^2(\Prob_0)}<\infty$ such that
	\begin{equation*}
		|f_{\theta_1}(x) - f_{\theta_2}(x)|\leqslant \mathtt{L}(x)\|\theta_1-\theta_2\|
	\end{equation*}
	for all $x\in\samplespace$ and $\theta_1,\theta_2\in\Theta$. Then, for every $\varepsilon>0$ it holds that
	\begin{equation*}
		N_{[\,]}(\varepsilon\|\mathtt{L}\|_{L^2(\Prob_0)},\mathcal{F}_\delta,L^2(\Prob_0))\leqslant \left(1+\frac{4R}{\varepsilon}\right)^k.
	\end{equation*}
\end{lemma}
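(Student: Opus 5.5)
The plan is to build brackets from a covering of the parameter set, in the spirit of Example~19.7 in~\citet{Vaa1998}.

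First, fix $\varepsilon>0$ and choose a minimal $\varepsilon/2$-net $\{\theta_1,\ldots,\theta_N\}$ of $\Theta$ in $\|\cdot\|_2$. Since $\Theta$ is contained in the Euclidean ball of radius $R$, the standard volumetric argument bounds the covering number. Comparing the volumes of disjoint balls of radius $\varepsilon/4$ centred at a maximal $\varepsilon/2$-separated subset, all contained in the ball of radius $R+\varepsilon/4$, gives
\begin{equation*}
N(\varepsilon/2,\Theta,\|\cdot\|_2)\leqslant\left(\frac{R+\varepsilon/4}{\varepsilon/4}\right)^k=\left(1+\frac{4R}{\varepsilon}\right)^k .
\end{equation*}
This is exactly the slight generalisation of Corollary~4.2.11 in~\citet{Ver2026} that is already invoked in the proof of Theorem~\ref{theorem:error_bound_sme}. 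A maximal separated subset is automatically an $\varepsilon/2$-net, and this is the only place where the dimension $k$ and the radius $R$ enter.

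Second, for each center $\theta_j$ define the bracket
\begin{equation*}
l_j:=f_{\theta_j}-\frac{\varepsilon}{2}\mathtt{L},\qquad u_j:=f_{\theta_j}+\frac{\varepsilon}{2}\mathtt{L}.
\end{equation*}
Coverage follows from the Lipschitz hypothesis. For any $\theta\in\Theta$ pick $j$ with $\|\theta-\theta_j\|_2\leqslant\varepsilon/2$. Then
\begin{equation*}
|f_\theta(x)-f_{\theta_j}(x)|\leqslant\mathtt{L}(x)\,\varepsilon/2
\end{equation*}
pointwise, so $l_j\leqslant f_\theta\leqslant u_j$. The bracket size is
\begin{equation*}
\|u_j-l_j\|_{L^2(\Prob_0)}=\varepsilon\|\mathtt{L}\|_{L^2(\Prob_0)},
\end{equation*}
which is finite by assumption, so $l_j$ and $u_j$ are legitimate $L^2$ brackets. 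These $N$ brackets of size $\varepsilon\|\mathtt{L}\|_{L^2(\Prob_0)}$ cover $\mathcal{F}$, and the bound follows. The same brackets also cover any subclass $\mathcal{F}_\delta\subseteq\mathcal{F}$, for instance $\{f_\theta-f_{\theta_0}:\|\theta-\theta_0\|<\delta\}$, presumably the class meant by the $\mathcal{F}_\delta$ in the statement. For such a subclass one may take $R$ to be the radius of the relevant parameter ball, namely $\delta$ after recentring at $\theta_0$, which translation does not affect.

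There is no real obstacle here. The only step needing care is getting the constant $4R/\varepsilon$ rather than $2R/\varepsilon$. It comes from using net radius $\varepsilon/2$ to obtain bracket size $\varepsilon$, together with the packing-volume comparison above, in which the centres need not lie in a ball of radius $R$ about the origin. One should also note that measurability of the brackets is inherited from the measurability of $f_{\theta_j}$ and $\mathtt{L}$.
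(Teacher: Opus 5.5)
Your proof is correct and follows the same two steps as the paper. The paper bounds the bracketing number by $N(\varepsilon/2,\Theta,\|\cdot\|_2)$ via Theorem 2.7.17 of \citet{VaaWel2023}, then bounds that covering number by $(1+4R/\varepsilon)^k$ via Corollary 4.2.11 of \citet{Ver2026}; you prove both cited facts inline, with Lipschitz brackets around net points and a volumetric packing argument. Taking the net as a maximal $\varepsilon/2$-separated subset of $\Theta$ itself also guarantees that the bracket centres $\theta_j$ lie in $\Theta$, a point the paper's comparison $N(\varepsilon/2,\Theta,\|\cdot\|_2)\leqslant N(\varepsilon/2,B(0,R),\|\cdot\|_2)$ passes over.
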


\begin{proof}
	From Theorem 2.7.17 in~\citet{VaaWel2023} it follows that
	\begin{equation*}
		N_{[\,]}(\varepsilon\|\mathtt{L}\|_{L^2(\Prob_0)},\mathcal{F}_\delta,L^2(\Prob_0))\leqslant N(\varepsilon/2,\Theta,\|\cdot\|_2),
	\end{equation*}
	where $N(\varepsilon/2,\Theta,\|\cdot\|_2)$ denotes the $\varepsilon/2$-covering number of $\Theta$ wrt the Euclidean metric on $\R^k$. The $\varepsilon$-covering number of a set in a metric space is the minimal number of balls of radius $\varepsilon$ needed to cover the set. A straightforward extension of Corollary 4.2.11 in~\citet{Ver2026} to balls of radius $R$ gives
	\begin{equation*}
		N(\varepsilon,B(0,R),\|\cdot\|_2)\leqslant \left(1+\frac{2R}{\varepsilon}\right)^k,
	\end{equation*}
	where $B(0,R):=\{x\in\R^k:\|x\|_2\leqslant R\}$ denotes the ball of radius $R$ centered at the origin. Since $\Theta\subseteq B(0,R)$, we have $N(\varepsilon/2,\Theta,\|\cdot\|_2)\leqslant N(\varepsilon/2,B(0,R),\|\cdot\|_2)$, and combining the two inequalities gives the desired result.
\end{proof}

\begin{lemma}[Maximal inequality, Corollary 19.35 in \cite{Vaa1998}]
	\label{lemma:entropy_integral}
	For any class $\mathcal{F}$ of measurable functions with envelope function $F$,
	\begin{equation*}
		\E^{*}\left[ \sup_{f\in\mathcal{F}}
  \left| \mathbb{G}_n f \right| \right]\lesssim J_{[\,]}(\|F\|_{L^2(\Prob_0)},\mathcal{F},L^2(\Prob_0)).
	\end{equation*}	
\end{lemma}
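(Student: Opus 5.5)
We would obtain the inequality from the bracketing chaining argument behind Lemma 19.34 of~\citet{Vaa1998}, of which Corollary 19.35 is the special case $\delta=\|F\|_{L^2(\Probtrue)}$. Throughout, $\|\cdot\|$ denotes $\|\cdot\|_{L^2(\Probtrue)}$. It suffices to treat the case $J_{[\,]}(\|F\|,\mathcal{F},L^2(\Probtrue))<\infty$, since otherwise there is nothing to prove. We also read the bracketing integral with brackets of absolute size $\varepsilon$, which is the convention used in~\citet{Vaa1998}.

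The basic ingredient is a Bernstein-type bound for a finite class of $m$ functions $f$ with $\|f\|_\infty\leqslant M$ and $\|f\|\leqslant\sigma$:
\begin{equation*}
\E\Bigl[\max_{f}|\mathbb{G}_nf|\Bigr]\lesssim \sigma\sqrt{\log(1+m)}+\frac{M}{\sqrt n}\log(1+m).
\end{equation*}
This follows from Bernstein's inequality for each fixed $f$, combined with the standard Orlicz-norm maximal inequality for $\psi_1$ and $\psi_2$ tails (Lemma 19.33 in~\citet{Vaa1998}).

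Next we set up the chain. For $q\geqslant0$ we take a minimal partition of $\mathcal{F}$ into $N_q:=N_{[\,]}(2^{-q}\delta,\mathcal{F},L^2(\Probtrue))$ brackets $[l,u]$ of size at most $2^{-q}\delta$. We refine the partitions so that they are nested; this costs at most the factor $N_1\cdots N_q$ in cardinality, which is harmless because $\sum_q 2^{-q}\sqrt{\log(N_1\cdots N_q)}$ is still dominated by the integral. In each cell we fix a representative $\pi_qf$ and let $\Delta_qf$ be the width $u-l$ of the bracket containing $f$. We then truncate adaptively at the levels $a_q:=2^{-q}\delta/\sqrt{\log N_{q+1}}$ and decompose
\begin{equation*}
f-\pi_0f=(f-\pi_0f)\bm{1}\{A_0f\}+\sum_{q\geqslant1}(f-\pi_qf)\bm{1}\{B_qf\}+\sum_{q\geqslant1}(\pi_qf-\pi_{q-1}f)\bm{1}\{A_{q-1}f\}.
\end{equation*}
Here $A_qf$ is the event that the widths $\Delta_0f,\ldots,\Delta_qf$ stay below their respective thresholds $\sqrt n\,a_0,\ldots,\sqrt n\,a_q$, and $B_qf$ is the event that level $q$ is the first at which $\Delta_qf$ exceeds $\sqrt n\,a_q$.

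The three sums are bounded separately.
\begin{itemize}
\item Each link $\pi_qf-\pi_{q-1}f$ restricted to $A_{q-1}f$ is bounded by $\sqrt n\,a_{q-1}$ and has $L^2$-norm at most $2^{-q+1}\delta$. There are at most $N_q$ distinct such links, so the Bernstein maximal bound gives a contribution of order $2^{-q}\delta\sqrt{\log N_q}$. Summing over $q$ yields $\lesssim J_{[\,]}(\delta)$.
\item For the $B_q$ terms we use $|f-\pi_qf|\leqslant\Delta_qf$ and the monotonicity of $\mathbb{G}_n$ up to a mean correction, namely $|\mathbb{G}_ng|\leqslant\mathbb{G}_nh+2\sqrt n\,Ph$ whenever $|g|\leqslant h$. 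The fluctuation part is again handled by Bernstein. The mean part uses $\sqrt n\,P\Delta_qf\bm{1}\{\Delta_qf>\sqrt n\,a_q\}\leqslant\|\Delta_qf\|^2/a_q$, which is of order $2^{-q}\delta\sqrt{\log N_{q+1}}$ and is therefore also summable into $J_{[\,]}$.
\item The remaining level-$0$ term is controlled through the envelope. On the event where the envelope is large we use $\sqrt n\,PF\bm{1}\{F>\sqrt n\,a(\delta)\}\leqslant\|F\|^2/a(\delta)$. Since $\delta=\|F\|$ and $a(\delta)=\delta/\sqrt{\log N_{[\,]}(\delta)}$, this is at most $\|F\|\sqrt{\log N_{[\,]}(\delta)}$, which is itself bounded by the integral $J_{[\,]}(\|F\|)$.
\end{itemize}
Finally, the supremum of $|\mathbb{G}_n\pi_0f|$ over the single coarsest cell is bounded by $\mathbb{G}_n F+2\sqrt n\,PF$-type terms, which are again of order $\|F\|$.

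We expect the main obstacle to be the bookkeeping of the truncation levels. The thresholds $a_q$ must be chosen so that, simultaneously, the $\psi_1$ part of Bernstein ($M\log m/\sqrt n$) and the truncated tail means both reduce to $2^{-q}\delta\sqrt{\log N_q}$. Since every other step is standard, we would follow the construction in~\citet{Vaa1998} verbatim at this point rather than re-engineer it.
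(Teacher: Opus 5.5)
The paper gives no proof of this lemma; it only cites Corollary~19.35 of \citet{Vaa1998}. Your plan of re-running the bracketing chaining behind Lemma~19.34 is therefore the intended route. The link terms, the $B_q$ terms and your absolute-size reading of $J_{[\,]}$ are fine in outline.

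\textbf{Main gap: the top of the chain.} This is where everything has to be absorbed into $J_{[\,]}(\|F\|)$, and three things go wrong there.
\begin{enumerate}
\item With $N_0=N_{[\,]}(\|F\|,\mathcal F,L^2(\Probtrue))$, the coarsest partition need not be a single cell. The free bracket $[-F,F]$ has size $2\|F\|$, so you must start the chain at $2\|F\|$ and then use $J_{[\,]}(2\|F\|)\leqslant 2J_{[\,]}(\|F\|)$.
\item Bounding $\sup_f|\mathbb{G}_n\pi_0f|$ by ``$\mathbb{G}_nF+2\sqrt n\,PF$-type terms'' does not give $O(\|F\|)$, because $2\sqrt n\,PF$ grows like $\sqrt n$. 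For one fixed $f_0$ the correct bound is simply $\E|\mathbb{G}_nf_0|\leqslant\|f_0\|_{L^2(\Probtrue)}\leqslant\|F\|_{L^2(\Probtrue)}$.
\item Most seriously, this $O(\|F\|)$ term cannot be dominated by a bracketing integral whose integrand is $\sqrt{\log N_{[\,]}}$. Take $\mathcal F=\{f\}$ with $F=|f|$: then $N_{[\,]}(\varepsilon)=1$ for all $\varepsilon>0$, so $J_{[\,]}\equiv0$, while $\E|\mathbb{G}_nf|>0$ for any $f$ that is not a.s.\ constant. Taking $\mathcal F=\{f,f+\tau g\}$ with $g\in L^2(\Probtrue)$ and $\tau\downarrow0$ shows that no absolute constant works even when $J_{[\,]}>0$.
\end{enumerate}
So neither the statement nor your argument closes unless you adopt the convention $\sqrt{1+\log N_{[\,]}}$ in the bracketing integral, as in \citet{VaaWel2023}, or add $\|F\|$ to the right-hand side. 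Under that convention $\|F\|\leqslant J_{[\,]}(\|F\|)$ and the top term is absorbed. You have to state this explicitly; it cannot be waved through as being ``of order $\|F\|$''.

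\textbf{Two smaller repairs.}
\begin{enumerate}
\item The first indicator in your decomposition must be $\bm{1}\{\Delta_0f>\sqrt n\,a_0\}$, the complement of $A_0f$. As written, the identity counts $f-\pi_0f$ twice on the event that all widths stay below their thresholds.
\item The Bernstein step for the $B_q$ terms needs the sup-norm bound $\Delta_qf\,\bm{1}\{B_qf\}\leqslant\sqrt n\,a_{q-1}$, i.e.\ $\Delta_qf\leqslant\Delta_{q-1}f$. This is not automatic when $\Delta_qf$ is the width of the level-$q$ bracket. Replace it by the running minimum of the widths at levels $0,\ldots,q$, which is allowed because the cells are nested. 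This choice still satisfies $|f-\pi_qf|\leqslant\Delta_qf$ and $\|\Delta_qf\|\leqslant 2^{-q}\delta$.
\end{enumerate}
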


We are now ready to prove Proposition~\ref{proposition:m_estimator_error_bound}. Note, that the measurability problems discussed in Appendix~\ref{appendix:prelim_notation}, which are accounted for in Theorem~\ref{theorem:rate_theorem}, do not arise in the setting of Proposition~\ref{proposition:m_estimator_error_bound}, due to the Lipschitz continuity of the contrast and the separability of the parameter space.\\

\begin{proof}[Proof of Proposition~\ref{proposition:m_estimator_error_bound}]
We will show that the assumptions of Proposition~\ref{proposition:m_estimator_error_bound} imply those of Theorem~\ref{theorem:rate_theorem}. First of all, the curvature condition in Proposition~\ref{proposition:m_estimator_error_bound} is equivalent to~\eqref{eq:rate_theorem_curvature} in Theorem~\ref{theorem:rate_theorem}, taking into account that Proposition~\ref{proposition:m_estimator_error_bound} is formulated for minimizers, while Theorem~\ref{theorem:rate_theorem} is formulated for maximizers. Second, $\Prob_n m_{\hat\theta_n}\leqslant \Prob_n m_{\theta_0}$ holds without any error due to the definition of $\hat{\theta}_n$ as an exact minimizer.

Lastly, we need to show that the maximal inequality~\eqref{eq:rate_theorem_maximal_inequality} in Theorem~\ref{theorem:rate_theorem} holds. In fact, we will show that it can be refined when the contrast $m$ is Lipschitz-continuous in the parameter $\theta$ for every $x\in\samplespace$. Then, the maximal inequality is fulfilled with $\beta=1$ and
\begin{equation*}
	\mathtt{C}_2=\mathtt{C}\|\mathtt{L}	\|_{L^2(\Prob_0)}\sqrt{k},
\end{equation*}
where $\mathtt{C}$ is an absolute constant.

To see this, consider the family of functions
	\begin{equation*}
		\mathcal{F}_\delta:=\{m_\theta - m_{\theta_0}:\theta\in\Theta,||\theta-\theta_0||\leqslant\delta\}.
	\end{equation*}
From the Lipschitz property of $\theta\mapsto m_\theta(x)$ with Lipschitz constant $\mathtt{L}(x)$ it follows that
\begin{equation*}
	|m_{\theta_1}(x) - m_{\theta_0}(x)-(m_{\theta_2}(x) - m_{\theta_0}(x))|=|m_{\theta_1}(x) - m_{\theta_2}(x)|\leqslant \mathtt{L}(x)\|\theta_1-\theta_2\|,
\end{equation*}
that is, the maps in $\mathcal{F}_\delta$ are Lipschitz with Lipschitz constant $\mathtt{L}(x)$ as well. Furthermore, $\mathcal{F}_\delta$ has the envelope function $\delta\mathtt{L}$, since for all $f_\theta\in\mathcal{F}_\delta$ it holds that
\begin{equation*}
	|f_\theta(x)|=|m_{\theta}(x) - m_{\theta_0}(x)|\leqslant\mathtt{L}(x)\|\theta-\theta_0\|\leqslant\delta\mathtt{L}(x).
\end{equation*}
Therefore, the assumptions of Lemma~\ref{lemma:entropy_parametric_class} are satisfied and we can apply it to obtain
	\begin{equation*}
		N_{[\,]}(\varepsilon\|\mathtt{L}\|_{L^2(\Prob_0)},\mathcal{F}_\delta,L^2(\Prob_0))\leqslant \left(1+\frac{4\delta}{\varepsilon}\right)^k,
	\end{equation*}
for every $\varepsilon>0$. The same is true for Lemma~\ref{lemma:entropy_integral}, which gives
\begin{align*}
	\E\left[ \sup_{f\in\mathcal{F}_\delta}
  \left| \mathbb{G}_n f \right| \right]&\lesssim J_{[\,]}(\|\delta\mathtt{L}\|_{L^2(\Prob_0)},\mathcal{F}_\delta,L^2(\Prob_0))\\
  &=\int_0^{\|\delta\mathtt{L}\|_{L^2(\Prob_0)}}\sqrt{\log N_{[\,]}(\varepsilon,\mathcal{F}_\delta,L^2(\Prob_0))}\dif\varepsilon\\
  &=\|\delta\mathtt{L}\|_{L^2(\Prob_0)}\int_0^1\sqrt{\log N_{[\,]}(\varepsilon\|\delta\mathtt{L}\|_{L^2(\Prob_0)},\mathcal{F}_\delta,L^2(\Prob_0))}\dif\varepsilon\\
  &\leqslant \delta\|\mathtt{L}\|_{L^2(\Prob_0)}\int_0^1\sqrt{k\log\left(1+\frac{4}{\varepsilon}\right)}\dif\varepsilon\\
  &\lesssim \sqrt{k}\delta\|\mathtt{L}\|_{L^2(\Prob_0)}.
\end{align*}
With this we have shown that
\begin{equation*}
	\E\left[ \sup_{d(\theta,\theta_0) < \delta}
  | \mathbb{G}_n(m_\theta - m_{\theta_0}) | \right]
  \lesssim\sqrt{k}\delta\|\mathtt{L}\|_{L^2(\Prob_0)},
\end{equation*}
that is, the assumption in~\eqref{eq:rate_theorem_maximal_inequality} of Theorem~\ref{theorem:rate_theorem} is satisfied with $\beta=1$ and $\mathtt{C}_2=\mathtt{C}\|\mathtt{L}\|_{L^2(\Prob_0)}\sqrt{k}$, where $\mathtt{C}>0$ is a constant.

It remains to be shown how the explicit bound on the probability in the statement of Proposition~\ref{proposition:m_estimator_error_bound} can be obtained. To that end, let $\delta>0$ and recall that Theorem~\ref{theorem:rate_theorem} gave us
\begin{equation}
	\label{eq:rate_theorem_result}
		\Probtrue\left(d(\hat\theta_n,\theta_0) > \frac{2^M}{n^{1/(2\alpha-2\beta)}}\right)
  \leqslant \frac{\mathtt{C}_2}{\mathtt{C}_1}\frac{2^{M(\beta-\alpha)+\alpha+1}}{1-2^{\beta-\alpha}}+\Probtrue\left(2d(\hat{\theta}_n,\theta_0)\geqslant\varepsilon\right)+\Probtrue\left(r_n^\alpha R_n\geqslant K\right),
\end{equation}
for any $\varepsilon>0$ sufficiently small, any $K\geqslant0$ and $M\in\N$ sufficiently large such that $\mathtt{C}_12^{(M-1)\alpha}/2\geqslant K$. Since $\hat {\theta}_n$ converges in probability to $\theta_0$ by assumption, there exists $n_\delta\in\N$ such that
\begin{equation*}
	\Probtrue\left(2d(\hat{\theta}_n,\theta_0)\right)\leqslant\delta
\end{equation*}
holds for all $n\geqslant n_\delta$. And from $\hat {\theta}_n=\mathop{\textnormal{argmin}}_{\theta\in \Theta}\frac{1}{n}\sum_{i=1}^n m(\theta,X_i)$ follows $R_n=0$ such that
\begin{equation*}
	\Probtrue\left(r_n^\alpha R_n\geqslant K\right)=0.
\end{equation*}
Lastly, since $\alpha>\beta=1$, we can choose $M$ suffiently large such that
\begin{equation}
	\label{eq:delta_bound}
	\frac{\mathtt{C}_2}{\mathtt{C}_1}\frac{2^{M(1-\alpha)+\alpha+1}}{1-2^{1-\alpha}}=2^{M(1-\alpha)}\frac{K_\alpha\mathtt{C}_2}{\mathtt{C}_1}\leqslant\delta,
\end{equation}
where $K_\alpha=2^{2\alpha}/(2^{\alpha-1}-1)$. With~\eqref{eq:rate_theorem_result} it follows that
\begin{equation*}
	\Prob_0\left(r_nd(\hat\theta_n,\theta_0)>2^M\right)\leqslant2\delta,
\end{equation*}
or alternatively with $r_n=n^{1/(2\alpha-2\beta)}=n^{1/(2\alpha-2)}$ we have
\begin{equation}
	\label{eq:error_bound}
	\Prob_0\left(d(\hat\theta_n,\theta_0)\leqslant\frac{2^M}{n^{1/(2\alpha-2)}}\right)\geqslant 1-2\delta.
\end{equation}
Rearranging~\eqref{eq:delta_bound} gives
\begin{equation*}
	2^M\leqslant\left(\frac{K_\alpha\mathtt{C}_2}{\delta\mathtt{C}_1}\right)^{\frac{1}{\alpha-1}},
\end{equation*}
and inserting $\mathtt{C}_2=\mathtt{C}\|\mathtt{L}\|_{L^2(\Prob_0)}\sqrt{k}$ and then inserting everything into~\eqref{eq:error_bound} gives us the desired result:
\begin{equation*}
	\Probtrue\left(d(\hat\theta_n,\theta_0)\leqslant \left(\frac{\mathtt{C}K_\alpha\|\mathtt{L}\|_{L^2(\Prob_0)}\sqrt{k}}{\delta\mathtt{C}_1\sqrt{n}}\right)^{\frac{1}{\alpha-1}}\right)\geqslant 1-2\delta.
\end{equation*}
Plugging in the Euclidean metric $d(v,w):=\|v-w\|_2$ concludes the proof.
\end{proof}

\section{Diffusion models and the DDPM objective}
\label{app:diffusion_models_ddpm_objective}

\subsection{The forward and backward process}

Diffusion models are based on a forward process which turns data into noise and a reverse process which turns noise back into data. The forward process $(X_t)_{t\in[0,T]}$ for some $T>0$ is an OU process, that is it is a strong solution to the following SDE:
\begin{equation}
	\label{eq:forward_process}
	\dif X_t=-X_t\dif t+\sqrt{2}\dif B_t,\quad X_0 \sim\Probtrue.
\end{equation}
Here, $(B_t)_{t\geqslant0}$ is a $d$-dimensional standard Brownian motion and $\Probtrue$ is the data distribution from which one wishes to generate samples. It\^{o}'s formula~\citep[cf.~Theorem 5.10 in][]{LeG2016} applied to $F(t,X_t):=\exp(t)X_t$ gives
\begin{equation*}
	\exp(t)X_t=F(0,X_0)+\int_0^t\frac{\partial}{\partial s}F(s,X_s)\dif s+\int_0^t\frac{\partial}{\partial x}F(s,X_s)\dif X_s=X_0+\int_0^t\sqrt{2}\exp(s)\dif B_s,
\end{equation*}
and then multiplying both sides by $\exp(-t)$ yields
\begin{equation}
	\label{eq:OU}
	X_t=\exp(-t)X_0+\int_0^t\sqrt{2}\exp(-(t-s))\dif B_s.
\end{equation}
This can be further simplified, since the stochastic integral on the right-hand side of~\eqref{eq:OU} is a Gaussian random variable with mean zero and variance $1-\exp(-2t)$. Indeed, for a step function $\xi$ one easily verifies that
\begin{align*}
	\int_0^t\xi(s)\dif B_s\sim\mathcal{N}\left(0,\int_0^t\xi^2(s)\dif s\right),
\end{align*}
by simply using the definition of the stochastic integral for step functions and the properties of the Brownian motion. Then, since the space of step functions is dense in $L^2([0,t])$ and the map $\xi\mapsto\int_0^t\xi(s)\dif B_s$ is an isometry, hence continuous wrt the $L^2$-norm, for any $f\in L^2([0,t])$ there exists a sequence of step functions $(\xi_n)_{n\in\N}$ such that $\xi_n\rightarrow f$ in $L^2([0,t])$ and therefore $\int_0^t\xi_n(s)\dif B_s\rightarrow\int_0^tf(s)\dif B_s$ in $L^2(\Omega)$, which implies convergence in distribution. In total, this gives us that for any deterministic function $f\in L^2([0,t])$ it holds that
\begin{align*}
	\int_0^tf(s)\dif B_s\sim\mathcal{N}\left(0,\int_0^tf^2(s)\dif s\right).
\end{align*} In particular, since the function $s\mapsto\sqrt{2}\exp(-(t-s))$ is in $L^2([0,t])$, we have
\begin{align*}
	\int_0^t\sqrt{2}\exp(-(t-s))\dif B_s\sim\mathcal{N}\left(0,\int_0^t2\exp(-2(t-s))\dif s\right)=\mathcal{N}\left(0,1-\exp(-2t)\right).
\end{align*}
All in all, we have shown that $X_t=\exp(-t)X_0+\sqrt{1-\exp(-2t)}Z$ for $Z\sim\mathcal{N}(0,\mathbb{I}_d)$ independent of $X_0$, as was mentioned in Section~\ref{subsec:estimators}. We denote by $\Prob_t$ the distribution of $X_t$ and by $f_t$ its density wrt the Lebesgue measure. Note that $\Prob_T\approx\mathcal{N}(0,\mathbb{I}_d)$ for sufficiently large $T$ or more precisely $\lim_{t\rightarrow\infty}\Prob_t=\mathcal{N}(0,\mathbb{I}_d)$.

In order generate samples from $\Probtrue$, the forward diffusion given in~\eqref{eq:forward_process} is reversed in time. This time reversal is possible under some regularity assumptions~\citep{And1982} and the resulting backward process $(\widetilde{X}_t)_{t\in[0,T]}$ with $\widetilde{X}_t=X_{T-t}$ is the solution of the following SDE:
\begin{equation}
	\label{eq:reverse_process}
	\dif \widetilde{X}_t=\{2\nabla\log f_{T-t}(\widetilde{X}_t)+\widetilde{X}_t\}\dif t+\sqrt{2}\dif B_t,\quad \widetilde{X}_0\sim\Prob_T.
\end{equation}
Then, roughly speaking, simulating the backward process starting from $\widetilde{X}_0\sim\mathcal{N}(0,\mathbb{I}_d)$ outputs a sample $\widetilde{X}_T\sim\Probtrue$.

\begin{figure}[ht]
\centering
\begin{tikzpicture}
\begin{axis}[
    width=13cm,
    height=8cm,
    xmin=-4.2, xmax=4.5,
    ymin=-0.55, ymax=4.55,
    axis lines=left,
    xlabel={$x$},
    ylabel={time},
    xtick={-4,-2,0,2,4},
    ytick=\empty,
    clip=true,
]

\addplot[very thick,domain=-4:4,samples=220]
({x},{0.0 + 1.35*(1/sqrt(2*3.1416))*exp(-x^2/2)});
\node[anchor=west, font=\scriptsize] at (axis cs:3.15,0.18) {$t=0$};

\addplot[very thick,domain=-4:4,samples=220]
({x},
 {0.95
  + 1.35*(
    0.78*(1/sqrt(2*3.1416*0.96^2))*exp(-x^2/(2*0.96^2))
    + 0.22*(
        0.62*(1/sqrt(2*3.1416*0.72^2))*exp(-(x+0.95)^2/(2*0.72^2))
      + 0.38*(1/sqrt(2*3.1416*0.58^2))*exp(-(x-1.10)^2/(2*0.58^2))
      )
  )});
\node[anchor=west, font=\scriptsize] at (axis cs:3.15,1.10) {$t=0.25$};

\addplot[very thick,domain=-4:4,samples=220]
({x},
 {1.90
  + 1.35*(
    0.52*(1/sqrt(2*3.1416*0.90^2))*exp(-x^2/(2*0.90^2))
    + 0.48*(
        0.64*(1/sqrt(2*3.1416*0.64^2))*exp(-(x+1.35)^2/(2*0.64^2))
      + 0.36*(1/sqrt(2*3.1416*0.48^2))*exp(-(x-1.55)^2/(2*0.48^2))
      )
  )});
\node[anchor=west, font=\scriptsize] at (axis cs:3.15,2.05) {$t=0.5$};

\addplot[very thick,domain=-4:4,samples=220]
({x},
 {2.85
  + 1.35*(
    0.24*(1/sqrt(2*3.1416*0.84^2))*exp(-x^2/(2*0.84^2))
    + 0.76*(
        0.66*(1/sqrt(2*3.1416*0.56^2))*exp(-(x+1.75)^2/(2*0.56^2))
      + 0.34*(1/sqrt(2*3.1416*0.40^2))*exp(-(x-1.95)^2/(2*0.40^2))
      )
  )});
\node[anchor=west, font=\scriptsize] at (axis cs:3.15,3.00) {$t=0.75$};

\addplot[very thick,domain=-4:4,samples=220]
({x},
 {3.80
  + 1.35*(
      0.68*(1/sqrt(2*3.1416*0.58^2))*exp(-(x+1.85)^2/(2*0.58^2))
    + 0.32*(1/sqrt(2*3.1416*0.34^2))*exp(-(x-2.15)^2/(2*0.34^2))
  )});
\node[anchor=west, font=\scriptsize] at (axis cs:3.15,3.95) {$t=1$};

\pgfplotsextra{

\tikzset{
  scatdot/.style={
    fill=red!95!magenta,
    fill opacity=0.68,
    draw=red!95!magenta,
    draw opacity=0.68
  }
}


\pgfmathsetseed{1001}
\foreach \k in {1,...,90}{
  \pgfmathsetmacro{\uone}{rnd}
  \pgfmathsetmacro{\utwo}{rnd}
  \pgfmathsetmacro{\z}{sqrt(-2*ln(max(\uone,0.0001)))*cos(360*\utwo)}
  \pgfmathsetmacro{\xx}{0 + 1.00*\z}
  \pgfmathsetmacro{\yy}{0.00 + 0.11*(2*rnd-1)}
  \fill[scatdot] (axis cs:\xx,\yy) circle[radius=0.75pt];
}

\pgfmathsetseed{1002}
\foreach \k in {1,...,90}{
  \pgfmathsetmacro{\mix}{rnd}
  \pgfmathsetmacro{\uone}{rnd}
  \pgfmathsetmacro{\utwo}{rnd}
  \pgfmathsetmacro{\z}{sqrt(-2*ln(max(\uone,0.0001)))*cos(360*\utwo)}
  \pgfmathparse{ifthenelse(\mix < 0.78, 0, ifthenelse(\mix < 0.78 + 0.22*0.62, -0.95, 1.10))}
  \let\mu\pgfmathresult
  \pgfmathparse{ifthenelse(\mix < 0.78, 0.96, ifthenelse(\mix < 0.78 + 0.22*0.62, 0.72, 0.58))}
  \let\sigma\pgfmathresult
  \pgfmathsetmacro{\xx}{\mu + \sigma*\z}
  \pgfmathsetmacro{\yy}{0.95 + 0.10*(2*rnd-1)}
  \fill[scatdot] (axis cs:\xx,\yy) circle[radius=0.75pt];
}

\pgfmathsetseed{1003}
\foreach \k in {1,...,88}{
  \pgfmathsetmacro{\mix}{rnd}
  \pgfmathsetmacro{\uone}{rnd}
  \pgfmathsetmacro{\utwo}{rnd}
  \pgfmathsetmacro{\z}{sqrt(-2*ln(max(\uone,0.0001)))*cos(360*\utwo)}
  \pgfmathparse{ifthenelse(\mix < 0.52, 0, ifthenelse(\mix < 0.52 + 0.48*0.64, -1.35, 1.55))}
  \let\mu\pgfmathresult
  \pgfmathparse{ifthenelse(\mix < 0.52, 0.90, ifthenelse(\mix < 0.52 + 0.48*0.64, 0.64, 0.48))}
  \let\sigma\pgfmathresult
  \pgfmathsetmacro{\xx}{\mu + \sigma*\z}
  \pgfmathsetmacro{\yy}{1.90 + 0.09*(2*rnd-1)}
  \fill[scatdot] (axis cs:\xx,\yy) circle[radius=0.75pt];
}

\pgfmathsetseed{1004}
\foreach \k in {1,...,86}{
  \pgfmathsetmacro{\mix}{rnd}
  \pgfmathsetmacro{\uone}{rnd}
  \pgfmathsetmacro{\utwo}{rnd}
  \pgfmathsetmacro{\z}{sqrt(-2*ln(max(\uone,0.0001)))*cos(360*\utwo)}
  \pgfmathparse{ifthenelse(\mix < 0.24, 0, ifthenelse(\mix < 0.24 + 0.76*0.66, -1.75, 1.95))}
  \let\mu\pgfmathresult
  \pgfmathparse{ifthenelse(\mix < 0.24, 0.84, ifthenelse(\mix < 0.24 + 0.76*0.66, 0.56, 0.40))}
  \let\sigma\pgfmathresult
  \pgfmathsetmacro{\xx}{\mu + \sigma*\z}
  \pgfmathsetmacro{\yy}{2.85 + 0.085*(2*rnd-1)}
  \fill[scatdot] (axis cs:\xx,\yy) circle[radius=0.75pt];
}

\pgfmathsetseed{1005}
\foreach \k in {1,...,90}{
  \pgfmathsetmacro{\mix}{rnd}
  \pgfmathsetmacro{\uone}{rnd}
  \pgfmathsetmacro{\utwo}{rnd}
  \pgfmathsetmacro{\z}{sqrt(-2*ln(max(\uone,0.0001)))*cos(360*\utwo)}
  \pgfmathparse{ifthenelse(\mix < 0.68, -1.85, 2.15)}
  \let\mu\pgfmathresult
  \pgfmathparse{ifthenelse(\mix < 0.68, 0.58, 0.34)}
  \let\sigma\pgfmathresult
  \pgfmathsetmacro{\xx}{\mu + \sigma*\z}
  \pgfmathsetmacro{\yy}{3.80 + 0.08*(2*rnd-1)}
  \fill[scatdot] (axis cs:\xx,\yy) circle[radius=0.75pt];
}

} 

\end{axis}
\end{tikzpicture}

\caption{A qualitative depiction of the evolution induced by the reverse SDE in terms of the distribution $\Prob_{\theta,t}$ and data $\{X^{(i)}_t\}_{i=1}^n$ at time $t\in\{0,0.25,0.5,0.75,1\}$. In contrast to classical LMC, in the reverse SDE~\eqref{eq:reverse_process} both the data and the score function evolve over time, such that the necessity of traversing low-density regions does not pose a problem. This is a crucial difference that is responsible for the superior performance of diffusion models in contrast to classical LMC.}
\label{fig:diffusion-bimodal-particles}
\end{figure}

\subsection{The DDPM objective}

The score function $\nabla\log f_t$ which appears in the reverse process~\eqref{eq:reverse_process} is unknown and needs to be learned for every $t\in[0,T]$. This is done by minimizing the following objective:
\begin{equation}
	\label{eq:ddpm_objective}
	\ell\left((s_\theta(t))_{t\in[0,T]}\right)=\int_0^T\E_{X_t}\left[\|\nabla\log f_t(X_t)-s_\theta(X_t,t)\|_2^2\right]\dif t
\end{equation}
At first glance, the above objective is simply the Fisher divergence as in~\eqref{eq:fisher_distance} integrated over time. It would be intuitive to perform the same integration by parts in order to obtain a tractable objective. But the fact that $X_t$ is a noisy version of $X_0$ shows that the given setting is exactly the one of DSM. In other words, DSM is not a surrogate method to estimate DDPMs, but instead vanilla SM based on $X_t$ is equivalent to DSM based on $X_0$ and the noising mechanism induced by the forward process. In~\eqref{eq:ddpm_objective}, $s_\theta$ is generally a neural network with parameters $\theta$. But note that
\begin{equation*}
		\label{eq:convolved_density_forward_process}
		f_t(x)\propto\int_\samplespace f_0(\exp(t)(x-\tau))\varphi\left((1-\exp(-2t))^{-1/2}\tau\right)\dif \tau,
	\end{equation*}
such that the score function $\nabla\log f_t$ depends on the true but unknown data distribution $\Probtrue$ through its density $f_0$. The same holds for the expectation inside the integral in~\eqref{eq:ddpm_objective}, since $X_t$ is distributed according to $\Prob_t$ which also depends on $\Probtrue$. As a consequence, the DDPM objective as given in~\eqref{eq:ddpm_objective} is intractable, since we have access to $\Probtrue$ only through samples $\{X_i\}_{i=1}^n$. However, DSM as suggested by~\citet{Vin2011} solves exactly this problem.

The expectation inside the integral in~\eqref{eq:ddpm_objective} can be expanded in the same way as the Fisher divergence in~\eqref{eq:fisher_distance_2} in Section~\ref{subsec:estimators}:
\begin{equation}
	\label{eq:ddpm_expansion}
	\E_{X_t}\left[\|\nabla\log f_t(X_t)-s_\theta(X_t,t)\|^2\right]=\E_{X_t}\left[\|s_\theta(X_t,t)\|^2\right]-2\E_{X_t}\left[\langle\nabla\log f_t(X_t), s_\theta(X_t,t)\rangle\right]+\mathtt{C}(\Prob_t).
\end{equation}
Then, the scalar product term can be rewritten as follows:
\begin{align*}
	\E_{X_t}\left[\langle\nabla\log f_t(X_t), s_\theta(X_t,t)\rangle\right]&=\E_{X_t}\left[\frac{\langle\nabla\E_{X_0}[f_t(X_t\mid X_0)], s_\theta(X_t,t)\rangle}{f_t(X_t)}\right]\\
	&=\E_{X_0}\left[\E_{X_t}\left[\frac{\langle\nabla f_t(X_t\mid X_0), s_\theta(X_t,t)\rangle}{f_t(X_t\mid X_0)f_t(X_t)}f_t(X_t\mid X_0)\right]\right]\\
	&=\E_{X_0}\left[\E_{X_t\mid X_0}\left[\langle\nabla\log f_t(X_t\mid X_0), s_\theta(X_t,t)\rangle\right]\right].
\end{align*}
Here, we denote $f_t(\cdot\mid X_0)$ the density of $X_t$ conditional on $X_0$. The first equality is based on the definition of the score function $\nabla\log f_t$ and the identity $f_t(X_t)=\E_{X_0}[f_t(X_t\mid X_0)]$ which is based on the law of total expectation. The second equality follows with Fubini and the third amounts to cancelling densities and thus changing the measure wrt which one integrates.
Next, we insert this back into~\eqref{eq:ddpm_expansion}, discart the constant $\mathtt{C}(\Prob_t)$ independent of $\theta$ and use the fact that
\begin{equation*}
	\nabla\log f_t(X_t\mid X_0)=-\frac{X_t-\exp(-t)X_0}{1-\exp(-2t)}=-\frac{Z_t}{\sqrt{1-\exp(-2t)}},
\end{equation*}
since $X_t\mid X_0\sim\mathcal{N}(\exp(-t)X_0,(1-\exp(-2t))I)$. This gives us
\begin{equation}
	\label{eq:ddpm_objective_2}
	\ell\left((s_\theta(t))_{t\in[0,T]}\right)=\int_0^T\E_{X_0}\left[\E_{X_t\mid X_0}\left[\|s_\theta(X_t,t)\|^2-\frac{2}{\sqrt{1-\exp(-2t)}}\langle s_\theta(X_t,t),Z_t\rangle\right]\right]\dif t,
\end{equation}
which coincides with~\eqref{eq:ddpm_loss}, except for the fact that $s_{\theta,t}$ in~\eqref{eq:ddpm_loss} is based on the density $f_\theta$ of some probability law $\Prob_\theta$ and $s_\theta$ is a neural network that is trained to approximate the score function $\nabla\log f_t$.

\section{Omitted proofs}

\subsection{Proof of Lemma~\ref{lemma:embedded_one_dim_sym_mixture}}\label{app:lemma_embedded_one_dim_sym_mixture}

	1. For arbitrary but fixed $\mu_1,\mu_2\in\R^d$ with $\mu_1\not=\mu_2$, let $\mathtt{m}:=(\mu_1+\mu_2)/2$ denote the midpoint and $\Delta:=\|\mu_1-\mu_2\|_2/2$ half the distance between the two means. Then, there exists an orthonormal matrix $Q\in\R^{d\times d}$ such that $Q(\mu_1-\mathtt{m})=\Delta e_1$. Indeed, the vector
	\begin{equation*}
		u:=\frac{\mu_1-\mu_2}{\|\mu_1-\mu_2\|_2}
	\end{equation*}
	can be extended to an orthonormal basis $\{u,v_2,\ldots,v_d\}$ of $\R^d$ via the Gram-Schmidt procedure, such that the matrix $Q:=[u,v_2,\ldots,v_d]^{\top}$ is orthonormal and satisfies $Qu=e_1$. Define the isometry $T:\R^d\to\R^d$ by $T(x):=Q(x-\mathtt{m})$. Then, one can check that
	\begin{align*}
		T(\mu_1) &= Q(\mu_1-\mathtt{m}) = \Delta e_1,\\
		T(\mu_2) &= Q(\mu_2-\mathtt{m}) = -\Delta e_1.
	\end{align*}
	With this, for $X\sim\Prob_\theta$ we have $X\stackrel{d}{=}\theta Z+(1-\theta)Y$ with $Z\sim\mathcal{N}(\mu_1,\mathbb{I}_d)$ and $Y\sim\mathcal{N}(\mu_2,\mathbb{I}_d)$. Then, by definition of the pushforward measure we have that $T(X)\sim\mathbb{Q}_\theta=\Prob_\theta\circ T^{-1}$ and with the linearity of $T$ we can write
	\begin{equation*}
		T(X)\stackrel{d}{=}\theta T(Z)+(1-\theta)T(Y),
	\end{equation*}
	where $T(Z)\sim\mathcal{N}(T(\mu_1),\mathbb{I}_d)$ and $T(Y)\sim\mathcal{N}(T(\mu_2),\mathbb{I}_d)$, using the orthonormality of $Q$. Hence, the pushforward $\mathbb{Q}_\theta$ is a mixture of two isotropic Gaussians with means $T(\mu_1)$ and $T(\mu_2)$ and mixture weight $\theta$. To see that $T$ is an isometry, note that it preserves distances: $\|T(x)-T(y)\|_2=\|Q(x-\mathtt{m})-Q(y-\mathtt{m})\|_2=(x-y)Q^{\top}Q(x-y)=\|(x-y)\|_2$ for all $x,y\in\R^d$. This also implies that the distance between the means of the mixture components is preserved under $T$, that is $\|T(\mu_1)-T(\mu_2)\|_2=\|\mu_1-\mu_2\|_2=2\Delta$. This proves the first statement.\\

	\noindent
	2. With $T(\mu_1)=\Delta e_1$, we can write
	\begin{align*}
		\phi(x-T(\mu_1))&=(2\pi)^{-d/2}\exp\left(-\frac{1}{2}\|x-\Delta e_1\|_2^2\right)\\
		&=(2\pi)^{-d/2}\exp\left(-\frac{1}{2}(x_1-\Delta)^2-\frac{1}{2}\|x_{2:d}\|_2^2\right)\\
		&=\varphi(x_1-\Delta)\phi(x_{2:d}).
	\end{align*}
	Analogously, we have $\phi(x-T(\mu_2))=\varphi(x_1+\Delta)\phi(x_{2:d})$. Hence, we can write	
	\begin{align*}
		q_\theta(x)&=\theta \phi(x-T(\mu_1))+(1-\theta)\phi(x-T(\mu_2))\\
		&=\theta \varphi(x_1-\Delta)\phi(x_{2:d})+(1-\theta)\varphi(x_1+\Delta)\phi(x_{2:d})\\
		&=\left(\theta \varphi(x_1-\Delta)+(1-\theta)\varphi(x_1+\Delta)\right)\phi(x_{2:d}),
	\end{align*}
	which is the desired factorization of $q_\theta$. This clearly implies that $\mathbb{Q}_\theta=\mathbb{Q}_{\theta,1}\otimes\mathcal{N}(0,\mathbb{I}_{d-1})$ with $\mathbb{Q}_{\theta,1}$ being the marginal distribution of $\mathbb{Q}_\theta$ in the first coordinate.\qed

\subsection{Proof of Proposition~\ref{proposition:gaussian_isoperimetric_constant}}\label{app:proposition_gaussian_isoperimetric_constant}

Since translations are isometries, it follows from 1. of Lemma~\ref{lemma:properties_isoperimetric_constants} that the isoperimetric constant $\CIP(\Prob)$ of the Gaussian distribution $\Prob=\mathcal{N}(\mu,\Sigma)$ does not depend on the mean $\mu$. Therefore, we can assume without loss of generality that $\Prob=\mathcal{N}(0,\Sigma)$. Then, since half-spaces $H_{u,t}:=\{x\in\R^d\mid\langle u,x\rangle\leqslant t\}$ with $u\in\mathbb{S}^{d-1}:=\{x\in\R^d:\|x\|_2=1\}$ and $t\in\R$ are extremal sets~\citep{Bor1975,SudTsi1978}, we can compute
\begin{align*}
	\CIP(\Prob)&=\inf_{\substack{u\in\mathbb{S}^{d-1}\\t\in\R}}\frac{\Prob^+(H_{u,t})}{\Prob(H_{u,t})\wedge\Prob(H_{u,t}^\complement)}=\inf_{u\in\mathbb{S}^{d-1}}\inf_{t\in\R}\frac{\Prob^+(H_{u,t})}{\Prob(H_{u,t})\wedge\Prob(H_{u,t}^\complement)}.
\end{align*}
The second equality follows from the elementary identity 
\begin{equation*}
    \inf_{(x,y)\in \mathcal{X}\times\mathcal{Y}} f(x,y)=\inf_{x\in \mathcal{X}}\inf_{y\in \mathcal{Y}} f(x,y),
\end{equation*}
valid for any function $f:\mathcal{X}\times\mathcal{Y}\rightarrow\overline{\R}$. This follows immediately from the definition of the infimum as the greatest lower bound. With $\langle u, X\rangle\sim\mathcal{N}(0,\langle u,\Sigma u\rangle)$ the boundary measure of $H_{u,t}$ can be computed as
\begin{align*}
	\Prob^+(H_{u,t}):=&\lim_{\varepsilon\downarrow0}\frac{\Prob(H_{u,t}^\varepsilon)-\Prob(H_{u,t})}{\varepsilon}\\
	=&\lim_{\varepsilon\downarrow0}\frac{\Phi\left(\langle u,\Sigma u\rangle^{-1/2}(t+\varepsilon)\right)-\Phi\left(\langle u,\Sigma u\rangle^{-1/2}t\right)}{\varepsilon}\\
	=&\langle u,\Sigma u\rangle^{-1/2}\varphi\left(t\langle u,\Sigma u\rangle^{-1/2}\right).
\end{align*}
The last equality follows from the definition of the derivative and the fact that $\Phi$ is continuously differentiable. With this we arrive at
\begin{equation*}
	\CIP(\Prob)=\inf_{u\in\mathbb{S}^{d-1}}\frac{1}{\sigma_u}\left\{\inf_{t\in\R}\frac{\varphi\left(t/\sigma_u\right)}{\Phi\left(t/\sigma_u\right)\wedge\left\{1-\Phi\left(t/\sigma_u\right)\right\}}\right\},
\end{equation*}
where $\sigma_u^2:=\langle u,\Sigma u\rangle$.

To compute the inner infimum we can fix $u\in\mathbb{S}^{d-1}$ and treat $\sigma_u^2$ as a constant. For ease of notation we therefore perform the change of variables $x=t/\sigma_u$. Then, since $\varphi$ is symmetric and unimodal, we need to compute $\inf_{x\leqslant 0}\lambda(x)$, where $\lambda$ is the inverse Mills ratio
	\begin{align*}
		\lambda:\R\rightarrow\R_+,\quad
		x\mapsto\frac{\varphi(x)}{\Phi(x)}.
	\end{align*}
	One can show that $\lambda'<0$ (or alternatively $\lambda'>0$ when $\lambda(x):=\varphi(x)/(1-\Phi(x))$ as done by~\citet{Sam1953}) and together with the fact that $\lambda$ is continuous, it follows that the minimum is attained at $x=0$.

Indeed, the derivative of $\lambda$ is 
	\begin{align*}
		\lambda'(x)&=\frac{\varphi'(x)\Phi(x)-\varphi(x)\Phi'(x)}{\Phi^2(x)}
		=\frac{-x\varphi(x)\Phi(x)-\varphi(x)\varphi(x)}{\Phi^2(x)}
		=-\lambda(x)(x+\lambda(x)).
	\end{align*}
	Since $\lambda>0$ it remains to be shown that $x+\lambda(x)>0$ for $x\leqslant0$. To that end, note that
	\begin{align*}
		\E[X \mid X<x]=\frac{1}{\Phi(x)}\int_{-\infty}^xt\varphi(t)\dif t
		=\frac{1}{\Phi(x)\sqrt{2\pi}}\int_{-\infty}^xt\exp(-t^2/2)\dif t,
	\end{align*}
	for $X\sim\Prob$. The substitution $u(t)=t^2/2$ with $\dif u=t\dif t$ together with the identity
	\begin{equation*}
		\int_a^b f'(t)\dif t=f(b)-f(a)=-(f(a)-f(b))=-\int_b^a f'(t)\dif t
	\end{equation*}
	then yields
	\begin{equation*}
		\E[X \mid X<x]=-\frac{1}{\Phi(x)\sqrt{2\pi}}\int_{x^2/2}^\infty\exp(-u)\dif u
		=-\frac{\varphi(x)}{\Phi(x)},
	\end{equation*}
	In total we obtain $\E[X | X<x]=-\lambda(x)$ and therefore
	\begin{equation*}
		x+\lambda(x)=x-\E[X | X<x]>0,
	\end{equation*}
	because trivially it holds that $\E[X | X<x]<x$. From $\lambda'<0$ we can conclude that $\lambda$ is monotonically decreasing and therefore attains its infimum at the right boundary of its domain, that is, $x=0$.
	
Lastly, it is well known that the Rayleigh quotient of a matrix $M$ is lower and upper bounded by its smallest and largest eigenvalues respectively:
\begin{equation*}
	\lambda_{\min}(M)\leqslant\frac{\langle x,Mx\rangle}{\langle x,x\rangle}\leqslant \lambda_{\max}(M).
\end{equation*}
Combining this with the previous part gives us
\begin{equation*}
	\CIP(\Prob)=\frac{\lambda(0)}{\sqrt{\lambda_{\max}(\Sigma)}}=\frac{2\varphi(0)}{\sqrt{\lambda_{\max}(\Sigma)}},
\end{equation*}
which concludes the proof.\qed

\subsection{Proof of Lemma~\ref{lemma:properties_isoperimetric_constants}}\label{app:lemma_properties_isoperimetric_constants}

1. Let $B\in\mathcal{B}(\R^d)$ be a Borel set and $d(x,y):=\|x-y\|_2$ denote the Euclidean metric. Since $T$ is an isometry it is continuous and thus measurable, such that $A:=T^{-1}(B)$ is well-defined and a Borel set as well. By definition of $\mathbb{Q}$ it holds that $\Prob(A)=\mathbb{Q}(B)$. And using the fact that $d(T(x),T(y))=d(x,y)$ for all $x,y\in\R^d$, we have for any $\varepsilon>0$ that
	\begin{align*}
		A_\varepsilon&:=\left\{x\in\R^d:\inf_{y\in A}d(x,y)\leqslant\varepsilon\right\}
		=\left\{x\in\R^d:\inf_{y\in A}d(T(x),T(y))\leqslant\varepsilon\right\}\\
		&=\left\{x\in\R^d:\inf_{z\in B}d(T(x),z)\leqslant\varepsilon\right\}
		=\left\{x\in\R^d:T(x)\in B_\varepsilon\right\}\\
		&=T^{-1}(B_\varepsilon).
	\end{align*}
	With Definition~\ref{definition:minkowski_content_isoperimetric_constant} it follows that
	\begin{equation*}
		\Prob^+(A)=\lim_{\varepsilon\downarrow0}\frac{\Prob(A_\varepsilon)-\Prob(A)}{\varepsilon}=\lim_{\varepsilon\downarrow0}\frac{\mathbb{Q}(B_\varepsilon)-\mathbb{Q}(B)}{\varepsilon}=\mathbb{Q}^+(B),
	\end{equation*}
	such that for every Borel set $B\in\mathcal{B}(\R^d)$ we obtain
	\begin{equation*}
		\frac{\Prob^+(A)}{\Prob(A)\wedge \Prob(A^\complement)}=\frac{\mathbb{Q}^+(B)}{\mathbb{Q}(B)\wedge \mathbb{Q}(B^\complement)}.
	\end{equation*}
	Since $T$ is bijective, taking the infimum over all Borel sets $B\in\mathcal{B}(\R^d)$ is equivalent to taking the infimum over all Borel sets $A=T^{-1}(B)\in\mathcal{B}(\R^d)$. This shows that $\CIP(\Prob)= \CIP(\mathbb{Q})$.\\
	
	\noindent
	2. We will show $\CIP(\Prob_\theta)=\CIP(\mathbb{Q}_\theta)=\CIP(\mathbb{Q}_{\theta,1})$. The first equality follows immediately from 1. together with Lemma~\ref{lemma:embedded_one_dim_sym_mixture}. For the second equality, we require Theorem 1.1 from~\citet{Lie2026}. Among other things, it states that extremal sets of any measure $\Prob_\theta$ from the family~\ref{eq:gm} are half-spaces perpendicular to the vector
	\begin{equation*}
		\nu:=\frac{\mu_1-\mu_2}{\|\mu_1-\mu_2\|_2}.
	\end{equation*}
	In the case of the distribution $\mathbb{Q}_\theta$, we have $\nu=e_1$ and we can restrict the infimum in the definition of $\CIP(\mathbb{Q}_\theta)$ to sets of the form $H_t:=(-\infty,t]\times\R^{d-1}$ for $t\in\R$. This gives
	\begin{align*}
		\CIP(\mathbb{Q}_\theta)
		=
		\inf_{t\in\R}
		\frac{\mathbb{Q}_\theta^+(H_t)}{\mathbb{Q}_\theta(H_t)\wedge \mathbb{Q}_\theta(H_t^\complement)}
		=
		\inf_{t\in\R}
		\frac{\mathbb{Q}_{\theta,1}^+((-\infty,t])}{\mathbb{Q}_{\theta,1}((-\infty,t])\wedge \left(1-\mathbb{Q}_{\theta,1}((-\infty,t])\right)}
		=
		\CIP(\mathbb{Q}_{\theta,1}),
	\end{align*}
	which concludes the proof.\qed

\subsection{Proof of Proposition~\ref{proposition:isoperimetric_constant}}\label{appendix:proof_isoperimetric_constant}

Let $(\Prob_\theta)_{\theta\in\Theta}$ be the Gaussian mixture model~\ref{eq:gm} with parameter space $\Theta=[0,1]$ and nuisance parameters $\mu_1,\mu_2\in\R^d$. We split the proof into multiple steps.\\

\noindent
\textbf{Step 1: Reduction to the symmetric one-dimensional case.} We will first show the claim for the case $d=1$ and $\mu_1=-\mu_2=\mu$ for some $\mu>0$. In the last step we will extend the result to the general case on $\R^d$. Therefore, for now we fix
\begin{equation}
	\label{eq:one_dim_embedding}
	\Prob_\theta:=\theta\mathcal{N}(\mu,1)+(1-\theta)\mathcal{N}(-\mu,1),
\end{equation}
for $\mu>0$ and we denote its density by $f_\theta$ and its distribution function by $F_\theta$.\\

\noindent
\textbf{Step 2: Simplification via symmetry and $\CIP(\Prob_{1/2})=2\varphi(\mu)$.} Since $f_\theta$ is continuous and strictly positive on $\R$, the essential infimum in~\eqref{eq:real_line_isoperimetric_constant} reduces to the usual infimum. Then, it is sufficient to show that
\begin{equation}
	\label{eq:lambda_bound}
	\lambda(\theta,x):=\frac{f_\theta(x)}{F_\theta(x)}\geqslant 2\varphi(\mu),
\end{equation}
for all $(\theta,x)\in\Omega\subseteq\Theta\times\R$, where $\Omega:=\{(\theta,x)\in\Theta\times\R:F_\theta(x)\leqslant 1/2\}$. Indeed, the symmetric mixture $\Prob_{1/2}$ has the isoperimetric constant
\begin{equation}
	\label{eq:liehr_result}
	 \CIP(\Prob_{1/2})=2\varphi(\mu),
\end{equation}
which follows from Theorem 1.2 in recent work by~\citet{Lie2026}. Therefore, the claim follows if we can show that
\begin{equation}
	\label{eq:lambda_bound_2}
	\frac{f_\theta(x)}{F_\theta(x)\wedge\{1-F_\theta(x)\}}\geqslant 2\varphi(\mu)=\lambda(1/2,0),
\end{equation}
for all $(\theta,x)\in\Theta\times\R$. Restricting to the subset $\Omega\subseteq\Omega\times\R$, in order to avoid a case-by-case analysis wrt the denominator on the left-hand side of~\eqref{eq:lambda_bound_2}, is possible due to the following symmetry: the lower tail probability $\Prob_\theta(X\leqslant x)$ is equal to the upper tail probability $\Prob_{1-\theta}(X\geqslant -x)$. More precisely, $\varphi(x-\mu)=\varphi(-x+\mu)$ implies $f_\theta(x)=f_{1-\theta}(-x)$ and $F_\theta(x)=1-F_{1-\theta}(-x)$, such that $\lambda(\theta,x)=\lambda(1-\theta,-x)$. Then, for every tuple $(\theta,x)\in\Omega^\complement$ we know that $1-F_\theta(x)\leqslant 1/2$ and with the preceeding identity it follows that $F_{1-\theta}(-x)\leqslant 1/2$, thus $(1-\theta,-x)\in\Omega$ and $\lambda(\theta,x)=\lambda(1-\theta,-x)$. In other words, for every $(\theta,x)\in\Omega^\complement$ there exists a tuple $(1-\theta,-x)\in\Omega$ with $\lambda(\theta,x)=\lambda(1-\theta,-x)$, which implies that the infimum of $\lambda$ over $\Theta\times\R$ is attained on $\Omega$.\\

\noindent
\textbf{Step 3: Fix $x\in\R$ and determine boundary solution $\theta_x$.} Let us proceed to prove~\eqref{eq:lambda_bound}. To that end, let $x\in\R$ be arbitrary but fixed and define the terms $A=\varphi(x-\mu)$, $B=\varphi(x+\mu)$, $C=\Phi(x-\mu)=F_1(x)$ and $D=\Phi(x+\mu)=F_0(x)$, independent of $\theta$. Then, we are interested in finding a minimizer $\theta_x$ of the map
	\begin{equation*}
		\theta\mapsto \lambda_x(\theta):=\lambda(\theta,x)=\frac{B+\theta(A-B)}{D+\theta(C-D)},
	\end{equation*}
with domain $\Omega_x:=\{\theta\in\Theta:F_\theta(x)\leqslant 1/2\}$. Due to the monotonicity of $\Phi$, which implies that $C<D$, the set $\Omega_x$ must have one of the following shapes:
\begin{align}
	\label{eq:omega_empty}
	\Omega_x&=\emptyset \text{ if } x>\mu \text{ such that } 1/2<C=F_1(x)\leqslant F_\theta(x) \text{ for all } \theta\in\Theta,\tag{$\emptyset$}\\
	\label{eq:omega_interval}
	\Omega_x&=[\mathtt{c},1] \text{ for some } \mathtt{c}\in[0,1] \text{ if } |x|\leqslant\mu \text{ such that } F_1(x)=C\leqslant 1/2\leqslant D=F_0(x),\tag{$\mathtt{int}$}\\
	\label{eq:omega_full}
	\Omega_x&=[0,1] \text{ if } x<-\mu \text{ such that } F_1(x)=C<D=F_0(x)<1/2.\tag{$\Theta$}
\end{align}
In order to find the minimizer $\theta_x$, we compute the derivative
	\begin{equation}
		\label{eq:lambda_x_derivative}
		\theta\mapsto \partial_\theta\lambda_x(\theta)=\frac{AD-BC}{(D+\theta(C-D))^2}.
	\end{equation}
Since $0<(D+\theta(C-D))^2\leqslant 1/4$ by definition of $\Omega_x$, as well as $D>0$ and $C-D<0$ due to the monotonicity of $\Phi$, the denominator in~\eqref{eq:lambda_x_derivative} is monotonically decreasing and the sign of the numerator $AD-BC$ determines whether $\partial_\theta\lambda_x$ is monotonically increasing or decreasing. Then, for $\theta_2>\theta_1$ the mean value theorem implies the existence of $\widetilde{\theta}\in(\theta_1,\theta_2)$ with
	\begin{equation*}
		\lambda_x(\theta_2)-\lambda_x(\theta_1)=\partial_\theta\lambda_x(\widetilde{\theta})(\theta_2-\theta_1),
	\end{equation*}
	which shows that $\lambda_x$ is monotone as well. And since we know from Proposition~\ref{proposition:gaussian_isoperimetric_constant} that the inverse Mill's ratio $x\mapsto\varphi(x)/\Phi(x)$ is monotonically decreasing, from $x-\mu< x+\mu$ it follows that
	\begin{equation*}
		\lambda_x(0)=\frac{B}{D}=\frac{\varphi(x+\mu)}{\Phi(x+\mu)}\leqslant \frac{\varphi(x-\mu)}{\Phi(x-\mu)}=\frac{A}{C}=\lambda_x(1),
	\end{equation*}
	which implies that $\lambda_x$ is monotonically increasing and therefore can only attain its infimum on the left boundary point of $\Omega_x$.

	In the case~\eqref{eq:omega_empty}, there is nothing to show. Disregarding the monotonicity of $\lambda_x$ for a moment, the boundary points $\{0,1\}$ appearing in both cases~\eqref{eq:omega_interval} and~\eqref{eq:omega_full} are not candidates, since for any $\mu>0$ it holds that
	\begin{equation*}
		\CIP(\Prob_0)=\CIP(\Prob_1)=2\varphi(0)>2\varphi(\mu)=\CIP(\Prob_{1/2}),
	\end{equation*}
	where the equalities on the left-hand side follow from Proposition~\ref{proposition:gaussian_isoperimetric_constant} and the identity on the right-hand side from~\eqref{eq:liehr_result}, which was taken from~\citet{Lie2026}. Therefore, the only candidate for the minimizer is given by $\theta_x=\mathtt{c}$ in case~\eqref{eq:omega_interval} holds and because $\theta_x$ lies on the boundary of $\Omega_x$, it follows that $F_{\theta_x}(x)=1/2$.\\

	\noindent
	\textbf{Step 4: Find minimizer of $x\mapsto\lambda_x(\theta_x)$.} In the previous step we have shown two things: First, for any $|x|>\mu$ there does not exist a $\theta\in\Theta$ such that $(\theta,x)$ minimizes $\lambda$. By contraposition, this tells us that for all $x\in\R$ if there exists a $\theta\in\Theta$ such that $(\theta,x)$ minimizes $\lambda$, then $|x|\leqslant\mu$ most hold. This coincides with the intuition that the minimizer in $x$ has to lie in the interval $[0,\mu]$ when $\theta\in[1/2,1]$, since $\CIP(\Prob_{1/2})=2f_{1/2}(0)=2\varphi(\mu)$ and $\CIP(\Prob_1)=2f_1(\mu)=2\varphi(0)$. Second, when $|x|\leqslant\mu$, it holds that
	\begin{equation*}
		\lambda(\theta,x)\geqslant \lambda_x(\theta_x)=\frac{f_{\theta_x}(x)}{1/2}=2f_{\theta_x}(x),
	\end{equation*}
	for all $\theta\in\Theta$ and the infimum of $\lambda$ over $\Omega$ is attained at $(\theta_x,x)$ with $F_{\theta_x}(x)=1/2$.
	
	To finish the proof, we will show that $x\mapsto\lambda_x(\theta_x)=2f_{\theta_x}(x)$ is minimized at $x=0$ for all $x\in\mathbb{R}$ with $|x|\leqslant\mu$ and that $\lambda_0(\theta_0)=2f_{\theta_0}(0)=2\varphi(\mu)$ with $\theta_0=1/2$. To that end, first note that $F_{\theta_x}(x)=1/2$ gives
	\begin{equation}
		\label{eq:theta_x}
		\theta_x=\frac{1/2-D}{C-D}=\frac{\Phi(x+\mu)-1/2}{\Phi(x+\mu)-\Phi(x-\mu)}=\frac{\tau(x+\mu)}{\tau(x+\mu)+\tau(\mu-x)},
	\end{equation}
	where $\tau(t):=\Phi(t)-1/2$. Inserting $x=0$ into the above, one can check that $\theta_0=1/2$. Next, plugging~\eqref{eq:theta_x} into $f_{\theta_x}(x)$ yields
	\begin{equation*}
		f_{\theta_x}(x)=\frac{\tau(x+\mu)\varphi(x-\mu)+\tau(\mu-x)\varphi(x+\mu)}{\tau(x+\mu)+\tau(\mu-x)}=:\frac{n(x)}{d(x)},
	\end{equation*}
	where we use $n(x)$ and $d(x)$ to denote the numerator and denominator, respectively. After checking that $f_{\theta_0}(0)=\varphi(\mu)$, it remains to be shown that $x=0$ is the minimizer.
	
	To do so, we will show that $f'_{\theta_0}(0)=0$ and $f'_{\theta_x}(x)>0$ for all $x\in(0,\mu)$. Note, that is sufficient to only consider $x\in(0,\mu)$ due to the symmetry of $f_{\theta_x}$, that is, $f_{\theta_x}(x)=f_{\theta_x}(-x)$, together with the previously mentioned fact that $\CIP(\Prob_1)=2f_1(\mu)$ and the observation that $f'_{\theta_0}(0)=0$. Indeed, we can compute
	\begin{align*}
		n'(x)&=(\mu-x)\tau(x+\mu)\varphi(x-\mu)-(x+\mu)\tau(\mu-x)\varphi(x+\mu),\\
		d'(x)&=\varphi(x+\mu)-\varphi(x-\mu).
	\end{align*}
	With the quotient rule we obtain
	\begin{align}
		\nonumber
		f'_{\theta_x}(x)&=\frac{n'(x)d(x)-n(x)d'(x)}{d^2(x)}\\
		\nonumber
		&=\frac{\{(\mu-x)\tau(x+\mu)\varphi(x-\mu)-(x+\mu)\tau(\mu-x)\varphi(x+\mu)\}(\tau(x+\mu)+\tau(\mu-x))}{(\tau(x+\mu)+\tau(\mu-x))^2}\\
		\label{eq:f_derivative}
		&\quad+\frac{\{(\tau(x+\mu)\varphi(x-\mu)+\tau(\mu-x)\varphi(x+\mu)\}(\varphi(x-\mu))-\varphi(x+\mu)}{(\tau(x+\mu)+\tau(\mu-x))^2}.
	\end{align}
	From $n'(0)=d'(0)=0$ we see that $f'_{\theta_0}(0)=0$. Since $0<\mu-x<\mu+x$ we have $\varphi(x-\mu)>\varphi(x+\mu)$, and thus the second summand in~\eqref{eq:f_derivative} is positive. For the first summand, note that $\tau(x+\mu)+\tau(\mu-x)>0$ for $x\in(0,\mu)$ and thus we have to show that
	\begin{equation*}
		(\mu-x)\tau(x+\mu)\varphi(x-\mu)-(x+\mu)\tau(\mu-x)\varphi(x+\mu)>0,
	\end{equation*}
	which is equivalent to
	\begin{equation*}
		\frac{\tau(x+\mu)}{(x+\mu)\varphi(x+\mu)}>\frac{\tau(\mu-x)}{(\mu-x)\varphi(\mu-x)}.
	\end{equation*}
	Given that $x+\mu>\mu-x$, showing that the map $t\mapsto\tau(t)/(t\varphi(t))$ is strictly increasing for $t>0$ is sufficient to conclude that the above inequality holds. To see this, we compute
	\begin{equation*}
		t\mapsto\frac{\tau(t)}{t\varphi(t)}=\frac{1}{t}\int_0^t\exp\left(\frac{t^2-s^2}{2}\right)\dif s=\int_0^1\exp\left(\frac{t^2(1-r^2)}{2}\right)\dif r,
	\end{equation*}
	by using the definition of $\tau$ and substituting $s=rt$. The integral on the right-hand side is clearly strictly increasing for $t>0$ and thus the map $t\mapsto\tau(t)/(t\varphi(t))$ is strictly increasing as well. This proves $f'_{\theta_x}(x)>0$ for all $x\in(0,\mu)$ and since $x$ was chosen arbitrarily at the beginning, this concludes the proof.\\

	\noindent
	\textbf{Step 5: Putting everything together.} Let $\Prob_\theta$ be a distribution from~\ref{eq:gm}, that is,
	\begin{equation*}
		\Prob_\theta=\theta\mathcal{N}(\mu_1,\mathbb{I}_d)+(1-\theta)\mathcal{N}(\mu_2,\mathbb{I}_d),
	\end{equation*}
	for $\theta\in(0,1)$ and nuisance parameters $\mu_1,\mu_2\in\R^d$ with $\mu_1\not=\mu_2$. From 2. of Lemma~\ref{lemma:properties_isoperimetric_constants} we know that $\CIP(\Prob_\theta)=\CIP(\mathbb{Q}_{\theta,1})$ for all $\theta\in\Theta$, where $\mathbb{Q}_{\theta,1}$ is the marginal distribution of $\mathbb{Q}_\theta$ in its first component and $\mathbb{Q}_\theta$ is the pushforward of $\Prob_\theta$ constructed in Lemma~\ref{lemma:embedded_one_dim_sym_mixture}. Combining this with an application of the results from Steps 2 to 4 to $\mathbb{Q}_{\theta,1}$ we obtain
	\begin{equation*}
		\CIP(\Prob_\theta)=\CIP(\mathbb{Q}_{\theta,1})=\inf_{x\in\R}\lambda(\theta,x)\geqslant \inf_{x\in\R}\lambda(\theta_x,x)=\lambda(\theta_0,0)=2f_{\theta_0}(0)=2\varphi(\Delta).
	\end{equation*}
	And since $\CIP(\Prob_{1/2})=2\varphi(\Delta)$ was shown in Theorem 1.2 of~\citet{Lie2026}, the lower bound is attained for $\theta=1/2$. The limiting behavior is an obvious consequence of the exponentially decaying tails of the Gaussian distribution. This concludes the proof.
	\qed

\subsection{Proof of Lemma~\ref{lemma:gm_properties}}\label{appendix:proof_lemma_gm_properties}
    
1. With the straightforward computation
\begin{equation*}
	\nabla f_\theta(x)=-\theta(x-\mu_1)\phi(x-\mu_1)-(1-\theta)(x-\mu_2)\phi(x-\mu_2),
\end{equation*}
we obtain
\begin{align*}
	\nabla\log f_\theta(x)&=\frac{\nabla f_\theta(x)}{f_\theta(x)}=w_\theta(x)(\mu_1-x)+\left(1-w_\theta(x)\right)(\mu_2-x)\\
	&=w_\theta(x)(\mu_1-x-\mu_2+x)-x+\mu_2\\
	&=(2w_\theta(x)-1)\frac{\mu_1-\mu_2}{2}-\frac{\mu_1+\mu_2}{2}-x\\
	&=(2w_\theta(x)-1)\mathtt{d}+\mathtt{m}-x,
\end{align*}
where $w_\theta(x):=\theta\phi(x-\mu_1)/f_\theta(x)$, $\mathtt{m}:=(\mu_1+\mu_2)/2$ and $\mathtt{d}:=(\mu_1-\mu_2)/2$. The score function $\partial_{x_1}\log q_{\theta,1}$ is simply a one-dimensional version of $\nabla\log f_\theta$. It can be computed by replacing $\phi$ with $\varphi$ and using the fact that $\mathtt{m}=0$ when $\mu_1=-\mu_2$. This gives
\begin{equation*}
	\partial_{x_1}\log q_{\theta,1}(x_1)=(2w_{\theta,1}(x_1)-1)(\Delta)-x_1,
\end{equation*}
with $w_{\theta,1}(x_1):=\theta\varphi(x_1-\Delta)/q_{\theta,1}(x_1)$.\\

\noindent
2. Plugging in the definition of $\phi$ into $w_\theta$ gives
\begin{align*}
	w_\theta(x)&=\frac{\theta\phi(x-\mu_1)}{\theta\phi(x-\mu_1)+(1-\theta)\phi(x-\mu_2)}\\
&=\frac{1}{1+\frac{1-\theta}{\theta}\exp\left(-\frac{\|x-\mu_2\|_2^2-\|x-\mu_1\|_2^2}{2}\right)}\\
&=\sigma\left(\langle x,\mu_1-\mu_2\rangle+\frac{1}{2}\left(\|\mu_2\|_2^2-\|\mu_1\|_2^2\right)+\log\left(\frac{\theta}{1-\theta}\right)\right),
\end{align*}
where $\sigma(z)=1/(1+\exp(-z))$ is the logistic sigmoid function. With analogous computations one obtains the corresponding representation for $w_{\theta,1}$.\\

\noindent
3a. Let $\theta,\theta^*\in(0,1)$ and $x\in\R^d$ arbitrary but fixed. To show (a) we use 1. and compute
\begin{equation*}
	\|\nabla\log f_\theta(x)-\nabla\log f_{\theta^*}(x)\|_2=\|2\mathtt{d}w_\theta(x)-2\mathtt{d}w_{\theta^*}(x)\|_2=2\|\mathtt{d}\|_2|w_\theta(x)-w_{\theta^*}(x)|.
\end{equation*}
Next, we get
\begin{align*}
	|w_\theta(x)-w_{\theta^*}(x)|&=\left|\frac{\theta\phi(x-\mu_1)}{f_\theta(x)}-\frac{\theta^*\phi(x-\mu_1)}{f_{\theta^*}(x)}\right|
	=\left|\frac{\theta\phi(x-\mu_1)f_{\theta^*}(x)-\theta^*\phi(x-\mu_1)f_\theta(x)}{f_\theta(x)f_{\theta^*}(x)}\right|\\
	&=\left|\frac{\phi(x-\mu_1)\{\theta f_{\theta^*}(x)-\theta^* f_\theta(x)\}}{f_\theta(x)f_{\theta^*}(x)}\right|
	=|\theta-\theta^*|\frac{\phi(x-\mu_1)\phi(x-\mu_2)}{f_\theta(x)f_{\theta^*}(x)},
\end{align*}
after introducing a common denominator and simplifying. Combining the two previous displays and using $\Delta=\|\mathtt{d}\|_2$ gives the desired result.\\

\noindent
3b. In order to prove (b), we insert the factorization $q_\theta(x)=q_{\theta,1}(x_1)\phi(x_{2:d})$ from 2. of Lemma~\ref{lemma:embedded_one_dim_sym_mixture} into the right-hand side of (a). This gives
\begin{equation*}
	\|\nabla\log q_\theta(x)-\nabla\log q_{\theta^*}(x)\|_2=2\Delta|\theta-\theta^*|\frac{\varphi(x_1-\Delta)\varphi(x_1+\Delta)}{q_{\theta,1}(x_1)q_{\theta^*,1}(x_1)},
\end{equation*}
after canceling the common factors $\phi(x_{2:d})$ in the numerator and denominator. With analogous computations as in (a) one can check that
\begin{equation*}
	|\partial_{x_1}\log q_{\theta,1}(x_1)-\partial_{x_1}\log q_{\theta^*,1}(x_1)|= 2\Delta|\theta-\theta^*|\frac{\varphi(x_1-\Delta)\varphi(x_1+\Delta)}{q_{\theta,1}(x_1)q_{\theta^*,1}(x_1)},
\end{equation*}
which concludes the proof of (b).\\

\noindent
3c. To prove (c), we will show $\FI(\Prob_\theta,\Prob_{\theta^*})=\FI(\mathbb{Q}_\theta,\mathbb{Q}_{\theta^*})=\FI(\mathbb{Q}_{\theta,1},\mathbb{Q}_{\theta^*,1})$. To that end, let $T(x)=Q(x-\mathtt{m})$ be the isometry characterized in Lemma~\ref{lemma:embedded_one_dim_sym_mixture}. With the transformation theorem for densities~\citep[Theorem 1.101 in][]{Kle2020} we obtain $q_\theta=f_\theta\circ T^{-1}$ and $q_{\theta^*}=f_{\theta^*}\circ T^{-1}$ since $|\det DT^{-1}(x)|=|\det Q^\top|=1$ due to the orthonormality of $Q$, with $q_\theta$ and $q_{\theta^*}$ being the densities of $\mathbb{Q}_{\theta}$ and $\mathbb{Q}_{\theta^*}$ respectively. Using $T^{-1}(x)=Q^\top x+\mathtt{m}$, as well as $\nabla\log q_\theta(x)=Q^\top\nabla\log f_\theta(T^{-1}(x))$ we can compute
	\begin{align*}
		\FI(\mathbb{Q}_\theta,\mathbb{Q}_{\theta^*})&=\int_{\R^d}\|\nabla\log q_\theta(x)-\nabla\log q_{\theta^*}(x)\|_2^2q_\theta(x)\dif x\\
		&=\int_{\R^d}\|Q^\top(\nabla\log f_\theta(T^{-1}(x))-\nabla\log f_{\theta^*}(T^{-1}(x))) \|_2^2f_\theta(T^{-1}(x))\dif x\\
		&=\int_{\R^d}\|\nabla\log f_\theta(z)-\nabla\log f_{\theta^*}(z) \|_2^2f_\theta(z)\dif z\\
		&=\FI(\Prob_\theta,\Prob_{\theta^*}),
	\end{align*}
	where we used the orthogonality of $Q$ and the change of variables $z=T^{-1}(x)$ in the penultimate equality. This proves the first equality. For the second equality, we use (b) and the factorization of $q_\theta$ to get
	\begin{align*}
		\FI(\mathbb{Q}_\theta,\mathbb{Q}_{\theta^*})&=\int_{\R^d}\|\nabla\log q_\theta(x)-\nabla\log q_{\theta^*}(x)\|_2^2q_\theta(x)\dif x\\
		&=\int_{\R^{d-1}}\int_\R|\partial_{x_1}\log q_{\theta,1}(x_1)-\partial_{x_1}\log q_{\theta^*,1}(x_1)|^2q_{\theta,1}(x_1)\phi(x_{2:d})\dif x_1\dif x_{2:d}\\
		&=\FI(\mathbb{Q}_{\theta,1},\mathbb{Q}_{\theta^*,1})\int_{\R^{d-1}}\phi(x_{2:d})\dif x_{2:d}\\
		&=\FI(\mathbb{Q}_{\theta,1},\mathbb{Q}_{\theta^*,1}),
	\end{align*}
	where we used Fubini's theorem in the second equality and the fact that $\int_{\R^{d-1}}\phi(x)\dif x=1$ in the last equality. This proves the second equality and concludes the proof of (c).\\

\noindent
4. For ease of notation we give the proof for
\begin{equation*}
	f_\theta(x)=\theta\varphi(x-\mu)+(1-\theta)\varphi(x+\mu),
\end{equation*}
for some $\mu>0$ and $\theta\in(0,1)$, such that $q_{\theta,1}(x_1)=f_\theta(x)$ for $x_1=x$ and $\mu=\Delta$. Then, it holds that
	\begin{align*}
		\frac{\varphi(x-\mu)\varphi(x+\mu)}{f_\theta(x)f_{\theta^*}(x)}&=\frac{\varphi(x-\mu)\varphi(x+\mu)}{(\theta\varphi(x-\mu)+(1-\theta)\varphi(x+\mu))(\theta^*\varphi(x-\mu)+(1-\theta^*)\varphi(x+\mu))}\\
		&=\frac{1}{\theta\theta^*\exp(2\mu x)+\{\theta(1-\theta^*)+\theta^*(1-\theta)\}+(1-\theta)(1-\theta^*)\exp(-2\mu x)}\\
		&\leqslant \frac{1}{\{\theta\theta^*\wedge(1-\theta)(1-\theta^*)\}\{\exp(2\mu|x|)\bm{1}(x\geqslant 0)+\exp(-2\mu|x|)\bm{1}(x<0)\}}\\
		&=\frac{\exp(-2\mu|x|)}{\theta\theta^*\wedge(1-\theta)(1-\theta^*)},
	\end{align*}
which gives the right inequality with $m:=\theta\theta^*\wedge(1-\theta)(1-\theta^*)$.

For the lower bound, note that for any $\theta\in(0,1)$ and any arbitrary but fixed $x\in\R$ it holds that
\begin{equation*}
    \varphi(x-\mu)\wedge\varphi(x+\mu)\leqslant f_\theta(x)\leqslant \varphi(x-\mu)\vee\varphi(x+\mu),
\end{equation*}
because $f_\theta(x)$ is a convex combination of $\varphi(x-\mu)$ and $\varphi(x+\mu)$. This yields
\begin{equation*}
    \frac{\varphi(x-\mu)\varphi(x+\mu)}{f_\theta(x)f_{\theta^*}(x)}\geqslant\frac{\varphi(x-\mu)\varphi(x+\mu)}{\{\varphi(x-\mu)\vee \varphi(x+\mu)\}^2}=\frac{\varphi(x-\mu)\wedge\varphi(x+\mu)}{\varphi(x-\mu)\vee \varphi(x+\mu)}\leqslant1.
\end{equation*}
From the two possible cases
\begin{equation*}
    \frac{\varphi(x-\mu)}{\varphi(x+\mu)}=\exp(2\mu x)\geqslant\exp(-2\mu |x|)\quad\text{ and } \quad\frac{\varphi(x+\mu)}{\varphi(x-\mu)}=\exp(-2\mu x)\geqslant\exp(-2\mu |x|)
\end{equation*}
we arrive at
\begin{equation*}
    \frac{\varphi(x-\mu)\varphi(x+\mu)}{f_\theta(x)f_{\theta^*}(x)}\geqslant\exp(-2\mu |x|),
\end{equation*}
which concludes the proof.\\

\noindent
5. Again, for ease of notation, we set $\Prob_\theta:=\theta\mathcal{N}(\mu,1)+(1-\theta)\mathcal{N}(-\mu,1)$. Given an even map $f:\R\to\R$, that is, $f(-x)=f(x)$ for all $x\in\R$, and $\Prob_\mu=\mathcal{N}(\mu,1)$ it holds that $\E_{\Prob_\mu}[f(X)]=\E_{\Prob_{-\mu}}[f(X)]$. Indeed, the substitution $x=-y$ gives
\begin{align*}
    \E_{\Prob_\mu}[f(X)]&=\int_{-\infty}^{\infty} f(x)\varphi(x-\mu)\dif x=\int_{\infty}^{-\infty} -f(-y)\varphi(-y-\mu)\dif y.
\end{align*}
Then, the fundamental theorem of calculus gives us
\begin{equation*}
	\int_a^b f(x)\dif x=-\int_b^a f(x)\dif x,
\end{equation*}
and togethwer with the eveness of both $f$ and $\varphi$ we obtain
\begin{equation*}
	\int_{\infty}^{-\infty} -f(-y)\varphi(-y-\mu)\dif y=\int_{-\infty}^{\infty} f(-y)\varphi(y+\mu)\dif y=\int_{-\infty}^{\infty} f(y)\varphi(y+\mu)\dif y=\E_{\Prob_{-\mu}}[f(X)].
\end{equation*}
From this follows
\begin{align*}
	\E_{\Prob_\theta}[f(X)]&=\theta\E_{\Prob_\mu}[f(X)]+(1-\theta)\E_{\Prob_{-\mu}}[f(X)]\\
	&=\theta\E_{\Prob_\mu}[f(X)]+(1-\theta)\E_{\Prob_{\mu}}[f(X)]\\
	&=\E_{\Prob_\mu}[f(X)].
\end{align*}
With $f(x)=\exp(-t|x|)$, the fact that $t\geqslant \mu$ by assumption and Lemma~\ref{lemma:Gaussian_exp_bound} we obtain
\begin{equation*}
	\E_{\Prob_\theta}[\exp(-t|X|)]=\E_{\Prob_\mu}[\exp(-t|X|)]\leqslant \left\{\sqrt{\frac{\pi}{2}}\wedge\left(\frac{1}{t-\mu}+\frac{1}{t+\mu}\right)\right\}\varphi(\mu),
\end{equation*}
	which shows the claim after plugging in $\mu=\Delta$.\\
	
	\noindent
	6. With the factorization $q_\theta(x)=q_{\theta,1}(x_1)\phi(x_{2:d})$ from 2. of Lemma~\ref{lemma:embedded_one_dim_sym_mixture}, we can compute
	\begin{align*}
		\nabla\log q_\theta(x)=\nabla_x\log q_{\theta,1}(x_1)+\nabla_x\log \phi(x_{2:d})
		=\begin{bmatrix}
\partial_{x_1}\log q_{\theta,1}(x_1)\\
0\\
\vdots\\
0
		\end{bmatrix}
		+\begin{bmatrix}
0\\
-x_2\\
\vdots\\
-x_d
		\end{bmatrix}
	\end{align*}
	and $\divergence\nabla\log q_\theta(x)=\partial_{x_1x_1}\log q_{\theta,1}(x_1)-d$. In total, this gives us
	\begin{align*}
		m_{\SM}^{q_\theta}(x)&=\|\nabla\log q_\theta(x)\|_2^2+2\divergence\nabla\log q_\theta(x)\\
		&=\left(\partial_{x_1}\log q_{\theta,1}(x_1)\right)^2+\|x_{2:d}\|_2^2+2(\partial_{x_1x_1}\log q_{\theta,1}(x_1)-d)\\
		&=m_{\SM}^{q_{\theta,1}}(x_1)+\mathtt{r}(x_{2:d}),
	\end{align*}
	with $\mathtt{r}(x_{2:d})=\|x_{2:d}\|_2^2-2d$.\qed

	\subsection{Proof of Lemma~\ref{lemma:ddpm_density_and_score}}\label{appendix:proof_lemma_ddpm_density}

	From $X_0\sim\Prob_\theta$ and $Z_t\sim\mathcal{N}(0,\mathbb{I}_d)$ we can conclude that
	\begin{align*}
		\exp(-t)X_0&\sim\theta\mathcal{N}(\exp(-t)\mu_1, \exp(-2t)\mathbb{I}_d)+(1-\theta)\mathcal{N}(-\exp(-t)\mu_2, \exp(-2t)\mathbb{I}_d),\\
		\sqrt{1-\exp(-2t)}Z_t&\sim\mathcal{N}(0,1-\exp(-2t)\mathbb{I}_d).
	\end{align*}
	It is well known that Gaussians are closed under convolutions, that is, we have
	\begin{equation*}
		\mathcal{N}(\mu_1,\Sigma_1)*\mathcal{N}(\mu_2,\Sigma_2)=\mathcal{N}(\mu_1+\mu_2,\Sigma_1+\Sigma_2),
	\end{equation*}
	for any $\mu_1,\mu_2\in\R^d$ and $\Sigma_1,\Sigma_2\in\R^{d\times d}$ positive definite. As a consequence, for
	\begin{equation*}
		X_t=\exp(-t)X_0+\sqrt{1-\exp(-2t)}Z_t,
	\end{equation*}
	we get
	\begin{equation*}
		X_t\sim\theta\mathcal{N}(\exp(-t)\mu_1, \mathbb{I}_d)+(1-\theta)\mathcal{N}(\exp(-t)\mu_2, \mathbb{I}_d).
	\end{equation*}
	The density of $X_t$ is therefore given by
	\begin{equation*}
		f_{\theta,t}(x)=\theta\phi(x-\mu_{1,t})+(1-\theta)\phi(x-\mu_{2,t}),
	\end{equation*}
	with $\mu_{1,t}=\exp(-t)\mu_1$ and $\mu_{2,t}=\exp(-t)\mu_2$, as desired.\qed

	\subsection{Proof of Lemma~\ref{lemma:reduction_constants}}\label{app:proof_lemma_reduction_constants}

	1. After noting that $\mathtt{Lip}(m_{\SM}^\Prob)(x)=\mathtt{Lip}(m_{\SM}^{\mathbb{Q}})(T(x))$ for every $x\in\R^d$, we can compute
	\begin{align*}
		\|\mathtt{Lip}(m_{\SM}^\Prob)\|_{L^2(\Prob_{\theta_0})}^2
		&=\int_{\R^d}|\mathtt{Lip}(m_{\SM}^{\mathbb{Q}})(T(x))|^2f_{\theta_0}(x)\dif x
		=\int_{\R^d}|\mathtt{Lip}(m_{\SM}^{\mathbb{Q}})(y)|^2f_{\theta_0}(T^{-1}(y))\dif y\\
		&=\int_{\R^d}|\mathtt{Lip}(m_{\SM}^{\mathbb{Q}})(y)|^2q_{\theta_0}(y)\dif y
		=\|\mathtt{Lip}(m_{\SM}^{\mathbb{Q}})\|_{L^2(\mathbb{Q}_{\theta_0})}^2,
	\end{align*}
	by using the change of variables $x=T^{-1}(y)$ and the fact that $q_{\theta_0}=f_{\theta_0}\circ T^{-1}$. Then, from 6. of Lemma~\ref{lemma:gm_properties} it follows that $\partial_\theta m_{\SM}^{\mathbb{Q}}(\theta,x)=\partial_\theta m_{\SM}^{\mathbb{Q}_1}(\theta,x_1)$ for every $x=(x_1,x_{2:d})\in\R^d$, and together with $q_\theta(x)=q_{\theta,1}(x_1)\phi(x_{2:d})$ from Lemma~\ref{lemma:embedded_one_dim_sym_mixture} we can compute
	\begin{equation*}
		\|\mathtt{Lip}(m_{\SM}^{\mathbb{Q}})\|_{L^2(\mathbb{Q}_{\theta_0})}^2=\int_{\R^d}|\mathtt{Lip}(m_{\SM}^{\mathbb{Q}_1})(y_1)|^2q_{\theta_0,1}(y_1)\phi(x_{2:d})\dif y_1\dif x_{2:d}=\|\mathtt{Lip}(m_{\SM}^{\mathbb{Q}_1})\|_{L^2(\mathbb{Q}_{\theta_0,1})}^2.
	\end{equation*}
	Combining the two equalities yields the claim for SM. The above proof extends mutatis mutandis to the DDSM case, after replacing $T$ with the isometry $T_t$ given by $T_t(x):=Q(x-\exp(-t)\mathtt{m})$.\\

	\noindent
	2. From~\eqref{eq:curvature_constant} we know that
	\begin{equation}
		\label{eq:curvature_proper_scoring}
		\mathtt{Curv}(m_{\SM}^\Prob)=\inf_{|\theta-\theta_0| \geqslant \delta}\left\{4\Delta^2\int_\R\left(\frac{\phi(x-\mu_1)\phi(x+\mu_2)}{f_\theta(x)f_{\theta_0}(x)}\right)^2f_{\theta_0}(x)\dif x\right\}.
	\end{equation}
	With 3c. of Lemma~\ref{lemma:gm_properties} we have
	\begin{align*}
		4\Delta^2\int_\R\left(\frac{\phi(x-\mu_1)\phi(x+\mu_2)}{f_\theta(x)f_{\theta_0}(x)}\right)^2f_{\theta_0}(x)\dif x &=\frac{\FI(\Prob_\theta,\Prob_{\theta_0})}{|\theta-\theta_0|^2}\\
		&=\frac{\FI(\mathbb{Q}_{\theta,1},\mathbb{Q}_{\theta_0,1})}{|\theta-\theta_0|^2}\\
		&=4\Delta^2\int_\R\left(\frac{\varphi(x_1-\Delta)\phi(x_1+\Delta)}{q_{\theta,1}(x_1)f_{\theta_0,1}(x_1)}\right)^2f_{\theta_0}(x_1)\dif x_1,
	\end{align*}
	which gives the claim after inserting the above equality into~\eqref{eq:curvature_proper_scoring}. Once again, the proof extends mutatis mutandis to the DDSM case.\qed
	
    \subsection{Proof of Lemma~\ref{lemma:lipschitz_curvature_bounds_sm}}\label{app:proof_lemma_lipschitz_curvature_bounds_sm}

    1. A function is Lipschitz continuous with Lipschitz constant $\mathtt{L}$ if and only if its derivative is bounded by $\mathtt{L}$. Therefore, we have
	\begin{equation}
		\label{eq:lipschitz_constant_definition}
		\mathtt{L}_{\SM}(x)=\sup_{\theta\in\Theta}\left|\partial_\theta m_{\SM}(\theta,x)\right|.
	\end{equation}
	In view of 1. of Lemma~\ref{lemma:reduction_constants} it suffices to prove a lower bound for $\|\mathtt{L}_{\SM}(X)\|_{L^2(\Prob_{\theta_0})}$ when
	\begin{equation}
		\label{eq:symmetric_one_d_family}
		\Prob_\theta=\theta\mathcal{N}(\mu,1)+(1-\theta)\mathcal{N}(-\mu,1).
	\end{equation}
	Replacing $\mu$ in the obtained lower bound by $\Delta$ then gives the desired result for the $d$-dimensional family in~\eqref{eq:gm}.
	
	With $s_\theta(x):=\partial_x\log f_\theta(x)$ for $x\in\mathbb{R}$, the derivative on the right-hand side of~\eqref{eq:lipschitz_constant_definition} can be computed as follows:
	\begin{equation}
		\label{eq:derivative_m_SM}
		\partial_\theta m_{\SM}(\theta,x)=\partial_\theta\{s_\theta^2(x)+\partial_x s_\theta(x)\}=2s_\theta(x)\partial_\theta s_\theta(x)+2\partial_\theta\partial_x s_\theta(x)
	\end{equation}
	In order to simplify the right-hand side of~\eqref{eq:derivative_m_SM}, we use the identity $s_\theta(x)=(2w_\theta(x)-1)\mu-x$ given in 1. of Lemma~\ref{lemma:gm_properties} and the derivatives
	\begin{align*}
		\partial_\theta w_\theta(x)&=w_\theta(x)(1-w_\theta(x))/(\theta(1-\theta)),\\
		\partial_x w_\theta(x)&=2\mu w_\theta(x)(1-w_\theta(x)),\\
		\partial_\theta\partial_x w_\theta(x)&=2\mu \partial_\theta w_\theta(x)(1-2w_\theta(x)),
	\end{align*}
	where in particular $\partial_xw_\theta$ is based on the fact that $w_\theta$ is a logistic function, as shown in 2. of Lemma~\ref{lemma:gm_properties}. With these identities and $\partial_x s_\theta(x)=2\mu \partial_x w_\theta(x)-1$, we can compute
	\begin{align*}
		\partial_\theta s_\theta(x)&=2\mu \partial_\theta w_\theta(x),\\
		\partial_\theta\partial_x s_\theta(x)&=2\mu\partial_\theta\partial_x w_\theta(x)=4\mu^2 \partial_\theta w_\theta(x)(1-2w_\theta(x)).
	\end{align*}
	Plugging this into the right-hand side of~\eqref{eq:derivative_m_SM} yields
	\begin{equation*}
		\partial_\theta m_{\SM}(\theta,x)=4\mu\partial_\theta w_\theta(x)\{s_\theta(x)+2\mu(1-2w_\theta(x))\}=\partial_\theta w_\theta(x)\{4\mu^2-4x\mu-8\mu^2w_\theta(x)\}.
	\end{equation*}
	Having obtained an expression for $\partial_\theta m_{\SM}$ we can plug this into~\eqref{eq:lipschitz_constant_definition}, in order to compute the norm. In doing this, we observe that
	\begin{align}
		\nonumber
		\|\mathtt{L}_{\SM}(X)\|_{L^2(\Probtrue)}^2&=\int_\R\sup_{\theta\in\Theta}\left|\partial_\theta m_{\SM}(\theta,x)\right|^2f_{\theta_0}(x)\dif x\\
		\nonumber
		&=\int_\R\left(\sup_{\theta\in\Theta}\frac{1}{\theta(1-\theta)}w_\theta(x)(1-w_\theta(x))|4\mu^2-4x\mu-8\mu^2w_\theta(x)|\right)^2f_{\theta_0}(x)\dif x\\
		\label{eq:integral_lipschitz_constant}
		&\geqslant\int_\R\left(\frac{1}{\theta_*(1-\theta_*)}w_{\theta_*}(x)(1-w_{\theta_*}(x))|4\mu^2-4x\mu-8\mu^2w_{\theta_*}(x)|\right)^2f_{\theta_0}(x)\dif x
	\end{align}
	holds for every $\theta_*\in\Theta=[\eta,1-\eta]$, since the supremum of the integrand can only be larger. Note that the first equality in the above follows from the replacement of $\Probtrue$ by $\Prob_{\theta_0}$, which is justified by Assumption~\ref{assumption:probtrue}.

	\begin{figure}[!tb]
	\centering

\definecolor{denscol}{RGB}{0,128,128}   
\definecolor{unccol}{RGB}{230,126,34}   
\definecolor{sigcol}{RGB}{142,68,173}   
\definecolor{intcol}{RGB}{41,128,185}   
\definecolor{neonred}{RGB}{255,40,90}

\begin{tikzpicture}
\begin{axis}[
  width=15.2cm,height=7.2cm,
  xmin=-6.5, xmax=6.5,
  ymin=0, ymax=1.25,
  axis lines=left,
  axis line style={-stealth},
  xtick=\empty, ytick=\empty,
  xlabel={$x$},
  xlabel style={at={(axis description cs:1,0)}, anchor=west},
  clip=false,
  legend columns=2,
  legend style={
    draw=none, fill=none,
    font=\small,
    at={(0.02,0.985)}, anchor=north west,
    /tikz/every even column/.append style={column sep=6pt}
  },
]

\def\beps{0.4}
\def\win{0.55}
\def\muplot{3.7}
\def\sigma{0.75}

\addplot[very thick, color=denscol, domain=-6.5:6.5, samples=900]
  {0.62*exp(-((x-\muplot)^2)/(2*\sigma*\sigma))
   +0.38*exp(-((x+\muplot)^2)/(2*\sigma*\sigma))};

\addplot[very thick, color=unccol, domain=-6.5:6.5, samples=1100]
  {0.55*(1/(cosh(3*(x-\beps))^2))};

\addplot[very thick, color=sigcol, domain=-6.5:6.5, samples=1100]
  {0.05 + 0.75*(1/(1+((x-\beps)/0.65)^4))};

\addplot[ultra thick, color=intcol, domain=-6.5:6.5, samples=1400]
  {0.02
   + 0.85*
     (0.62*exp(-((x-\muplot)^2)/(2*\sigma*\sigma))
      +0.38*exp(-((x+\muplot)^2)/(2*\sigma*\sigma)))
     * (0.55*(1/(cosh(3*(x-\beps))^2)))
     * (0.05 + 0.75*(1/(1+((x-\beps)/0.65)^4)))};

\addlegendimage{very thick, color=denscol}
\addlegendentry{$f_{\theta}(x)$}
\addlegendimage{very thick, color=unccol}
\addlegendentry{$w_{\theta}(x)(1-w_{\theta}(x))$}
\addlegendimage{very thick, color=sigcol}
\addlegendentry{$\bigl|4\mu^2-4\mu x-8\mu^2 w_{\theta}(x)\bigr|^2$}
\addlegendimage{ultra thick, color=intcol}
\addlegendentry{integrand (schematic)}

\draw[black] (axis cs:0,0) -- (axis cs:0,-0.02);
\node[font=\scriptsize, anchor=north] at (axis cs:0,-0.02) {$0$};

\draw[black] (axis cs:\beps,0) -- (axis cs:\beps,-0.02);
\node[font=\scriptsize, anchor=north] at (axis cs:\beps,-0.02) {$b_{\theta,\mu}$};

\draw[black] (axis cs:-\muplot,0) -- (axis cs:-\muplot,-0.02);
\node[font=\scriptsize, anchor=north] at (axis cs:-\muplot,-0.02) {$-\mu$};

\draw[black] (axis cs:\muplot,0) -- (axis cs:\muplot,-0.02);
\node[font=\scriptsize, anchor=north] at (axis cs:\muplot,-0.02) {$\mu$};

\draw[densely dashed, gray!70] (axis cs:\beps,0) -- (axis cs:\beps,1.22);

\draw[neonred!90!black, thick, decorate,
      decoration={brace, amplitude=4pt, mirror}]
  (axis cs:-\win, -0.13) -- (axis cs:\win, -0.13);

\node[font=\scriptsize, text=neonred!90!black, anchor=north]
  at (axis cs:0,-0.18) {$|x|\lesssim 1/\mu$};

\end{axis}
\end{tikzpicture}
	\caption{Idea of the proof of 1. of Lemma~\ref{lemma:lipschitz_curvature_bounds_sm}. Integration can be restricted to a small compact set around the origin, where some of the factors of the integrand take on their largest values. This way, one obtains the largest possible polynomial in $\mu$, while the density $f_{\theta}$ always contributes a term with exponential decay in $\mu^2$.}
	\label{fig:proof_idea}
\end{figure}

In order to proceed, we will restrict integration to a small compact set around the origin. The motivation for this is as follows: On an interval of width proportional to $1/\mu$ around the origin, the functions $f_\theta$, $g_\theta$ and $h_\theta$ in the integrand fulfill
	\begin{align*}
		f_\theta(x)&\gtrsim\exp(-\mu^2/2),\\
		g_\theta(x)&:=\left(\frac{w_\theta(x)(1-w_\theta(x))}{\theta(1-\theta)}\right)^2\gtrsim 1,\\
		h_\theta(x)&:=|4\mu^2-4x\mu-8\mu^2w_\theta(x)|^2\gtrsim \mu^4.
	\end{align*}
As a consequence, we obtain
	\begin{equation*}
		\int_\R g_\theta(x)h_\theta(x)f_{\theta_0}(x)\dif x\geqslant \int_\mathcal{A} g_\theta(x)h_\theta(x)f_{\theta_0}(x)\dif x\gtrsim \mu^3 \exp(-\mu^2/2),
	\end{equation*}
	which is the desired lower bound. This underlying idea is depicted schematically in Figure~\ref{fig:proof_idea}. But in order to formalize this, we need to distinguish two cases.\\
	
	\noindent
	\textbf{Case 1: $\mu\geqslant1$.} Consider the interval
	\begin{equation}
		\label{eq:interval_A}
		\mathcal{A}:=\left[-\frac{\alpha}{\mu},\frac{\alpha}{\mu}\right]\quad \text{with}\quad \alpha:=\frac{\eta}{4}\wedge\frac{1-2\eta}{4},
	\end{equation}
	and fix $\theta_*=\eta$. Let us deal with $g_\theta$ first. The map $x\mapsto w_\theta(x)$ is $\mu/2$-Lipschitz, since
	\begin{equation*}
		\partial_x w_\theta(x)=2\mu w_\theta(x)(1-w_\theta(x))\leqslant \frac{2\mu}{4}=\frac{\mu}{2},
	\end{equation*}
	where we used the fact that $w_\theta(x)\in[0,1]$ for all $x\in\R$ and thus $w_\theta(x)(1-w_\theta(x))\in[0,1/4]$. Hence, using $w_\eta(0)=\eta$ and the above Lipschitz property, we get for every $x\in\mathcal{A}$, that
	\begin{equation}
		\label{eq:lipschitz_bound_w}
		|w_\eta(x)-w_\eta(0)|=|w_\eta(x)-\eta|\leqslant \frac{\mu}{2}|x|\leqslant \frac{\mu}{2}\cdot\frac{\alpha}{\mu}=\frac{\alpha}{2}\leqslant\frac{\eta}{8},
	\end{equation}
	or put differently, that
	\begin{equation}
		\label{eq:interval_w}
		w_\eta(x)\in\left[\frac{7\eta}{8},\frac{9\eta}{8}\right]\subseteq(0,1),
	\end{equation}
	for every $x\in\mathcal{A}$. Then, we define
	\begin{equation*}
		\mathtt{C}_w(\eta):=\inf_{x\in\mathcal{A}}\left\{\left(\frac{w_\eta(x)(1-w_\eta(x))}{\eta(1-\eta)}\right)^2\right\},
	\end{equation*}
	and with the continuity of the map $x\mapsto w_\eta(x)(1-w_\eta(x))$, the compactness of $\mathcal{A}$ and~\eqref{eq:interval_w} conclude that $\mathtt{C}_w(\eta)>0$ is finite, such that for every $x\in\mathcal{A}$ it holds that
	\begin{equation*}
		g_\eta(x)\geqslant \mathtt{C}_w(\eta).
	\end{equation*}

	Next, we deal with $h_\theta$. From~\eqref{eq:lipschitz_bound_w} follows that $w_\eta(x)\leqslant \eta+\alpha/2$ for all $x\in\mathcal{A}$, such that
	\begin{equation}
		\label{eq:lower_bound_h}
		4-8w_\eta(x)\geqslant 4-8\left(\eta+\frac{\alpha}{2}\right)=4(1-2\eta)-4\alpha\geqslant 3(1-2\eta),
	\end{equation}
	since $\alpha\leqslant(1-2\eta)/4$ by definition given in~\eqref{eq:interval_A}. Moreover, we have $|4x\mu|\leqslant 4\alpha$ for all $x\in\mathcal{A}$, such that together with~\eqref{eq:lower_bound_h} and the reverse triangle inequality we obtain for every $x\in\mathcal{A}$ that
	\begin{align*}
		|4\mu^2-4x\mu-8\mu^2w_\eta(x)|&=|\mu^2\left(4-8w_\eta(x)\right)-4x\mu|\\
		&\geqslant \mu^2|4-8w_\eta(x)|-|4x\mu|\\
		&\geqslant \mu^2 3(1-2\eta)-4\alpha.
	\end{align*}
	Lastly, using $\alpha\leqslant(1-2\eta)/4$ once more and the assumption that $\mu\geqslant1$ we can conclude that $\mu^2 3(1-2\eta)-4\alpha\geqslant \mu^22(1-2\eta)$, such that in total we have shown
	\begin{equation*}
		h_\eta(x)=|4\mu^2-4x\mu-8\mu^2w_\eta(x)|^2\geqslant \mu^4\mathtt{C}_h(\eta),
	\end{equation*}
	for every $x\in\mathcal{A}$, where $\mathtt{C}_h(\eta):=4(1-2\eta)^2$.
	
	Finally, we need a lower bound for the density $f_{\theta_0}$ on $\mathcal{A}$. It is easy to see that
	\begin{align}
		\nonumber
		f_{\theta_0}(x)&\geqslant \theta_0\varphi(x-\mu)\\
		\nonumber
		&\geqslant\frac{\eta}{\sqrt{2\pi}}\exp\left(-\frac{(-\alpha/\mu-\mu)^2}{2}\right)\\
		\nonumber
		&=\frac{\eta}{\sqrt{2\pi}}\exp\left(-\alpha-\frac{\alpha^2}{2\mu^2}\right)\exp\left(-\frac{\mu^2}{2}\right)\\
		\label{eq:lower_bound_f}
		&\geqslant\mathtt{C}_f(\eta)\exp\left(-\frac{\mu^2}{2}\right),
	\end{align}
	for all $x\in\mathcal{A}$. In the above, the constant $\mathtt{C}_f(\eta)$ is defined as
	\begin{equation*}
		\mathtt{C}_f(\eta):=\frac{\eta}{\sqrt{2\pi}}\exp\left(-\alpha-\frac{\alpha^2}{2}\right),
	\end{equation*}
	and the inequality in~\eqref{eq:lower_bound_f} follows from $\mu\geqslant1$.
	
	Putting everything together, that is, choosing $\theta_*=\eta$ on the right-hand side of~\eqref{eq:integral_lipschitz_constant} and using the lower bounds for $g_{\eta}$, $h_{\eta}$, and $f_{\theta_0}$ from above, we obtain
	\begin{align*}
		\|\mathtt{L}_{\SM}(X)\|_{L^2(\Probtrue)}^2
		&\geqslant \int_\mathcal{A}\left(\frac{1}{\eta(1-\eta)}w_{\eta}(x)(1-w_{\eta}(x))|4\mu^2-4x\mu-8\mu^2w_{\eta}(x)|\right)^2f_{\theta_0}(x)\dif x\\
		&\geqslant \mu^4\exp\left(-\frac{\mu^2}{2}\right)\mathtt{C}_w(\eta)\mathtt{C}_h(\eta)\mathtt{C}_f(\eta)\gamma(\mathcal{A}).
	\end{align*}
	With the simple fact that $\gamma(\mathcal{A})=2\alpha/\mu$ we conclude that
	\begin{equation}
		\label{eq:final_lower_bound}
		\|\mathtt{L}_{\SM}(X)\|_{L^2(\Probtrue)}^2\geqslant \frac{2\alpha}{\mu} \mu^4\exp\left(-\frac{\mu^2}{2}\right)\mathtt{C}_w(\eta)\mathtt{C}_h(\eta)\mathtt{C}_f(\eta)\gtrsim \mu^3\exp\left(-\frac{\mu^2}{2}\right),
	\end{equation}
	which was the desired result.
	\\

	\noindent
	\textbf{Case 2: $\mu<1$.} In order to obtain~\eqref{eq:final_lower_bound} we needed to assume that $\mu\geqslant1$. When $\mu<1$, let us fix $\theta_*=1/2$ and consider the set $\mathcal{A}:=[-2,-1]$. Then, we have
	\begin{align*}
		\frac{g_{1/2}(x)}{(1/2)(1-1/2)}&=4w_{1/2}(x)(1-w_{1/2}(x))\\
		&=\frac{4\varphi(x-\mu)\varphi(x+\mu)}{(\varphi(x-\mu)+\varphi(x+\mu))^2}\\
		&=\frac{4\exp(-2x\mu)}{(1+\exp(-2\mu x))^2}\\
		&\geqslant \frac{4\exp(4)}{(1+\exp(4))^2},
	\end{align*}
	for all $x\in\mathcal{A}$, since $\mu<1$ and $x\in[-2,-1]$ implies that $-2\mu x\in[2,4]$. Moreover, we have $w_{1/2}(x)\leqslant1/2$ for all $x\leqslant0$, thus in particular for all $x\in\mathcal{A}$, and therefore
	\begin{equation*}
		4\mu^2-8\mu^2w_{1/2}(x)=4\mu^2(1-2w_{1/2}(x))\geqslant 0.
	\end{equation*}
	Together with $-4\mu x=4\mu|x|\geqslant4\mu$ for $x\in\mathcal{A}$, this implies that
	\begin{equation*}
		|4\mu^2-4x\mu-8\mu^2w_{1/2}(x)|
		=4\mu^2(1-2w_{1/2}(x))-4\mu x
		\geqslant 4\mu\,
	\end{equation*}
	for all $x\in\mathcal{A}$, thus $h_{1/2}(x)\geqslant 16 \mu^2$. Lastly, we have
	\begin{equation*}
		f_{\theta_0}(x)\geqslant \eta\varphi(x-\mu)\geqslant\eta\varphi(2),
	\end{equation*}
	since $\mu<1$ and $x\in[-2,-1]$ implies that $x-\mu\leqslant -2$. Putting everything together with the same logic as before, we obtain
	\begin{equation*}
		\|\mathtt{L}_{\SM}(X)\|_{L^2(\Probtrue)}^2\gtrsim \mu^2,
	\end{equation*}
	whenever $\mu<1$. But clearly, for $\mu<1$ it holds that
	\begin{equation*}
		\mu^2\geqslant\mu^3\exp\left(-\frac{\mu^2}{2}\right).
	\end{equation*}
	Therefore, the lower bound in~\eqref{eq:final_lower_bound} for $\mu\geqslant1$ also holds for $\mu<1$. Taking the square root and substituting $\mu=\Delta$ yields the lower bound for $\|\mathtt{L}_{\SM}(X)\|_{L^2(\Probtrue)}$ when considering the family in~\eqref{eq:gm}, which concludes the proof.\\

    \noindent
    2. Every proper scoring rule induces a divergence and in the case of vanilla SM this gives the following~\citep{ForLau2015}:
    \begin{equation}
    \label{eq:proper_scoring_rule_fact}
        \E_{\theta_0}\left[m_{\SM}(\theta,X)-m_{\SM}(\theta_0,X)\right]=\FI(\Prob_{\theta_0},\Prob_\theta)
    \end{equation}
	Indeed, replacing $\Prob$ in~\eqref{eq:fisher_distance_3} with $\Prob_{\theta_0}$ and $\mathbb{Q}$ with $\Prob_\theta$ gives
	\begin{equation}
		\label{eq:proper_scoring_rule}
		\FI(\Prob_{\theta_0},\Prob_\theta)=\E_{\theta_0}[m_{\SM}(\theta,X)]+\E_{\theta_0}\left[\|\nabla\log f_{\theta_0}(X)\|_2^2\right],
	\end{equation}
	which holds for every $\theta\in\Theta$. Rearranging and inserting~\eqref{eq:proper_scoring_rule} into the left-hand side of~\eqref{eq:proper_scoring_rule_fact} yields
	\begin{align*}
		&\E_{\theta_0}\left[m_{\SM}(\theta,X)-m_{\SM}(\theta_0,X)\right]\\
		=&\FI(\Prob_{\theta_0},\Prob_\theta)-\E_{\theta_0}\left[\|\nabla\log f_{\theta_0}(X)\|_2^2\right]-\left(\FI(\Prob_{\theta_0},\Prob_{\theta_0})-\E_{\theta_0}\left[\|\nabla\log f_{\theta_0}(X)\|_2^2\right]\right)\\
		=&\FI(\Prob_{\theta_0},\Prob_\theta),
	\end{align*}
	confirming the equality in~\eqref{eq:proper_scoring_rule_fact}. Analogously to 1., it is sufficient to consider families of the form~\eqref{eq:symmetric_one_d_family} in view of 2. of Lemma~\ref{lemma:reduction_constants}. Then, 3. of Lemma~\ref{lemma:gm_properties} implies
    \begin{equation*}
        \E_{\theta_0}\left[m_{\SM}(\theta,X)-m_{\SM}(\theta_0,X)\right]=4\mu^2(\theta-\theta_0)^2\int_\R\left(\frac{\varphi(x-\mu)\varphi(x+\mu)}{f_\theta(x)f_{\theta_0}(x)}\right)^2f_{\theta_0}(x)\dif x,
    \end{equation*}
    such that with
    \begin{equation}
		\label{eq:curvature_constant}
        \mathtt{C}_{\SM}:=\inf_{|\theta-\theta_0| \geqslant \delta}\left\{4\mu^2\int_\R\left(\frac{\varphi(x-\mu)\varphi(x+\mu)}{f_\theta(x)f_{\theta_0}(x)}\right)^2f_{\theta_0}(x)\dif x\right\}
    \end{equation}
    we have
    \begin{equation}
		\label{eq:curvature_condition_sm}
        \inf_{|\theta-\theta_0| \geqslant \delta} \E_{\theta_0}\left[m_{\SM}(\theta,X)-m_{\SM}(\theta_0,X)\right]\geqslant \mathtt{C}_{\SM}\delta^2.
    \end{equation}
	From~\eqref{eq:curvature_condition_sm} together with Assumption~\ref{assumption:probtrue} it follows that the curvature condition stated in Definition~\ref{definition:local_curvature_condition} holds with $\alpha=2$ and constant $\mathtt{C}_{\SM}$. With 5. of Lemma~\ref{lemma:gm_properties} we furthermore obtain
    \begin{equation*}
        \mathtt{C}_{\SM}\leqslant\frac{4\mu^2}{m^2}\E_{\theta_0}\left[\exp(-4\mu|X|)\right]\leqslant\frac{4\mu^2}{m^2\sqrt{2\pi}}\frac{8}{15\mu}\exp\left(-\frac{\mu^2}{2}\right)=\frac{32\mu}{15m^2\sqrt{2\pi}}\exp\left(-\frac{\mu^2}{2}\right),
    \end{equation*}
    which is the desired upper bound for the curvature constant $\mathtt{C}_{\SM}$. Replacing $\mu$ by $\Delta$ yields the desired result in terms of the separation $\Delta$.\qed

	\subsection{Proof of Lemma~\ref{lemma:lipschitz_curvature_bounds_dsm}}\label{app:proof_lemma_lipschitz_curvature_bounds_dsm}

	1. We split the proof into multiple steps.\\

	\noindent
	\textbf{Step 1: Reduction to symmetric one-dimensional setting.}
	As in the proof of 1. of Lemma~\ref{lemma:lipschitz_curvature_bounds_sm}, we can reduce the problem to a one-dimensional setting. In particular, it is sufficient to consider families of the form~\eqref{eq:symmetric_one_d_family} in view of 2. of Lemma~\ref{lemma:reduction_constants}. In the end, we can substitute $\mu$ by $\Delta$ and $\mu_t=\exp(-t)\mu$ by $\Delta_t=\exp(-t)\Delta$ to obtain the desired result for the family~\ref{eq:gm}.\\
	
	\noindent
	\textbf{Step 2: Integration by parts.}
	First, we bring $m_{\DDSM}$ into a form which is more susceptible to our subsequent approach. To that end, note that Stein's lemma~\citep[cf. Lemma 3.6.5 in][]{CasBer2002} states that
	\begin{equation*}
		\E[f(Z)Z]=\E[f'(Z)],
	\end{equation*}
	for $Z\sim\mathcal{N}(0,1)$ and any differentiable function $f:\R\to\R$ such that $\E[f(Z)Z]$ and $\E[f'(Z)]$ exist. It gives us
    \begin{align*}
        \E\left[s_{\theta,t}(X_t)Z_t\bigm\vert X_0=x\right]&=\E\left[g(Z)Z\right]=\E\left[g'(Z)\right]=\sqrt{1-\exp(-2t)}\E\left[s'_{\theta,t}(X_t)\bigm\vert X_0=x\right],
    \end{align*}
	with $Z\sim\mathcal{N}(0,1)$ and $g(z):=s_{\theta,t}(\exp(-t)x+\sqrt{1-\exp(-2t)}z)$. Therefore, we can write
	\begin{equation}
		\label{eq:m_dsm_stein}
		m_{\DDSM}(\theta,x)=\int_0^T\E\left[s^2_{\theta,t}(X_t)+2s'_{\theta,t}(X_t)\Bigm\vert X_0=x\right]\dif t.
	\end{equation}
	Note that Stein's lemma essentially characterizes integration by parts for Gaussian measures, such that the above representation of $m_{\DDSM}$ is based on the same integration by parts argument which was used to obtain the tractable version of the Fisher divergence in~\eqref{eq:fisher_distance_3} in Section~\ref{subsec:estimators}.\\

	\noindent
	\textbf{Step 3: Contraction of the conditional expectation.}
	Based on~\eqref{eq:m_dsm_stein}, we have
	\begin{equation*}
		\partial_\theta m_{\DDSM}(\theta,x)= \int_0^T \E\left[\partial_\theta\Upsilon_{\theta,t}(X_t)\Bigm\vert X_0=x\right]\dif t,
	\end{equation*}
	where $\Upsilon_{\theta,t}(x):=s^2_{\theta,t}(x)+2s'_{\theta,t}(x)$. Differentiation under the integral sign is justified by dominated convergence. Again, as explained at the beginning of the proof of Lemma~\ref{lemma:lipschitz_curvature_bounds_sm} in Appendix~\ref{app:proof_lemma_lipschitz_curvature_bounds_sm}, we define the Lipschitz constant as the global envelope
	\begin{equation*}
		\mathtt{L}_{\DDSM}(x):=\sup_{\theta\in\Theta}\left|\partial_\theta m_{\DDSM}(\theta,x)\right|,
	\end{equation*}
	such that we need to bound
	\begin{equation}
		\label{eq:lipschitz_constant_dsm}
		\|\mathtt{L}_{\DDSM}(X_0)\|_{L^2(\Probtrue)}=\left\|\sup_{\theta\in\Theta}\left|\int_0^T \E\left[\partial_\theta\Upsilon_{\theta,t}(X_t)\Bigm\vert X_0\right]\dif t\right|\right\|_{L^2(\Prob_{\theta_0})}.
	\end{equation}
	Note that on the right-hand side of~\eqref{eq:lipschitz_constant_dsm} we have replaced $\Probtrue$ with $\Prob_{\theta_0}$, which is justified by Assumption~\ref{assumption:probtrue}. The monotonicity of the integral and the conditional expectation respectively allows us to pull the absolute value together with the supremum all the way into the conditional expectation. With the triangle inequality for integrals we may also pull the norm inside the integral wrt $t$, which gives us
	\begin{equation}
		\label{eq:L2_norm_1}
		\left\|\sup_{\theta\in\Theta}\left|\int_0^T \E\left[\partial_\theta\Upsilon_{\theta,t}(X_t)\Bigm\vert X_0\right]\dif t\right|\right\|_{L^2(\Prob_{\theta_0})}\leqslant\int_0^T\left\| \E\left[\sup_{\theta\in\Theta}\left|\partial_\theta\Upsilon_{\theta,t}(X_t)\right|\Bigm\vert X_0\right]\right\|_{L^2(\Prob_{\theta_0})}\dif t.
	\end{equation}
	The conditional expectation is a contraction, that is, it holds that $\|\E[X\,|\,\mathcal{F}_0]\|_{L^2(\Prob)}\leqslant\|X\|_{L^2(\Prob)}$ for any random variable $X$ defined on the probability space $(\Omega,\mathcal{F},\Prob)$ and $\mathcal{F}_0\subseteq\mathcal{F}$ being a sub-$\sigma$-algebra~\citep[see Corollary 8.21 in][]{Kle2020}. Applying this to the right-hand side of~\eqref{eq:L2_norm_1} yields
	\begin{equation}
		\label{eq:L2_norm_2}
		\int_0^T\left\| \E\left[\sup_{\theta\in\Theta}\left|\partial_\theta\Upsilon_{\theta,t}(X_t)\right|\Bigm\vert X_0\right]\right\|_{L^2(\Prob_{\theta_0})}\dif t\leqslant\int_0^T\left\|\sup_{\theta\in\Theta}\left|\partial_\theta\Upsilon_{\theta,t}(X_t)\right|\right\|_{L^2(\Prob_{\theta_0,t})}\dif t.
	\end{equation}
	Note that on the right-hand side of~\eqref{eq:L2_norm_2} we have replaced $\Prob_{\theta_0}$ with $\Prob_{\theta_0,t}$, since $X_t\sim\Prob_{\theta_0,t}$ when $X_0\sim\Prob_{\theta_0}$. In the next step, we will bound the norm inside the integral on the right-hand side of~\eqref{eq:L2_norm_2}.\\

	\noindent
	\textbf{Step 4: Bound the supremum.}
	For the supremum term in~\eqref{eq:L2_norm_2} we have
	\begin{equation}
		\label{eq:supremum_bound}
		\sup_{\theta\in\Theta}\left|\partial_\theta\Upsilon_{\theta,t}(x)\right|\leqslant 2\left\{\sup_{\theta\in\Theta}\left|s_{\theta,t}(x)\right|\right\}\left\{\sup_{\theta\in\Theta}\left|\partial_\theta s_{\theta,t}(x)\right|\right\}+2\sup_{\theta\in\Theta}\left|\partial_\theta s'_{\theta,t}(x)\right|,
	\end{equation}
	where we will use from now on the notation 
	\begin{equation*}
	\alpha_t:=\sup_{\theta\in\Theta}\left| s_{\theta,t}(X_t)\right|,\quad \beta_t:=\sup_{\theta\in\Theta}\left|\partial_\theta s_{\theta,t}(X_t)\right|,\quad \gamma_t:=\sup_{\theta\in\Theta}|\partial_\theta s'_{\theta,t}(X_t)|
	\end{equation*}
	for brevity. Inserting the right-hand side of~\eqref{eq:supremum_bound} into the norm, applying the triangle inequality first and the Cauchy-Schwarz inequality to the product afterwards gives us
	\begin{equation}
		\label{eq:triangle_cauchy_schwarz}
		\left\|\sup_{\theta\in\Theta}\left|\partial_\theta\Upsilon_{\theta,t}(X_t)\right|\right\|_{L^2(\Prob_{\theta_0,t})}\leqslant 2\left\|\alpha_t\right\|_{L^4(\Prob_{\theta_0,t})}\left\|\beta_t\right\|_{L^4(\Prob_{\theta_0,t})}+2\left\|\gamma_t\right\|_{L^2(\Prob_{\theta_0,t})}.
	\end{equation}
	Next, we will bound the three norms on the right-hand side of~\eqref{eq:triangle_cauchy_schwarz} separately.\\

	\noindent
	\textbf{Step 5: Bounding the norms.}
	First, we use the representation for the score function $s_{\theta,t}$ following from 1. of Lemma~\ref{lemma:gm_properties} and the fact that $w_{\theta,t}(x)\in(0,1)$ for all $x\in\R$ to obtain
	\begin{equation*}
		\sup_{\theta\in\Theta}|s_{\theta,t}(x)|=\sup_{\theta\in\Theta}|(2w_{\theta,t}(x)-1)\mu_t-x|\leqslant \mu_t + |x|.
	\end{equation*}
	Inserting this into the norm, applying the triangle inequality and using the identity
	\begin{equation*}
		\|X_t\|_{L^4(\Prob_{\theta_0,t})}=\|X_t\|_{L^4(\mathcal{N}(\mu_t,1))}=\left(3+6\mu_t^2+\mu_t^4\right)^{1/4},
	\end{equation*}
	which is due to the fact that $x\mapsto x^4$ is even, gives us
	\begin{equation}
		\label{eq:score_norm_bound}
		\left\|\alpha_t\right\|_{L^4(\Prob_{\theta_0,t})}=\left\|\sup_{\theta\in\Theta}\left|s_{\theta,t}(X_t)\right|\right\|_{L^4(\Prob_{\theta_0,t})}\leqslant \mu_t + \|X_t\|_{L^4(\Prob_{\theta_0,t})}= \mu_t + \left(3+6\mu_t^2+\mu_t^4\right)^{1/4}.
	\end{equation}
	Next, we turn to the second term. After computing the derivative
	\begin{equation*}
		\partial_\theta w_{\theta,t}(x)=\frac{\varphi(x-\mu_t)\varphi(x+\mu_t)}{\left(\theta\varphi(x-\mu_t)+(1-\theta)\varphi(x+\mu_t)\right)^2},
	\end{equation*}
	we use 3. of Lemma~\ref{lemma:gm_properties} to obtain the pointwise bound
	\begin{equation}
		\label{eq:pointwise_bound}
		\sup_{\theta\in\Theta}\left|\partial_\theta w_{\theta,t}(x)\right|\leqslant \frac{\exp(-2\mu_t|x|)}{\eta^2}.
	\end{equation}
	With $\partial_\theta s_{\theta,t}(x)=2\mu_t\partial_\theta w_{\theta,t}(x)$ we then arrive at
	\begin{equation*}
		\left\|\sup_{\theta\in\Theta}\left|\partial_\theta s_{\theta,t}(X_t)\right|\right\|_{L^4(\Prob_{\theta_0,t})}\leqslant \frac{2\mu_t}{\eta^2}\left\|\exp(-2\mu_t|X_t|)\right\|_{L^4(\Prob_{\theta_0,t})}.
	\end{equation*}
	The norm on the right-hand side can be bounded as follows with 4. of Lemma~\ref{lemma:gm_properties}:
	\begin{equation*}
		\left\|\exp(-2\mu_t|X_t|)\right\|_{L^4(\Prob_{\theta_0,t})}\leqslant \left[\varphi(\mu_t)\left(\frac{1}{7\mu_t}+\frac{1}{9\mu_t}\right)\right]^{1/4}=\left(\frac{16}{63\mu_t\sqrt{2\pi}}\right)^{1/4}\exp\left(-\frac{\mu_t^2}{8}\right)
	\end{equation*}
	In total, the two previous displays give us
	\begin{equation}
		\label{eq:score_derivative_norm_bound}
		\left\|\beta_t\right\|_{L^4(\Prob_{\theta_0,t})}=\left\|\sup_{\theta\in\Theta}\left|\partial_\theta s_{\theta,t}(X_t)\right|\right\|_{L^4(\Prob_{\theta_0,t})}\leqslant\frac{4\mu_t^{3/4}}{\eta^2(2\pi)^{1/8}}\exp\left(-\frac{\mu_t^2}{8}\right).
	\end{equation}
	Finally, we adress the third term on the right-hand side of~\eqref{eq:triangle_cauchy_schwarz}. The representation from 1. of Lemma~\ref{lemma:gm_properties} tells us that $\partial_\theta s'_{\theta,t}(x)=2\mu_t\partial_\theta w'_{\theta,t}(x)$. Differentiating the logistic derivative $w'_{\theta,t}(x)=2\mu_t w_{\theta,t}(x)(1-w_{\theta,t}(x))$ wrt $\theta$ yields
		\begin{align*}
		\partial_\theta w'_{\theta,t}(x)&=2\mu_t\left\{\partial_\theta w_{\theta,t}(x)(1-w_{\theta,t}(x))-w_{\theta,t}(x)\partial_\theta w_{\theta,t}(x)\right\}\\
		&=2\mu_t\partial_\theta w_{\theta,t}(x)(1-2w_{\theta,t}(x)).
	\end{align*}
	Combining this with the pointwise bound in~\eqref{eq:pointwise_bound} and the fact that $1-w_{\theta,t}(x)\in(-1,1)$ for all $x\in\R$ gives
	\begin{equation*}
		\sup_{\theta\in\Theta}\left|\partial_\theta s'_{\theta,t}(x)\right|\leqslant \frac{4\mu_t^2}{\eta^2}\exp(-2\mu_t|x|).
	\end{equation*}
	And lastly, bounding the norm with 4. of Lemma~\ref{lemma:gm_properties} yields
	\begin{align}
		\nonumber
		\left\|\gamma_t\right\|_{L^4(\Prob_{\theta_0,t})}=\left\|\sup_{\theta\in\Theta}\left|\partial_\theta s'_{\theta,t}(X_t)\right|\right\|_{L^2(\Prob_{\theta_0,t})}&\leqslant \frac{4\mu_t^2}{\eta^2}\left\|\exp(-2\mu_t|X_t|)\right\|_{L^2(\Prob_{\theta_0,t})}\\
		\nonumber
		&\leqslant \frac{4\mu_t^2}{\eta^2}\left[\varphi(\mu_t)\left(\frac{1}{3\mu_t}+\frac{1}{5\mu_t}\right)\right]^{1/2}\\
		\label{eq:score_second_derivative_norm_bound}
		&\leqslant\frac{4\mu_t^{3/2}}{\eta^2(2\pi)^{1/4}}\exp\left(-\frac{\mu_t^2}{4}\right).
	\end{align}
	Combining~\eqref{eq:score_norm_bound},~\eqref{eq:score_derivative_norm_bound} and~\eqref{eq:score_second_derivative_norm_bound} and some simplifications give us the following bound for the norm on the right-hand side of~\eqref{eq:triangle_cauchy_schwarz}:
	\begin{align}
		\label{eq:final_norm_bound}
		\left\|\sup_{\theta\in\Theta}\left|\partial_\theta\Upsilon_{\theta,t}(X_t)\right|\right\|_{L^2(\Prob_{\theta_0,t})}
		&\leqslant \frac{8}{\eta^2(2\pi)^{1/8}}\exp\left(-\frac{\mu_t^2}{8}\right)\left(\mu_t^{7/4} + \mu_t^{3/4}\left(3+6\mu_t^2+\mu_t^4\right)^{1/4}+\mu_t^{3/2}\right).
	\end{align}
	Inserting this back into the integral in~\eqref{eq:L2_norm_2} and rewriting it as an expectation wrt a random variable $U\sim\text{Uniform}([0,T])$ gives us
	\begin{align*}
		\int_0^T\left\|\sup_{\theta\in\Theta}\left|\partial_\theta\Upsilon_{\theta,t}(X_t)\right|\right\|_{L^2(\Prob_{\theta_0,t})}\dif t\leqslant \frac{8T}{\eta^2(2\pi)^{1/8}}\E\left[g(\mu_U)\exp\left(-\frac{\mu_U^2}{8}\right)\right],
	\end{align*}
	where $g(x):=x^{7/4} + x^{3/4}\left(3+6x^2+x^4\right)^{1/4}+x^{3/2}$. After substituting as described in Step 1, this is the bound given in the statement of Lemma~\ref{lemma:lipschitz_curvature_bounds_dsm}. In the next step we will distinguish between $\mu_t\geqslant1$ and $\mu_t<1$ in order to show that the integral is independent of $\mu$ when $T$ is chosen sufficiently large.\\

	\noindent
	\textbf{Step 6: Distinguish $\mu_t>1$ and $\mu_t\leqslant1$.} For fixed $\mu>0$ consider the decomposition
	\begin{equation*}
		[0,T]=\{t\in[0,T]:\mu_t>1\}\cup\{t\in[0,T]:\mu_t\leqslant1\}.
	\end{equation*}
	Then, applying $\bm{1}_{[0,T]}=\bm{1}_{\{\mu_t>1\}}+\bm{1}_{\{\mu_t\leqslant1\}}$ to the right-hand side of~\eqref{eq:final_norm_bound} yields
	\begin{align}
		\label{eq:final_norm_bound_2}
		\left\|\sup_{\theta\in\Theta}\left|\partial_\theta\Upsilon_{\theta,t}(X_t)\right|\right\|_{L^2(\Prob_{\theta_0,t})}
		&\leqslant \frac{32}{\eta^2(2\pi)^{1/8}}\left\{\mu_t^2\exp\left(-\frac{\mu_t^2}{8}\right)\bm{1}_{\{\mu_t>1\}}(t)+\mu^{3/4}_t\bm{1}_{\{\mu_t\leqslant1\}}(t)\right\}.
	\end{align}
	Note that $\mu_t\leqslant1$ for all $t\in[0,T]$ if $\mu\leqslant1$. In that case it holds that $\{t\in[0,T]:\mu_t>1\}=\emptyset$ and the first summand in the sum above vanishes. With this, we are ready for the last step.\\

	\noindent
	\textbf{Step 7: Integrate over time.} Let us assume without loss of generality that $\mu>1$ holds true. Otherwise, the computation of the integral over the set $\{t\in[0,T]:\mu_t> 1\}$ becomes obsolete. Given that for fixed $\mu>0$ the function $t\mapsto\mu_t$ is strictly decreasing, the sets $\{t\in[0,T]:\mu_t>1\}$ and $\{t\in[0,T]:\mu_t\leqslant1\}$ are intervals. In fact, with $t^*:=\log\mu$ we have $\{t\in[0,T]:\mu_t>1\}=[0,t^*)$ and $\{t\in[0,T]:\mu_t\leqslant1\}=[t^*,T]$.
	
	Then, using the substitution $x = \mu_t$, we can compute
	\begin{align}
		\nonumber
		\int_0^T\mu_t^2\exp\left(-\frac{\mu_t^2}{8}\right)\bm{1}_{\{\mu_t>1\}}(t)\dif t&=\int_0^{t^*}\mu_t^2\exp\left(-\frac{\mu^2_t}{8}\right)\dif t
		=\int_1^\mu x\exp\left(-\frac{x^2}{8}\right)\dif x\\
		\label{eq:integral_bound_1}
		&=4\left(\exp(-1/8)-\exp\left(-\frac{\mu^2}{8}\right)\right)
		\leqslant4.
	\end{align}
	Similarly, we can compute
	\begin{align}
		\label{eq:integral_bound_2}
		\int_0^T\mu^{3/4}_t\bm{1}_{\{\mu_t\leqslant1\}}(t)\dif t&=\int_{t^*}^T\mu^{3/4}_t\dif t=\int_{\mu_T}^1x^{-1/4}\dif x=\frac{4}{3}\left(1-\mu_T^{3/4}\right)\leqslant\frac{4}{3},
	\end{align}
	where we used the assumption $T> 0\vee\log\mu$ to ensure that $\mu_T\leqslant1$. The combination of~\eqref{eq:final_norm_bound_2}, ~\eqref{eq:integral_bound_1} and ~\eqref{eq:integral_bound_2} results in
	\begin{equation*}
		\int_0^T\left\|\sup_{\theta\in\Theta}\left|\partial_\theta\Upsilon_{\theta,t}(X_t)\right|\right\|_{L^2(\Prob_{\theta_0,t})}\dif t\leqslant \frac{32}{\eta^2(2\pi)^{1/8}}\left(4+\frac{4}{3}\right)\leqslant\frac{192}{\eta^2(2\pi)^{1/8}}.
	\end{equation*}
	Thus, with $\mathtt{C}(\eta):=192/(\eta^2(2\pi)^{1/8})$ we have shown that $\|\mathtt{L}_{\DDSM}(X_0)\|_{L^2(\Prob_{\theta_0})}\leqslant \mathtt{C}(\eta)$ for all $\mu>0$ and $T> 0\vee\log\mu$. With the arguments from Step 1, this bound translates to the family~\ref{eq:gm}. This concludes the proof of the first part of Lemma~\ref{lemma:lipschitz_curvature_bounds_dsm}.\\

	\noindent
    2. Using the same arguments as in the beginning of the proof of 2. of Lemma~\ref{lemma:lipschitz_curvature_bounds_sm} in Appendix~\ref{app:proof_lemma_lipschitz_curvature_bounds_sm} together with identity~\eqref{eq:m_dsm_stein} we obtain
	\begin{equation*}
		\E_{\theta_0}[m_{\DDSM}(\theta,X) - m_{\DDSM}(\theta_0,X)]=\int_0^T\FI(\Prob_{\theta_0,t},\Prob_{\theta,t})\dif t.
	\end{equation*}
	Once more, in view of 2. of Lemma~\ref{lemma:reduction_constants} it is sufficient to consider the one-dimensional symmetric family~\eqref{eq:symmetric_one_d_family}. Then, Proposition~\ref{proposition:score_bound} implies
	\begin{align*}
		&\E_{\theta_0}[m_{\DDSM}(\theta,X) - m_{\DDSM}(\theta_0,X)]\\
		=&(\theta-\theta_0)^2\int_0^T\int_\R4\mu_t^2\left(\frac{\varphi(x-\mu_t)\varphi(x+\mu_t)}{f_{\theta,t}(x)f_{\theta_0,t}(x)}\right)^2f_{\theta_0,t}(x)\dif x\dif t,
	\end{align*}
    such that with
    \begin{equation*}
        \mathtt{C}_{\DDSM}:=\inf_{|\theta-\theta_0|\geqslant\delta}\left\{\int_0^T\int_\R4\mu_t^2\left(\frac{\varphi(x-\mu_t)\varphi(x+\mu_t)}{f_{\theta,t}(x)f_{\theta_0,t}(x)}\right)^2f_{\theta_0,t}(x)\dif x\dif t\right\}
    \end{equation*}
    it holds that
    \begin{equation*}
        \inf_{|\theta-\theta_0|\geqslant\delta}\E_{\theta_0}\left[m_{\DDSM}(\theta,X)-m_{\DDSM}(\theta_0,X)\right]\geqslant\mathtt{C}_{\DDSM}\delta^2,
    \end{equation*}
	that is the curvature condition is fulfilled with $\alpha=2$ and constant $\mathtt{C}_{\DDSM}$. Again, this relies on Assumption~\ref{assumption:probtrue}, as in the case of SM.
	
	In order to obtain the lower bound for $\mathtt{C}_{\DDSM}$, we define the events
	\begin{align*}
		\Omega:=&\left\{x\in\R:|x|\leqslant1\right\},\\
		\mathcal{T}:=&\left\{t\in[0,T]:\mu_t\leqslant1\right\}.
	\end{align*}
    Then, we can compute
\begin{equation}
	\label{eq:lower_bound_subset}
        \mathtt{C}_{\DDSM}
		\geqslant\int_0^T\int_\R\bm{1}_\mathcal{T}(t)\bm{1}_\Omega(x)4\mu_t^2\exp(-4\mu_t|x|)f_{\theta_0,t}(x)\dif x\dif t,
    \end{equation}
	where the inequality follows from 3. of Lemma~\ref{lemma:gm_properties} together with the fact that the integrand is non-negative. Now, note that $\exp(-4\mu_t|x|)\geqslant\exp(-4)$ on the event $\mathcal{T}\times\Omega$ and
	\begin{equation*}
		\Prob(|X_t|\leqslant1)=\Prob(|\mathcal{N}(\mu_t,1)|\leqslant1)\geqslant \Prob(|\mathcal{N}(1,1)|\leqslant1)=:\mathtt{C}_{\mathtt{p}}>0,
	\end{equation*}
	for all $\mu_t\leqslant1$. With this we can further lower bound the right-hand side of~\eqref{eq:lower_bound_subset} as follows:
\begin{align*}
	\int_0^T\int_\R\bm{1}_\mathcal{T}(t)\bm{1}_\Omega(x)4\mu_t^2\exp(-4\mu_t|x|)f_{\theta_0,t}(x)\dif x\dif t&\geqslant 4\exp(-4)\mathtt{C}_{\mathtt{p}}\int_{0}^T\bm{1}_\mathcal{T}(t)\mu_t^2\dif t.
\end{align*}
Setting $t_0:=0\vee\log\mu$, note that $\mu_t\leqslant1$ for all $t\in[t_0,T]$, which gives
\begin{align*}
	\int_{0}^T\bm{1}_\mathcal{T}(t)\mu_t^2\dif t=\int_0^{t_0}\bm{1}_\mathcal{T}(t)\mu_t^2\dif t+\int_{t_0}^T\bm{1}_\mathcal{T}(t)\mu_t^2\dif t=\int_{t_0}^T\mu_t^2\dif t,
\end{align*}
since the first integral over $[0, t_0]\cap\mathcal{T}$ vanishes by definition of $t_0$. Let us now distinguish two cases.\\

\noindent
\textbf{Case 1: $\mu\leqslant 1$.} Then $t_0=0$ and it follows
\begin{align*}
	\int_{t_0}^T\mu_t^2\dif t&=\frac{\mu^2}{2}(1-\exp(-2T)).
\end{align*}

\noindent
\textbf{Case 2: $\mu> 1$.} Then $t_0=\log\mu$ and with $T\geqslant t_0$ it follows
\begin{align*}
	\int_{t_0}^T\mu_t^2\dif t&=\frac{\mu^2}{2}(\exp(-2\log\mu)-\exp(-2T))=\frac{1}{2}(1-\mu^2\exp(-2T)).
\end{align*}

Putting everything together, we arrive at
\begin{align*}
	\mathtt{C}_{\DDSM}\gtrsim
	\begin{cases}
		\mu^2(1-\exp(-2T)),&\text{if }\mu\leqslant1,\\
		 1-\mu^2\exp(-2T),&\text{if }\mu>1.
	\end{cases}
\end{align*}
If $\mu>1$ and $T\geqslant\log(\mu^m)$ for some integer $m\geqslant 2$, then
\begin{equation*}
	\mathtt{C}_{\DDSM}\gtrsim 1-\mu^2\exp(-2\log(\mu^m))=1-\frac{1}{\mu^{2(m-1)}}>0.
\end{equation*}
Setting $\mu=\Delta$ the result translates to~\ref{eq:gm}, which concludes the proof of the second part.\qed

	\section{Auxiliary results}

	\subsection{Gaussian bounds}
	\label{app:gaussian_bounds}

	\begin{lemma}[Gaussian tail bounds]
	\label{lemma:gaussian_tail_bound}
	Let $X\sim\mathcal{N}(\mu,\sigma^2)$ and $t>0$. Then, it holds that
	\begin{equation*}
		\frac{\varphi(t/\sigma)}{t/\sigma+\sigma/t}\leqslant\Prob\left(X-\mu\geqslant t\right)\leqslant \frac{\varphi(t/\sigma)}{t/\sigma}.
	\end{equation*}
	The same bounds hold true for $\Prob(X-\mu\leqslant-t)$.
\end{lemma}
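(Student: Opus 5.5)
By the translation invariance of the tail probability, it suffices to take $\mu=0$. The substitution $Z=X/\sigma\sim\mathcal{N}(0,1)$ and $s=t/\sigma>0$ gives $\Prob(X\geqslant t)=\Prob(Z\geqslant s)=1-\Phi(s)$. The claim is then the classical Mills-ratio sandwich
\begin{equation*}
	\frac{\varphi(s)}{s+1/s}=\frac{s\varphi(s)}{s^2+1}\leqslant 1-\Phi(s)\leqslant\frac{\varphi(s)}{s},\qquad s>0,
\end{equation*}
and we prove it using only the identity $\varphi'(u)=-u\varphi(u)$. For the lower-tail statement we use the symmetry $Z\stackrel{d}{=}-Z$, which gives $\Prob(X-\mu\leqslant -t)=\Prob(Z\geqslant s)$, so no separate argument is needed.

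For the upper bound, on $[s,\infty)$ we have $u/s\geqslant1$. Hence
\begin{equation*}
	1-\Phi(s)=\int_s^\infty\varphi(u)\dif u\leqslant\frac{1}{s}\int_s^\infty u\varphi(u)\dif u=\frac{1}{s}\left[-\varphi(u)\right]_s^\infty=\frac{\varphi(s)}{s}.
\end{equation*}

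For the lower bound, define $h(s):=\frac{s}{s^2+1}\varphi(s)-(1-\Phi(s))$; it suffices to show $h\leqslant 0$ on $(0,\infty)$. Since $\varphi(s)\to0$ and $1-\Phi(s)\to0$, we have $h(s)\to0$ as $s\to\infty$. Differentiating, with $\varphi'(s)=-s\varphi(s)$ and $\frac{\dif}{\dif s}(1-\Phi(s))=-\varphi(s)$, gives
\begin{equation*}
	h'(s)=\varphi(s)\left[\frac{1-s^2}{(1+s^2)^2}-\frac{s^2}{1+s^2}+1\right]=\varphi(s)\,\frac{1-s^2+(1+s^2)}{(1+s^2)^2}=\frac{2\varphi(s)}{(1+s^2)^2}>0.
\end{equation*}
Thus $h$ is strictly increasing on $(0,\infty)$ with limit $0$ at infinity, so $h(s)<0$ for every $s>0$, which is the lower bound. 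Substituting $s=t/\sigma$ back yields both inequalities exactly as stated.

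The only step that needs care is this derivative computation, specifically that the bracket collapses to the positive quantity $2/(1+s^2)^2$. The upper bound and the reduction via shift, scaling and symmetry are immediate.
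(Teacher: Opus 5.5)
Your proof is correct. The reduction by shifting, scaling and symmetry and your upper bound are exactly the paper's argument.

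For the lower bound you take a different route. You guess the target $\frac{s}{1+s^2}\varphi(s)$ and certify it by differentiation: you show that $h(s)=\frac{s}{1+s^2}\varphi(s)-(1-\Phi(s))$ satisfies $h'(s)=2\varphi(s)/(1+s^2)^2>0$ and $h(s)\to 0$ as $s\to\infty$.

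The paper instead integrates by parts with $u(x)=1/x$ and $v'(x)=x\varphi(x)$. This gives the exact identity $1-\Phi(s)=\varphi(s)/s-\int_s^\infty \varphi(x)x^{-2}\dif x$. It then bounds the remainder by $(1-\Phi(s))/s^2$ using $x\geqslant s$, and rearranges the resulting self-referential inequality.

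The paper's route is constructive: the identity produces the bound rather than verifying one chosen in advance. The same identity also yields the upper bound, since the remainder is nonnegative, and it can be iterated to get the asymptotic Mills-ratio expansion. Your route is a short calculus check with no boundary-term bookkeeping. Its cost is that you must already know the form $s/(1+s^2)$.
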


\begin{proof}
	Let $Z\sim\mathcal{N}(0,1)$. Then, we can compute
	\begin{align*}
		\Prob(Z  \geqslant t)=\int_t^\infty\varphi(x)\dif x
		\leqslant \int_t^\infty\frac{x}{t}\varphi(x)\dif x
		=\frac{1}{t}\left[-\varphi(x)\right]_t^\infty
		=\frac{\varphi(t)}{t},
	\end{align*}
	to obtain the upper bound. For the lower bound, we can compute
	\begin{align*}
		\Prob(Z  \geqslant t)=\int_t^\infty\varphi(x)\dif x
		=\int_t^\infty\frac{x}{x}\varphi(x)\dif x=\left[-\frac{\varphi(x)}{x}\right]_t^\infty-\int_t^\infty\frac{\varphi(x)}{x^2}\dif x,
	\end{align*}
	where the last equality follows from integration by parts applied to $u(x)=1/x$ and $v'(x)=x\varphi(x)$. Then, with $[-\varphi(x)/x]_t^\infty=\varphi(t)/t$ and
	\begin{equation}
		\int_t^\infty\frac{\varphi(x)}{x^2}\dif x\leqslant \int_t^\infty\frac{\varphi(x)}{t^2}\dif x=\frac{\Prob(Z  \geqslant t) }{t^2},
	\end{equation}
	we arrive at
	\begin{equation*}
		\Prob(Z  \geqslant t)\geqslant \frac{\varphi(t)}{t}-\frac{\Prob(Z  \geqslant t) }{t^2}.
	\end{equation*}
	Rearranging this gives us
	\begin{equation*}
		\Prob(Z  \geqslant t)\geqslant \frac{\varphi(t)}{t(1+1/t^2)}=\frac{\varphi(t)}{t+1/t},
	\end{equation*}
	which is the desired lower bound. Replacing $Z$ with $X-\mu$ and subsequently normalizing with $\sigma$ yields the upper and lower bound from the statement. The bounds for $\Prob(X-\mu\leqslant-t)$ follow by symmetry.
\end{proof}

The upper bound in Lemma~\ref{lemma:gaussian_tail_bound} is not meaningful for small values of $t$, since for $t\downarrow 0$ it holds that $\varphi(t)/t\rightarrow \infty$. In order to obtain a more sensible bound for small values of $t$, we can use the following Chernoff-type tail bound.	

    \begin{lemma}[Chernoff-type Gaussian tail bound]
	\label{lemma:gaussian_tail_bound1}
	Let $X\sim\mathcal{N}(\mu,\sigma^2)$ and $t\geqslant0$. Then, it holds that
	\begin{equation*}
		\Prob\left(X-\mu\geqslant t\right)\leqslant \frac{1}{2}\exp\left(-\frac{t^2}{2\sigma^2}\right).
	\end{equation*}
	The same bound hold true for $\Prob(X-\mu\leqslant-t)$.
\end{lemma}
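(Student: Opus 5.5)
The plan is the classical Chernoff argument: bound the moment generating function, apply Markov's inequality, and then recover the extra factor $1/2$ by a separate comparison. First reduce to the standard case. Setting $Z:=(X-\mu)/\sigma\sim\mathcal{N}(0,1)$ gives $\Prob(X-\mu\geqslant t)=\Prob(Z\geqslant t/\sigma)$, so it suffices to show
\begin{equation*}
\Prob(Z\geqslant s)\leqslant \tfrac12\exp(-s^2/2)\quad\text{for all } s\geqslant0 .
\end{equation*}
The lower-tail statement then follows from the symmetry of $\varphi$, exactly as at the end of the proof of Lemma~\ref{lemma:gaussian_tail_bound}.

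The plain Chernoff bound is not enough. Markov's inequality applied to $\exp(\lambda Z)$, with $\E[\exp(\lambda Z)]=\exp(\lambda^2/2)$ and optimised at $\lambda=s$, yields only $\exp(-s^2/2)$, missing the factor $1/2$. This factor is the main obstacle.

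To obtain it, I would shift the integration variable instead. Substituting $x=s+u$ with $u\geqslant0$ gives
\begin{equation*}
\Prob(Z\geqslant s)=\int_0^\infty\varphi(s+u)\dif u=\exp(-s^2/2)\int_0^\infty \exp(-su)\varphi(u)\dif u .
\end{equation*}
Since $s\geqslant0$ and $u\geqslant0$, we have $\exp(-su)\leqslant1$. Hence the remaining integral is at most $\int_0^\infty\varphi(u)\dif u=1/2$, which is precisely the sharp constant. The identity $\varphi(s+u)=\varphi(u)\exp(-su-s^2/2)$ used here is immediate from the definition of $\varphi$.

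Finally, I would note that equality holds at $s=0$, so the constant $1/2$ cannot be improved. For small $t$ this bound complements the Mills-type bound of Lemma~\ref{lemma:gaussian_tail_bound}, which blows up as $t\downarrow0$. Rescaling $s=t/\sigma$ then gives the statement.
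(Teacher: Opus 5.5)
Your proof is correct, but it takes a different route from the paper's.

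The paper uses a calculus argument. It sets $h(z):=\Prob(Z\geqslant z)-\frac12\exp(-z^2/2)$ and computes $h'(z)=\frac12\exp(-z^2/2)\bigl(z-\sqrt{2/\pi}\bigr)$. So $h$ decreases on $[0,\sqrt{2/\pi}]$ and increases afterwards, and $h\leqslant 0$ follows from $h(0)=0$ and $h(z)\to 0$ as $z\to\infty$.

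You instead shift the integration variable and use $\varphi(s+u)=\varphi(u)\exp(-su-s^2/2)$ together with $\exp(-su)\leqslant 1$ for $s,u\geqslant 0$. This is the same completion-of-the-square the paper uses in the proof of Lemma~\ref{lemma:Gaussian_exp_bound}.

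What your route buys:
\begin{itemize}
\item It is a single estimate, with no sign analysis of a derivative and no limit at infinity.
\item It gives the slack explicitly: $\frac12\exp(-s^2/2)-\Prob(Z\geqslant s)=\exp(-s^2/2)\int_0^\infty\bigl(1-\exp(-su)\bigr)\varphi(u)\dif u$. Equality at $s=0$ and strict inequality for $s>0$ are then immediate.
\end{itemize}

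What the paper's route buys is that it locates where the gap between the two sides is largest, namely at $z=\sqrt{2/\pi}$. The lemma does not need this.

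Your opening remark about the Markov/Chernoff bound is correct but plays no role. Your final argument never uses the moment generating function, despite the lemma's name.
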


\begin{proof}
Consider $Z\sim\mathcal{N}(0,1)$ and the map $h:[0,\infty)\rightarrow\R,\,z\mapsto h(z)$ with
\begin{equation*}
	h(z):=\Prob(Z\geqslant z)-\frac{1}{2}\exp\left(-\frac{z^2}{2}\right).
\end{equation*}
Then, the derivative of $h$ is given by
\begin{align*}
	h'(z)&=\frac{\dif}{\dif x}\left\{1-\Phi(z))-\frac{1}{2}\exp\left(-\frac{z^2}{2}\right)\right\}\\
	&=-\varphi(z)+\frac{z}{2}\exp\left(-\frac{z^2}{2}\right)\\
	&=\frac{1}{2} \exp\left(-\frac{z^2}{2}\right)\left(z- \sqrt{2/\pi}\right)
\end{align*}
and it holds that $h'(z)\leqslant 0$ for $z\in[0,\sqrt{2/\pi}]$ and $h'(z)\geqslant 0$ for $z\geqslant\sqrt{2/\pi}$. This shows that $h$ is decreasing on $[0,\sqrt{2/\pi}]$ and increasing on $[\sqrt{2/\pi},\infty)$. Together with the observation that $h(0)=0$ and $h(z)\rightarrow 0$ as $z\rightarrow\infty$, it implies that $h(z)\leqslant 0$ for all $z\geqslant 0$. Therefore, we have
\begin{equation*}
	\Prob(Z\geqslant z)\leqslant \frac{1}{2}\exp\left(-\frac{z^2}{2}\right),
\end{equation*}
for all $z\geqslant 0$. The claim follows by substituting $z=t/\sigma$.
\end{proof}

Combining the upper bound from Lemma~\ref{lemma:gaussian_tail_bound} with the one from Lemma~\ref{lemma:gaussian_tail_bound1} gives us the following upper bound for the tail probabilities of a Gaussian $X\sim\mathcal{N}(\mu,\sigma^2)$ , which is meaningful for all values of $t\geqslant0$ and in fact tight for $t=0$.

\begin{corollary}
	\label{corollary:gaussian_tail_bound}
	Let $X\sim\mathcal{N}(\mu,\sigma^2)$ and $t\geqslant0$. Then, it holds that
	\begin{equation*}
	\Prob\left(X-\mu\geqslant t\right)\leqslant \left(\sqrt{\frac{\pi}{2}}\wedge\frac{1}{t}\right)\varphi(t/\sigma).
\end{equation*}
	The same bound holds true for $\Prob(X-\mu\leqslant-t)$.
\end{corollary}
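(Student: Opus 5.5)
The plan is to derive Corollary~\ref{corollary:gaussian_tail_bound} by taking, for each $t\geqslant0$, the better of the two upper bounds already proven in Lemma~\ref{lemma:gaussian_tail_bound} and Lemma~\ref{lemma:gaussian_tail_bound1}. I first reduce to the standard case. Writing $X-\mu=\sigma Z$ with $Z\sim\mathcal{N}(0,1)$ gives $\Prob(X-\mu\geqslant t)=\Prob(Z\geqslant t/\sigma)$. The bounds for $\Prob(X-\mu\leqslant -t)$ then follow from $-Z\stackrel{d}{=}Z$. Since the stated right-hand side has $\varphi(t/\sigma)$ but the factor $1/t$ (not $\sigma/t$), I will read the claim as the standardized statement $\Prob(Z\geqslant s)\leqslant(\sqrt{\pi/2}\wedge 1/s)\varphi(s)$ with $s=t/\sigma$. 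This matches the case $\sigma=1$ in which it is used in Lemma~\ref{lemma:Gaussian_exp_bound}. For general $\sigma$, the term $1/t$ should be understood as $\sigma/t$, and I will note this.

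The first ingredient is the $1/s$ branch. This is exactly the upper bound in Lemma~\ref{lemma:gaussian_tail_bound}, namely $\Prob(Z\geqslant s)\leqslant\varphi(s)/s$ for $s>0$. At $s=0$ this branch is $+\infty$, so it is vacuous there and the minimum is attained by the other branch.

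The second ingredient is the $\sqrt{\pi/2}$ branch, and it comes from Lemma~\ref{lemma:gaussian_tail_bound1}. The key observation is the identity $\tfrac12\exp(-s^2/2)=\tfrac12\sqrt{2\pi}\,\varphi(s)=\sqrt{\pi/2}\,\varphi(s)$. With it, the Chernoff-type bound reads $\Prob(Z\geqslant s)\leqslant\sqrt{\pi/2}\,\varphi(s)$ for every $s\geqslant0$. At $s=0$ this gives $1/2$, which is the exact value, so the bound is tight there as the text claims.

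Since both inequalities hold at every $s>0$, the tail probability is at most their minimum. Together with the $s=0$ case this gives the corollary. No step is a genuine obstacle. The only point needing care is the bookkeeping of $\sigma$ just described, where I will make explicit that the statement is meant in standardized form.
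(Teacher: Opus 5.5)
Your proposal is correct and is the paper's own argument: the paper gets the corollary simply by taking the minimum of the Mills-ratio bound from Lemma~\ref{lemma:gaussian_tail_bound} and the Chernoff-type bound from Lemma~\ref{lemma:gaussian_tail_bound1}, via $\tfrac12\exp(-s^2/2)=\sqrt{\pi/2}\,\varphi(s)$. Your $\sigma$-bookkeeping is also right and worth keeping: combining the two lemmas gives $(\sqrt{\pi/2}\wedge\sigma/t)\,\varphi(t/\sigma)$, so the literal factor $1/t$ is justified only for $\sigma\leqslant1$ (it fails, e.g., for $\sigma=2$, $t=10$), which is harmless because the paper applies the corollary only with $\sigma=1$ in Lemma~\ref{lemma:Gaussian_exp_bound}.
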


The preceeding tail bounds for a Gaussian random variable $X$ translate to the expectation of random variables $\exp(-t|X|)$ for all $t\geqslant\mu$ as shown in the following lemma.

\begin{lemma}[Gaussian exponential bound]
	\label{lemma:Gaussian_exp_bound}
	Let $X\sim\mathcal{N}(\mu,1)$ and $\mu>0$. Then, for all $t\geqslant\mu$ it holds that
		\begin{equation*}
		\E\left[\exp(-t |X|)\right]\leqslant\left\{\sqrt{\frac{\pi}{2}}\wedge\left(\frac{1}{t-\mu}+\frac{1}{t+\mu}\right)\right\}\varphi(\mu).
		\end{equation*}
\end{lemma}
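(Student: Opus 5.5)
The plan is to compute $\E[\exp(-t|X|)]$ in closed form by completing the square, and then bound the resulting Mills ratios with Corollary~\ref{corollary:gaussian_tail_bound}. Write $R(s):=(1-\Phi(s))/\varphi(s)$ for the Mills ratio. Split the expectation at the origin:
\begin{equation*}
\E\left[\exp(-t|X|)\right]=\int_0^\infty e^{-tx}\varphi(x-\mu)\dif x+\int_{-\infty}^0 e^{tx}\varphi(x-\mu)\dif x .
\end{equation*}

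For the first integral, use the identity $-(x-\mu)^2/2-tx=-(x-\mu+t)^2/2+t^2/2-t\mu$. This turns it into $e^{t^2/2-t\mu}\{1-\Phi(t-\mu)\}$. Since $\varphi(t-\mu)e^{t^2/2-t\mu}=\varphi(\mu)$, the first integral equals $\varphi(\mu)R(t-\mu)$.

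For the second integral, use $-(x-\mu)^2/2+tx=-(x-\mu-t)^2/2+t^2/2+t\mu$. The region $x\leqslant 0$ then becomes $\{Z\leqslant-(t+\mu)\}$. Together with $\varphi(t+\mu)e^{t^2/2+t\mu}=\varphi(\mu)$, this gives $\varphi(\mu)R(t+\mu)$. Hence we obtain the exact representation
\begin{equation*}
\E\left[\exp(-t|X|)\right]=\varphi(\mu)\left\{R(t-\mu)+R(t+\mu)\right\}.
\end{equation*}
The condition $t\geqslant\mu$ is exactly what makes both arguments nonnegative.

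It remains to bound $R$. Corollary~\ref{corollary:gaussian_tail_bound} with $\sigma=1$ reads $R(s)\leqslant\sqrt{\pi/2}\wedge 1/s$ for $s\geqslant 0$. Alternatively, $R(s)\leqslant 1/s$ follows from Lemma~\ref{lemma:gaussian_tail_bound}, and $R(s)\leqslant R(0)=\sqrt{\pi/2}$ follows from the monotonicity of the inverse Mills ratio shown in the proof of Proposition~\ref{proposition:gaussian_isoperimetric_constant}, together with symmetry of $\varphi$. Applying the $1/s$ branch to both terms gives the bound $\{1/(t-\mu)+1/(t+\mu)\}\varphi(\mu)$, which is the second branch of the claim.

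The main obstacle is the $\sqrt{\pi/2}$ branch, and I expect it does not hold with constant $\sqrt{\pi/2}$ as stated. Applying the first branch of the corollary to each term only yields $2\sqrt{\pi/2}\,\varphi(\mu)$. Worse, at $t=\mu$ the exact value is $\varphi(\mu)\{\sqrt{\pi/2}+R(2\mu)\}$, which is strictly larger than $\sqrt{\pi/2}\,\varphi(\mu)$. So I would prove the corrected statement
\begin{equation*}
\E\left[\exp(-t|X|)\right]\leqslant\left\{\sqrt{\frac{\pi}{2}}+\left(\sqrt{\frac{\pi}{2}}\wedge\frac{1}{t+\mu}\right)\right\}\wedge\left(\frac{1}{t-\mu}+\frac{1}{t+\mu}\right)\,\varphi(\mu),
\end{equation*}
and in particular the bound with $\sqrt{2\pi}$ in place of $\sqrt{\pi/2}$. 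This only changes absolute constants in the downstream uses in Proposition~\ref{proposition:score_bound} and Lemmata~\ref{lemma:lipschitz_curvature_bounds_sm} and~\ref{lemma:lipschitz_curvature_bounds_dsm}. Those uses mostly invoke the $1/(t\mp\mu)$ branch with $t\in\{2\mu,4\mu,8\mu\}$ anyway.
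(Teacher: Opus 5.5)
Your argument is the same as the paper's. The paper writes $\E[\exp(-t|X|)]$ via the folded-normal moment generating function, which is exactly your representation $\varphi(\mu)\{R(t-\mu)+R(t+\mu)\}$, and then applies Corollary~\ref{corollary:gaussian_tail_bound} to each Mills ratio.

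Your objection is also correct. The paper's last line equates
\[
\Bigl(\sqrt{\tfrac{\pi}{2}}\wedge\tfrac{1}{t-\mu}\Bigr)+\Bigl(\sqrt{\tfrac{\pi}{2}}\wedge\tfrac{1}{t+\mu}\Bigr)\quad\text{with}\quad\sqrt{\tfrac{\pi}{2}}\wedge\Bigl(\tfrac{1}{t-\mu}+\tfrac{1}{t+\mu}\Bigr),
\]
which is invalid. Your evaluation at $t=\mu$ shows that the stated bound fails for every $\mu>0$. For instance, at $t=\mu=1$ the expectation equals $\tfrac{1}{2}e^{-1/2}+e^{3/2}\{1-\Phi(2)\}\approx0.405$, whereas $\sqrt{\pi/2}\,\varphi(1)\approx0.303$. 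So the lemma should carry $\sqrt{2\pi}$ (or your sharper version), and so should item~5 of Lemma~\ref{lemma:gm_properties}, which inherits the same bound.

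As you note, this does not affect anything downstream. Every application in Proposition~\ref{proposition:score_bound} and Lemmata~\ref{lemma:lipschitz_curvature_bounds_sm} and~\ref{lemma:lipschitz_curvature_bounds_dsm} uses only the $\frac{1}{t-\mu}+\frac{1}{t+\mu}$ branch. In those uses $t$ equals $4$ or $8$ times the relevant mean.
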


\begin{proof}
	When $X\sim\mathcal{N}(\mu,1)$, then $|X|$ follows a folded Gaussian distribution with parameters $\mu$ and $1$ and the corresponding moment generating function (MGF) has the following closed-form expression for all $t\in\R$:
	\begin{align*}
		M_{|X|}(t)&:=\E\left[\exp(t |X|)\right]\\
		&=\exp\left(\frac{t^2}{2}+\mu t\right)\Phi\left(\mu+ t\right)+\exp\left(\frac{t^2}{2}-\mu t\right)\Phi\left(-\mu+ t\right)
	\end{align*}
	This gives us
	\begin{equation*}
		\E\left[\exp(-t |X|)\right]=\exp\left(\frac{t^2}{2}-\mu t\right)\Phi\left(\mu- t\right)+\exp\left(\frac{t^2}{2}+\mu t\right)\Phi\left(-\mu- t\right).
	\end{equation*}
	Since $\mu-t<0$ and $-\mu-t<0$, we can apply the tail bound from Corollary~\ref{corollary:gaussian_tail_bound} to the two distribution functions $\Phi(\mu-t)=\Prob(Z\leqslant -(t-\mu))$ and $\Phi(-\mu-t)=\Prob(Z\leqslant -(\mu+t))$. Using the identities
	\begin{equation*}
		\varphi(t-\mu)=\varphi(\mu)\exp\left(\mu t-\frac{t^2}{2}\right)
	\end{equation*}
	and
	\begin{equation*}
		\varphi(t+\mu)=\varphi(\mu)\exp\left(-\mu t-\frac{t^2}{2}\right),
	\end{equation*}
	which can be verified by direct computation, we can compute the upper bound as follows:
	\begin{align*}
		\E\left[\exp(-t |X|)\right]&\leqslant\exp\left(\frac{t^2}{2}-\mu t\right)\left(\sqrt{\frac{\pi}{2}}\wedge\frac{1}{t-\mu}\right)\varphi(t-\mu)\\
		&\quad+\exp\left(\frac{t^2}{2}+\mu t\right)\left(\sqrt{\frac{\pi}{2}}\wedge\frac{1}{t+\mu}\right)\varphi(t+\mu)\\
		&=\left\{\sqrt{\frac{\pi}{2}}\wedge\left(\frac{1}{t-\mu}+\frac{1}{t+\mu}\right)\right\}\varphi(\mu).
	\end{align*}
	This concludes the proof. 
\end{proof}

\end{document}